\newtheorem{theorem}{Theorem}
\newtheorem{lemma}{Lemma}
\newtheorem{assumption}{Assumption}
\newtheorem{definition}{Definition}
\def\eqref#1{equation~\ref{#1}}
\def\Eqref#1{Equation~\ref{#1}}
\def\floor#1{\lfloor #1 \rfloor}
\def\1{\bm{1}}
\def\vzero{{\bm{0}}}
\def\vg{{\bm{g}}}
\def\vw{{\bm{w}}}
\def\vx{{\bm{x}}}
\def\vy{{\bm{y}}}
\DeclareMathAlphabet{\mathsfit}{\encodingdefault}{\sfdefault}{m}{sl}
\SetMathAlphabet{\mathsfit}{bold}{\encodingdefault}{\sfdefault}{bx}{n}
\def\gD{{\mathcal{D}}}
\def\gF{{\mathcal{F}}}
\def\gO{{\mathcal{O}}}
\def\gT{{\mathcal{T}}}
\newcommand{\Eqmark}[2]{\stackrel{(#1)}{#2}}
\title{Complexity Lower Bounds of Adaptive Gradient Algorithms for Non-convex Stochastic Optimization under Relaxed Smoothness}
\author{Michael Crawshaw\\
Department of Computer Science \\
George Mason University \\
Fairfax, VA 22030, USA \\
\texttt{mcrawsha@gmu.edu} \\
\And
Mingrui Liu \\
Department of Computer Science \\
George Mason University \\
Fairfax, VA 22030, USA \\
\texttt{mingruil@gmu.edu}
}
\begin{document}

\maketitle

\begin{abstract}
Recent results in non-convex stochastic optimization demonstrate the convergence of popular adaptive algorithms (e.g., AdaGrad) under the $(L_0, L_1)$-smoothness condition, but the rate of convergence is a higher-order polynomial in terms of problem parameters like the smoothness constants. The complexity guaranteed by such algorithms to find an $\epsilon$-stationary point may be significantly larger than the optimal complexity of $\Theta \left( \Delta L \sigma^2 \epsilon^{-4} \right)$ achieved by SGD in the $L$-smooth setting, where $\Delta$ is the initial optimality gap, $\sigma^2$ is the variance of stochastic gradient. However, it is currently not known whether these higher-order dependencies can be tightened. To answer this question, we investigate complexity lower bounds for several adaptive optimization algorithms in the $(L_0, L_1)$-smooth setting, with a focus on the dependence in terms of problem parameters $\Delta, L_0, L_1$. We provide complexity bounds for three variations of AdaGrad, which show at least a quadratic dependence on problem parameters $\Delta, L_0, L_1$. Notably, we show that the decorrelated variant of AdaGrad-Norm requires at least $\Omega \left( \Delta^2 L_1^2 \sigma^2 \epsilon^{-4} \right)$ stochastic gradient queries to find an $\epsilon$-stationary point. We also provide a lower bound for SGD with a broad class of adaptive stepsizes. Our results show that, for certain adaptive algorithms, the $(L_0, L_1)$-smooth setting is fundamentally more difficult than the standard smooth setting, in terms of the initial optimality gap and the smoothness constants.
\end{abstract}

\section{Introduction} \label{sec:intro}
The best performing optimization algorithms for modern deep learning are
gradient-based optimizers with adaptive step sizes. For today's large-scale deep
learning tasks, such as training Large Language Models (LLMs), classical
non-adaptive optimizers like SGD perform significantly worse than their adaptive
counterparts, such as Adam \citep{kingma2014adam} and AdamW
\citep{loshchilov2018decoupled}. However, it remains open to theoretically
characterize the efficiency of adaptive gradient algorithms for non-convex
optimization.

An increasingly popular framework for describing optimization in deep learning
is $(L_0, L_1)$-smoothness, also known as relaxed smoothness
\citep{zhang2019gradient}. The conventional smoothness condition asserts that
the norm of the objective's Hessian is upper bounded by a constant, while the
weaker relaxed smoothness enforces only that the Hessian norm is upper bounded
by an affine function of the gradient norm (see Assumption \ref{ass:obj}).
Empirical evidence suggests that this condition may characterize neural network
training (for certain architectures) more accurately than conventional
smoothness \citep{zhang2019gradient, crawshaw2022robustness}.

Several recent works analyze the efficiency of adaptive algorithms for non-convex
optimization, particularly AdaGrad-Norm \citep{li2019convergence, ward2020adagrad,
wang2023convergence, attia2023sgd,faw2023beyond} and AdaGrad
\citep{wang2023convergence}. Indeed, adaptive algorithms are suited for relaxed
smoothness, since the local curvature of a relaxed smooth objective can be determined
from gradient information, and adaptive algorithms adjust their step size based on
gradients. Existing works demonstrate that AdaGrad can find an $\epsilon$-stationary
point with iteration complexity that scales as $\epsilon^{-4}$ in terms of $\epsilon$,
which matches the complexity of SGD in the stochastic, non-convex setting. However,
these guarantees also show that the complexity of AdaGrad (and some variants) is upper
bounded by a higher-order polynomial (i.e., at least quadratic) in terms of problem
parameters such as $\Delta$ (initial optimality gap), $\sigma^2$ (variance of stochastic
gradient), and the smoothness constants. See Table \ref{tab:complexity} for a summary of
these guarantees. This suggests the following question:

\begin{center}
\textbf{\textit{Can AdaGrad-type algorithms converge under relaxed smoothness without a
higher-order polynomial complexity in terms of problem parameters?}}
\end{center}

In this paper, we answer this question negatively for several variants of the
AdaGrad algorithm by providing complexity lower bounds that scale quadratically
in terms of the problem parameters $\Delta, L_0, L_1$. Our results are
summarized in Table \ref{tab:complexity}. This shows that, in the non-convex,
stochastic, relaxed smooth settings, the variants of AdaGrad considered here
cannot recover the $\Delta L_0 \sigma^2 \epsilon^{-4}$ complexity from the
$L_0$-smooth case; in this sense, these algorithms suffer a fundamental
difficulty in the relaxed smooth setting. In comparison, SGD with gradient
clipping does achieve the classical complexity of $\Delta L_0 \sigma^2
\epsilon^{-4}$ under the same setting as investigated in our lower
bounds~\citep{zhang2020improved}, which shows the surprising consequence that
SGD with gradient clipping outperforms AdaGrad in this setting. Additionally, we
give a lower bound for adaptive SGD with a broad class of adaptive step sizes,
in a setting where stochastic gradient noise scales linearly with the gradient
norm.

\begin{table}[t]
\caption{Iteration complexity to find an $\epsilon$-stationary point in non-convex
stochastic optimization. $\Delta$ is the initial optimality gap, $L$ and $(L_0, L_1)$
are the smoothness constants for the smooth and relaxed smooth cases, respectively.
$\sigma$ and $(\sigma_1, \sigma_2)$ are constants bounding the stochastic gradient
noise, depending on the stochastic assumption. See Assumptions \ref{ass:obj} and
\ref{ass:noise} for the full definitions. $\gamma$ is the stabilization constant of
AdaGrad. $^*$ denotes a high-probability guarantee with failure probability
$\delta$. $\gamma_1, \gamma_2, \gamma_3$ are defined and discussed in Section
\ref{sec:singlestep}.}
\label{tab:complexity}
\begin{center}
\resizebox{0.99\textwidth}{!}{\begin{tabular}{@{}ccc@{}}
\toprule
& Complexity & Stochasticity \\
\toprule
$L$-smooth & & \\
\cmidrule{1-1}
SGD \citep{ghadimi2013stochastic} & $\Theta \left( \frac{\Delta L \sigma^2}{\epsilon^4} + \frac{\Delta L}{\epsilon^2} \right)$ & (Bounded-Var) \\
\makecell{Decorrelated AdaGrad-Norm \\ \citep{li2019convergence}} & $\mathcal{O} \left( \left( \frac{\Delta L \sigma^2}{\epsilon^4} + \frac{\Delta L}{\epsilon^2} \right) \left( 1 + \frac{\Delta L + \sigma^2}{\gamma^2} \right) + \frac{\sqrt{\Delta L(\gamma^2 + \sigma^2)}}{\epsilon^2} \right)$ & (Subgaussian) \\
\makecell{AdaGrad-Norm \\ \citep{wang2023convergence}} & $\tilde{\mathcal{O}} \left( \left( \Delta L + \frac{\Delta^2 L^2 \sigma_2^4 + \sigma_1^4}{\gamma^2} \right) \left( \frac{\sigma_1^2}{\delta^4 \epsilon^4} + \frac{\sigma_2^2}{\delta^2 \epsilon^2} \right) \right)^*$ & (Affine-Var) \\
\makecell{AdaGrad-Norm \\ \citep{attia2023sgd}} & $\tilde{\mathcal{O}} \left( \frac{\Delta L \sigma_1^2 \left( \sigma_2^2 + 1 \right) + \sigma_1^4}{\epsilon^4} + \frac{\Delta L (1 + \sigma_2^4) + \gamma \sqrt{\Delta L (1 + \sigma_2^2)} + \sigma_1^2 (1 + \sigma_2^2)}{\epsilon^2} \right)^*$ & (Affine-Noise) \\
\makecell{AdaGrad-Norm \\ \citep{yang2024two}} & $\mathcal{O} \left( \frac{\Delta L \sigma^2 + \sigma^4}{\epsilon^4} + \frac{\Delta L + \gamma \sqrt{\Delta L} + \sigma^2 + \gamma \sigma}{\epsilon^2} \right)$ & (Bounded-Var) \\
\toprule
$(L_0, L_1)$-smooth & & \\
\cmidrule{1-1}
SGD \citep{li2024convex} & $\mathcal{O} \left( \frac{(\Delta + \sigma)^4 L_1^2}{\delta^4 \epsilon^4} + \frac{(\Delta + \sigma)^3 L_0}{\delta^3 \epsilon^4} + \frac{(\Delta + \sigma)^2 L_1^2}{\delta^2 \epsilon^2} + \frac{(\Delta + \sigma) L_0}{\delta \epsilon^2} \right)^*$ & (Bounded-Var) \\
\makecell{Gradient Clipping \\ \cite{zhang2019gradient, zhang2020improved}} & $\mathcal{O} \left( \frac{\Delta L_0 \sigma^2}{\epsilon^4} \right)$ & (Bounded-Noise) \\
\makecell{Gradient Clipping \\ \cite{koloskova2023revisiting}} & $\mathcal{O} \left( \frac{\Delta L_1 \sigma^4}{\epsilon^5} + \frac{\Delta}{\epsilon^2} \left( \frac{\sigma^2}{\epsilon^2} + 1 \right) \left( L_0 + \sqrt{L_0 L_1 \epsilon} + L_1 \epsilon \right) \right)$ & (Bounded-Var) \\
\makecell{AdaGrad-Norm \\ \cite{wang2023convergence}} & $\tilde{\mathcal{O}} \left( \left( \Delta^2 L_1^2 (1 + \sigma_2^4) + \Delta L_0 + \frac{(\Delta^4 L_1^4 + \Delta^2 L_0^2) \sigma_2^4 + \sigma_1^4}{\gamma^2} \right) \left( \frac{\sigma_1^2}{\delta^4 \epsilon^4} + \frac{\sigma_2^2}{\delta^2 \epsilon^2} \right) \right)^*$ & (Affine-Var) \\
\rowcolor{pink} \Gape[0pt][2pt]{\makecell{Decorrelated AdaGrad-Norm \\ (Theorem \ref{thm:adagrad_norm})}} & $\tilde{\Omega} \left( \frac{\Delta^2 L_1^2 \sigma^2}{\epsilon^4} + \frac{\Delta L_0 \sigma^2}{\epsilon^4} + \frac{\Delta^2 L_1^2}{\epsilon^2} \right)$ & (Bounded-Noise) \\
\rowcolor{pink} \Gape[0pt][2pt]{\makecell{Decorrelated AdaGrad \\ (Theorem \ref{thm:adagrad})}} & $\tilde{\Omega} \left( \frac{\Delta^2 L_0^2 \sigma^2}{\gamma^2 \epsilon^4} + \frac{\Delta^2 L_1^2 \sigma^2}{\gamma^2 \epsilon^2} \right)$ & (Bounded-Noise) \\
\rowcolor{pink} AdaGrad (Theorem \ref{thm:vanilla_adagrad}) & $\tilde{\Omega} \left( \frac{\Delta^2 L_0^2}{\epsilon^4} + \frac{\Delta^2 L_1^2}{\epsilon^2} \right)$ & (Bounded-Noise) \\
\rowcolor{pink} \Gape[0pt][2pt]{\makecell{Single-step Adaptive SGD \\ (Theorem \ref{thm:singlestep})}} & $\tilde{\Omega} \left( \frac{\Delta L_0 \sigma_1^2}{\epsilon^4} + \frac{(\Delta L_1)^{2- \gamma_2 - \gamma_3} \sigma_1^{\gamma_2 + \gamma_3 - \gamma_1}}{\epsilon^{2-\gamma_1} } \right)$ & (Affine-Noise) \\
\bottomrule
\end{tabular}}
\end{center}
\end{table}

We emphasize that the complexity's dependence on problem parameters can be important for
distinguishing the relative performance of optimization algorithms. A classic example is
the case of smooth, strongly convex functions, where both gradient descent and
Nesterov's Accelerated Gradient (NAG) exhibit linear convergence, but the iteration
complexity of NAG is faster than GD by a factor of $\sqrt{\kappa}$, where $\kappa$ is
the condition number of the objective function \citep{nesterov2013introductory}.

Our contributions can be summarized as follows:
\begin{enumerate}
    \item In Theorem \ref{thm:adagrad_norm}, we provide a complexity lower bound of
        $\Omega \left( \Delta^2 L_1^2 \sigma^2 \epsilon^{-4} \right)$ for Decorrelated
        AdaGrad-Norm (which uses decorrelated step sizes and a shared learning rate for
        all coordinates) under $(L_0, L_1)$-smoothness and almost surely bounded
        gradient noise. The proof uses a novel construction of a difficult objective for
        which Decorrelated AdaGrad-Norm may diverge (depending on the choice of
        hyperparameter), combined with a high-dimensional objective (adapted from
        \cite{drori2020complexity}). This lower bound matches the upper bound of
        AdaGrad-Norm in two out of three dominating terms, and only differs in
        terms of $\sigma$. See Section \ref{sec:adagrad_norm} for a comparison
        between these upper and lower bounds.
    \item In Theorem \ref{thm:adagrad}, we lower bound the complexity of Decorrelated
        AdaGrad by $\Omega \left( \Delta^2 L_0^2 \sigma^2 \gamma^{-2} \epsilon^{-4}
        \right)$, where $\gamma$ is a hyperparameter. The proof uses a novel
        high-dimensional objective for which the algorithm diverges when $\eta \geq
        \tilde{\Omega}(\gamma/(L_1 \sigma))$. Theorem \ref{thm:vanilla_adagrad}
        extends this result for the original AdaGrad algorithm, achieving a
        lower bound of $\Omega \left( \Delta^2 L_0^2 \epsilon^{-4} \right)$.
        While our lower bound for AdaGrad is weaker than for its decorrelated
        counterpart, this complexity is still larger than the optimal smooth
        rate in regimes when $\Delta$ or the smoothness constants are large
        compared to $\sigma$.
    \item In Theorem \ref{thm:singlestep}, we consider the setting of $(L_0,
        L_1)$-smoothness and gradient noise bounded by an affine function of the
        gradient norm. For SGD with a broad class of adaptive step sizes, we
        show a lower bound that is nearly quadratic in the problem parameters
        $\Delta, L_1$. This is proven by showing that one of the following must
        hold: (1) adaptive SGD can be forced into a biased random walk with a
        constant probability of divergence, or (2) the adaptive step size is
        $\mathcal{O} \left( 1/(\Delta L_1^2) \right)$ when optimizing a function
        with gradient magnitude equal to $\epsilon$, which leads to slow
        convergence.
\end{enumerate}

The remainder of the paper is structured as follows. We discuss related work in
Section \ref{sec:related_work}, then give the formal problem statement in
Section \ref{sec:problem}. We then present our complexity lower bounds for
Decorrelated AdaGrad-Norm (Section \ref{sec:adagrad_norm}), Decorrelated AdaGrad
and the original AdaGrad (Section \ref{sec:adagrad}), and adaptive SGD (Section
\ref{sec:singlestep}). We conclude with Section \ref{sec:discussion}.

\section{Related Work} \label{sec:related_work}
\textbf{Relaxed Smoothness.} Relaxed smoothness was introduced by
\citet{zhang2019gradient}, who showed that GD with normalization converges
faster than GD under this condition. This inspired a line of work focusing on
efficient algorithms under this condition. \citet{zhang2020improved} showed an
improved analysis of gradient clipping, and \citet{jin2021non} considered a
non-convex distributionally robust optimization satisfying this condition.
Several recent works
\citep{liu2022communication,crawshaw2023episode,crawshaw2023federated} designed
communication-efficient federated learning algorithms under relaxed smoothness.
\citet{li2024convex} analyzed gradient-based methods without gradient clipping
under generalized smoothness. \citet{crawshaw2022robustness} studied a
coordinate-wise version of relaxed smoothness, empirically showed that
transformers satisfy this condition, and designed a generalized signSGD
algorithm with convergence guarantees. \citet{chen2023generalized} proposed a
new notion of $\alpha$-symmetric generalized smoothness and analyzed a class of
normalized GD algorithms. More recently, a few works have investigated momentum
and variance reduction techniques within the framework of individual relaxed
smooth conditions \citep{liu2023nearlyoptimal} or on average relaxed smooth
conditions \citep{reisizadeh2023variance}. 

\textbf{Adaptive Gradient Methods.} Adaptive gradient optimization algorithms
automatically adjust the step size for each coordinate based on gradient information,
and have become very important in machine learning. Examples include Adagrad
\citep{duchi2011adaptive,mcmahan2010adaptive}, Adam \citep{kingma2014adam}, RMSProp
\citep{tieleman2012lecture}, and other variants
\citep{loshchilov2018decoupled,shazeer2018adafactor}. Most theoretical analyses of
adaptive optimization methods are based on the assumptions of smoothness or convexity
\citep{reddi2018convergence,chen2018convergence,guo2021novel}. Recently, some works
established convergence results for AdaGrad-Norm \citep{faw2023beyond,
wang2023convergence} and Adam \citep{li2023convergence} under the relaxed smoothness
condition, and all of the convergence rates in these works exhibit a higher order
polynomial dependence on $L_1$.

\textbf{Lower Bounds.} Lower bounds for first-order convex optimization are well
studied
\citep{nemirovskii1983problem,nesterov2013introductory,woodworth2017lower,woodworth2016tight}.
The lower bounds of nonconvex smooth optimization were studied in the
deterministic setting
\citep{cartis2010complexity,carmon2020lower,carmon2021lower}, finite-sum setting
\citep{fang2018spider} and stochastic setting
\citep{drori2020complexity,arjevani2023lower}. For relaxed smooth problems,
\citet{zhang2019gradient} and \citet{crawshaw2022robustness} derived a lower
bound for GD and showed that its complexity depends on the maximum magnitude of
the gradient in a sublevel set. \citet{faw2023beyond} considered the lower bound
for normalized SGD, clipped SGD, and signSGD with momentum in the affine noise
setting, and showed that these algorithms cannot converge under certain
parameter regimes. \citet{crawshaw2023federated} showed a lower bound for
minibatch SGD with gradient clipping in the affine noise setting.

\section{Problem Statement} \label{sec:problem}

\subsection{Optimization Objectives}
We consider the problem of finding an approximate stationary point of a
nonconvex, relaxed smooth function with access to a stochastic gradient. Let
$\Delta, L_0, L_1, \sigma_1, \sigma_2, \sigma > 0$. We will denote the objective
function as $f: \mathbb{R}^d \rightarrow \mathbb{R}$, the stochastic gradient as
$g: \mathbb{R}^d \times \Xi \rightarrow \mathbb{R}^d$, and the noise
distribution as $\gD,$ which is a distribution over $\Xi$. We then consider the
set of problem instances $(f, g, \gD)$ satisfying the following conditions:
\begin{assumption} \label{ass:obj}
\textbf{(1)} $f$ is bounded from below and $f(\mathbf{0}) - \inf_{\vx} f(\vx) \leq
\Delta$. \textbf{(2)} $f$ is continuously differentiable and $(L_0, L_1)$-smooth: For
every $\vx, \vy \in \mathbb{R}^d$ with $\|\vx - \vy\| \leq 1/L_1$: \[
    \|\nabla f(\vx) - \nabla f(\vy)\| \leq (L_0 + L_1 \|\nabla f(\vx)\|) \|\vx - \vy\|.
\] \textbf{(3)} $\mathbb{E}_{\xi \sim \gD}[g(\vx, \xi)] = \nabla f(\vx)$ for all $\vx
\in \mathbb{R}^d$.
\end{assumption}

\begin{assumption} \label{ass:noise}
For all $\vx \in \mathbb{R}^d$:
\begin{enumerate}[align=left]
    \item [\textup{(Bounded-Noise)}] $\|g(\vx, \xi) - \nabla f(\vx)\| \leq \sigma$ almost surely over $\xi \sim \gD$. \label{ass:bounded_noise}
    \item [\textup{(Affine-Noise)}] $\|g(\vx, \xi) - \nabla f(\vx)\| \leq \sigma_1 + \sigma_2 \|\nabla f(\vx)\|$ almost surely over $\xi \sim \gD$.
    \item [\textup{(Bounded-Var)}] $\mathbb{E}_{\xi \sim \gD} \left[ \|g(\vx, \xi) - \nabla f(\vx)\|^2 \right] \leq \sigma^2$.
    \item [\textup{(Affine-Var)}] $\mathbb{E}_{\xi \sim \gD} \left[ \|g(\vx, \xi) - \nabla f(\vx)\|^2 \right] \leq \sigma_1^2 + \sigma_2^2 \|\nabla f(\vx)\|^2 $.
    \item [\textup{(Subgaussian)}] $\mathbb{E}_{\xi \sim \gD} \left[ \exp(\|g(\vx, \xi) - \nabla f(\vx)\|^2/\sigma^2) \right] \leq 1$.
\end{enumerate}
\end{assumption}

We will denote by $\gF_{\text{as}}(\Delta, L_0, L_1, \sigma)$ the set of problems $(f,
g, \gD)$ satisfying Assumption \ref{ass:obj} and (Bounded-Noise), and by
$\gF_{\text{aff}}(\Delta, L_0, L_1, \sigma_1, \sigma_2)$ those satisfying Assumption
\ref{ass:obj} and (Affine-Noise).

In this paper, we present new results under (Bounded-Noise) and (Affine-Noise),
though we state the other assumptions for discussion with related work. It is
important to note that (Bounded-Noise) is strictly stronger than (Bounded-Var).
Therefore, the lower bounds that we prove for $\gF_{\text{as}}$ also hold for
the class of problems satisfying Assumption \ref{ass:obj} and (Bounded-Var).
This is because any difficult problem instance in the former class is also in
the latter. An analogous statement for (Affine-Noise) and (Affine-Var) holds by
the same reasoning. Our primary focus for stochasticity in this work is
(Bounded-Noise), since this is the standard assumption used by early work on
relaxed smoothness \citep{zhang2019gradient,zhang2020improved}.

\subsection{Optimization Algorithms}
We will consider four optimization algorithms --- Decorrelated AdaGrad-Norm,
Decorrelated AdaGrad, AdaGrad, and single-step adaptive SGD --- and their
behavior for problems in $\gF_{\text{as}}$ and $\gF_{\text{aff}}$.

\textbf{Decorrelated AdaGrad-Norm} We first consider a variant of AdaGrad that we refer
to as Decorrelated AdaGrad-Norm:
\begin{equation} \label{eq:adagrad_norm}
    \vx_{t+1} = \vx_t - \frac{\eta}{\sqrt{\gamma^2 + \sum_{i=0}^{t-1} \|\vg_i\|^2}} \vg_t,
\end{equation}
where $\eta > 0$ is a step size coefficient, $\vg_t = g(\vx_t, \xi_t)$, and $\xi_t \sim
\gD$ is independent over $t$. Notice that the denominator contains the sum of squared
gradient norms, as opposed to the coordinate-wise operations used in the original
AdaGrad. This type of denominator is used in AdaGrad-Norm, whose convergence was studied
under various conditions in \citet{ward2020adagrad, faw2022power, attia2023sgd,
wang2023convergence, yang2024two}. Further, the sum of squared gradients in the
denominator ranges from $i=0$ to $i=t-1$, meaning that it does not contain the
most recent stochastic gradient $g_t$. This type of decorrelated step size was
considered in \citet{li2019convergence}, which provided convergence guarantees
in the smooth setting (see Table \ref{tab:complexity}).

\textbf{AdaGrad and Decorrelated AdaGrad} Next, we consider two coordinate-wise variants
of AdaGrad, including the original AdaGrad and a variation with a decorrelated step
size. The original AdaGrad \citep{duchi2011adaptive} is defined as follows:
\begin{equation} \label{eq:adagrad}
    \vx_{t+1} = \vx_t - \frac{\eta}{\sqrt{\gamma^2 + \sum_{i=0}^t \vg_i^2}} \vg_t,
\end{equation}
where the squaring $\vg_i^2$ is performed element-wise. Decorrelated AdaGrad
\citep{li2019convergence} is similarly defined as
\begin{equation} \label{eq:vanilla_adagrad}
    \vx_{t+1} = \vx_t - \frac{\eta}{\sqrt{\gamma^2 + \sum_{i=0}^{t-1} \vg_i^2}} \vg_t,
\end{equation}
the only difference from AdaGrad being that the sum in the denominator does not contain
the gradient from the current step, so the step size at step $t$ is independent of the
stochastic gradient noise at step $t$.

\textbf{Single-Step Adaptive SGD} Last, we consider a class of algorithms that implement
stochastic gradient descent with an adaptive step size, but whose step size function
only depends on the current gradient. For $\alpha: \mathbb{R}^d \rightarrow \mathbb{R}$,
single-step adaptive SGD is defined as:
\begin{equation} \label{eq:singlestep}
    \vx_{t+1} = \vx_t - \alpha(\vg_t) \vg_t,
\end{equation}
where again $\vg_t = g(\vx_t, \xi_t)$ and $\xi_t \sim \gD$ is independent over $t$. At
each step $t$, the update $\vx_{t+1} - \vx_t$ is determined completely by the stochastic
gradient sampled at step $t$, hence the name ``single-step". However, the step size in
the direction $\vg_t$ is computed as an \textit{arbitrary} function $\alpha$ of the
stochastic gradient. This class of algorithms includes SGD with constant step size, SGD
with gradient clipping, and normalized SGD; it does not include Adam or AdaGrad.

\subsection{Complexity}
Given a problem $(f, g, \gD)$ and $\epsilon > 0$, the goal of an optimization algorithm
$A$ is to find an $\epsilon$-approximate stationary point of $f$, that is, a point $\vx
\in \mathbb{R}^d$ such that $\|\nabla f(\vx)\| < \epsilon$. We want to characterize the
number of gradient calls required by an algorithm to find such a point. Since an
algorithm can only gain information about the objective $f$ through \textit{stochastic}
gradients, it cannot necessarily guarantee to find an $\epsilon$-stationary point, but
it may find one in expectation or with high probability. Denote by $\{\vx_t\}$ the
sequence of points at which the stochastic gradient is queried by $A$ when given $(f, g,
\gD)$ as input. We then define the worst-case complexity of $A$ on problem class $\gF$
as \[
    \gT(A, \gF, \epsilon) = \sup_{(f, g, \gD) \in \gF} \min \left\{ t \geq 1 \;\bigg|\; \min_{s < t} \mathbb{E} \left[ \|\nabla f(\vx_s)\| \right] < \epsilon \right\}.
\]
To summarize, the worst-case complexity $\gT(A, \gF, \epsilon)$ measures the number of
gradient calls required by $A$ to find an $\epsilon$-approximate stationary point in
expectation, for any problem in $\gF$. We also consider the worst-case complexity for
finding an $\epsilon$-stationary point with high probability: \[
    \gT(A, \gF, \epsilon, \delta) = \sup_{(f, g, \gD) \in \gF} \min \left\{ t \geq 1 \;\bigg|\; \text{Pr} \left( \min_{s < t} \|\nabla f(\vx_s)\| < \epsilon \right) > 1 - \delta \right\}.
\]

Following \cite{arjevani2023lower}, most of our results (Theorems
\ref{thm:adagrad_norm}, \ref{thm:adagrad}, \ref{thm:vanilla_adagrad}) will provide
in-expectation lower bounds, that is, lower bounds for $\gT(A, \gF, \epsilon)$. Our last
result (Theorem \ref{thm:singlestep}) will provide lower bounds for $\gT(A, \gF,
\epsilon, \delta)$ for any given $\delta$, i.e., high-probability lower bounds.
Throughout the paper, $\mathcal{O}(\cdot), \Omega(\cdot)$ and $\Theta(\cdot)$ omit
universal constants, and $\tilde{\mathcal{O}}(\cdot), \tilde{\Omega}(\cdot)$, and
$\tilde{\Theta}(\cdot)$ omit universal constants and factors logarithmic in terms of
problem parameters $\Delta, L_0, L_1, \sigma_1, \sigma_2, \sigma$, and target gradient
norm $\epsilon$.

\section{Decorrelated AdaGrad-Norm} \label{sec:adagrad_norm}
Our first result gives a lower bound for Decorrelated AdaGrad-Norm, which shows
that the complexity has a quadratic dependence in terms of problem parameters
$\Delta, L_1$.

\begin{theorem} \label{thm:adagrad_norm}
Denote $\gF = \gF_{\textup{as}}(\Delta, L_0, L_1, \sigma)$, and let algorithm
$A_{\text{DAN}}$ denote Decorrelated AdaGrad-Norm (\Eqref{eq:adagrad_norm}) with
parameters $\eta > 0$ and $0 < \gamma \leq \tilde{\mathcal{O}} \left( \Delta L_1
\right)$. Let \(
    0 < \epsilon \leq \mathcal{O} \left( \min \left\{ \sqrt{\Delta L_0}, \sqrt{\Delta L_1 \gamma}, \Delta L_1 \right\} \right).
\)
If $\Delta L_1^2 \geq L_0$, then \[
    \gT(A_{\text{DAN}}, \gF, \epsilon) \geq \Omega \left( \frac{\Delta^2 L_1^2 \sigma^2}{\epsilon^4} + \frac{\Delta L_0 \sigma^2 \log(1 + \sigma^2/\gamma^2)}{\epsilon^4} + \frac{\Delta^2 L_1^2}{\epsilon^2} \right).
\]
\end{theorem}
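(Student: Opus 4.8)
I would prove the bound by a case split on the step-size coefficient $\eta$ (with $\gamma$ fixed), using a single novel $(L_0,L_1)$-smooth ``gadget'' glued to a high-dimensional stochastic hard instance adapted from \citet{drori2020complexity}. Pick a threshold $\bar\eta$ — roughly $\bar\eta=\tilde\Theta(\epsilon^2/(\Delta L_1^2\sigma))$, fixed so that the two branches below meet. If $\eta>\bar\eta$, I would show the gadget alone makes Decorrelated AdaGrad-Norm diverge, so $\gT(A_{\text{DAN}},\gF,\epsilon)=\infty$ and the bound is trivial. If $\eta\le\bar\eta$, the gadget instead pins down the adaptive stepsize and raises the \emph{effective} smoothness along every reachable trajectory to $\Theta(\Delta L_1^2)$ while keeping the objective truly $(L_0,L_1)$-smooth; plugging the Drori--Shamir lower bound into this regime yields the polynomial terms.

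\textbf{The gadget and the divergence branch.} The gadget is a one-dimensional ``steep valley'' $h$ (a shifted, rescaled $\cosh$): it is $(L_0,L_1)$-smooth, satisfies $h(\mathbf 0)-\inf h\le\Delta/2$, is minimized at distance $x_\star=\Theta(L_1^{-1}\log(\Delta L_1^2/L_0))$ from the origin, and has $|h'|=\Theta(\Delta L_1)$ over a band around the origin; the hypotheses $\Delta L_1^2\ge L_0$, $\gamma\le\tilde{\mathcal O}(\Delta L_1)$ and $\epsilon\le\mathcal O(\min\{\sqrt{\Delta L_0},\Delta L_1\})$ make this realizable and keep the band free of $\epsilon$-stationary points. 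Started at the origin, every step taken inside the band reads a deterministic gradient of size $\Theta(\Delta L_1)$, so after $t$ steps the stepsize coefficient of \Eqref{eq:adagrad_norm} is forced down to $\mathcal O(\eta/(\Delta L_1\sqrt t))$; consequently the $x$-iterate stays in the band for the first $\Theta((\eta L_1)^{-2})$ steps. If instead $\eta$ is large enough that the first displacement $(\eta/\gamma)|h'(\mathbf 0)|$ overshoots the band, the iterate lands in the exponentially steep tail of $h$; there, since the AdaGrad-Norm denominator grows only like the square root of the accumulated squared gradients while the gradient grows super-exponentially, the gradient norms blow up and never fall below $\epsilon$. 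Matching the overshoot threshold (in its noise-perturbed form, which is where $\sigma$ enters) to $\bar\eta$ finishes the case $\eta>\bar\eta$.

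\textbf{The slow-convergence branch.} For $\eta\le\bar\eta$, I would attach to the steep coordinate a high-dimensional Drori--Shamir-type instance carrying all the $\Theta(\sigma)$-magnitude noise, built as two sub-blocks. The first has effective smoothness $\Theta(\Delta L_1^2)$ — admissible because, while the $x$-iterate is confined to the band, the steep coordinate always contributes a gradient component of size $\Theta(\Delta L_1)$, so the affine bound $L_0+L_1\|\nabla f\|$ legitimately permits curvature $\Theta(\Delta L_1^2)$ — and optimality budget $\Theta(\Delta)$; the stochastic first-order lower bound then forces $\Omega(\Delta\cdot\Delta L_1^2\cdot\sigma^2/\epsilon^4)=\Omega(\Delta^2L_1^2\sigma^2/\epsilon^4)$ queries, which by the choice of $\bar\eta$ all occur before the iterate escapes the band, so the amplified curvature is never illegitimate. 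The second sub-block uses effective smoothness $\Theta(L_0)$ (gradients of size $\Theta(\epsilon)$ there) and budget $\Theta(\Delta)$, contributing $\Omega(\Delta L_0\sigma^2/\epsilon^4)$; the extra $\log(1+\sigma^2/\gamma^2)$ factor arises when the information-theoretic count of coordinate crossings is turned into a count of queries via the AdaGrad-Norm movement estimate $\sum_t\|\vx_{t+1}-\vx_t\|^2=\mathcal O\big(\eta^2\log(1+\sigma^{-2}\sum_i\|\vg_i\|^2)\big)$ in the noise-dominated regime $\|\nabla f\|\approx\epsilon\ll\sigma$ — the sole place $\gamma$ appears. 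The deterministic term $\Omega(\Delta^2L_1^2/\epsilon^2)$ comes from the $\sigma\to 0$ version of these constructions, where crossing the deterministic part of the hard region at per-step displacement $\mathcal O(\eta/\sqrt t)\le\mathcal O(\bar\eta/\sqrt t)$ costs $\Omega((\text{width}/\bar\eta)^2)$ steps. Summing the three contributions gives the claim.

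\textbf{Main obstacle.} The crux is the slow-convergence branch: building one function that is globally $(L_0,L_1)$-smooth, has total optimality gap $\le\Delta$, keeps every gradient norm reachable by \emph{any} Decorrelated AdaGrad-Norm trajectory at $\mathcal O(\Delta L_1)$ (so that the boosted effective smoothness $\Theta(\Delta L_1^2)$ is self-consistent, not circular), and on which the shared-stepsize dynamics genuinely drag the algorithm through the steep gadget and the stochastic sub-blocks in the intended order with the stepsize pinned as claimed. The second delicate point is choosing $\bar\eta$ (and the band width and block dimensions) so that the divergence threshold and the largest $\eta$ for which the slow-convergence argument still yields the full right-hand side coincide, up to the constants and logarithms hidden in $\tilde\Omega$ — which is precisely what the restrictions $\gamma\le\tilde{\mathcal O}(\Delta L_1)$ and $\epsilon\le\mathcal O(\min\{\sqrt{\Delta L_0},\sqrt{\Delta L_1\gamma},\Delta L_1\})$ secure.
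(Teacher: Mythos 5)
Your high-level architecture (a case split on $\eta$; divergence on an exponentially steep one-dimensional function for large $\eta$; a Drori--Shamir-type high-dimensional noisy instance for small $\eta$) matches the paper's, but two of your central technical claims do not hold, and they are exactly where the quantitative content of the theorem lives.

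First, the divergence threshold. You place it at $\bar\eta=\tilde\Theta\left(\epsilon^2/(\Delta L_1^2\sigma)\right)$ and let $\sigma$ enter through a ``noise-perturbed overshoot.'' For Decorrelated AdaGrad-\emph{Norm} the step size is a single scalar whose denominator accumulates $\|\vg_i\|^2$ over all coordinates, so a noise kick of size $\sigma$ at step $t$ also adds $\sigma^2$ to every future denominator; repeated kicks decay like $\eta\sigma/\sqrt{\gamma^2+t\sigma^2}$ and cannot sustain divergence. (Noise-driven divergence is the mechanism of Theorem \ref{thm:adagrad} for the \emph{coordinate-wise} variant, where each fresh coordinate keeps a denominator of $\gamma$.) Likewise, a single deterministic overshoot into the steep tail does not suffice: to keep overshooting forever one must solve the recurrence $\eta|g_t|/\sqrt{\gamma^2+\sum_{i<t}g_i^2}\gtrsim L_1^{-1}\log(1+L_1|g_{t+1}|/L_0)$, whose right-hand side is constrained by the \emph{future} gradient; this is only solvable (via gradients growing like $(t\log t)^t$, cf.\ Lemma \ref{lem:adagrad_norm_div}) when $\eta\gtrsim 1/L_1$. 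So divergence fails for $\eta\in(\bar\eta,\,1/L_1)$, and your slow-convergence branch, which assumes $\eta\le\bar\eta$, does not cover that range either: the case analysis has a hole.

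Second, the slow branch. You obtain the dominant term by inflating the curvature of the Drori--Shamir block to $\Theta(\Delta L_1^2)$, justified ``along every reachable trajectory'' because the gadget keeps $\|\nabla f\|=\Theta(\Delta L_1)$. But Assumption \ref{ass:obj} is a pointwise condition over all $\vx,\vy$, not over the trajectory, so the block must tolerate points where the gadget coordinate sits at its minimum while a block coordinate sits at maximal curvature; there the bound $L_0+L_1\|\nabla f(\vx)\|$ does not permit curvature $\Delta L_1^2$. You flag this circularity yourself but do not resolve it, and invoking the Drori--Shamir lower bound as a black box is also a category error: it is an information-theoretic bound over all algorithms on an $L$-smooth class, not a statement about a fixed algorithm on your glued function. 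The paper avoids all of this: the hard instance of Lemma \ref{lem:adagrad_norm_slow} is merely $L_0$-smooth, and the factor $\Delta^2L_1^2$ comes from the cap $\eta\le1/L_1$ limiting the per-step progress $\epsilon\eta/\sqrt{\gamma^2+t(\epsilon^2+\sigma^2)}$ along a linear coordinate of slope $\epsilon$, combined with the budget $f(\vx_0)-\inf_{\vx}f(\vx)\le\Delta$: descending for $T$ steps gains roughly $\epsilon^2\sqrt{T}/(L_1\sigma)$ in function value, which must stay below $\Delta$, forcing $T\lesssim\Delta^2L_1^2\sigma^2/\epsilon^4$. Without that mechanism (or a correct substitute) your proposal does not produce the claimed rate.
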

The proof is given in Appendix \ref{app:adagrad_norm_proof}. Before giving a
sketch of the proof, we make a few observations about the result. \textbf{(1)}
The lower bound contains the term $\Delta L_0 \sigma^2 \epsilon^{-4}$, which is
the optimal complexity for a first-order algorithm in the $L_0$-smooth,
non-convex, stochastic setting \citep{arjevani2023lower}. This means that
Decorrelated AdaGrad-Norm requires \textit{at least} as many iterations to solve
the current problem as any first order algorithm requires to solve the smooth
counterpart. \textbf{(2)} The dominating term is quadratic in $\Delta, L_1$.
Therefore, \textbf{under relaxed smoothness, Decorrelated AdaGrad-Norm cannot
recover the optimal complexity of the smooth case}, as one might hope.
\textbf{(3)} In the deterministic case (i.e., $\sigma = 0$), the complexity is
$\Delta^2 L_1^2 \epsilon^{-2}$, which is still quadratic in the problem
parameters $\Delta, L_1$ and does not match the complexity achieved by
deterministic GD in the $L_0$-smooth case, i.e., $\Delta L_0 \epsilon^{-2}$.
\textbf{(4)} \textbf{This lower bound for Decorrelated AdaGrad-Norm matches the
upper bound of AdaGrad-Norm in two out of three dominating terms}. The
dominating terms of the upper bound of AdaGrad-Norm from
\citet{wang2023convergence} (see Table 1) are
\begin{equation}
    \tilde{\mathcal{O}} \left( \frac{\Delta^2 L_1^2 \sigma^2}{\epsilon^4} + \frac{\Delta L_0 \sigma^2}{\epsilon^4} + \frac{\sigma^6}{\gamma^2 \epsilon^4} \right).
\end{equation}
The first two terms of this upper bound match our lower bound up to log terms.
Note that this result (Theorem 8 from \citet{wang2023convergence}) uses
(Bounded-Var), whereas we use (Bounded-Noise). However, their upper bound still
applies for the stronger (Bounded-Noise) and our lower bound still applies for
the weaker (Bounded-Var). The gap between our lower bound and this upper bound
is the third term (due to the noise $\sigma$), which means that either (a) the
upper bound can be decreased; (b) the lower bound can be increased; (c)
Decorrelated AdaGrad-Norm differs from AdaGrad-Norm in its dependence on the
noise $\sigma$; or (d) the gap is caused by the difference in noise assumptions.

Lastly, the condition $\Delta L_1^2 \geq L_0$ for Theorem \ref{thm:adagrad_norm} ensures
that $(L_0, L_1)$-smoothness does not degenerate to $L$-smoothness. Indeed, Lemma 3.5 of
\citet{li2024convex} implies that $\|\nabla f(\vx)\| \leq \mathcal{O}(\Delta L_1)$ for
every $\vx$ with $f(\vx) \leq f(\vzero)$, so $\|\nabla^2 f(\vx)\| \leq L_0 + L_1
\|\nabla f(\vx)\| \leq \mathcal{O} \left( L_0 + \Delta L_1^2 \right)$. Therefore, if the
condition $\Delta L_1^2 \geq L_0$ fails, then any objective $f$ which is $(L_0,
L_1)$-smooth is also $\Theta(L_0)$-smooth in a sublevel set containing the initial
point. We also require an upper bound on the stabilization constant: $\gamma \leq
\tilde{\mathcal{O}}(\Delta L_1)$, which covers all practical regimes in which $\gamma$
is chosen as a small constant. In Appendix \ref{app:gamma}, we show that this condition
can be removed in the deterministic setting while recovering the complexity lower-bound
$\tilde{\Omega}(\Delta^2 L_1^2 \epsilon^{-2})$.

\subsection{Proof Outline} \label{sec:adagrad_norm_sketch}
The proof of Theorem \ref{thm:adagrad_norm} follows two cases, depending on the
choice of the parameter $\eta$. If $\eta \geq 1/L_1$, then the algorithm can
diverge on a fast growing function. On the other hand, if $\eta \leq 1/L_1$,
then the algorithm converges slowly on a function with small gradient. This
proof structure is similar to previous lower bounds under relaxed smoothness
\citep{zhang2019gradient, crawshaw2022robustness}, but our result requires
significantly different constructions due to the structure of AdaGrad updates,
and since previous bounds only achieve $\epsilon^{-2}$ dependence, whereas we
show $\epsilon^{-4}$ dependence.

\textbf{Divergence when $\eta \geq 1/L_1$} We want the update size $\|\vx_{t+1} -
\vx_t\|$ to be lower bounded by a constant, but the step size decreases over $t$ due to
the sum of squared gradients in the denominator. Intuitively, this means that the
gradient magnitude $\|\vg_t\|$ should increase with $t$ to offset the decreasing step
size. However, faster growth of $\|\vg_t\|$ causes faster decrease in the effective step
size. We can balance these two effects and force the trajectory to diverge with a
properly constructed objective function and a sequence of gradients satisfying
$\|\vg_t\| = \Theta \left( (t \log t)^t \right)$, which is executed in Lemma
\ref{lem:adagrad_norm_div}.

\begin{lemma} \label{lem:adagrad_norm_div}
Suppose that $\Delta L_1^2 \geq L_0$, $\eta \geq \frac{1}{L_1}$, and $\gamma \leq
\tilde{\mathcal{O}}(\eta \Delta L_1^2)$. Then there exists a problem instance $(f, g,
\gD) \in \gF_{\text{as}}(\Delta, L_0, L_1, 0)$ such that $\|\nabla f(\vx_t)\| \geq
\Delta L_1$ for all $t \geq 0$.
\end{lemma}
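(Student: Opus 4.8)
The plan is to construct a one-dimensional hard instance (embedded in $\mathbb{R}^d$ through the first coordinate if a fixed dimension is required). Since $\sigma = 0$, take $\gD$ to be a point mass, so $g(\vx,\xi) = \nabla f(\vx)$ deterministically and (Bounded-Noise) with parameter $0$ holds trivially; it remains only to build $f$. Rather than giving $f$ in closed form, I would define it inductively together with the forced trajectory $\{\vx_t\}$ of Decorrelated AdaGrad-Norm started at $\vx_0 = \mathbf{0}$. The building block is a \emph{bump}: a compactly supported perturbation of a flat baseline at some height $h_0$, on which $f$ first rises ($f'$ increases from $0$ to a peak $M_k$ and back to $0$) and then falls ($f'$ decreases from $0$ to $-M_k$ and back to $0$), so that each bump is net-zero and never drops below $h_0$. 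I would let the peaks grow like $(k\log k)^k$: set $M_0 = \Theta(\Delta L_1)$, choose $M_{k+1}$ by the recursion below, and position bump $k{+}1$ so that after the update $\vx_{t+1} = \vx_t - \tfrac{\eta}{\sqrt{\gamma^2 + \sum_{i<t}\|\vg_i\|^2}}\vg_t$ the iterate $\vx_{t+1}$ lands on the \emph{descending} slope of bump $k{+}1$ at a point where $|f'| = \tfrac12 M_{t+1}$; thus $\|\vg_{t+1}\| = \tfrac12 M_{t+1}$. I would also place $\vx_0 = \mathbf{0}$ on the \emph{lower} descending slope of bump $0$, where $|f'| = \tfrac12 M_0$. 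The trajectory then hops forever from one descending slope to the next, always in a region of large gradient and never landing where $f' = 0$; and, because the steps overshoot, $f(\vx_t)$ climbs without bound even though $\inf f = h_0$.

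Three facts need verification. (i) $(L_0,L_1)$-smoothness, i.e.\ $|f''| \leq L_0 + L_1|f'|$ pointwise: on a rising (or falling) slope this caps the rate of change of $f'$, so $f'$ can climb from $\approx 0$ to $M_k$ only over a horizontal length of order $\tfrac{1}{L_1}\log(M_k L_1/L_0)$; the hypotheses $\Delta L_1^2 \geq L_0$ and $M_k \geq \Delta L_1$ keep this quantity nonnegative and of the stated order, and decreasing $f'$ imposes no extra constraint. Hence placing bump $k{+}1$ between two consecutive iterates requires the step length $d_t := \|\vx_{t+1}-\vx_t\|$ to exceed the portion of bumps $k$ and $k{+}1$ crossed, which is of order $\tfrac{1}{L_1}\log M_{t+1}$. (ii) $f(\mathbf{0}) - \inf f \leq \Delta$: since every bump returns to $h_0$ and never goes below it (and a flat left tail makes $f$ bounded below), $\inf f = h_0$; placing $\vx_0 = \mathbf{0}$ far enough down bump $0$'s descending slope makes $f(\mathbf{0}) - h_0$ equal to the integral of $|f'|$ over the remaining descent, which is $O(M_0/L_1) = O(\Delta)$ because $|f'|$ may be taken to decay at rate at least $L_1$ past $\vx_0$; choosing the constant in $M_0 = \Theta(\Delta L_1)$ appropriately makes this $\leq \Delta$. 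The enormous peaks $M_k$ of later bumps never enter this bound, as they lie above $h_0$, not below. (iii) $\|\nabla f(\vx_t)\| \geq \Delta L_1$ for all $t$: immediate from $\|\vg_t\| = \tfrac12 M_t$ with $M_t$ nondecreasing and $M_0 = \Theta(\Delta L_1)$, taking the constant so that $\tfrac12 M_0 \geq \Delta L_1$.

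The crux is reconciling (i) with the dynamics of the algorithm. Because the denominator $\sqrt{\gamma^2 + \sum_{i<t}\|\vg_i\|^2}$ only grows, the step size shrinks; once the $\|\vg_i\|$ grow fast the sum is dominated by its last term, so $d_t \approx \eta\,\|\vg_t\|/\|\vg_{t-1}\|$. Requirement (i), that $d_t$ be of order at least $\tfrac{1}{L_1}\log M_{t+1}$, then reads $\eta L_1\,\|\vg_t\|/\|\vg_{t-1}\| \geq \Omega(\log M_{t+1})$, and since $\eta L_1 \geq 1$ it is enough that $\|\vg_t\|/\|\vg_{t-1}\|$ be of order at least $\log M_{t+1}$; writing $m_t = \log\|\vg_t\|$, this says $m_t - m_{t-1}$ must be of order $\log m_{t+1} \approx \log m_t$, whose solution is $m_t = \Theta(t\log t)$, i.e.\ $\|\vg_t\| = \Theta\big((t\log t)^t\big)$. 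The base case $t=0$ is where $\gamma \leq \tilde{\mathcal{O}}(\eta\Delta L_1^2)$ is used: it forces $d_0 = \eta\|\vg_0\|/\gamma$ to be at least of order $1/L_1$ (up to logarithmic factors), which is the room needed to reach bump $1$, so the trajectory escapes bump $0$ rather than converging to its (stationary) peak. I expect the main obstacle to be making this inductive placement fully rigorous: for every $t$ one must check simultaneously that the prescribed $M_t$ is consistent with the step $d_t$ it induces, that bump $t{+}1$ does not overlap bump $t$, that the landing point carries gradient exactly $\tfrac12 M_{t+1}$, and that the global interpolant of $f'$ satisfies $|f''| \leq L_0 + L_1|f'|$ everywhere --- a bookkeeping task that pins down the universal constants and the precise logarithmic slack in the hypothesis on $\gamma$.
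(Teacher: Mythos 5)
Your construction is essentially the paper's: a one-dimensional piecewise objective whose gradient magnitudes grow like $\left((t+1)\log(t+1)\right)^{t}\Delta L_1$, the same key recurrence $\eta\|\vg_t\|/\sqrt{\gamma^2+\sum_{i<t}\|\vg_i\|^2}\geq \Omega\left(\tfrac{1}{L_1}\log(1+L_1\|\vg_{t+1}\|/L_0)\right)$ verified by showing the denominator is dominated by its last term, the same use of the upper bound on $\gamma$ to handle the $t=0$ step, and the same direct accounting that $f(\vx_0)-\inf f\leq\Delta$ with $f$ bounded below; the paper merely realizes the pieces as valleys and hills joined by linear ramps (with the iterate landing exactly at piece boundaries) rather than as bumps over a flat baseline, which is cosmetic. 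One small correction: when $f'$ decreases from $M_k$ back to $0$, the condition $|f''|\leq L_0+L_1|f'|$ still binds (it bounds $|f''|$, not $f''$), so the falling halves of each bump also require width of order $\tfrac{1}{L_1}\log(1+L_1 M_k/L_0)$, but this only changes universal constants and does not affect your order estimates or the conclusion.
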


\begin{figure} \label{fig:psi_div}
\centering
\begin{subfigure}[t]{0.48\textwidth}
    \includegraphics[width=\textwidth]{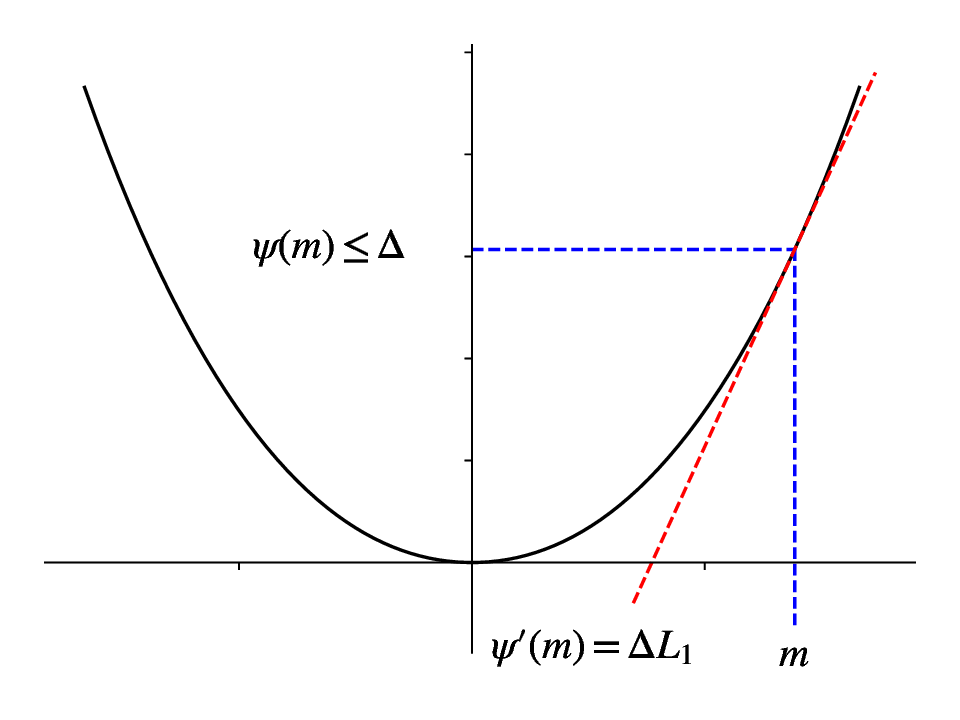}
    \caption{Graph of $\psi(x) = \frac{L_0}{L_1^2} \left( \exp(L_1 |x|) - L_1 |x| - 1 \right)$.}
    \label{fig:psi}
\end{subfigure}
~
\begin{subfigure}[t]{0.48\textwidth}
    \includegraphics[width=\textwidth]{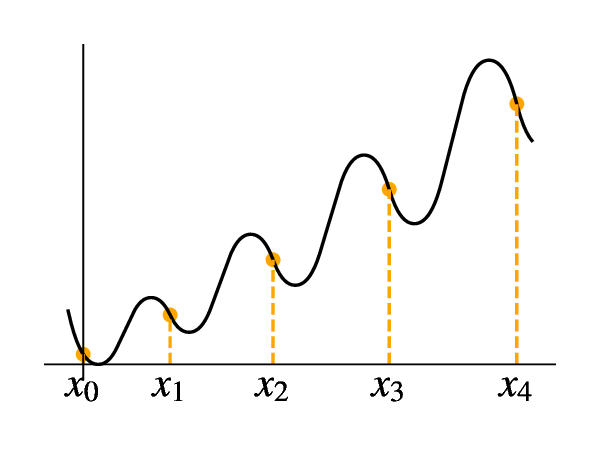}
    \caption{Trajectory from Lemma \ref{lem:adagrad_norm_div}.}
    \label{fig:adagrad_norm_div}
\end{subfigure}
\caption{Objectives from Lemma \ref{lem:adagrad_norm_div}. $m := (\psi')^{-1}(\Delta L_1) = \frac{1}{L_1} \log \left( 1 + \frac{\Delta L_1^2}{L_0} \right)$.}
\end{figure}

The trajectory of the algorithm analyzed in Lemma \ref{lem:adagrad_norm_div} is
informally pictured in Figure \ref{fig:adagrad_norm_div}. The objective function is a
piecewise combination of copies of the function \(
    \psi(x) = \frac{L_0}{L_1^2} \left( \exp(L_1 |x|) - L_1 |x| - 1 \right),
\)
which is shown in Figure \ref{fig:psi}. $\psi$ is constructed to satisfy
$|\psi''(x)| = L_0 + L_1 |\psi'(x)|$ for all $x$, so it grows as quickly as
possible under $(L_0, L_1)$-smoothness. As shown in Figure
\ref{fig:adagrad_norm_div}, at each step the algorithm receives a gradient $g_t$
and ``jumps" over a valley to reach a new point with gradient $g_{t+1}$. In
order to achieve this jump, we need the sequence of gradients to satisfy \[
    \frac{\eta |g_t|}{\sqrt{\gamma^2 + \sum_{i=0}^{t-1} g_i^2}} \geq \frac{4}{L_1} \log \left( 1 + \frac{L_1 |g_{t+1}|}{L_0} \right).
\]
This recurrent inequality is tricky since the magnitude of each $|g_t|$ is constrained
not just by the history $\{|g_i|\}_{i < t}$, but also by the future $g_{t+1}$. We show
that this requirement is satisfied if $g_t = \Theta \left( \left( (t+1) \log (1 + \Delta
L_1^2 L_0^{-1} (t+1)) \right)^t \Delta L_1 \right)$, and that this sequence of gradients
can be realized by an objective function in $\gF_{\text{as}}(\Delta, L_0, L_1, \sigma)$.

\textbf{Slow convergence when $\eta \leq 1/L_1$} In this case, previous lower
bounds \citep{zhang2019gradient, crawshaw2022robustness} consider a
one-dimensional function with deterministic gradients to show a complexity of
$\Omega \left( \Delta^2 L_1^2 \epsilon^{-2} \right)$. To achieve $\epsilon^{-4}$
dependence, we consider a high-dimensional function with stochastic gradients
(adapted from \citet{drori2020complexity}), for which the first partial
derivative $\nabla_1 f$ has magnitude $\epsilon$, and the stochastic gradient
noise affects coordinates with index greater than 1. Since the same learning
rate is shared by all coordinates, the noise in later coordinates will decrease
the learning rate for the first coordinate. Combining with $\eta \leq 1/L_1$
leads to the desired complexity.

\begin{lemma} \label{lem:adagrad_norm_slow}
Let $T = \Theta \left( \Delta^2 L_1^2 \sigma^2 \epsilon^{-4} + \Delta L_0
\sigma^2 \epsilon^{-4} + \Delta^2 L_1^2 \epsilon^{-2} \right)$, and suppose $d
\geq T$ and $\epsilon \leq \mathcal{O} \left( \min \left\{ \sqrt{\Delta L_0},
\sqrt{\Delta L_1 \gamma} \right\} \right)$. If $\eta \leq \frac{1}{L_1}$, then
there exists some $(f, g, \gD) \in \gF_{\text{as}}(\Delta, L_0, L_1, \sigma)$
such that $\|\nabla f(\vx_t)\| = \epsilon$ for all $0 \leq t \leq T-1$.
\end{lemma}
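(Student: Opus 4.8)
The plan is to dispatch the case $\eta \le 1/L_1$ by exhibiting, for $d \ge T$, a single high-dimensional instance on which Decorrelated AdaGrad-Norm, started at $\vzero$, stays for $T$ steps inside a region where the gradient norm is exactly $\epsilon$. Take $f(\vx) = -\epsilon\,\phi(x_1) + \sum_{j=2}^{d} h(x_j)$, where $\phi$ is a nondecreasing $C^1$ function, bounded above, with $\phi(0)=0$, $\phi' \equiv 1$ on a segment $[0,R]$ with $R = \Theta(\Delta/\epsilon)$, and $\phi'$ bending down to $0$ just past $R$ (so $f$ is bounded below by about $-\epsilon R \ge -\Delta$); and $h \ge 0$ is a $C^1$ function that vanishes on $[-r,r]$ with $r := \eta\sigma/\gamma$ and curves gently upward outside. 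For the oracle set $g(\vx,\xi)_1 = \nabla_1 f(\vx)$ and, for $j \ge 2$, $g(\vx,\xi)_j = \nabla_j f(\vx) + \mathbf{1}\{j = a(\vx)\}\,\xi\sigma$, where $\xi$ is a Rademacher sign and $a(\vx) := 2 + \#\{k \ge 2 : x_k \ne 0\}$ (and $g = \nabla f$ if $a(\vx) > d$). This is the device of \citet{drori2020complexity}: at each step the noise is deposited in a \emph{fresh} coordinate that is never revisited, so it never corrupts the true gradient. Membership in $\gF_{\text{as}}(\Delta, L_0, L_1, \sigma)$ is then routine: $\|g - \nabla f\| \le \sigma$ almost surely, $\mathbb{E}_\xi g = \nabla f$, $f$ is bounded below with $f(\vzero) - \inf f \le \Delta$, and $(L_0, L_1)$-smoothness holds because $\nabla^2 f$ is block diagonal with $|\nabla^2_{11} f| = \epsilon|\phi''| \le L_0 + L_1 \epsilon$ on the bend and $|\nabla^2_{jj} f| = |h''| \le L_0 + L_1 |h'|$ on the coordinate blocks.

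On this instance Decorrelated AdaGrad-Norm runs \emph{deterministically}. By induction $a(\vx_t) = t+2$, so at step $t$ exactly coordinates $2,\dots,t+1$ have been kicked, each once, and therefore $\|\vg_t\|^2 = \epsilon^2 + \sigma^2$ with probability one; hence the shared step size at step $t$ equals $\eta / \sqrt{\gamma^2 + t(\epsilon^2 + \sigma^2)}$ — in particular the noise injected into the later coordinates inflates the common denominator and throttles coordinate $1$. Two consequences follow: (i) each noise coordinate drifts by at most $\eta\sigma/\gamma = r$ — its single kick occurs at step $0$, where the step size is largest — so it stays in the flat part of $h$ and contributes nothing to $\nabla f$; and (ii) coordinate $1$ advances monotonically by $P_T = \sum_{t=0}^{T-1} \eta\epsilon / \sqrt{\gamma^2 + t(\epsilon^2 + \sigma^2)}$, which an integral comparison bounds by $\mathcal{O}(\eta\epsilon\sqrt{T} / \sqrt{\epsilon^2 + \sigma^2}) + \mathcal{O}(\eta\epsilon/\gamma)$. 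So long as $P_T \le R$, coordinate $1$ never leaves the unit-slope segment of $\phi$, which forces $\nabla f(\vx_t) = (-\epsilon, 0, \dots, 0)$ and $\|\nabla f(\vx_t)\| = \epsilon$ for all $0 \le t \le T-1$. The hypothesis $\epsilon \le \mathcal{O}(\sqrt{\Delta L_1 \gamma})$ is precisely what makes even the first and largest step $\eta\epsilon/\gamma$ smaller than $R = \Theta(\Delta/\epsilon)$ once $\eta \le 1/L_1$, and $\epsilon \le \mathcal{O}(\sqrt{\Delta L_0})$ leaves enough room for $\phi'$ to bend from $1$ down to $0$ — at curvature at most $L_0 + L_1\epsilon$ — while keeping the overall descent within the budget $\Delta$.

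Solving $P_T \le R$ for the largest admissible $T$, with $R = \Theta(\Delta/\epsilon)$, shows the trajectory is safe whenever $T \le \Theta(R^2(\epsilon^2 + \sigma^2) / (\eta^2 \epsilon^2))$; since smaller $\eta$ only enlarges this threshold, the instance works uniformly over $\eta \le 1/L_1$ with $T = \Theta(\Delta^2 L_1^2 (\epsilon^2 + \sigma^2) / \epsilon^4) = \Theta(\Delta^2 L_1^2 \sigma^2 / \epsilon^4 + \Delta^2 L_1^2 / \epsilon^2)$, and in the regime $\Delta L_1^2 \ge L_0$ relevant to Theorem \ref{thm:adagrad_norm} this already dominates $\Delta L_0 \sigma^2/\epsilon^4$, giving the stated value of $T$. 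The step I expect to be the main obstacle is the bookkeeping that pins $\|\nabla f(\vx_t)\|$ at exactly $\epsilon$ rather than merely at least $\epsilon$: one must check that the noise never leaks into coordinate $1$ (hence the clean split between the signal coordinate and the noise coordinates), that each noise coordinate is visited only once and drifts no farther than $h$'s flat radius (which is what forces $d \ge T$ and exploits the monotonically decreasing AdaGrad-Norm step sizes), and that the bend of $\phi$ and the rise of $h$ are simultaneously realizable under $(L_0, L_1)$-smoothness given the two upper bounds imposed on $\epsilon$.
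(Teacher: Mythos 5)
Your construction is essentially the paper's: the same Drori--Shamir device of injecting $\pm\sigma$ into a fresh coordinate at each step so that $\|\vg_t\|^2=\epsilon^2+\sigma^2$ deterministically inflates the shared denominator and throttles the signal coordinate, and the same integral bound on $\sum_t (\gamma^2+t(\epsilon^2+\sigma^2))^{-1/2}$. Your bounded ramp $\phi$ is a legitimate simplification that avoids invoking Lemma 1 of \citet{drori2020complexity} to repair lower-boundedness, and your accounting of the two conditions on $\epsilon$ matches the paper's.

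However, there is a genuine gap: your instance cannot produce the $\Delta L_0\sigma^2\epsilon^{-4}$ term in $T$ (in the paper's version, $\Delta L_0\sigma^2\log(1+\sigma^2/\gamma^2)\epsilon^{-4}$). Because your $h$ is identically zero on the interval $[-r,r]$ where every noise coordinate lands, the kicked coordinates contribute nothing to the function value, so the entire optimality-gap budget $\Delta$ must be paid by the descent $\epsilon P_T$ of the first coordinate; this caps $T$ at $\Theta(\Delta^2L_1^2\sigma^2\epsilon^{-4}+\Delta^2L_1^2\epsilon^{-2})$ and nothing more. The paper instead calibrates the $i$-th coordinate's potential $h_i$ so that the single kick of size $a_i=\alpha_i\sigma$ lands \emph{exactly} at the top of an upward parabola, at a point where $h_i'(\pm a_i)=0$ but $h_i(\pm a_i)=\frac{L_0}{4}a_i^2>0$. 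The deposited value $\frac{L_0\sigma^2}{4}\eta^2\sum_{i<t}(\gamma^2+i(\epsilon^2+\sigma^2))^{-1}\gtrsim \frac{L_0\sigma^2\eta^2}{\epsilon^2+\sigma^2}\log(1+t\sigma^2/\gamma^2)$ is \emph{subtracted} from $-f(\vx_t)$, creating extra budget that accommodates the additional $\Delta L_0\sigma^2\log(1+\sigma^2/\gamma^2)\epsilon^{-4}$ steps (the paper's $D_2$ term, bounded by optimizing a downward parabola in $\eta$). Your fallback --- that $\Delta^2L_1^2\sigma^2\epsilon^{-4}$ dominates $\Delta L_0\sigma^2\epsilon^{-4}$ when $\Delta L_1^2\ge L_0$ --- is not available for the lemma as stated (which carries no such hypothesis), and even under that hypothesis it does not recover the $\log(1+\sigma^2/\gamma^2)$ factor, which can be arbitrarily large. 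A minor point besides: coordinate $j$ is kicked at step $j-2$, not at step $0$; the bound $\eta\sigma/\gamma$ on its displacement survives only because the step sizes are decreasing, which you should state as such.
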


See Appendix \ref{app:adagrad_norm_proof} for details on the objective function,
stochastic gradient oracle, and analysis of the trajectory. The theorem is then
proved by combining Lemmas \ref{lem:adagrad_norm_div} and
\ref{lem:adagrad_norm_slow}: No matter the choice of the parameter $\eta$, the
algorithm will not find an $\epsilon$-stationary point within the first $T$
steps.

\section{AdaGrad and Decorrelated AdaGrad} \label{sec:adagrad}
Here we present lower bounds for Decorrelated AdaGrad (Theorem \ref{thm:adagrad}) and
the original AdaGrad (Theorem \ref{thm:vanilla_adagrad}). Both lower bounds are
quadratic in $\Delta, L_0$, but our result for Decorrelated AdaGrad has a stronger
dependence on $\sigma$ than that of AdaGrad. This discrepancy is further discussed
below.

\begin{theorem} \label{thm:adagrad}
Denote $\gF = \gF_{\text{as}}(\Delta, L_0, L_1, \sigma)$, and let $A_{\text{DA}}$ denote
Decorrelated AdaGrad (\Eqref{eq:adagrad}) with parameters $\eta, \gamma > 0$. Suppose $0
< \epsilon < \tilde{\mathcal{O}} \left( \min \left\{ \Delta L_1, \sqrt{\Delta L_1
\sigma} \right\} \right)$. Then \[
    \gT(A_{\text{DA}}, \gF, \epsilon) \geq \Omega \left( \frac{\Delta^2 L_0^2 \sigma^2}{\gamma^2 \epsilon^4} + \frac{\Delta^2 L_1^2 \sigma^2}{\gamma^2 \epsilon^2 \log^2 \left( 1 + \frac{\Delta L_1^2}{L_0} \right)} \right).
\]
\end{theorem}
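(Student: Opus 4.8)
\textbf{Proof proposal for Theorem \ref{thm:adagrad}.}

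The plan is to mirror the two-case structure used for Theorem \ref{thm:adagrad_norm}, but now exploiting the \emph{coordinate-wise} denominator of Decorrelated AdaGrad (\Eqref{eq:vanilla_adagrad}). The key distinction from the AdaGrad-Norm case is that each coordinate $j$ carries its own accumulator $\sqrt{\gamma^2 + \sum_{i<t} (\vg_i)_j^2}$, so a construction can isolate effects on a single coordinate without the ``noise bleeding into the learning rate'' mechanism of Lemma \ref{lem:adagrad_norm_slow}. I would split on whether $\eta$ is large or small relative to a threshold of the form $\tilde\Theta(\gamma/(L_1\sigma))$ (as hinted in the introduction), rather than the threshold $1/L_1$ used before --- the appearance of $\gamma$ and $\sigma$ in the threshold is exactly what produces the $\gamma^{-2}$ and $\sigma^2$ factors in the final bound.

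\textbf{Case $\eta$ large (divergence).} First I would build a one-dimensional, or effectively one-coordinate, instance in $\gF_{\text{as}}(\Delta, L_0, L_1, \sigma)$ on which the iterates fail to approach stationarity. The idea: choose an objective whose gradient on the active coordinate stays of order $\Delta L_1$ (using a rescaled copy of the function $\psi$ from Lemma \ref{lem:adagrad_norm_div}, which saturates $(L_0,L_1)$-smoothness), and design the stochastic oracle so that, with the denominator only ever seeing squared gradients of size $\Theta((\Delta L_1)^2)$ plus $\gamma^2$, the single-coordinate step $\eta (\vg_t)_j / \sqrt{\gamma^2 + \sum_{i<t}(\vg_i)_j^2}$ is large enough (in an appropriate expected sense over a biased walk) to ``jump the valley'' repeatedly. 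The noise enters because we want the \emph{sign} of the update to be controllable: with bounded noise $\sigma$, we can make the effective per-step progress toward the minimum be bounded below only when $\eta$ exceeds $\tilde\Omega(\gamma/(L_1\sigma))$. I expect the cleanest route is an in-expectation argument: show $\E[\|\nabla f(\vx_t)\|] \geq \Omega(\Delta L_1) > \epsilon$ for all $t$ up to the target horizon, so no finite-time expected guarantee is possible, making $\gT = \infty$ in this regime --- hence trivially above the claimed bound.

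\textbf{Case $\eta$ small (slow convergence).} Here I would use a high-dimensional ``ramp'' objective adapted from \cite{drori2020complexity}: $\nabla_1 f$ has constant magnitude $\Theta(\epsilon)$ along a long trajectory, each coordinate is touched once and then frozen, and smoothness is set so that $L_0$ governs the curvature along the active direction. The stochastic gradient has bounded noise $\sigma$ supported so that $(\vg_i)_j^2 = \Theta(\epsilon^2 + \sigma^2)$ on the coordinate being traversed; then the denominator at step $t$ on coordinate $t$ is just $\gamma$ (no accumulation, since that coordinate is fresh in the decorrelated sum), but the \emph{self-normalized} progress per step is $\eta(\vg_t)_t/\sqrt{\gamma^2 + (\vg_{t-1})_t^2 \cdot 0} \approx \eta \epsilon/\gamma$ --- wait, more carefully, with decorrelation the current coordinate's accumulator is $\gamma^2$, so the step is $\approx \eta (\text{signal}\pm\text{noise})/\gamma$; requiring $\eta \leq \tilde O(\gamma/(L_1\sigma))$ from Case 1 then caps the per-step progress, and summing over the ramp length gives that reaching $f(\vx)-\inf f \leq$ anything meaningful, equivalently $\|\nabla_1 f\|<\epsilon$, takes $\Omega(\Delta^2 L_0^2 \sigma^2 \gamma^{-2}\epsilon^{-4} + \Delta^2 L_1^2\sigma^2\gamma^{-2}\epsilon^{-2}/\log^2(1+\Delta L_1^2/L_0))$ steps. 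The $\log^2$ factor in the second term should come from the width $m = L_1^{-1}\log(1+\Delta L_1^2/L_0)$ of one ``tooth'' of the $\psi$-type piece, exactly as in Figure \ref{fig:psi_div}.

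\textbf{Main obstacle.} The hardest part will be Case 1: engineering a stochastic oracle, consistent with (Bounded-Noise) and with being an honest gradient of a single fixed $(L_0,L_1)$-smooth function, for which the \emph{coordinate-wise} decorrelated step reliably produces non-vanishing expected distance from stationarity when $\eta\gtrsim \gamma/(L_1\sigma)$. Unlike the norm version, here one cannot hide the divergence mechanism in a deterministic construction of growing gradients --- I believe the noise is essential, and controlling a biased random walk whose bias depends on the running accumulator (which is itself random) is the delicate step. I would handle this by keeping the accumulator nearly deterministic (gradients bounded above and below by constant multiples of $\Delta L_1$) so that the walk's bias is uniformly bounded below, then invoke a standard submartingale / optional-stopping argument to lower-bound the expected gradient norm over the horizon.
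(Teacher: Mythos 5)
Your two-case skeleton and the threshold $\eta \gtrsim \gamma/(L_1\sigma)$ are the right ones, but the proposal puts the key construction in the wrong case, and the divergence argument as described would not go through.

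\textbf{The divergence case.} The paper's hard instance for large $\eta$ is \emph{high-dimensional}: $f(\vx)=\sum_{i}\psi(\langle\vx,\mathbf{e}_i\rangle)$ with $\vx_0=m\mathbf{e}_1$, and an oracle that injects noise $\pm\sigma$ into one \emph{fresh} coordinate per step (the first coordinate still sitting at its minimum). Because that coordinate's partial gradients were identically zero at all earlier steps, its decorrelated accumulator at step $t$ is exactly $\gamma^2$, so the update on it has magnitude exactly $\eta\sigma/\gamma$, \emph{constant in $t$}. If $\eta\sigma/\gamma\ge m=L_1^{-1}\log(1+L_1\epsilon/L_0)$, the coordinate lands at distance $\ge m$ from its minimum regardless of the sign of the noise (by symmetry of $\psi$), so $\|\nabla f(\vx_t)\|\ge\psi'(m)=\epsilon$ for all $t$ almost surely --- no martingale, no optional stopping, no bias control. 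Your proposed one-dimensional (single active coordinate) construction cannot work: if the gradient on the active coordinate stays $\Theta(\Delta L_1)$, its accumulator grows linearly in $t$ and the step size decays like $t^{-1/2}$, so any fixed ``jump the valley'' requirement eventually fails unless you let the gradients grow superexponentially as in Lemma \ref{lem:adagrad_norm_div} --- which only yields divergence for $\eta\ge 1/L_1$, not for the much smaller threshold $\gamma/(L_1\sigma)$ that you need. You correctly identify this as the main obstacle, but the fix you sketch (keeping the accumulator ``nearly deterministic'' and running a biased walk) does not remove the $t^{-1/2}$ decay; the fresh-coordinate trick does.

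\textbf{The slow case.} Conversely, here the paper uses a purely \emph{one-dimensional, deterministic} linear ramp of slope $\epsilon$ glued to $\psi$; the factors $\sigma^2/\gamma^2$ and the $\log^2$ enter only through the assumed upper bound $\eta\le\frac{\sqrt{2}\gamma}{L_1\sigma}\log(1+\frac{L_1\epsilon}{L_0})$ (and $\log(1+x)\le x$ for the $L_0$ term), substituted into the elementary bound $t_0\ge\Delta^2/(64\eta^2\epsilon^2)$. The Drori-style high-dimensional construction with noisy fresh coordinates that you propose is exactly what does \emph{not} help for Decorrelated AdaGrad: with per-coordinate accumulators, noise injected into coordinate $j>1$ never enters the denominator of coordinate $1$, so it cannot slow the first coordinate down. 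That mechanism is specific to the shared-denominator AdaGrad-Norm of Lemma \ref{lem:adagrad_norm_slow}, which is why the $\epsilon^{-4}$ term for Decorrelated AdaGrad carries the prefactor $\Delta^2L_0^2\sigma^2/\gamma^2$ (quadratic in $\Delta,L_0$) rather than the $\Delta L_0\sigma^2$ achievable via the Drori construction.

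In short: swap the roles of your two constructions --- high-dimensional fresh-coordinate noise for divergence, one-dimensional deterministic ramp for slow convergence --- and drop the random-walk machinery; then the argument matches the paper's.
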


\begin{theorem} \label{thm:vanilla_adagrad}
Denote $\gF = \gF_{\text{as}}(\Delta, L_0, L_1, \sigma)$, and let $A_{\text{ada}}$
denote AdaGrad (\Eqref{eq:vanilla_adagrad}) with parameters $\eta, \gamma > 0$. Suppose
$0 < \epsilon < \tilde{\mathcal{O}} \left( \min \left\{ \Delta L_1, \sqrt{\Delta L_1
\sigma} \right\} \right)$. If $\gamma \leq \sigma$, then \[
    \gT(A_{\text{ada}}, \gF, \epsilon) \geq \Omega \left( \frac{\Delta^2 L_0^2}{\epsilon^4} + \frac{\Delta^2 L_1^2}{\epsilon^2 \log^2 \left( 1 + \frac{\Delta L_1^2}{L_0} \right)} \right).
\]
\end{theorem}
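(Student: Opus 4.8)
The plan is to follow the two-case proof structure of Theorems~\ref{thm:adagrad_norm} and~\ref{thm:adagrad}, splitting on the size of the step-size coefficient $\eta$, and to adapt the constructions used for Decorrelated AdaGrad to the single structural difference of AdaGrad: its denominator at step $t$, $\sqrt{\gamma^2 + \sum_{i=0}^{t}\vg_i^2}$, contains the current stochastic gradient, so it equals Decorrelated AdaGrad's denominator at step $t+1$. Two consequences organize everything. First, the per-coordinate update never exceeds $\eta$, so a single step cannot blow up. Second, in any coordinate carrying stochastic noise of scale $\sigma$, that coordinate's denominator grows by $\Theta(\sigma^2)$ per step and so dominates $\gamma^2$ after one step; hence wherever the analysis of Theorem~\ref{thm:adagrad} used the scale $\gamma$ we may now use the scale $\sigma$, which is exactly why the resulting bound is that of Theorem~\ref{thm:adagrad} with $\sigma^2/\gamma^2$ replaced by $1$, and why the hypothesis $\gamma \le \sigma$ enters.

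\textbf{Large $\eta$: divergence.} For $\eta$ above a threshold $\eta_0$ I would construct a high-dimensional ``staircase'': at step $k$ all of the gradient lives in coordinate $k$, which has not been used before, so its denominator is only $\gamma^2 + g_{k,k}^2$ and AdaGrad moves $\Theta(\eta)$ in coordinate $k$ --- crucially, this step does not decay with $t$, since gradients never accumulate in a fixed coordinate. When $\eta_0$ exceeds a constant times the half-width $\frac{1}{L_1}\log(1+\Delta L_1^2/L_0)$ of a valley of the extremal $(L_0,L_1)$-smooth function $\psi(x) = \frac{L_0}{L_1^2}(\exp(L_1|x|) - L_1|x| - 1)$ (which has depth at most $\Delta$), each step overshoots coordinate $k$'s valley into a region with gradient norm again $\ge \Delta L_1 > \epsilon$, so $\|\nabla f(\vx_t)\| \ge \Delta L_1$ for all $t$ and the algorithm never finds an $\epsilon$-stationary point; since every coordinate is used once, the pieces glue into a globally $(L_0,L_1)$-smooth $f$ with $f(\vzero) - \inf f \le \Delta$.

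\textbf{Small $\eta$: slow convergence.} For $\eta \le \eta_0$ I would take a high-dimensional objective in the style of \citet{drori2020complexity} whose first partial derivative equals $\epsilon$ on a region of width $\Theta(\Delta/\epsilon)$ in the first coordinate --- built from scaled copies of $\psi$, so that traversing a high-gradient region is also slow --- together with an adversarial unbiased (Bounded-Noise) oracle that puts noise of scale $\sigma$ into the first coordinate. Then that coordinate's denominator is $\Theta(\sqrt{(t+1)(\epsilon^2+\sigma^2)})$ (using $\gamma\le\sigma$), so AdaGrad's signed displacement along it over $T$ steps is in expectation $\Theta(\eta\epsilon\sqrt{T}/\sqrt{\epsilon^2+\sigma^2})$, and forcing this not to exceed $\Theta(\Delta/\epsilon)$ gives $T = \tilde{\Omega}(\Delta^2/(\eta^2\epsilon^2))$, valid for every $\eta\le\eta_0$ (a smaller step only slows the algorithm); the $\psi$-copies contribute the $L_0$, $L_1$, and $\log(1+\Delta L_1^2/L_0)$ dependence. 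Choosing $\eta_0$ of order $\frac{1}{L_1}\log(1+\Delta L_1^2/L_0)$ so that this bound at $\eta_0$ equals $\Omega(\Delta^2 L_0^2\epsilon^{-4} + \Delta^2 L_1^2 \epsilon^{-2}\log^{-2}(1+\Delta L_1^2/L_0))$, and combining with the divergence case (infinite complexity), the theorem follows for every $\eta$.

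\textbf{Main obstacle.} The hard part is the divergence construction. Because AdaGrad's step is capped at $\eta$ per coordinate, the runaway arguments used for AdaGrad-Norm ($\eta\ge1/L_1$) and Decorrelated AdaGrad ($\eta \ge \tilde{\Omega}(\gamma/(L_1\sigma))$) do not apply; divergence must be produced coordinate-by-coordinate, so one must (i) arrange the gradient scaling across the staircase and the valley geometry so that a step of size $\Theta(\eta)$ genuinely lands where the gradient norm is $\ge \Delta L_1$, (ii) control the small residual contributions of earlier coordinates' denominators (and the interplay with a possibly large $\gamma$, handled via $\gamma\le\sigma$ and the noise placement), and (iii) verify that the glued objective is globally $(L_0,L_1)$-smooth with sublevel-set depth at most $\Delta$. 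A related subtlety is fixing the threshold and the noise placement so that the two cases together cover all regimes of $\eta$ and of the relative sizes of $\epsilon, \sigma, L_0, L_1, \Delta$.
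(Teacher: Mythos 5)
Your overall two-case strategy and your ``fresh coordinate per step'' divergence mechanism are the same as the paper's (the paper's Lemma~\ref{lem:adagrad_div} injects noise $\pm\sigma$ into a previously untouched coordinate each step, so that coordinate's AdaGrad denominator is only $\sqrt{\gamma^2+\sigma^2}\le\sqrt{2}\,\sigma$ and the jump is $\ge \eta/\sqrt{2}$ --- this is exactly where $\gamma\le\sigma$ enters, as you identified). But there is a concrete gap in how you set the threshold $\eta_0$, and it breaks the dominating term of the bound. You take $\eta_0 = \Theta\bigl(\tfrac{1}{L_1}\log(1+\Delta L_1^2/L_0)\bigr)$, i.e.\ the half-width $(\psi')^{-1}(\Delta L_1)$, because you insist that the overshoot land where the gradient norm is $\ge \Delta L_1$. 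That is an unnecessary strengthening: for divergence it suffices that the kicked coordinate land where its partial derivative is $\ge\epsilon$, so the correct (and the paper's) threshold is $\eta_0=\Theta\bigl(\tfrac{1}{L_1}\log(1+\tfrac{L_1\epsilon}{L_0})\bigr)=(\psi')^{-1}(\epsilon)$. The difference is not cosmetic. In the slow case your bound is $T\gtrsim \Delta^2/(\eta_0^2\epsilon^2)$; with your threshold this evaluates to $\tfrac{\Delta^2L_1^2}{\epsilon^2\log^2(1+\Delta L_1^2/L_0)}$ and, via $\log(1+x)\le x$, to $\eta_0\le\Delta L_1/L_0$, hence only $T\gtrsim L_0^2/(L_1^2\epsilon^2)$ --- which falls short of $\Delta^2L_0^2/\epsilon^4$ by the factor $\epsilon^2/(\Delta^2L_1^2)\le 1$. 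With the $\epsilon$-level threshold one instead gets $\eta_0\le\sqrt{2}\,\epsilon/L_0$ and therefore $T\gtrsim \Delta^2L_0^2/\epsilon^4$, which is the first (and in the regime of small $\epsilon$, dominating) term of the theorem. So as written your case split cannot produce the claimed bound; the fix is to weaken the divergence target from $\Delta L_1$ to $\epsilon$.

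Two secondary remarks. First, in the divergence construction the movement in the fresh coordinate is driven by the \emph{noise} (the true partial derivative there is zero, since that coordinate sits at the minimum of its copy of $\psi$); your phrasing ``all of the gradient lives in coordinate $k$'' describes the mechanism incorrectly, although the construction you gesture at is recoverable. Second, your slow-convergence instance is a high-dimensional stochastic Drori--Shamir-type function with noise injected into the signal coordinate; for AdaGrad this couples the numerator and denominator and turns the first coordinate into a random walk whose maximum excursion you would have to control (the complexity definition requires $\min_{s<t}\E\|\nabla f(\vx_s)\|\ge\epsilon$, so you need the iterate to stay in the linear region, not just in expectation). The paper avoids all of this by using a one-dimensional \emph{deterministic} piecewise linear-plus-$\psi$ objective in the slow case (Lemma~\ref{lem:adagrad_slow}); since the AdaGrad bound carries no $\sigma$ dependence anyway, nothing is gained by the stochastic construction.
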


The proofs of both theorems above are given in Appendix \ref{app:adagrad_proof}.
The results above exhibit several important properties. \textbf{(1)} As in
Theorem \ref{thm:adagrad_norm}, the dominating term $\Delta^2 L_0^2 \sigma^2
\gamma^{-2} \epsilon^{-4}$ of the lower bound in Theorem \ref{thm:adagrad} is
greater than the optimal complexity $\Delta L_0 \sigma^2 \epsilon^{-4}$ of the
smooth case (up to the choice of $\gamma$, which is usually a small constant).
In fact, the dominating term is quadratic in $\Delta$ and $L_0$, so \textbf{the
complexity of Decorrelated AdaGrad in this setting is fundamentally larger than
the optimal complexity of the smooth counterpart}. Unlike Theorem
\ref{thm:adagrad_norm}, $L_1$ does not appear in the dominating term of this
bound. \textbf{(2)} Compared to Decorrelated AdaGrad, our bound for AdaGrad
loses a factor of $\sigma^2/\gamma^2$. This arises in our construction from the
fact that, at step $t$, any noise present in $\vg_t$ will also appear in the
denominator of the update, so that the update size of AdaGrad is not as
sensitive to noise as the decorrelated counterpart. Still, in regimes where
$\Delta, L_0$ are large compared to $\sigma$, our complexity lower bound of
$\tilde{\Omega}(\Delta^2 L_0^2 \epsilon^{-4})$ is larger than the optimal
complexity of the smooth case. This shows that \textbf{AdaGrad cannot recover the
optimal complexity of the smooth case in all relaxed smooth regimes}.
\textbf{(3)} The lower bound of Theorem \ref{thm:adagrad} diverges to $\infty$
when the $\gamma$ goes to 0, which confirms the conventional wisdom that a
non-zero stabilization constant is necessary in practice.

\subsection{Proof Outline}
The structure of the proof is similar to Theorem \ref{thm:adagrad_norm} (outlined in
Section \ref{sec:adagrad_norm_sketch}), but we can achieve divergence for Decorrelated
AdaGrad under the weaker condition $\eta \geq \Theta \left( \min \left\{ \gamma/(L_1
\sigma), \gamma \epsilon/(L_0 \sigma) \right\} \right)$ using a novel high-dimensional
construction that takes advantage of the coordinate-wise learning rates of Decorrelated
AdaGrad by injecting noise into one coordinate per timestep. When $\eta$ is smaller than
this threshold, convergence is slow for a one-dimensional, linear function with slope
$\epsilon$.

\textbf{Divergence when $\eta \geq \Theta \left( \min \left\{ \gamma/(L_1 \sigma),
\gamma \epsilon/(L_0 \sigma) \right\} \right)$} For any $d \geq 1$, we consider the
objective function $f(\vx) = \sum_{i=1}^d \psi(\langle \vx, \mathbf{e}_i \rangle)$,
where $\psi$ is as defined in Section \ref{sec:adagrad_norm_sketch}. Letting $m =
(\psi')^{-1}(\epsilon) = \frac{1}{L_1} \log \left( 1 + \frac{L_1 \epsilon}{L_0}
\right)$, consider the initialization $\vx_0 = m \mathbf{e}_1$, which by construction
satisfies $\|\nabla f(\vx_0)\| = \epsilon$. For the initialization, all of the
coordinates besides the first one are already at their optimal values, and the partial
gradient for these coordinates is zero; the stochastic gradient injects noise into the
second coordinate, so that $\vg_0 = \nabla f(\vx_0) \pm \sigma \mathbf{e}_2 = \epsilon
\mathbf{e}_1 \pm \sigma \mathbf{e}_2$. Based on the magnitude of $\eta$, this guarantees
$|\langle \vx_1, \mathbf{e}_2 \rangle| \geq m$, and consequently $|\nabla f(\vx_1)| \geq
|\psi'(\langle \vx_1, \mathbf{e}_2 \rangle)| \geq \psi'(m) = \epsilon$. Intuitively, the
size of $\eta$ causes the second coordinate to "jump" from the minimum at $0$ to another
point whose partial derivative is larger than $\epsilon$. This process continues with
$t$: at each step $t$, the stochastic gradient noise affects the coordinate indexed
$(t+2)$, so that this coordinate of the iterate jumps from $0$ to a point with magnitude
at least $m$. This guarantees that the algorithm does not reach a stationary point for
$d$ steps, and $d$ can be arbitrarily large. An important detail of this process is that
the coordinate-wise learning rates ensure that the length of each ``jump" (i.e., the
per-coordinate update size) does not decrease with $t$. This argument is made formal in
the following lemma.

\begin{lemma} \label{lem:adagrad_div}
Let $0 < \epsilon < \mathcal{O}(\Delta L_1)$. If $\eta \geq \Omega \left(
\frac{\gamma}{L_1 \sigma} \log \left( 1 + \frac{L_1 \epsilon}{L_0} \right) \right)$,
then for any $T \geq 1$, there exists some $f \in \gF$ such that Decorrelated AdaGrad
satisfies $\|\nabla f(\vx_t)\| \geq \epsilon$ for all $0 \leq t \leq T-1$.

Similarly, if $\eta \geq \Omega \left( \frac{1}{L_1} \log \left( 1 + \frac{L_1
\epsilon}{L_0} \right) \right)$ and $\gamma \leq \sigma$, then for any $T \geq 1$ there
exists some $f \in \gF$ such that AdaGrad satisfies $\|\nabla f(\vx_t)\| \geq \epsilon$
for all $0 \leq t \leq T-1$.
\end{lemma}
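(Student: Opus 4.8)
The plan is to construct, for each $T\geq1$, a single fixed instance $(f,g,\gD)\in\gF$ in dimension $d=T+1$ on which the iterates perpetually ``jump past a valley'' and never reach gradient norm below $\epsilon$. Let $\psi$ be the one-dimensional function from Section~\ref{sec:adagrad_norm_sketch} ($C^1$, nonnegative, $\psi(0)=\psi'(0)=0$, with $|\psi'|$ strictly increasing in $|\cdot|$ and $|\psi''|=L_0+L_1|\psi'|$), and set $m=(\psi')^{-1}(\epsilon)=\tfrac1{L_1}\log(1+\tfrac{L_1\epsilon}{L_0})$, so $\psi'(m)=\epsilon$. I would take $f(\vx)=\psi(\langle\vx,\mathbf{e}_1\rangle+m)+\sum_{i=2}^d\psi(\langle\vx,\mathbf{e}_i\rangle)$, i.e. the objective from the proof outline shifted so that the algorithm's fixed start $\vx_0=\mathbf{0}$ plays the role of $m\,\mathbf{e}_1$, giving $\nabla f(\mathbf{0})=\epsilon\,\mathbf{e}_1$. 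For the oracle, let $\gD$ be the Rademacher law on $\Xi=\{\pm1\}$, put $j^\ast(\vx)=\min\{\,j\geq2:\langle\vx,\mathbf{e}_k\rangle=0\text{ for all }k\geq j\,\}$ (and $j^\ast(\vx):=2$ if the set is empty), and define $g(\vx,\xi)=\nabla f(\vx)+\xi\,\sigma\,\mathbf{e}_{j^\ast(\vx)}$; the point of $j^\ast$ is that it depends on $\vx$ alone, yet along the trajectory it always names the smallest ``untouched'' coordinate.

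The first routine step is to check $(f,g,\gD)\in\gF$. Indeed $f\geq0$ with $\inf f=0$, so $f(\mathbf{0})-\inf f=\psi(m)=\tfrac{\epsilon}{L_1}-\tfrac{L_0 m}{L_1}\leq\tfrac{\epsilon}{L_1}\leq\Delta$ under $\epsilon\leq\mathcal{O}(\Delta L_1)$; $f$ is separable with each summand attaining $|\psi''|=L_0+L_1|\psi'|$, hence satisfies the $(L_0,L_1)$-smoothness of Assumption~\ref{ass:obj}; and $\mathbb{E}_\xi[g(\vx,\xi)]=\nabla f(\vx)$ with $\|g(\vx,\xi)-\nabla f(\vx)\|=\sigma$ a.s., so (Bounded-Noise) holds.

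The heart of the argument is an induction on $t\in\{0,\dots,T-1\}$ for the invariant: \textbf{(a)} $\langle\vx_t,\mathbf{e}_k\rangle=0$ for every $k\geq t+2$, and \textbf{(b)} $|\langle\vx_t,\mathbf{e}_{t+1}\rangle|\geq m$ when $t\geq1$. Both hold at $t=0$ since $\vx_0=\mathbf{0}$. In the inductive step, (a) and (b) force $j^\ast(\vx_t)=t+2$, so $\vg_t=\nabla f(\vx_t)+\xi_t\sigma\,\mathbf{e}_{t+2}$, and the $(t+2)$-th entry of each earlier $\vg_i$ ($i<t$) is zero (past noise lived in coordinate $i+2\neq t+2$, and the deterministic part is $\psi'(\langle\vx_i,\mathbf{e}_{t+2}\rangle)=\psi'(0)=0$ by (a) at time $i$). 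Hence the per-coordinate step size used on coordinate $t+2$ at step $t$ is $\eta/\gamma$ for Decorrelated AdaGrad and $\eta/\sqrt{\gamma^2+\sigma^2}$ for AdaGrad (the extra $\sigma^2$ because AdaGrad also includes the current squared gradient in its denominator), so $|\langle\vx_{t+1},\mathbf{e}_{t+2}\rangle|$ equals $\eta\sigma/\gamma$ in the first case and is $\geq\eta/\sqrt2$ in the second (using $\gamma\leq\sigma$); the assumed lower bound on $\eta$ makes this at least $m$, which is (b) at $t+1$, while coordinates $k\geq t+3$ of $\vx_{t+1}$ are unchanged since their gradient entry is $\psi'(0)=0$, which is (a) at $t+1$. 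Finally $\|\nabla f(\vx_0)\|=\epsilon$, and for $1\leq t\leq T-1$, (b) and the monotonicity of $|\psi'|$ give $\|\nabla f(\vx_t)\|\geq|\psi'(\langle\vx_t,\mathbf{e}_{t+1}\rangle)|\geq\psi'(m)=\epsilon$; as $T$ was arbitrary, both claims follow.

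The main obstacle is the design of a \emph{single, time-independent} oracle that nonetheless injects noise into a new coordinate at every step: this is exactly what forces the $j^\ast(\vx)$ device and, with it, the need to maintain invariant (a) --- that the coordinates indexed $\geq t+2$ stay \emph{exactly} zero throughout, robustly to whatever the early coordinates $2,\dots,t$ drift toward. Once (a) is secured, the denominator governing the fresh coordinate is frozen at $\gamma$ (or $\sqrt{\gamma^2+\sigma^2}$) and the ``jump'' bound is a one-line comparison with $m$, so this invariant is really the whole content of the proof. A lesser nuisance is tracking universal constants through the smoothness verification and the $\eta$- and $\epsilon$-thresholds so that the instance lies in $\gF=\gF_{\textup{as}}(\Delta,L_0,L_1,\sigma)$ with the stated parameters.
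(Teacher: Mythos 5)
Your proposal is correct and follows essentially the same route as the paper's proof: a separable sum of copies of $\psi$, a state-dependent oracle index that injects noise into a fresh coordinate at every step, the observation that the fresh coordinate's denominator is frozen at $\gamma$ (resp.\ $\sqrt{\gamma^2+\sigma^2}$ for AdaGrad), and the induction maintaining that untouched coordinates stay exactly zero while each newly perturbed coordinate jumps past $m=(\psi')^{-1}(\epsilon)$. The only cosmetic differences are that you translate the objective so the start is the origin rather than $m\,\mathbf{e}_1$, and your index $j^\ast$ (smallest $j$ with all coordinates from $j$ on equal to zero) is a slightly more robust variant of the paper's ``first zero coordinate'' rule; both coincide along the constructed trajectory.
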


Notice that the $\eta$ requirement for AdaGrad in Lemma \ref{lem:adagrad_div} is
stronger than that of Decorrelated AdaGrad. As previously mentioned, this happens
because the size of AdaGrad's update is less sensitive to stochastic gradient noise than
Decorrelated AdaGrad.

\textbf{Slow convergence when $\eta \leq \Theta \left( \min \left\{ \gamma/(L_1 \sigma),
\gamma \epsilon/(L_0 \sigma) \right\} \right)$} In this case, the desired complexity
follows by analyzing the trajectory of each algorithm on a one-dimensional, linear
function with slope equal to $\epsilon$, similarly to existing lower bounds
\citep{zhang2019gradient, crawshaw2022robustness}.

\begin{lemma} \label{lem:adagrad_slow}
Let $0 < \epsilon < \mathcal{O} \left( \min \left\{ \Delta L_1, \sqrt{\Delta L_1 \sigma}
\right\} \right)$. If $\eta \leq \frac{\sqrt{2} \gamma}{L_1 \sigma} \log \left( 1 +
\frac{L_1 \epsilon}{L_0} \right)$, then there exists some $(f, g, \gD) \in
\gF_{\text{as}}(\Delta, L_0, L_1, \sigma)$ such that Decorrelated AdaGrad satisfies
$\|\nabla f(\vx_t)\| \geq \epsilon$ for all $t \leq \tilde{\mathcal{O}} \left( \Delta^2
L_0^2 \sigma^2 \gamma^{-2} \epsilon^{-4} + \Delta^2 L_1^2 \sigma^2 \gamma^{-2}
\epsilon^{-2} \right)$.

Similarly, if $\eta \leq \mathcal{O} \left( \frac{1}{L_1} \log \left( 1 +
\frac{L_1 \epsilon}{L_0} \right) \right)$, then there exists some $(f, g, \gD)
\in \gF_{\text{as}}(\Delta, L_0, L_1, \sigma)$ such that AdaGrad satisfies
$\|\nabla f(\vx_t)\| \geq \epsilon$ for all $t \leq \mathcal{O} \left( \Delta^2
L_0^2 \epsilon^{-4} + \Delta^2 L_1^2 \epsilon^{-2} \right)$.
\end{lemma}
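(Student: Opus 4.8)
The plan is to prove both statements with a single family of one-dimensional instances and then read off the two complexities by substituting the respective hypotheses on $\eta$. First I would construct the objective $f\colon\R\to\R$ to be affine with slope $-\epsilon$ on an interval $[0,D]$ of width $D=\Theta(\Delta/\epsilon)$, glued — on both sides of this interval — to translated copies of the exponential bump $\psi$ from Section~\ref{sec:adagrad_norm_sketch}, so that $f$ is continuously differentiable, attains its infimum, and is $(L_0,L_1)$-smooth on all of $\R$. Because $\psi$ is built so that $|\psi''|=L_0+L_1|\psi'|$, the right-hand cap turns the affine piece around over the minimal length $m:=(\psi')^{-1}(\epsilon)=\frac1{L_1}\log(1+\tfrac{L_1\epsilon}{L_0})$, contributing only an $\mathcal O(\epsilon m)=\mathcal O(\min\{\epsilon^2/L_0,\;\epsilon/L_1\})$ extra descent; since the hypothesis $\epsilon<\tilde{\mathcal O}(\Delta L_1)$ makes this extra descent $\mathcal O(\Delta)$, we can keep $D=\Theta(\Delta/\epsilon)$ and still have $f(\mathbf 0)-\inf_{\vx}f(\vx)\le\Delta$. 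I would take the stochastic oracle to be deterministic, $g(\vx,\xi)=\nabla f(\vx)$ with $\gD$ a point mass, so the noise is $0\le\sigma$ and hence $(f,g,\gD)\in\gF_{\text{as}}(\Delta,L_0,L_1,\sigma)$; the $\sigma$-dependence of the final bound then enters only through the hypothesis on $\eta$. Finally I would initialize at the origin, so $\|\nabla f(\vx_0)\|=\epsilon$.

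Next I would analyze the trajectory. By induction, as long as $\vx_t\in[0,D)$ one has $\nabla f(\vx_t)=-\epsilon$, hence $\vg_t=-\epsilon$, the iterate moves strictly to the right, and it never re-enters $(-\infty,0)$; so it suffices to lower bound the exit time from $[0,D)$. On the affine region the update is $\vx_{t+1}=\vx_t+\eta\epsilon/\sqrt{\gamma^2+t\epsilon^2}$ for Decorrelated AdaGrad and $\vx_{t+1}=\vx_t+\eta\epsilon/\sqrt{\gamma^2+(t+1)\epsilon^2}$ for AdaGrad. Using $\sqrt{\gamma^2+k\epsilon^2}\ge\epsilon\sqrt k$ and $\sum_{k=1}^T k^{-1/2}\le 2\sqrt T$, summation gives $\vx_T\le 2\eta\sqrt T$ for AdaGrad (the current-step term $\vg_t^2=\epsilon^2$ in the denominator removes any dependence on $\gamma$) and $\vx_T\le \eta\epsilon/\gamma+2\eta\sqrt T$ for Decorrelated AdaGrad (only the isolated first step, whose denominator equals $\gamma$, sees $\gamma$). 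Thus $\vx_T<D$ holds whenever the one $\gamma$-dependent term is below $D/2$ and $T\le c(D/\eta)^2$ for a suitable constant $c$; checking that the first-step term $\eta\epsilon/\gamma\le\sqrt2\,m\epsilon/\sigma$ is below $D/2=\Theta(\Delta/\epsilon)$ amounts to $\epsilon^2 m=\mathcal O(\Delta\sigma)$, which follows from $\epsilon<\tilde{\mathcal O}(\sqrt{\Delta L_1\sigma})$ by elementary algebra (using $m\le\epsilon/L_0$ and $m\le\frac1{L_1}\log(1+\tfrac{L_1\epsilon}{L_0})$). Hence $\|\nabla f(\vx_t)\|=\epsilon$ for all $t\le T$ whenever $T=\Theta((D/\eta)^2)$.

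It then remains to substitute the step-size hypotheses and the two-sided estimate $m=\Theta(\min\{\epsilon/L_0,\;\frac1{L_1}\log(1+\tfrac{L_1\epsilon}{L_0})\})$, equivalently $m^{-2}=\Theta(L_0^2/\epsilon^2+L_1^2/\log^2(1+\tfrac{L_1\epsilon}{L_0}))$. For AdaGrad, $\eta\le\mathcal O(m)$ with $D=\Theta(\Delta/\epsilon)$ gives $T=\Theta(\Delta^2/(\epsilon^2 m^2))=\tilde\Theta(\Delta^2L_0^2\epsilon^{-4}+\Delta^2L_1^2\epsilon^{-2})$. For Decorrelated AdaGrad, $\eta\le\sqrt2\,\gamma m/\sigma$ gives $T=\Theta(\Delta^2\sigma^2/(\gamma^2\epsilon^2 m^2))=\tilde\Theta(\Delta^2L_0^2\sigma^2\gamma^{-2}\epsilon^{-4}+\Delta^2L_1^2\sigma^2\gamma^{-2}\epsilon^{-2})$, which are the two claimed bounds (the $\log(1+L_1\epsilon/L_0)$ factor carried by $m$ being absorbed into $\tilde\Theta$, as in the corresponding theorems). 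Combined with Lemma~\ref{lem:adagrad_div}, which handles the complementary range of $\eta$, this then yields Theorems~\ref{thm:adagrad} and~\ref{thm:vanilla_adagrad}.

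I expect the main obstacle to be the construction and its side conditions rather than the trajectory count, which is an elementary sum-to-integral comparison. The delicate part is designing the $\psi$-caps so that the global $f$ is simultaneously $C^1$, $(L_0,L_1)$-smooth across the gluing points (where the locality restriction $\|\vx-\vy\|\le 1/L_1$ in Assumption~\ref{ass:obj}(2) must be respected), bounded below, and of optimality gap $\Theta(\Delta)$ with the affine region still of width $\Theta(\Delta/\epsilon)$; this requires balancing the turn length $m$, the extra descent $\mathcal O(\epsilon m)$, and the budget $\Delta$, which is where $\epsilon<\tilde{\mathcal O}(\Delta L_1)$ enters. The remaining care is extracting $\epsilon^2 m=\mathcal O(\Delta\sigma)$, needed to control Decorrelated AdaGrad's first step, from $\epsilon<\tilde{\mathcal O}(\sqrt{\Delta L_1\sigma})$; the logarithmic slack hidden in the $\tilde{\mathcal O}$ is exactly what absorbs the $\log(1+L_1\epsilon/L_0)$ factor in $m$.
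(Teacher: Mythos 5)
Your proposal is correct and follows essentially the same route as the paper's proof of Lemma~\ref{lem:adagrad_slow}: a one-dimensional objective that is affine with slope of magnitude $\epsilon$ over a stretch of length $\Theta(\Delta/\epsilon)$, capped by the exponential bump $\psi$ so that $f$ is $C^1$, $(L_0,L_1)$-smooth, and has optimality gap $\le\Delta$, together with a deterministic oracle, a sum-to-integral bound on the cumulative displacement, and the same conversion of the $\eta$-hypothesis via $m^{-2}=\Theta(L_0^2/\epsilon^2+L_1^2/\log^2(1+L_1\epsilon/L_0))$. The only differences are cosmetic (orientation/translation of the affine region and capping both ends rather than one), and your handling of Decorrelated AdaGrad's first step with denominator $\gamma$ matches the paper's use of the condition $\epsilon\le\mathcal{O}(\sqrt{\Delta L_1\sigma/\log(1+\Delta L_1^2/L_0)})$.
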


Theorems \ref{thm:adagrad} and \ref{thm:vanilla_adagrad} can then be proven by combining
Lemmas \ref{lem:adagrad_div} and \ref{lem:adagrad_slow}.

\section{Single-Step Adaptive SGD} \label{sec:singlestep}
In this section, we consider single-step adaptive SGD (\Eqref{eq:singlestep}). Our lower
bound shows that, due to relaxed smoothness and affine noise, any algorithm of this type
will incur a higher-order dependence on $\Delta, L_1$. The results below are stated in
terms of constants $\gamma_i$ and a function $\zeta$, which are defined in terms of
$\delta$ and $\sigma_2$ (see Appendix \ref{app:singlestep_defs} for the definitions). In
the discussion following the theorem statement, we specify the limiting behavior of
these constants in terms of $\sigma_2, \delta$.

\begin{theorem} \label{thm:singlestep}
Denote $G = \tilde{\Theta}(\Delta L_1)$ and suppose $G \geq \sigma_1$. Let \(
    0 < \epsilon \leq \min \left\{ \sigma_1, \frac{G}{2}, \frac{G - \sigma_1}{\sigma_2 - 1}, \frac{\sqrt{\Delta L_0}}{\sqrt{2}} \right\}.
\)
Let algorithm $A_{\text{single}}$ denote single-step adaptive SGD
(\Eqref{eq:singlestep}) with any step size function $\alpha: \mathbb{R}^d
\rightarrow \mathbb{R}$ for a sufficiently large $d$, and let $\gF =
\gF_{\textup{aff}}(\Delta, L_0, L_1, \sigma_1, \sigma_2)$. If $\sigma_2 \geq 3$,
then \[
    \gT(A_{\text{single}}, \gF, \epsilon, \delta) \geq \tilde{\Omega} \left( \frac{\Delta L_0 \sigma_1^2}{\epsilon^4} + \frac{(\Delta L_1)^{2- \gamma_2 - \gamma_3} \sigma_1^{\gamma_2 + \gamma_3 - \gamma_1}}{\epsilon^{2-\gamma_1} } \right).
\]
Otherwise, if $1 < \sigma_2 < 3$, then \[
    \gT(A_{\text{single}}, \gF, \epsilon, \delta) \geq \tilde{\Omega} \left( \frac{\Delta L_0 \sigma_1^2}{\epsilon^4} + \frac{(\Delta L_1)^{2-\gamma_5-\gamma_6}}{\epsilon^{2-\gamma_4}} (\sigma_2-1)^2 \left( \frac{\sigma_1}{\sigma_2-1} \right)^{\gamma_5+\gamma_6-\gamma_4} \right).
\]
\end{theorem}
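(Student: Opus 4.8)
The plan is to prove the two terms of the bound separately, handling the arbitrary step-size function $\alpha$ by a dichotomy. The term $\Delta L_0 \sigma_1^2 \epsilon^{-4}$ will come from the classical stochastic first-order lower bound and will not use adaptivity at all: single-step adaptive SGD is a linear-span method (each $\vx_t$ lies in $\vx_0 + \mathrm{span}\{\vg_0,\dots,\vg_{t-1}\}$), so the construction of \citet{drori2020complexity} (see also \citet{arjevani2023lower}) applies directly. I would take the hard $\Theta(L_0)$-smooth instance with optimality gap $\Delta$ in which the iterates cannot escape gradient norm $\epsilon$ before $\Omega(\Delta L_0\sigma^2\epsilon^{-4})$ queries; it lies in $\gF_{\mathrm{aff}}$ because any $L_0$-smooth function is $(L_0,L_1)$-smooth, and the adversarial oracle is taken with additive noise of magnitude exactly $\sigma_1$, which satisfies (Affine-Noise) trivially. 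The hypothesis $\epsilon \le \mathcal{O}(\sqrt{\Delta L_0})$ is exactly what this construction requires, and it holds regardless of $\alpha$.

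The second term is where relaxed smoothness bites, and I would obtain it via a case analysis on $\alpha$ over a family of probe gradients --- those of the form $\mu\ve_1 \pm (\sigma_1+\sigma_2\mu)\ve_j$ with $\mu \in [\epsilon,\tilde\Theta(\Delta L_1)]$, i.e. a true gradient of magnitude $\mu$ corrupted by the largest admissible affine noise placed on a fresh coordinate $j$. \textbf{Divergence case:} if $\alpha$ is large on the relevant probes --- roughly $\alpha(\vv)\ge m/(\sigma_1+\sigma_2\mu)$ with $m=(\psi')^{-1}(\epsilon)=\frac{1}{L_1}\log(1+\frac{L_1\epsilon}{L_0})$ and $\psi$ the fastest-growing $(L_0,L_1)$-smooth function from Section \ref{sec:adagrad_norm_sketch} --- I would use $f(\vx)=\sum_i \psi(\langle\vx,\ve_i\rangle)$ (shifted and scaled so $f(\vzero)-\inf f\le\Delta$ and $\|\nabla f(\vx_0)\|=\epsilon$). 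At step $t$ the oracle places all the noise on coordinate $t+2$, which still sits at its minimizer $0$ since coordinates never touched by noise stay at $0$ under the single-step update, and whose admissible noise is $\sigma_1+\sigma_2\|\nabla f(\vx_t)\|$ --- large because $\|\nabla f(\vx_t)\|$ can be kept of order $\Delta L_1$. Since $\alpha$ is large, the step $-\alpha(\vg_t)\vg_t$ moves coordinate $t+2$ by at least $m$, so $|\psi'(\langle\vx_{t+1},\ve_{t+2}\rangle)|\ge\epsilon$ and hence $\|\nabla f(\vx_{t+1})\|\ge\epsilon$; inductively $\|\nabla f(\vx_t)\|\ge\epsilon$ for all $t$ (surely, or with probability at least $\delta$ after a short warm-up phase used to reach gradient norm of order $\Delta L_1$), so the algorithm never finds an $\epsilon$-stationary point. \textbf{Slow-convergence case:} otherwise $\alpha$ is small on the probes at some scale $\mu$, and I would use a one-dimensional (embedded) objective $f$ that is linear with slope $-\mu$ on an interval of length $D=\Theta(\Delta/\mu)$ and flat outside --- easily $(L_0,L_1)$-smooth after smoothing the kinks over length $\sim\epsilon/L_0$ --- with oracle $\vg_t=-\mu\ve_1\pm(\sigma_1+\sigma_2\mu)\ve_2$ each with probability $\tfrac12$. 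Then $x_{t,1}$ increases by exactly $\alpha(\vg_t)\mu$ per step while $\|\nabla f(\vx_t)\|=\mu\ge\epsilon$ until $x_{t,1}$ reaches $D$; since $\alpha(\vg_t)\mu$ is small (of order $m\mu/(\sigma_1+\sigma_2\mu)$ at best), this takes $\Omega(\Delta(\sigma_1+\sigma_2\mu)/(m\mu^2))$ steps, and a Bernstein/optional-stopping argument controlling the coordinate-2 random walk (whose steps are equally small and unhelpful) upgrades this to a probability-$>\delta$ lower bound. Taking $\mu$ as small as the dichotomy permits maximizes this, and tracking exactly how small $\mu$ can be --- which depends on $\sigma_2$ and on $\delta$ --- produces the exponents $\gamma_1,\dots,\gamma_6$ of Appendix \ref{app:singlestep_defs} and the two stated forms of the bound; for $1<\sigma_2<3$ the backward stochastic gradient has magnitude dominated by $\sigma_1$ rather than $\sigma_2\mu$, which is what turns the $\sigma_1$-dependence into a $\sigma_1/(\sigma_2-1)$-dependence and produces the $(\sigma_2-1)^2$ prefactor. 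Adding the universal first term and taking the supremum over $\gF_{\mathrm{aff}}$ then gives the theorem.

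\textbf{Main obstacle.} The hard part is making the dichotomy both exhaustive and quantitatively tight: one must choose the probe set and the "large/small" threshold at each scale $\mu$ so that (i) "$\alpha$ large everywhere on the probes" genuinely forces the whole divergence trajectory --- which requires simultaneously controlling every coordinate of $\vx_t$, since $-\alpha(\vg_t)\vg_t$ perturbs all previously explored coordinates, and checking that $\|\nabla f(\vx_t)\|$ cannot be pushed below $\epsilon$ nor so large that the warm-up breaks --- and (ii) the failure of divergence at the smallest surviving scale $\mu^\star$ pins $\alpha$ down tightly on precisely the gradients the slow oracle emits at that scale. The tension between the divergence threshold at scale $\mu$ (which shrinks like $1/(\sigma_1+\sigma_2\mu)$ as $\mu$ grows toward $\tilde\Theta(\Delta L_1)$) and the convergence rate it implies is exactly why the bound is only "nearly" quadratic and why the $\gamma_i$ appear; carrying this bookkeeping through, together with the concentration analysis for the high-probability guarantee, is the technical core.
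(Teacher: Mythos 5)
Your treatment of the first term is essentially the paper's (Lemma \ref{lem:app_singlestep_quad_small_lr} adapts the construction of \citet{drori2020complexity}, with the bump widths tailored to the values $\alpha(\epsilon\mathbf{e}_1\pm\sigma_1\mathbf{e}_i)$ so that the $\Delta$-budget can be verified for an \emph{arbitrary} step-size function rather than a fixed step size --- a detail your "applies directly" glosses over but which is recoverable). The second term, however, has a genuine gap: your dichotomy does not produce the near-quadratic dependence on $\Delta L_1$. Your divergence case constrains $\alpha$ only through a single per-scale threshold of the form $\alpha(\vv)\lesssim m/(\sigma_1+\sigma_2\mu)$, and plugging that into your slow one-dimensional instance gives at best $T\gtrsim \Delta(\sigma_1+\sigma_2\mu)/(m\mu^2)$, which is maximized at $\mu=\epsilon$ and is only \emph{linear} in $\Delta L_1$ (indeed, with $m=(\psi')^{-1}(\epsilon)\approx \epsilon/L_0$ for small $\epsilon$ it is not even of the right shape). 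No choice of $\mu$ in your scheme recovers $(\Delta L_1)^{2-\gamma_2-\gamma_3}\epsilon^{-(2-\gamma_1)}$; the exponents $\gamma_i$ cannot fall out of "tracking how small $\mu$ can be."

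The missing idea is the paper's \emph{tricky pair} mechanism (Definition \ref{def:tricky_pair} and Lemmas \ref{lem:app_singlestep_tricky_diverge}, \ref{lem:app_singlestep_notricky_lr_ub}). Because the step size depends only on the current stochastic gradient, the adversary can present, at a single point of a piecewise-linear objective, two \emph{antiparallel} stochastic gradients $\vg_1$ (pointing away from the minimum, probability $p$) and $\vg_2$ (pointing toward it, probability $1-p$); if $\alpha(\vg_1)\|\vg_1\|\ge\lambda_0(p,\delta)\,\alpha(\vg_2)\|\vg_2\|$ the iterates follow a random walk that escapes with probability at least $\delta$. Ruling out all such pairs forces the multiplicative decay $\beta(\vg)\le\lambda_0(p,\delta)\,\beta(\vg')$ along an entire \emph{cascade} of magnitude scales from $\epsilon$ up to $\sigma_1$ and then up to $G=\tilde\Theta(\Delta L_1)$; unrolling over the $k_0+k_1=\Theta(\log(\sigma_1/\epsilon)+\log(G/\sigma_1))$ links and combining with the top-level bound $\beta\le\tilde O(1/L_1)$ from your Step-1-style divergence yields $\alpha_0\le\tilde O\bigl(1/(L_1 G)\bigr)$ up to the $\gamma_i$ corrections, whence $T\ge\Delta/(2\alpha_0\epsilon^2)=\tilde\Omega((\Delta L_1)^2/\epsilon^2)$. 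The exponents $\gamma_i$ are precisely the per-link loss $\lambda_0(p,\delta)/\lambda_0(p,0)$ accumulated over the cascade, not a property of any single scale. Without this cascading constraint your argument proves a strictly weaker bound, so the proposal as written does not establish the theorem. (Your divergence construction on $\sum_i\psi(\langle\vx,\mathbf{e}_i\rangle)$ also has the unresolved issue you flag yourself --- a single scalar $\alpha(\vg_t)$ multiplies every coordinate, so the clean coordinate-wise induction from Lemma \ref{lem:adagrad_div} does not transfer --- but this is secondary to the structural gap above.)
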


The proof is given in Appendix \ref{app:singlestep_proof}. Below, we specify the error
terms $\gamma_i$ in two regimes of $\sigma_2$.

\textbf{Large $\sigma_2$} For $\sigma_2 > 3$, the error terms $\gamma_1,
\gamma_2, \gamma_3$ satisfy: $\gamma_1, \gamma_3 = \Theta \left( \log \left( 1 +
\zeta(2/3, \delta) \right) \right)$ and $\gamma_2 = \Theta \left( \sigma_2^{-1}
\right)$, where $\zeta(p, \delta)$ is defined in \Eqref{eq:zeta_def}. Lemma
\ref{lem:random_walk_div} shows that $\lim_{\delta \rightarrow 0} \zeta(p,
\delta) = 0$ for all $p \in (0, 1)$, so when $\delta \rightarrow 0$, the lower
bound approaches \[
    \Omega \left( \frac{\Delta L_0 \sigma^2}{\epsilon^4} + \frac{\Delta^2 L_1^2 }{\epsilon^2 \log \left( 1 + \frac{\Delta L_1^2}{L_0} \right) } \left( \frac{\sigma_1}{\Delta L_1} \right)^{\Theta(1/\sigma_2)} \right).
\]
In this limiting case, the complexity has a nearly quadratic dependence on $\Delta,
L_1$, but only in the non-dominating term. Still, we emphasize the generality of our
result, which applies for any adaptive SGD algorithm whose learning rate only depends on
the current gradient, and shows that adaptivity based on the current gradient alone will
incur higher-order dependencies on $\Delta, L_1$.

\textbf{Small $\sigma_2$} Existing lower bounds that utilize similar
constructions of a biased random walk under affine
noise \citep{faw2023beyond,crawshaw2023federated} require that $\sigma_2$ be
bounded away from $1$. Our Theorem \ref{thm:singlestep} covers the case that
$\sigma_2 \rightarrow 1$, albeit with a lower bound that approaches $0$ when
$\sigma_2 \rightarrow 1$. The error terms $\gamma_4, \gamma_5, \gamma_6$ depend
only on $\delta, \sigma_2$ and satisfy: $\lim_{\delta \rightarrow 0} \gamma_4 = 0, \quad \lim_{\sigma_2 \rightarrow 1} \gamma_5 = 1, \quad \lim_{\sigma_2 \rightarrow 1} \gamma_6 = 0, \quad \lim_{\delta \rightarrow 0} \gamma_6 = 0.$
Note that $\gamma_5$ does not depend on $\delta$, and $\gamma_5 < 1$. Therefore, letting
$\delta \rightarrow 0$ yields a lower bound of \[
    \Omega \left( \frac{\Delta L_0 \sigma^2}{\epsilon^4} + \frac{\Delta L_1 \sigma_1 (\sigma_2-1)}{\epsilon^2 \log \left( 1 + \frac{\Delta L_1^2}{L_0} \right)} \left( \frac{\Delta L_1 (\sigma_2-1)}{\sigma_1} \right)^{1-\gamma_5} \right).
\]
Since $1-\gamma_5 \rightarrow 0$ as $\sigma_2 \rightarrow 1$, the second term in the
lower bound goes to $0$ when $\sigma_2 \rightarrow 1$. This shows that our construction
relies on $\sigma_2$ bounded away from $1$, and raises the question whether a
single-step adaptive SGD algorithm can converge without a quadratic dependence on
$\Delta, L_1$ in the regime $\sigma_2 < 1$.

\subsection{Proof Outline} \label{sec:singlestep_sketch}
The full proof of Theorem \ref{thm:singlestep} can be found in Appendix
\ref{app:singlestep_proof}, and we provide a sketch of the main ideas here. The
proof of Theorem \ref{thm:singlestep} has three main steps:

\textbf{Step 1} If the step size function $\alpha$ does not satisfy $0 \leq
\alpha(\vg) \leq \tilde{\mathcal{O}} \left( 1/(L_1 \|\vg\|) \right)$ for every
$\vg$ with $\|\vg\| \in [\epsilon, \sigma_1 + (\sigma_2 + 1) \Delta L_1]$, then
$A_{\text{single}}$ will diverge for some exponential $f_{\text{exp}}$, proven
in Lemma \ref{lem:app_singlestep_large_lr}. The difficult objective is
constructed similarly as in Lemma \ref{lem:adagrad_norm_div}, pictured in Figure
\ref{fig:adagrad_norm_div}.

\textbf{Step 2} If there exist $\vg_1, \vg_2 \in \mathbb{R}^d$ such that $\vg_2
= c \vg_1$ for some $c < 0$, and $\|\vg_1\| \leq \gO(\|\vg_2\|)$, but
$\alpha(\vg_1) \|\vg_1\| \geq \Omega(\alpha(\vg_2) \|\vg_2\|)$ (i.e., a ``tricky
pair", see Definition \ref{def:tricky_pair} in Appendix
\ref{app:singlestep_defs}), then there is some $(f, g, \gD) \in \gF$ for which
$A_{\text{single}}$ will diverge. The construction is based on the idea that
$\vg_1$ and $\vg_2$ are stochastic gradients at a given point, where $\vg_2$
points towards the minimum and $\vg_1$ points away from the minimum, but
$\alpha(\vg_1)$ is close enough to $\alpha(\vg_2)$ that $A_{\text{single}}$ has
nearly equal expected movement in each direction. In this case, the sequence
$\{\vx_t\}$ follows a biased random walk that will diverge with probability at
least $\delta$. This argument is made formal in Lemma
\ref{lem:app_singlestep_tricky_diverge}.

\textbf{Step 3} If neither of the above cases hold, then $\alpha(\vg) \leq
\tilde{\mathcal{O}} \left( 1/(L_1 \|\vg\|) \right)$ and there do not exist any tricky
pairs. The non-existence of tricky pairs means that $\alpha(\vg) \|\vg\|$ grows
sufficiently fast in terms of $\|\vg\|$ when $\|\vg\| \in [\epsilon, \sigma_1 +
(\sigma_2 + 1) \Delta L_1]$. In order for $\alpha$ to respect $\alpha(\vg) \leq
\tilde{\mathcal{O}} \left( 1/(L_1 \|\vg\|) \right)$ while also growing quickly, it must
be that $\alpha(\vg)$ is small whenever $\|\vg\|$ is small. Lemma
\ref{lem:app_singlestep_notricky_lr_ub} formalizes this idea to show an upper bound for
$\alpha(\vg)$ whenever $\|\vg\| = \epsilon$. The final bound follows by analyzing the
trajectory of $A_{\text{single}}$ for a piecewise linear objective with gradient $\vg$
satisfying $\|\vg\| = \epsilon$, since the convergence rate is inversely proportional to
$\alpha(\vg)$. (Lemma \ref{lem:app_singlestep_linear_small_lr})

\section{Discussion and Conclusion} \label{sec:discussion}
It has been stated in the optimization literature
\citep{woodworth2018graph,woodworth2021min} that a complexity lower bound should
not be interpreted as an unquestionable limit of performance, but rather as a
tool to examine the assumptions that led to the bound and explore alternatives.
In this spirit, we consider the implications of our choice of problem
formulation.

First, the optimization problem investigated in this paper (i.e., problem
instances satisfying Assumptions \ref{ass:obj} and \ref{ass:noise}) may not be a
sufficient theoretical framework to explain the behavior of adaptive
optimization algorithms in deep learning. A complete explanation of this type
may require additional assumptions about the structure of the objective
functions, such as enforcing a neural network architecture or a particular data
distribution.

Also, the negative results presented in this paper may be bypassed by algorithms
other than those we have considered. In particular, it is possible that
higher-order polynomial dependence on problem parameters can be avoided by more
practical algorithms such as Adam and AdamW. Presently, it is unknown whether
these algorithms can recover the optimal complexity of the smooth case (as does
SGD with clipping), or if they behave more like the AdaGrad variants considered
in this paper.

\textbf{Limitations} The most important limitation of our work is that our strongest
lower bounds (Theorem \ref{thm:adagrad_norm}, \ref{thm:adagrad}) are obtained for
decorrelated variants of AdaGrad, which are not commonly used in practice. We view these
decorrelated methods as a starting point for lower bounds of adaptive algorithms under
relaxed smoothness, similarly to early work \cite{li2019convergence} showing upper
bounds for decorrelated versions of adaptive algorithms. Also, our result for the
original AdaGrad (Theorem \ref{thm:vanilla_adagrad}) is weaker in terms of the
dependence on $\sigma$. It remains open whether this result can be improved, and whether
there is a fundamental difference in the complexity of AdaGrad compared to the
decorrelated variants. Further, even if the same complexity can be achieved by the
original AdaGrad, the existing upper bounds do not exactly match our lower bounds.
Therefore, it remains to exactly characterize the complexity of AdaGrad (and its
variants) by providing matching upper and lower bounds.

\subsubsection*{Acknowledgments}
We would like to thank the anonymous reviewers for their helpful comments. This work is
supported by the Institute for Digital Innovation fellowship, a ORIEI seed funding, an IDIA P3 fellowship from George Mason University, a Cisco Faculty Research Award, and NSF award \#2436217, \#2425687.

\bibliography{iclr2025_conference}
\bibliographystyle{iclr2025_conference}

\newpage
\appendix
\section{Proof of Theorem \ref{thm:adagrad_norm}} \label{app:adagrad_norm_proof}

\begin{lemma}[Restatement of Lemma \ref{lem:adagrad_norm_div}] \label{lem:app_adagrad_norm_div}
Suppose that $\Delta L_1^2 \geq L_0$, and $\eta \geq \frac{1}{L_1}$, and
\begin{equation} \label{eq:adagrad_norm_div_gamma_cond}
    \gamma \leq \frac{\eta \Delta L_1^2}{8 \log \left( 1 + 48 \frac{\Delta L_1^2}{L_0} \right)}.
\end{equation}
Then there exists a problem instance $(f, g, \gD) \in \gF_{\text{as}}(\Delta, L_0, L_1,
0)$ such that $\|\nabla f(\vx_t)\| \geq \Delta L_1$ for all $t \geq 0$.
\end{lemma}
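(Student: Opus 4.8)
The plan is to exhibit a hard one-dimensional instance: take $d=1$, identify $\R^1$ with $\R$, write $x_t$ for the scalar iterate with $\vx_0=\vzero$, and note that since the noise level is $0$ we may take $g(\cdot,\xi)\equiv\nabla f$ so that $g_t=f'(x_t)$ is deterministic. The strategy has three parts, following the proof outline: fix a target sequence of gradient magnitudes that grows just fast enough, verify it satisfies the ``jump'' recurrence, and then construct an objective realizing exactly that sequence at the iterates. The building block is $\psi(x)=\frac{L_0}{L_1^2}\bigl(e^{L_1|x|}-L_1|x|-1\bigr)$ (rescaling $L_0,L_1$ by universal constants if needed so $\psi$ lies in the class of Assumption~\ref{ass:obj}(2)); I would first record that $\psi$ is even, convex, $C^1$ with $\psi(0)=\psi'(0)=0$, that it saturates the smoothness bound $|\psi''(x)|=L_0+L_1|\psi'(x)|$, that its inverse derivative is $(\psi')^{-1}(u)=\tfrac1{L_1}\log\bigl(1+\tfrac{L_1 u}{L_0}\bigr)$ for $u\ge 0$, and that $\psi\bigl((\psi')^{-1}(u)\bigr)=\tfrac{u}{L_1}-\tfrac{L_0}{L_1^2}\log\bigl(1+\tfrac{L_1 u}{L_0}\bigr)\le\tfrac{u}{L_1}$.

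Next, set $|g_0|=\Delta L_1$ and $|g_t|=\Theta\!\bigl(\bigl((t+1)\log(1+\tfrac{\Delta L_1^2}{L_0}(t+1))\bigr)^t\Delta L_1\bigr)$ for $t\ge 1$, and write $m_t:=(\psi')^{-1}(|g_t|)$. The core analytic step is to verify the jump recurrence $\frac{\eta|g_t|}{\sqrt{\gamma^2+\sum_{i=0}^{t-1}g_i^2}}\ge\frac{4}{L_1}\log\bigl(1+\frac{L_1|g_{t+1}|}{L_0}\bigr)=4m_{t+1}$ for every $t\ge 0$. For $t\ge 1$ I would use the super-exponential growth of $|g_t|$ to show $\sum_{i=0}^{t-1}g_i^2\le C g_{t-1}^2$ for a universal constant $C$, reducing the left side to (a constant times) $\eta|g_t|/|g_{t-1}|=\Theta\bigl(\eta(t+1)\log(1+\tfrac{\Delta L_1^2}{L_0}(t+1))\bigr)$; since $\eta L_1\ge 1$ and $\log z$ dominates $\log\log z$, this exceeds $4m_{t+1}$ once the constants are chosen, the tightest check occurring at the smallest $t$ and using $\Delta L_1^2\ge L_0$. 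For $t=0$ the denominator is just $\gamma$, so the requirement becomes $\frac{\eta\Delta L_1}{\gamma}\ge\frac{4}{L_1}\log\bigl(1+\frac{L_1|g_1|}{L_0}\bigr)$; substituting $|g_1|=\Theta\bigl(\Delta L_1\log(1+\tfrac{\Delta L_1^2}{L_0})\bigr)$ and bounding the resulting logarithm by $2\log\bigl(1+48\tfrac{\Delta L_1^2}{L_0}\bigr)$ turns this into exactly the hypothesis~\eqref{eq:adagrad_norm_div_gamma_cond}, namely $\gamma\le\frac{\eta\Delta L_1^2}{8\log(1+48\Delta L_1^2/L_0)}$.

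The objective $f$ is then built inductively along the trajectory. Start at $x_0=0$ with $f'(x_0)=g_0$, $|g_0|=\Delta L_1$, placing $x_0$ a $\psi$-distance $m_0$ above a valley bottom; given $f$ on the region visited so far, Decorrelated AdaGrad-Norm deterministically outputs $x_{t+1}=x_t-\frac{\eta}{\sqrt{\gamma^2+\sum_{i=0}^{t-1}g_i^2}}g_t$, and the recurrence guarantees $|x_{t+1}-x_t|\ge 4m_{t+1}$ — enough room to accommodate a new ``valley'' (and intervening hill) between $x_t$ and $x_{t+1}$, built from copies of $\psi$ glued $C^1$ at critical points, so that $f'(x_{t+1})=g_{t+1}$ with the prescribed sign. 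Choosing the hills so that every valley bottom shares a common height makes $\inf f$ equal to that value, whence $f(\vzero)-\inf f=\psi(m_0)\le\Delta L_1/L_1=\Delta$, giving Assumption~\ref{ass:obj}(1); the $\psi$-walls, extended to $\pm\infty$, keep $f$ bounded below and globally $(L_0,L_1)$-smooth, the only junctions needing attention being the critical points where the one-sided second derivatives have magnitude $L_0=L_0+L_1\cdot 0$. Taking $g(\cdot,\xi)\equiv\nabla f$ satisfies Assumption~\ref{ass:obj}(3) and (Bounded-Noise) with $\sigma=0$, so $(f,g,\gD)\in\gF_{\text{as}}(\Delta,L_0,L_1,0)$; and since $|g_t|$ is increasing, $\|\nabla f(\vx_t)\|=|g_t|\ge|g_0|=\Delta L_1$ for all $t$, as claimed.

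The step I expect to be the main obstacle is the recurrence verification in the second paragraph: the magnitude $|g_t|$ is constrained on both sides — from below by what the denominator $\sqrt{\gamma^2+\sum_{i<t}g_i^2}$ permits, and from above by the presence of $|g_{t+1}|$ on the right-hand side — so one must thread a single explicit sequence through all of these coupled constraints simultaneously and track the universal constants carefully enough that the $t=0$ instance reproduces precisely the stated bound~\eqref{eq:adagrad_norm_div_gamma_cond} on $\gamma$. A secondary, more routine point is confirming that the pieced-together $f$ satisfies the asymmetric, $1/L_1$-neighborhood form of $(L_0,L_1)$-smoothness in Assumption~\ref{ass:obj}(2); this follows from the saturation property of $\psi$ together with the rescaling remark.
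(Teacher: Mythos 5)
Your proposal matches the paper's proof essentially step for step: the same building block $\psi$, the same super-exponentially growing gradient sequence $|g_t| = \Theta\bigl(((t+1)\log(1+\tfrac{\Delta L_1^2}{L_0}(t+1)))^t\,\Delta L_1\bigr)$, the same jump recurrence $\eta|g_t|/\sqrt{\gamma^2+\sum_{i<t}g_i^2}\ge \tfrac{4}{L_1}\log(1+L_1|g_{t+1}|/L_0)$ verified separately at $t=0$ (where it yields exactly the hypothesis on $\gamma$) and $t\ge 1$ (via $\sum_{i<t}g_i^2\lesssim g_{t-1}^2$), and the same $C^1$ gluing of $\psi$-pieces along the deterministic trajectory. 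The only differences are cosmetic (the paper lets the valley heights increase rather than normalizing them to a common level), so the approach is correct and the deferred constant-tracking is exactly what the paper carries out.
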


\begin{proof}
The difficult function $f$ will be piecewise linear and exponential, and constructed in
such a way that the gradients $\nabla f(\vx_t)$ increase at a rate of $\approx t^t$.
The rapid growth rate of the gradients ensures, even with the update
normalization, that the update size $\|\vx_{t+1} - \vx_t\|$ increases at every
step.

Recall the function $\psi: \mathbb{R} \rightarrow \mathbb{R}$ defined as \[
    \psi(x) = \frac{L_0}{L_1^2} \left( \exp(L_1 |x|) - L_1 |x| - 1 \right),
\]
with \[
    \psi'(x) = \text{sign}(x) \frac{L_0}{L_1} \left( \exp(L_1 |x|) - 1 \right).
\]
It is straightforward to verify that $\psi$ bounded from below by $0$,
continuously differentiable, and $(L_0, L_1)$-smooth.

The difficult function $f$ will be constructed in terms of the following:
\begin{align*}
    g_t &= \left( 576 (t+1) \log \left( 1 + \frac{\Delta L_1^2}{L_0} (t+1) \right) \right)^t \Delta L_1 \\
    m_t &= \frac{1}{L_1} \log \left( 1 + \frac{L_1 g_t}{L_0} \right) \\
    \ell_t &= \frac{\eta g_t}{\sqrt{\gamma^2 + \sum_{i=0}^{t-1} g_i^2}} \\
    d_t &= \sum_{i=0}^{t-1} \ell_i.
\end{align*}
For each $t \geq 0$, define $\phi_t: \mathbb{R} \rightarrow \mathbb{R}$ as:
\begin{equation*}
    \phi_t(x) = \begin{cases}
        \psi(x - m_t) & x \leq m_t + m_{t+1} \\
        g_t (x - m_t - m_{t+1}) + \psi(m_{t+1}) & x \in (m_t + m_{t+1}, \ell_t - 2 m_{t+1}) \\
        -\psi(x - \ell_t + m_{t+1}) + 2 \psi(m_{t+1}) + g_t (\ell_t - 3 m_{t+1} - m_t) & x \geq \ell_t - 2 m_{t+1}
    \end{cases}.
\end{equation*}
These functions are constructed to satisfy $\phi_t'(0) = \psi'(-m_t) = -g_t$ and
$\phi_t'(\ell_t) = \psi'(-m_{t+1}) = -g_{t+1}$. To see that this definition makes sense, we
should show that $\ell_t - 2 m_{t+1} \geq m_t + m_{t+1}$, so that the boundary of the first
piece is smaller than the boundary of the third piece. This is equivalent to: $\ell_t \geq
m_t + 3 m_{t+1}$. Using $\Delta L_1^2 \geq L_0$, the sequence $g_t$ is increasing, and
consequently so is $m_t$. Therefore it suffices to prove
\begin{align}
    \ell_t &\geq 4 m_{t+1} \nonumber \\
    \frac{\eta g_t}{\sqrt{\gamma^2 + \sum_{i=0}^{t-1} g_i^2}} &\geq \frac{4}{L_1} \log \left( 1 + \frac{L_1 g_{t+1}}{L_0} \right) \label{eq:adagrad_norm_div_cond},
\end{align}
We will prove \Eqref{eq:adagrad_norm_div_cond} separately for the cases $t=0$
and $t \geq 1$.

\paragraph{Case 1} $t=0$. In this case, the desired condition is \[
    \frac{\eta g_0}{\gamma} \geq \frac{4}{L_1} \log \left( 1 + \frac{L_1 g_1}{L_0} \right).
\]
The RHS of the above inequality can be bounded as
\begin{align*}
    \frac{4}{L_1} \log \left( 1 + \frac{L_1 g_1}{L_0} \right) &= 4 \log \left( 1 + \frac{L_1}{L_0} \left( 1152 \log \left( 1 + 2 \frac{\Delta L_1^2}{L_0} \right) \right) \Delta L_1 \right) \\
    &= \frac{4}{L_1} \log \left( 1 + 1152 \frac{\Delta L_1^2}{L_0} \log \left( 1 + 2 \frac{\Delta L_1^2}{L_0} \right) \right) \\
    &\Eqmark{i}{\leq} \frac{4}{L_1} \log \left( 1 + 2304 \left( \frac{\Delta L_1^2}{L_0} \right)^2 \right) \\
    &\Eqmark{ii}{\leq} \frac{8}{L_1} \log \left( 1 + 48 \frac{\Delta L_1^2}{L_0} \right) \\
    &\Eqmark{iii}{\leq} \frac{\Delta L_1 \eta}{\gamma},
\end{align*}
where $(i)$ uses $\log x \leq 1 + x$ for all $x > 0$, $(ii)$ uses $\log(1+x^n) \leq
\log((1+x)^n) \leq n \log(1+x)$ for all $x > 0$, and $(iii)$ uses the assumed condition
on $\gamma$ (\Eqref{eq:adagrad_norm_div_gamma_cond}). This concludes the first case.

\paragraph{Case 2} $t \geq 1$. We first simplify the denominator $\sqrt{\gamma^2
+ \sum_{i=0}^{t-1} g_i^2}$. First, the assumed condition on $\gamma$
(\Eqref{eq:adagrad_norm_div_gamma_cond}) implies that $\gamma \leq \eta \Delta
L_1^2 = \eta L_1 g_0$, so \[
    \gamma^2 + \sum_{i=0}^{t-1} g_i^2 \leq \eta^2 L_1^2 g_0^2 + \sum_{i=0}^{t-1} g_i^2 \leq (1 + \eta^2 L_1^2) \sum_{i=0}^{t-1} g_i^2 \leq 2 \eta^2 L_1^2 \sum_{i=1}^{t-1} g_i^2.
\]
Also,
\begin{align*}
    \sum_{i=0}^{t-2} g_i^2 &\leq (t-1) g_{t-2}^2 \\
    &= (t-1) \left( 576 (t-1) \log \left( 1 + \frac{\Delta L_1^2}{L_0} (t-1) \right) \right)^{2(t-2)} \Delta^2 L_1^2 \\
    &\leq (t-1) \left( 576 t \log \left( 1 + \frac{\Delta L_1^2}{L_0} t \right) \right)^{2(t-2)} \Delta^2 L_1^2 \\
    &\Eqmark{i}{\leq} \left( 576 t \log \left( 1 + \frac{\Delta L_1^2}{L_0} t \right) \right)^{2(t-1)} \Delta^2 L_1^2 \\
    &= g_{t-1}^2.
\end{align*}
Therefore $\gamma^2 + \sum_{i=0}^{t-1} g_i^2 \leq 4 \eta^2 L_1^2 g_{t-1}^2$. So
the LHS of \Eqref{eq:adagrad_norm_div_cond} can be bounded as
\begin{align}
    \frac{\eta g_t}{\sqrt{\gamma^2 + \sum_{i=0}^{t-1} g_i^2}} &\geq \frac{g_t}{2 L_1 g_{t-1}} \nonumber \\
    &= \frac{\left( 576 (t+1) \log \left( 1 + \frac{\Delta L_1^2}{L_0} (t+1) \right) \right)^t \Delta L_1}{2 L_1 \left( 576 t \log \left( 1 + \frac{\Delta L_1^2}{L_0} t \right) \right)^{t-1} \Delta L_1} \nonumber \\
    &= \frac{288 (t+1)}{L_1} \log \left( 1 + \frac{\Delta L_1^2}{L_0} (t+1) \right) \left( \frac{576 (t+1) \log \left( 1 + \frac{\Delta L_1^2}{L_0} (t+1) \right)}{576 t \log \left( 1 + \frac{\Delta L_1^2}{L_0} t \right)} \right)^{t-1} \nonumber \\
    &\geq \frac{288 (t+1)}{L_1} \log \left( 1 + \frac{\Delta L_1^2}{L_0} (t+1) \right). \label{eq:adagrad_norm_div_inter_1}
\end{align}
The RHS of \Eqref{eq:adagrad_norm_div_cond} can be bounded as
\begin{align*}
    \frac{4}{L_1} \log \left( 1 + \frac{L_1 g_{t+1}}{L_0} \right) &= \frac{4}{L_1} \log \left( 1 + \frac{\Delta L_1^2}{L_0} \left( 576 (t+2) \log \left( 1 + \frac{\Delta L_1^2}{L_0} (t+2) \right) \right)^{t+1} \right) \\
    &\Eqmark{i}{\leq} \frac{4(t+1)}{L_1} \log \left( 1 + 576 \left( \frac{\Delta L_1^2}{L_0} \right)^{1/(t+1)} (t+2) \log \left( 1 + \frac{\Delta L_1^2}{L_0} (t+2) \right) \right) \\
    &\Eqmark{ii}{\leq} \frac{4(t+1)}{L_1} \log \left( 1 + 576 \left( \frac{\Delta L_1^2}{L_0} \right)^{1 + 1/(t+1)} (t+2)^2 \right) \\
    &\Eqmark{iii}{\leq} \frac{4(t+1)}{L_1} \log \left( 1 + 576 \left( \frac{\Delta L_1^2}{L_0} \right)^2 (t+2)^2 \right) \\
    &\Eqmark{iv}{\leq} \frac{8(t+1)}{L_1} \log \left( 1 + 24 \frac{\Delta L_1^2}{L_0} (t+2) \right) \\
    &\Eqmark{v}{\leq} \frac{8(t+1)}{L_1} \log \left( 1 + 36 \frac{\Delta L_1^2}{L_0} (t+1) \right),
\end{align*}
where $(i)$ uses $\log(1+x^n) \leq \log((1+x)^n) = n \log(1+x)$ for $x > 0$,
$(ii)$ uses $\log(1+x) \leq x$ for all $x > 0$, $(iii)$ uses $\Delta L_1^2 \geq
L_0$, $(iv)$ again uses $\log(1+x^n) \leq n \log(1+x)$, and $(v)$ uses $t+2 \leq
\frac{3}{2}(t+1)$ since $t \geq 1$. Further,
\begin{align}
    \frac{4}{L_1} \log \left( 1 + \frac{L_1 g_{t+1}}{L_0} \right) &\Eqmark{i}{\leq} \frac{8(t+1)}{L_1} \left( \log \left( 1 + \frac{\Delta L_1^2}{L_0} (t+1) \right) + \log(37) \right) \nonumber \\
    &= \frac{8(t+1)}{L_1} \log \left( 1 + \frac{\Delta L_1^2}{L_0} (t+1) \right) + 8 \log(37) (t+1) \nonumber \\
    &\Eqmark{ii}{\leq} \frac{8 (1 + \log(37)) (t+1)}{L_1} \log \left( 1 + \frac{\Delta L_1^2}{L_0} (t+1) \right) \nonumber \\
    &\leq \frac{288 (t+1)}{L_1} \log \left( 1 + \frac{\Delta L_1^2}{L_0} (t+1) \right), \label{eq:adagrad_norm_div_inter_2}
\end{align}
where $(i)$ uses $\log(1+ab) \leq \log((1+a)(1+b)) \leq \log(1+a) + \log(1+b)$
for all $a, b > 0$, and $(ii)$ uses $\Delta L_1^2 \geq L_0$ and $t \geq 1$.
Combining \Eqref{eq:adagrad_norm_div_inter_1} and
\Eqref{eq:adagrad_norm_div_inter_2} proves \Eqref{eq:adagrad_norm_div_cond}.

This proves that the definition of $\phi_t$ makes sense for all $t$. We can finally
define the difficult objective $f$ as follows: \[
    f(x) = \phi_{j(x)}(x - d_{j(x)}) + \sum_{i=0}^{j(x)-1} \phi_i(\ell_i),
\]
where \[
    j(x) = \begin{cases}
        \max \left\{ t \geq 0 \;|\; d_t \leq x \right\} & x \geq 0 \\
        0 & x < 0
    \end{cases}.
\]
With this definition, $f$ is essentially a piece-wise function, where each piece
is an interval $[d_t, d_{t+1}]$ whose function value is a translation of
$\phi_t$. $f$ is informally pictured in Figure \ref{fig:adagrad_norm_div} of the
main text. Notice that $f$ is continuous and differentiable within each piece.
At the boundary of each piece,
\begin{align*}
    \lim_{x \rightarrow d_{t+1}^-} f(x) &= \lim_{x \rightarrow d_{t+1}^-} \phi_t(x - d_t) + \sum_{i=0}^{t-1} \phi_i(\ell_i) \\
    &= \phi_t(d_{t+1} - d_t) + \sum_{i=0}^{t-1} \phi_i(\ell_i) = \phi_t(\ell_t) + \sum_{i=0}^{t-1} \phi_i(\ell_i) = \sum_{i=0}^t \phi_i(\ell_i), \\
    \lim_{x \rightarrow d_{t+1}^+} f(x) &= \lim_{x \rightarrow d_{t+1}^+} \phi_{t+1}(x - d_{t+1}) + \sum_{i=0}^t \phi_i(\ell_i) = \phi_{t+1}(0) + \sum_{i=0}^t \phi_i(\ell_i) = \sum_{i=0}^t \phi_i(\ell_i).
\end{align*}
Also
\begin{align*}
    \lim_{x \rightarrow d_{t+1}^-} f'(x) &= \lim_{x \rightarrow d_{t+1}^-} \phi_t'(x - d_t) = \phi_t'(d_{t+1} - d_t) = \phi_t'(\ell_t) = -g_{t+1} \\
    \lim_{x \rightarrow d_{t+1}^+} f'(x) &= \lim_{x \rightarrow d_{t+1}^+} \phi_{t+1}'(x - d_{t+1}) = \phi_{t+1}'(0) = -g_{t+1}.
\end{align*}
Therefore $f$ is differentiable everywhere. Also, \[
    \inf_x f(x) = \inf_{t \geq 0} \left\{ \inf_{x \in [0, \ell_t]} \phi_t(x) + \sum_{i=0}^{t-1} \phi_i(\ell_i) \right\} \geq \inf_{t \geq 0} \inf_{x \in [0, \ell_t]} \phi_t(x) = 0.
\]
The initial point $x_0 = 0$ satisfies \[
    f(x_0) = \phi_0(0) = \psi(-m_0) = \frac{g_0}{L_1} - \frac{L_0}{L_1^2} \log \left( 1 + \frac{L_1 g_0}{L_0} \right) \leq \frac{g_0}{L_1} = \Delta.
\]
Therefore $f(x_0) - \inf_x f(x) \leq \Delta$. Since each $\phi_t$ is $(L_0,
L_1)$-smooth, so is $f$.

We will use a stochastic gradient $g, \gD$ for this function which is always equal to
the true gradient, so that the noise conditions are trivially satisfied. Therefore $(f,
g, \gD) \in \gF_{\text{as}}(\Delta, L_0, L_1, 0)$.

Now, consider the trajectory when starting from the initial point $x_0 = 0$. We claim
that $x_t = d_t$ for all $t \geq 0$, which we will prove by induction. The base case
$t=0$ holds by construction. So suppose that $x_i = d_i$ for all $0 \leq i \leq t$. Then
$f'(x_i) = f'(d_i) = -g_i$ for all $i$. So \[
    x_{t+1} = x_t - \frac{\eta f'(x_t)}{\sqrt{\gamma^2 + \sum_{i=0}^{t-1} \left( f'(x_i) \right)^2}} = d_t + \frac{\eta g_t}{\sqrt{\gamma^2 + \sum_{i=0}^{t-1} g_i^2}} = d_t + \ell_t = d_{t+1}.
\]
This completes the induction.

Therefore, for all $t \geq 0$, we have $|f'(x_t)| = g_t \geq g_0 = \Delta L_1$.
\end{proof}

The following lemma uses a difficult objective which is adapted from Theorem 2
of \cite{drori2020complexity}.

\begin{lemma}[Restatement of Lemma \ref{lem:adagrad_norm_slow}] \label{lem:app_adagrad_norm_slow}
Let \[
    T = 1 + \frac{\Delta^2 L_1^2 \sigma^2}{144 \epsilon^4} + \frac{\Delta L_0 \sigma^2 \log(1 + \sigma^2/\gamma^2)}{24 \epsilon^4} + \frac{\Delta^2 L_1^2}{144 \epsilon^2}.
\]
and suppose $d \geq T$ and $\epsilon \leq \min \left\{ \frac{\sqrt{2}}{3} \sqrt{\Delta
L_0}, \frac{1}{\sqrt{3}} \sqrt{\Delta L_1 \gamma} \right\}$. If $\eta \leq
\frac{1}{L_1}$, then there exists some $(f, g, \gD) \in \gF_{\text{as}}(\Delta, L_0,
L_1, \sigma)$ such that $\|\nabla f(\vx_t)\| = \epsilon$ for all $0 \leq t \leq T-1$.
\end{lemma}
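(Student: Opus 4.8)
The plan is to build a coordinate-separable high-dimensional objective adapted from the stochastic hard instance of \citet{drori2020complexity}, together with a stochastic oracle that injects the noise into a fresh auxiliary coordinate at every step. This achieves two things at once: the true gradient has norm exactly $\epsilon$ at every iterate (so the algorithm never halts), while the squared stochastic gradient norms satisfy $\|\vg_t\|^2 = \epsilon^2 + \sigma^2$ at every step, so the \emph{shared} Decorrelated AdaGrad-Norm denominator grows like $\sqrt{\gamma^2 + t(\epsilon^2+\sigma^2)}$ and throttles progress on the one informative coordinate.

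Concretely I would take $f(\vx) = q(x_1) + \sum_{j \ge 2} p(x_j)$, with both $q$ and $p$ assembled from translated and reflected copies of the building block $\psi(x) = \frac{L_0}{L_1^2}(\exp(L_1|x|) - L_1|x| - 1)$ already used in Lemma~\ref{lem:app_adagrad_norm_div}. The component $q$ is affine with slope $\epsilon$ on a segment $[0, R_1]$ (so $|q'| \equiv \epsilon$ there), glued through short $\psi$-type arcs to flat pieces outside, with $\inf q = 0$ and $q(0) \le \Delta$; since the affine part alone drops by $\epsilon R_1$ this forces $R_1 \le \mathcal{O}(\Delta/\epsilon)$, and fitting the $\psi$-arc that brings the slope from $\epsilon$ down to $0$ inside the budget is where $\epsilon \le \mathcal{O}(\sqrt{\Delta L_0})$ is used. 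The decoy component $p$ is flat (hence has zero derivative) on a symmetric interval around its minimizer $0$ and then grows at the maximal $(L_0,L_1)$-rate. The oracle returns $\vg_t = \nabla f(\vx_t) \pm \sigma \mathbf{e}_{t+2}$ with the two signs equally likely, independent over $t$; this is unbiased and obeys (Bounded-Noise). Membership $(f,g,\gD) \in \gF_{\text{as}}(\Delta, L_0, L_1, \sigma)$ then follows from $\psi \ge 0$ and $|\psi''| = L_0 + L_1|\psi'|$ (so each piece, and by the diagonal Hessian the whole $f$, is $(L_0,L_1)$-smooth, using $\|\nabla^2 f(\vx)\| = \max_j|\partial_{jj}f(\vx)| \le L_0 + L_1\max_j|\partial_j f(\vx)| \le L_0 + L_1\|\nabla f(\vx)\|$) and from $f(\mathbf{0}) - \inf f = q(0) \le \Delta$ since each $p \ge 0$ vanishes at $0$.

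The trajectory from $\vx_0 = \mathbf{0}$ is then analyzed by induction on $t$, maintaining: (i) $x_{t,1} \in [0, R_1)$, so $\partial_1 f(\vx_t) = \pm\epsilon$; (ii) every decoy coordinate stays in its flat interval, so $\partial_j f(\vx_t) = 0$ for $j \ge 2$, because coordinate $j$ gets at most the one noise kick at step $j-2$, of size at most $\eta\sigma/\gamma$ (the largest step size ever taken is $\eta/\gamma$), which fits inside the flat radius; and (iii) hence $\|\vg_t\|^2 = \epsilon^2 + \sigma^2$, so the step-$t$ denominator equals $\sqrt{\gamma^2 + t(\epsilon^2 + \sigma^2)}$ and $x_{t,1} = \eta\epsilon\sum_{i=0}^{t-1}(\gamma^2 + i(\epsilon^2+\sigma^2))^{-1/2}$. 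Bounding this sum by an integral gives $x_{t,1} \le \frac{\eta\epsilon}{\gamma} + \frac{2\eta\epsilon\sqrt{t(\epsilon^2+\sigma^2)}}{\epsilon^2+\sigma^2}$; together with $\eta \le 1/L_1$ and $\epsilon \le \mathcal{O}(\sqrt{\Delta L_1 \gamma})$ (which controls the first term against $R_1$) this keeps $x_{t,1} < R_1$ for all $t \le T-1$, closing the induction, so $\|\nabla f(\vx_t)\| = |\partial_1 f(\vx_t)| = \epsilon$ on that range.

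The main obstacle is getting all three terms of $T$ together. The two $\Delta^2 L_1^2$ terms fall straight out of the displacement-sum estimate above at the worst case $\eta = 1/L_1$; the extra $\Delta L_0 \sigma^2 \log(1 + \sigma^2/\gamma^2)\epsilon^{-4}$ term demands a finer construction. I expect it to come from refining $q$ into a chain of $\Theta(\Delta L_0/\epsilon^2)$ short $\psi$-bumps --- each rising from slope $0$ to slope $\epsilon$ and back, which costs $\Theta(\epsilon^2/L_0)$ of the $\Delta$ budget --- so that after each bump the algorithm must again ``locate'' the descent direction through the noisy oracle, and from carefully tracking the early iterations where the denominator is still dominated by $\gamma$ (a sum of the form $\sum_i (\gamma^2 + i(\epsilon^2+\sigma^2))^{-1}$ is what produces the $\log(1+\sigma^2/\gamma^2)$ factor). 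Reconciling this chain structure with the decoy-coordinate denominator-inflation mechanism while keeping $\|\nabla f(\vx_t)\|$ exactly $\epsilon$ and all smoothness and boundedness constraints intact is the delicate step; once $T$ is pinned down, combining this lemma with Lemma~\ref{lem:app_adagrad_norm_div} proves Theorem~\ref{thm:adagrad_norm}, since every $\eta$ falls into one of the two cases.
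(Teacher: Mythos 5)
Your overall architecture matches the paper's: a coordinate-separable objective with the informative $\epsilon$-slope in the first coordinate, a stochastic oracle that injects $\pm\sigma$ into a fresh coordinate at each step so that $\|\vg_t\|^2 = \epsilon^2+\sigma^2$ deterministically, and a budget argument ($f$ cannot decrease by more than $\Delta$) that converts the sum of step sizes into a bound on $T$. This correctly yields the two terms $\Delta^2 L_1^2 \sigma^2 \epsilon^{-4} + \Delta^2 L_1^2 \epsilon^{-2}$.

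However, there is a genuine gap for the third term $\Delta L_0 \sigma^2 \log(1+\sigma^2/\gamma^2)\epsilon^{-4}$, which you acknowledge but whose resolution you guess incorrectly. Your decoy components $p$ are \emph{flat} near the origin, so the noise kicks move the decoy coordinates without changing the function value; the decoys then contribute nothing to the $\Delta$ budget and the budget constraint reduces to $\epsilon^2\eta\sum_{i<T}(\gamma^2+i(\epsilon^2+\sigma^2))^{-1/2}\leq\Delta$, which caps $T$ at the first two terms only. The paper's key idea is the opposite of flatness: the decoy $h_i$ is an $L_0$-smooth hill whose maximum sits \emph{exactly} at $\pm a_i$ where $a_i=\alpha_i\sigma$ and $\alpha_i=\eta/\sqrt{\gamma^2+(i-2)(\epsilon^2+\sigma^2)}$ is the (deterministic, hence known at construction time) step size at step $i-2$. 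Each noise kick therefore climbs its hill and \emph{increases} the objective by $h_i(a_i)=\tfrac{L_0}{4}a_i^2=\tfrac{L_0\sigma^2}{4}\alpha_i^2$, and the sum $\tfrac{L_0\sigma^2}{4}\eta^2\sum_i(\gamma^2+i(\epsilon^2+\sigma^2))^{-1}\gtrsim \tfrac{L_0\sigma^2}{4(\epsilon^2+\sigma^2)}\eta^2\log(1+T\sigma^2/\gamma^2)$ is a credit against the $\Delta$ budget that pays for the extra descent in the first coordinate; optimizing the resulting quadratic in $\eta$ produces exactly the $\Delta L_0\sigma^2\log(1+\sigma^2/\gamma^2)\epsilon^{-4}$ term. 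Landing the kick at the critical point of $h_i$ is also what keeps $\partial_i f(\vx_t)=0$ afterwards, so $\|\nabla f(\vx_t)\|=\epsilon$ is preserved. Your alternative suggestion of a chain of $\psi$-bumps in the first coordinate would not work as described, since at the top of each bump the first partial derivative passes through $0$, breaking the invariant $\|\nabla f(\vx_t)\|=\epsilon$. (Your truncation of the first coordinate into an affine segment with $\psi$-arcs, in place of the paper's unbounded linear term patched post hoc via Lemma 1 of Drori--Shamir, is a legitimate cosmetic variation.)
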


\begin{proof}
Let $d \geq T$, and define $f: \mathbb{R}^d \rightarrow \mathbb{R}$ as: \[
    f(\vx) = \epsilon \langle \vx, \mathbf{e}_1 \rangle + \sum_{i=2}^T h_i(\langle \vx_t, \mathbf{e}_i \rangle),
\]
where
\begin{align*}
    h_i(x) &= \begin{cases}
        \frac{L_0}{2} x^2 & |x| < \frac{a_i}{2} \\
        -\frac{L_0}{2} (x-a_i)^2 + \frac{L_0}{4} a_i^2 & |x| \in \left[ \frac{a_i}{2}, a_i \right] \\
        \frac{L_0}{4} a_i^2 & |x| > a_i
    \end{cases} \\
    a_i &= \alpha_i \sigma \\
    \alpha_i &= \frac{\eta}{\sqrt{\gamma^2 + (i-2) (\epsilon^2 + \sigma^2)}}.
\end{align*}
To see that $f$ is $(L_0, L_1)$-smooth, notice that each $h_i$ is $L_0$-smooth.
Therefore, for any $\vx, \vy \in \mathbb{R}^d$,
\begin{align*}
    \|\nabla f(\vx) - \nabla f(\vy)\|^2 &= \left( \nabla_1 f(\vx) - \nabla_1 f(\vy) \right)^2 + \sum_{i=2}^d \left( \nabla_i f(\vx) - \nabla_i f(\vy) \right)^2 \\
    &= \sum_{i=2}^d \left( h_i'(x_i) - h_i'(y_i) \right)^2 \\
    &\leq L^2 \sum_{i=2}^d (x_i - y_i)^2 \\
    &\leq L^2 \|\vx-\vy\|^2.
\end{align*}
Therefore $f$ is $L_0$-smooth, and consequently is also $(L_0, L_1)$-smooth. We will
also define the following stochastic gradient for $f$: \[
    F(\vx, \xi) = \nabla f(\vx) + (2 \xi - 1) \sigma \mathbf{e}_{j(\vx)},
\]
where \[
    j(\vx) = \begin{cases}
        T & \langle \vx, \mathbf{e}_i \rangle \neq 0 \text{ for all $i$ with } 2 \leq i \leq d \\
        \min \left\{ 2 \leq i \leq d \;|\; \langle \vx, \mathbf{e}_i \rangle = 0 \right\} & \text{otherwise}
    \end{cases}
\]
This oracle is defined so that the stochastic gradient noise at step $t$ only affects
coordinate $t+2$ (this will be shown later). Let $\gD$ be the distribution of $\xi$,
defined as $P(\xi = 0) = P(\xi = 1) = \frac{1}{2}$. With this definition, the stochastic
gradient $F$ satisfies
\begin{align*}
    \mathbb{E} [F(\vx, \xi)] &= \nabla f(\vx) \\
    \|F(\vx, \xi) - \nabla f(\vx)\| &\leq \sigma \quad \text{(almost surely)}.
\end{align*}
Therefore, all of the conditions for $(f, F, \gD) \in \gF_{\text{as}}(\Delta, L_0, L_1,
\sigma)$ are satisfied other than the condition that $f$ is bounded from below and
$f(\vx_0) - \inf_{\vx} f(\vx) \leq \Delta$. This condition will be addressed at the end
of this lemma's proof.

Now consider the trajectory when optimizing $f$ from the starting point $\vx_0 =
\mathbf{0}$. We claim that, for each $0 \leq t \leq T-1$, the iterate $\vx_t$ satisfies
the following conditions:
\begin{align}
    \langle \vx_t, \mathbf{e}_1 \rangle &= -\epsilon \sum_{i=2}^{t+1} \alpha_i \label{eq:adagrad_norm_inductive_1} \\
    |\langle \vx_t, \mathbf{e}_j \rangle| &= a_j \quad \text{ for all } 2 \leq j \leq t+1 \label{eq:adagrad_norm_inductive_2} \\
    \langle \vx_t, \mathbf{e}_j \rangle &= 0 \quad \text{ for all } j > t+1. \label{eq:adagrad_norm_inductive_3}
\end{align}
We will prove this claim by induction. The base case $t=0$ holds since $\vx_0 =
\mathbf{0}$. So suppose that for some $0 \leq t \leq T-2$ the claim holds for all $0
\leq i \leq t$. Then for each such $i$,
\begin{align*}
    \nabla_1 f(\vx_i) &= \epsilon \\
    \nabla_j f(\vx_i) &= h_j'(\langle \vx_i, \mathbf{e}_j) \Eqmark{i}{=} h_j'(a_j) \Eqmark{iii}{=} 0 \quad \text{ for all } 2 \leq j \leq i+1 \\
    \nabla_j f(\vx_i) &= h_j'(\langle \vx_i, \mathbf{e}_j) \Eqmark{ii}{=} h_j'(0) \Eqmark{iv}{=} 0 \quad \text{ for all } j > i+1,
\end{align*}
where $(i)$ uses \Eqref{eq:adagrad_norm_inductive_2} from the inductive hypothesis together
with the fact that $h_j'(a_j) = h_j'(-a_j) = 0$, $(ii)$ uses
\Eqref{eq:adagrad_norm_inductive_3} from the inductive hypothesis, and both $(iii)$ and
$(iv)$ use the definition of $h_j$. Therefore $\nabla f(\vx_i) = \epsilon \mathbf{e}_1$.
Also, $j(\vx) = i+2$. From the definition of the stochastic gradient oracle,
\begin{align*}
    F(\vx_i, \xi_i) &= \nabla f(\vx_i) + (2 \xi_i - 1) \sigma \mathbf{e}_{j(\vx_i)} \\
    &= \epsilon \mathbf{e}_1 + (2 \xi_t - 1) \sigma \mathbf{e}_{i+2}.
\end{align*}
Therefore, the update from $\vx_t$ to $\vx_{t+1}$ only affects coordinates with index
$1$ and index $t+2$. Further, the above implies $\|F(\vx_i, \xi_i)\|^2 = \epsilon^2 +
\sigma^2$. Therefore, the effective learning rate of the algorithm at step $t$ is \[
    \eta_t = \frac{\eta}{\sqrt{\gamma^2 + \sum_{i=0}^{t-1} \|F(\vx_i, \xi_i)\|^2}} = \frac{\eta}{\sqrt{\gamma^2 + t(\epsilon^2 + \sigma^2)}} = \alpha_{t+2}.
\]
We can then verify the inductive hypothesis for step $t+1$ by considering the
coordinates of $\vx_{t+1}$:
\begin{align*}
    \langle \vx_{t+1}, \mathbf{e}_1 \rangle &= \langle \vx_t - \eta_t F(\vx_t, \xi_t), \mathbf{e}_1 \rangle \\
    &= \langle \vx_t, \mathbf{e}_1 \rangle - \eta_t \langle F(\vx_t, \xi_t), \mathbf{e}_1 \rangle \\
    &= \langle \vx_t, \mathbf{e}_1 \rangle - \epsilon \alpha_{t+2} \\
    &\Eqmark{i}{=} -\epsilon \left( \sum_{i=2}^{t+1} \alpha_t \right) - \epsilon \alpha_{t+2} \\
    &= -\epsilon \sum_{i=1}^{t+2} \alpha_t,
\end{align*}
where $(i)$ uses \Eqref{eq:adagrad_norm_inductive_1} from the inductive hypothesis, and this
completes the inductive step for \Eqref{eq:adagrad_norm_inductive_1}. For
\Eqref{eq:adagrad_norm_inductive_2}, we separately consider $j \leq t+1$ and $j=t+2$. For $j
\leq t+1$, \[
    |\langle \vx_{t+1}, \mathbf{e}_j \rangle| = |\langle \vx_t, \mathbf{e}_j \rangle - \eta_t \langle F(\vx_t, \xi_t), \mathbf{e}_j \rangle| = |\langle \vx_t, \mathbf{e}_j \rangle| \Eqmark{i}{=} a_j,
\]
where $(i)$ uses \Eqref{eq:adagrad_norm_inductive_2} from the inductive hypothesis. For $j = t+2$: \[
    |\langle \vx_{t+1}, \mathbf{e}_{t+2} \rangle| = |\langle \vx_t, \mathbf{e}_{t+2} \rangle - \eta_t \langle F(\vx_t, \xi_t), \mathbf{e}_{t+2} \rangle| \Eqmark{i}{=} \eta_t \sigma = \alpha_{t+2} \sigma = a_{t+2},
\]
where $(i)$ uses \Eqref{eq:adagrad_norm_inductive_3} from the inductive hypothesis. This
completes the inductive step for \Eqref{eq:adagrad_norm_inductive_2}. For
\Eqref{eq:adagrad_norm_inductive_3}, we consider $j > t+1$: \[
    \langle \vx_{t+1}, \mathbf{e}_j \rangle = \langle \vx_t, \mathbf{e}_j \rangle - \eta_t \langle F(\vx_t, \xi_t), \mathbf{e}_j \rangle = 0,
\]
where the last equality uses \Eqref{eq:adagrad_norm_inductive_3}. This completes the
inductive step for \Eqref{eq:adagrad_norm_inductive_3}, and consequently completes the
induction. As a result, we have that $\nabla f(\vx_t) = \epsilon$ for all $0 \leq t \leq
T-1$.

The only remaining detail is whether the objective $f$ satisfies the condition $f(\vx_0)
- \inf_{\vx} f(\vx) \leq \Delta$. Actually, $f$ does not satisfy this condition because
$f$ is not even lower bounded, due to the linear term $\epsilon \langle \vx,
\mathbf{e}_1 \rangle$. Similarly to \cite{drori2020complexity}, we instead argue that
there exists a lower bounded function $\hat{f}$ that has the same first-order
information as $f$ at all of the points $\vx_t$ for $0 \leq t \leq T-1$. If this
happens, then the behavior of $A$ when optimizing $\hat{f}$ is the same as that of $A$
when optimizing $f$, so the conclusion $\|\nabla \hat{f}(\vx_t)\| = \epsilon$ still
holds. Specifically, we need $\hat{f}$ which is lower bounded and that satisfies: \[
    \nabla \hat{f}(\vx_t) = \nabla f(\vx_t), \quad \hat{f}(\vx_t) = f(\vx_t)
\]
for all $0 \leq t \leq T$. The existence of such an $\hat{f}$ follows immediately from
Lemma 1 of \cite{drori2020complexity}, and this $\hat{f}$ satisfies \[
    \inf_{\vx} \hat{f}(\vx) \geq \min_{0 \leq t \leq T-1} f(\vx_t) - \frac{3 \epsilon^2}{2L_0},
\]
so that
\begin{equation} \label{eq:adagrad_norm_hatf_delta}
    \hat{f}(\vx_0) - \inf_{\vx} \hat{f}(\vx) \leq \frac{3 \epsilon^2}{2 L_0} + \max_{0 \leq t \leq T-1} -f(\vx_t).
\end{equation}
Recall that $f(\vx_0) = 0$. For all $t \geq $, we can write each $-f(\vx_t)$ as:
\begin{align}
    -f(\vx_t) &= -\epsilon \langle \vx_t, \mathbf{e}_1 \rangle - \sum_{i=2}^T h_i(\langle \vx_t, \mathbf{e}_i \rangle) \nonumber \\
    &\Eqmark{i}{=} \epsilon^2 \sum_{i=2}^{t+1} \alpha_i - \sum_{i=2}^{t+1} h_i(a_i) - \sum_{i=t+2}^T h_i(0) \nonumber \\
    &\Eqmark{ii}{=} \epsilon^2 \sum_{i=2}^{t+1} \alpha_i - \frac{L_0}{4} \sum_{i=2}^{t+1} a_i^2 \nonumber \\
    &= \epsilon^2 \eta \underbrace{\sum_{i=0}^{t-1} \frac{1}{\sqrt{\gamma^2 + i (\epsilon^2 + \sigma^2)}}}_{S_1} - \frac{L_0 \sigma^2}{4} \eta^2 \underbrace{\sum_{i=0}^{t-1} \frac{1}{\gamma^2 + i (\epsilon^2 + \sigma^2)}}_{S_2}, \label{eq:adagrad_norm_delta_inter}
\end{align}
where $(i)$ uses \Eqref{eq:adagrad_norm_inductive_1}, \Eqref{eq:adagrad_norm_inductive_2}, and
\Eqref{eq:adagrad_norm_inductive_3}, and $(ii)$ uses the definition of $h_i$. We can
bound $S_1$ as follows:
\begin{align}
    S_1 &= \sum_{i=0}^{t-1} \frac{1}{\sqrt{\gamma^2 + i (\epsilon^2 + \sigma^2)}} = \frac{1}{\gamma} + \sum_{i=1}^{t-1} \frac{1}{\sqrt{\gamma^2 + i (\epsilon^2 + \sigma^2)}} \nonumber \\
    &\leq \frac{1}{\gamma} + \int_0^{t-1} \frac{1}{\sqrt{\gamma^2 + x (\epsilon^2 + \sigma^2)}} ~dx \Eqmark{i}{=} \frac{1}{\gamma} + \frac{1}{\epsilon^2 + \sigma^2} \int_{\gamma^2}^{\gamma^2 + (t-1)(\epsilon^2 + \sigma^2)} \frac{1}{\sqrt{u}} ~du \nonumber \\
    &= \frac{1}{\gamma} + \frac{2}{\epsilon^2 + \sigma^2} \left( \sqrt{\gamma^2 + (t-1)(\epsilon^2 + \sigma^2)} - \gamma \right) \nonumber \\
    &= \frac{1}{\gamma} + \frac{2}{\epsilon^2 + \sigma^2} \left( \sqrt{\gamma^2 + (t-1)(\epsilon^2 + \sigma^2)} - \gamma \right) \frac{\sqrt{\gamma^2 + (t-1)(\epsilon^2 + \sigma^2)} + \gamma}{\sqrt{\gamma^2 + (t-1)(\epsilon^2 + \sigma^2)} + \gamma} \nonumber \\
    &= \frac{1}{\gamma} + \frac{2}{\epsilon^2 + \sigma^2} \frac{(t-1)(\epsilon^2 + \sigma^2)}{\sqrt{\gamma^2 + (t-1)(\epsilon^2 + \sigma^2)} + \gamma} = \frac{1}{\gamma} + \frac{2(t-1)}{\sqrt{\gamma^2 + (t-1)(\epsilon^2 + \sigma^2)} + \gamma} \nonumber \\
    &\leq \frac{1}{\gamma} + \frac{2(t-1)}{\sqrt{(t-1)(\epsilon^2 + \sigma^2)}} = \frac{1}{\gamma} + \frac{2 \sqrt{t-1}}{\sqrt{\epsilon^2 + \sigma^2}}, \label{eq:adagrad_norm_S1_ub}
\end{align}
where $(i)$ uses the substitution $u = \gamma^2 + x(\epsilon^2 + \sigma^2)$.
Similarly for $S_2$:
\begin{align}
    S_2 &= \sum_{i=0}^{t-1} \frac{1}{\gamma^2 + i (\epsilon^2 + \sigma^2)} \geq \int_0^t \frac{1}{\gamma^2 + x (\epsilon^2 + \sigma^2)} ~dx \nonumber \\
    &\Eqmark{i}{=} \frac{1}{\epsilon^2 + \sigma^2} \int_{\gamma^2}^{\gamma^2 + t (\epsilon^2 + \sigma^2)} \frac{1}{u} ~du = \frac{1}{\epsilon^2 + \sigma^2} \log \left( \frac{\gamma^2 + t (\epsilon^2 + \sigma^2)}{\gamma^2} \right) \nonumber \\
    &= \frac{1}{\epsilon^2 + \sigma^2} \log \left( 1 + \frac{t (\epsilon^2 + \sigma^2)}{\gamma^2} \right) \geq \frac{1}{\epsilon^2 + \sigma^2} \log \left( 1 + \frac{t \sigma^2}{\gamma^2} \right), \label{eq:adagrad_norm_S2_lb}
\end{align}
where $(i)$ uses the substitution $u = \gamma^2 + x(\epsilon^2 + \sigma^2)$.
Plugging \Eqref{eq:adagrad_norm_S1_ub} and \Eqref{eq:adagrad_norm_S2_lb} into \Eqref{eq:adagrad_norm_delta_inter}:
\begin{align*}
    -f(\vx_t) &\leq \epsilon^2 \left( \frac{1}{\gamma} + \frac{2 \sqrt{t-1}}{\sqrt{\epsilon^2 + \sigma^2}} \right) \eta - \frac{L_0 \sigma^2 \log \left( 1 + \frac{t \sigma^2}{\gamma^2} \right)}{4 (\epsilon^2 + \sigma^2)} \eta^2 \\
    &\leq \epsilon^2 \left( \frac{1}{\gamma} + \frac{2 \sqrt{T-1}}{\sqrt{\epsilon^2 + \sigma^2}} \right) \eta - \frac{L_0 \sigma^2 \log \left( 1 + \frac{\sigma^2}{\gamma^2} \right)}{4 (\epsilon^2 + \sigma^2)} \eta^2
\end{align*}
We can decompose $T = 1 + T_1 + T_2$, where \[
    T_1 = \frac{\Delta^2 L_1^2 \sigma^2}{144 \epsilon^4} + \frac{\Delta^2 L_1^2}{144 \epsilon^2}, \quad T_2 = \frac{\Delta L_0 \sigma^2 \log(1 + \sigma^2/\gamma^2)}{24 \epsilon^4}.
\]
Then $\sqrt{T-1} = \sqrt{T_1 + T_2} \leq \sqrt{T_1} + \sqrt{T_2}$, so
\begin{equation}
    -f(\vx_t) \leq \underbrace{\epsilon^2 \left( \frac{1}{\gamma} + \frac{2 \sqrt{T_1}}{\sqrt{\epsilon^2 + \sigma^2}} \right) \eta}_{D_1} + \underbrace{\frac{2 \epsilon^2 \sqrt{T_2}}{\sqrt{\epsilon^2 + \sigma^2}} \eta - \frac{L_0 \sigma^2 \log \left( 1 + \frac{\sigma^2}{\gamma^2} \right)}{4 (\epsilon^2 + \sigma^2)} \eta^2}_{D_2}. \label{eq:adagrad_norm_delta_inter_2}
\end{equation}
We can bound $D_1$ and $D_2$ separately:
\begin{align*}
    D_1 &= \left( \frac{\epsilon^2}{\gamma} + \frac{2 \epsilon^2 \sqrt{T_1}}{\sqrt{\epsilon^2 + \sigma^2}} \right) \eta \\
    &\Eqmark{i}{\leq} \frac{\epsilon^2}{\gamma L_1} + \frac{2 \epsilon^2 \sqrt{T_1}}{L_1 \sqrt{\epsilon^2 + \sigma^2}} \\
    &\Eqmark{ii}{\leq} \frac{\Delta}{6} + \frac{2 \epsilon^2 \sqrt{T_1}}{L_1 \sqrt{\epsilon^2 + \sigma^2}} \\
    &\Eqmark{iii}{=} \frac{\Delta}{6} + \frac{2 \epsilon^2}{L_1 \sqrt{\epsilon^2 + \sigma^2}} \sqrt{\frac{\Delta^2 L_1^2 \sigma^2}{144 \epsilon^4} + \frac{\Delta^2 L_1^2}{144 \epsilon^2}} \\
    &\Eqmark{iv}{\leq} \frac{\Delta}{6} + \frac{2 \epsilon^2}{L_1 \sqrt{\epsilon^2 + \sigma^2}} \left( \frac{\Delta L_1 \sigma}{12 \epsilon^2} + \frac{\Delta L_1}{12 \epsilon} \right) \\
    &= \frac{\Delta}{6} + \frac{\Delta \sigma}{6 \sqrt{\epsilon^2 + \sigma^2}} + \frac{\Delta \epsilon}{6 \sqrt{\epsilon^2 + \sigma^2}} \\
    &\leq \frac{\Delta}{6} + \frac{\Delta}{6} + \frac{\Delta}{6} = \frac{\Delta}{3},
\end{align*}
where $(i)$ uses the condition $\eta \leq 1/L_1$, $(ii)$ uses the condition
$\epsilon \leq \frac{1}{\sqrt{6}} \sqrt{\gamma \Delta L_1}$, $(iii)$ uses the
definition of $T_1$, and $(iv)$ uses $\sqrt{a+b} \leq \sqrt{a} + \sqrt{b}$.
Notice that $D_2$ is a quadratic function of $\eta$ with negative leading
coefficient, so $D_2$ is upper bounded by the vertex of the corresponding
parabola, i.e. $ax^2 + bx \leq -b^2/2a$ when $a < 0$. Therefore
\begin{align*}
    D_2 &\leq \left( \frac{2 \epsilon^2 \sqrt{T_2}}{\sqrt{\epsilon^2 + \sigma^2}} \right)^2 \frac{2 (\epsilon^2 + \sigma^2)}{L_0 \sigma^2 \log \left( 1 + \frac{\sigma^2}{\gamma^2} \right)} \\
    &= \frac{8 \epsilon^4}{L_0 \sigma^2 \log \left( 1 + \frac{\sigma^2}{\gamma^2} \right)} T_2 \\
    &\Eqmark{i}{=} \frac{8 \epsilon^4}{L_0 \sigma^2 \log \left( 1 + \frac{\sigma^2}{\gamma^2} \right)} \frac{\Delta L_0 \sigma^2 \log(1 + \sigma^2/\gamma^2)}{24 \epsilon^4} = \frac{\Delta}{3},
\end{align*}
where $(i)$ uses the definition of $T_2$.

Finally, plugging back to \Eqref{eq:adagrad_norm_delta_inter_2} yields
$-f(\vx_t) \leq \frac{2\Delta}{3}$, and plugging this back into
\Eqref{eq:adagrad_norm_hatf_delta}:
\begin{align*}
    \hat{f}(\vx_0) - \inf_{\vx} \hat{f}(\vx) &\leq \frac{3 \epsilon^2}{2 L_0} + \frac{2 \Delta}{3} \\
    &\Eqmark{i}{\leq} \frac{\Delta}{3} + \frac{2\Delta}{3} = \Delta,
\end{align*}
where $(i)$ uses the condition $\epsilon \leq \frac{\sqrt{2}}{3} \sqrt{\Delta
L_0}$. Therefore $\hat{f}$ satisfies all conditions of $\gF_{\text{as}}(\Delta,
L_0, L_1, \sigma)$.
\end{proof}

\begin{theorem} \label{thm:app_adagrad_norm}[Restatement of Theorem \ref{thm:adagrad_norm}]
Let $\Delta, L_0, L_1, \sigma > 0$, and let $\gF = \gF_{\textup{as}}(\Delta, L_0, L_1,
\sigma)$. Let algorithm $A_{\text{DAN}}$ denote Decorrelated AdaGrad-Norm with
parameters $\eta > 0$ and \[
    0 < \gamma \leq \frac{\Delta L_1}{8 \log \left( 1 + 48 \frac{\Delta L_1^2}{L_0} \right)}.
\]
Let \(
    0 < \epsilon \leq \min \left\{ \frac{\sqrt{2}}{3} \sqrt{\Delta L_0}, \frac{1}{\sqrt{3}} \sqrt{\Delta L_1 \gamma}, \Delta L_1 \right\}.
\)
If $\Delta L_1^2 \geq L_0$, then \[
    \gT(A_{\text{DAN}}, \gF, \epsilon) \geq 1 + \frac{\Delta^2 L_1^2 \sigma^2}{144 \epsilon^4} + \frac{\Delta L_0 \sigma^2 \log(1 + \sigma^2/\gamma^2)}{24 \epsilon^4} + \frac{\Delta^2 L_1^2}{144 \epsilon^2}.
\]
\end{theorem}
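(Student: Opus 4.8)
The plan is to prove the theorem by a dichotomy on the step-size coefficient $\eta$, invoking the two lemmas already proved in this appendix: Lemma \ref{lem:app_adagrad_norm_div} covers the regime $\eta \geq 1/L_1$, and Lemma \ref{lem:app_adagrad_norm_slow} covers $\eta \leq 1/L_1$. Since these two regimes exhaust all $\eta > 0$, the resulting lower bound on $\gT(A_{\text{DAN}}, \gF, \epsilon)$ will hold no matter how the hyperparameter $\eta$ is chosen.

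\textbf{Case $\eta \geq 1/L_1$ (divergence).} First I would check the hypotheses of Lemma \ref{lem:app_adagrad_norm_div}. The condition $\Delta L_1^2 \geq L_0$ is a hypothesis of the theorem and $\eta \geq 1/L_1$ is the case assumption, so it remains only to verify the stabilization-constant condition \eqref{eq:adagrad_norm_div_gamma_cond}. Since $\eta \geq 1/L_1$ forces $\eta \Delta L_1^2 \geq \Delta L_1$, we get
\[
    \frac{\eta \Delta L_1^2}{8 \log\left(1 + 48 \frac{\Delta L_1^2}{L_0}\right)} \;\geq\; \frac{\Delta L_1}{8 \log\left(1 + 48 \frac{\Delta L_1^2}{L_0}\right)} \;\geq\; \gamma,
\]
where the second inequality is precisely the bound on $\gamma$ assumed in the theorem statement. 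Lemma \ref{lem:app_adagrad_norm_div} then supplies a problem $(f, g, \gD) \in \gF_{\text{as}}(\Delta, L_0, L_1, 0)$ with $\|\nabla f(\vx_t)\| \geq \Delta L_1$ for all $t \geq 0$. A deterministic oracle trivially satisfies (Bounded-Noise) with parameter $\sigma$, so this instance belongs to $\gF$, and since $\epsilon \leq \Delta L_1$ the algorithm never reaches an iterate with gradient norm below $\epsilon$; hence $\gT(A_{\text{DAN}}, \gF, \epsilon) = \infty$, which certainly exceeds the claimed bound.

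\textbf{Case $\eta \leq 1/L_1$ (slow convergence).} Here I would take
\[
    T = 1 + \frac{\Delta^2 L_1^2 \sigma^2}{144 \epsilon^4} + \frac{\Delta L_0 \sigma^2 \log(1 + \sigma^2/\gamma^2)}{24 \epsilon^4} + \frac{\Delta^2 L_1^2}{144 \epsilon^2},
\]
the exact target quantity, and apply Lemma \ref{lem:app_adagrad_norm_slow}. That lemma requires $d \geq T$, which we may arrange since the dimension is part of the hard-instance construction; it requires $\epsilon \leq \min\{\frac{\sqrt 2}{3}\sqrt{\Delta L_0}, \frac{1}{\sqrt 3}\sqrt{\Delta L_1 \gamma}\}$, which is implied by the theorem's hypothesis $\epsilon \leq \min\{\frac{\sqrt 2}{3}\sqrt{\Delta L_0}, \frac{1}{\sqrt 3}\sqrt{\Delta L_1 \gamma}, \Delta L_1\}$; and it requires $\eta \leq 1/L_1$, which is the case assumption. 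The lemma then produces $(f, g, \gD) \in \gF$ for which $\|\nabla f(\vx_t)\| = \epsilon$ holds for every noise realization at all $0 \leq t \leq T-1$, so $\min_{s < t} \E[\|\nabla f(\vx_s)\|] = \epsilon$ for all $1 \leq t \leq T$ and the stopping condition $\min_{s<t} \E[\|\nabla f(\vx_s)\|] < \epsilon$ is never met for $t \leq T$; hence $\gT(A_{\text{DAN}}, \gF, \epsilon) \geq T$.

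Combining the two cases gives the theorem: for every $\eta > 0$ one of the two constructions applies, and in each case the complexity is at least $T$. The only bookkeeping at this level is to confirm that the theorem's $\eta$-free bound on $\gamma$ implies the $\eta$-dependent condition \eqref{eq:adagrad_norm_div_gamma_cond} once $\eta \geq 1/L_1$, and that the theorem's upper bound on $\epsilon$ subsumes the per-lemma restrictions — both immediate. The real work is entirely inside the two lemmas (the exponentially growing piecewise objective built from copies of $\psi$ that forces divergence, and the high-dimensional Drori-type objective whose single shared AdaGrad-Norm learning rate is throttled by noise injected into one new coordinate per step), so there is no further obstacle at the level of the theorem itself.
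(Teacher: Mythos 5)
Your proposal is correct and follows essentially the same two-case dichotomy on $\eta$ as the paper's own proof, invoking Lemma \ref{lem:app_adagrad_norm_div} for $\eta \geq 1/L_1$ and Lemma \ref{lem:app_adagrad_norm_slow} for $\eta \leq 1/L_1$, with the same verification that the theorem's $\eta$-free bound on $\gamma$ implies condition \eqref{eq:adagrad_norm_div_gamma_cond} when $\eta \geq 1/L_1$. In fact your write-up is slightly cleaner than the paper's, which contains a typo (stating ``$\eta \leq 1/L_1$'' in the divergence case where $\eta \geq 1/L_1$ is meant, and writing $\Delta L_0$ where $\Delta L_1$ is intended).
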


\begin{proof}
We only need to combine Lemmas \ref{lem:adagrad_norm_div} and \ref{lem:adagrad_norm_slow}. If $\eta \leq \frac{1}{L_1}$, then
\[
    \frac{\gamma}{\eta} \leq \frac{\Delta L_1}{8 \eta \log \left( 1 + 48 \frac{\Delta L_1^2}{L_0} \right)} \leq \frac{\Delta L_1^2}{8 \log \left( 1 + 48 \frac{\Delta L_1^2}{L_0} \right)},
\]
so the conditions of $\gamma$ and $\eta$ in Lemma \ref{lem:adagrad_norm_div} are
satisfied. Therefore, by Lemma \ref{lem:adagrad_norm_div} there exists a problem
instance $(f, g, \gD) \in \gF$ for which $\|\nabla f(\vx_t)\| \geq \Delta L_0 >
\epsilon$ for all $t \geq 0$. If $\eta \leq \frac{1}{L_1}$, then by Lemma
\ref{lem:adagrad_norm_div} there exists a problem instance $(f, g, \gD) \in \gF$
for which $\|\nabla f(\vx_t)\| \geq \epsilon$ for all $t \leq T := 1 +
\frac{\Delta^2 L_1^2 \sigma^2}{144 \epsilon^4} + \frac{\Delta L_0 \sigma^2
\log(1 + \sigma^2/\gamma^2)}{24 \epsilon^4} + \frac{\Delta^2 L_1^2}{144
\epsilon^2}$. In both cases, $A_{\text{DAN}}$ requires at least $T$ gradient
queries to find an $\epsilon$-approximate stationary point.
\end{proof}

\section{Proofs of Theorem \ref{thm:adagrad} and \ref{thm:vanilla_adagrad}} \label{app:adagrad_proof}

\begin{lemma}[Restatement of Lemma \ref{lem:adagrad_div}] \label{lem:app_adagrad_div}
Let $0 < \epsilon < \Delta L_1$. If the parameters of Decorrelated AdaGrad
satisfy $\eta \geq \frac{\sqrt{2} \gamma}{L_1 \sigma} \log \left( 1 + \frac{L_1
\epsilon}{L_0} \right)$, then for any $T \geq 1$, there exists some $f \in
\gF_{\textup{as}}(\Delta, L_0, L_1, \sigma)$ such that $\|\nabla f(\vx_t)\| \geq
\epsilon$ for all $0 \leq t \leq T-1$.

Similarly, if the parameters of AdaGrad satisfy $\eta \geq \frac{\sqrt{2}}{L_1} \log
\left( 1 + \frac{L_1 \epsilon}{L_0} \right)$ and $\gamma \leq \sigma$, then for any $T
\geq 1$ there exists some $f \in \gF_{\textup{as}}(\Delta, L_0, L_1, \sigma)$
such that $\|\nabla f(\vx_t)\| \geq \epsilon$ for all $0 \leq t \leq T-1$.
\end{lemma}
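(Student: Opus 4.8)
The plan is to prove both claims with one high-dimensional construction. Given a horizon $T$, take $d \geq T$ and set $f(\vx) = \sum_{i=1}^d \psi(\langle \vx, \mathbf{e}_i \rangle)$, where $\psi(x) = \frac{L_0}{L_1^2}\left(\exp(L_1|x|) - L_1|x| - 1\right)$ is the function from Section \ref{sec:adagrad_norm_sketch}, together with a stochastic oracle that at step $t$ injects a sign-randomized noise burst of magnitude $\sigma$ into coordinate $t+2$. The reason to spread the noise over fresh coordinates is that the coordinate-wise denominator of AdaGrad / Decorrelated AdaGrad has accumulated \emph{no} gradient history in a coordinate that has not yet been used, so the effective step size there is essentially $\eta/\gamma$ — large enough to ``jump'' the iterate into a region where that partial derivative has magnitude at least $\epsilon$.

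\textbf{Admissibility of the instance.} First I would record that $\psi$ is nonnegative, continuously differentiable, and satisfies $|\psi''(x)| = L_0 + L_1|\psi'(x)|$, hence is $(L_0,L_1)$-smooth; since $f$ is coordinate-separable its Hessian is diagonal with entries $\psi''(x_i)$, so $\|\nabla^2 f(\vx)\| = \max_i |\psi''(x_i)| \leq L_0 + L_1 \max_i |\psi'(x_i)| \leq L_0 + L_1 \|\nabla f(\vx)\|$ and $f$ inherits $(L_0,L_1)$-smoothness. Writing $m := (\psi')^{-1}(\epsilon) = \frac{1}{L_1}\log\!\left(1 + \frac{L_1\epsilon}{L_0}\right)$, I would initialize at $\vx_0 = m\,\mathbf{e}_1$, so that $\|\nabla f(\vx_0)\| = \psi'(m) = \epsilon$ and $f(\vx_0) - \inf_{\vx} f(\vx) = \psi(m) = \frac{\epsilon}{L_1} - \frac{L_0 m}{L_1} \leq \frac{\epsilon}{L_1} < \Delta$, using $\epsilon < \Delta L_1$. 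The oracle is $g(\vx, \xi) = \nabla f(\vx) + (2\xi-1)\sigma\,\mathbf{e}_{j(\vx)}$ with $\xi \sim \mathrm{Bernoulli}(1/2)$ and $j(\vx) = \min\{ i \geq 2 : \langle \vx, \mathbf{e}_i\rangle = 0 \}$, as in Lemma \ref{lem:app_adagrad_norm_slow}; this is unbiased and $\|g(\vx,\xi) - \nabla f(\vx)\| = \sigma$ almost surely, so $(f, g, \gD) \in \gF_{\textup{as}}(\Delta, L_0, L_1, \sigma)$.

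\textbf{Trajectory by induction.} The heart of the argument is an induction over $t$ showing that the stochastic gradient at step $t$ injects noise into coordinate $t+2$ and that, in $\vx_{t+1}$, that coordinate has magnitude at least $m$. The key computation is that coordinate $t+2$ has been held at $0$ for all of $\vx_0, \dots, \vx_t$ (its partial derivative there is $\psi'(0) = 0$) and has received no stochastic noise before step $t$, so the squared-gradient sum in its denominator is $0$. Hence the per-coordinate step size applied to it is $\eta/\gamma$ for Decorrelated AdaGrad (which excludes the current gradient) and $\eta/\sqrt{\gamma^2 + \sigma^2}$ for AdaGrad (whose current-step noise $\pm\sigma$ enters the denominator). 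The induced jump has size $\eta\sigma/\gamma \geq m$ under the Decorrelated-AdaGrad hypothesis $\eta \geq \frac{\sqrt{2}\,\gamma}{L_1\sigma}\log(1 + \frac{L_1\epsilon}{L_0})$, and size $\eta\sigma/\sqrt{\gamma^2+\sigma^2} \geq \eta\sigma/(\sqrt{2}\,\sigma) = \eta/\sqrt{2} \geq m$ under the AdaGrad hypotheses $\gamma \leq \sigma$ and $\eta \geq \frac{\sqrt{2}}{L_1}\log(1 + \frac{L_1\epsilon}{L_0})$ (this is exactly where $\gamma \leq \sigma$ is used). Since $|\psi'|$ is increasing in $|x|$, the new iterate then has $\|\nabla f(\vx_{t+1})\| \geq |\psi'(\langle\vx_{t+1},\mathbf{e}_{t+2}\rangle)| \geq \psi'(m) = \epsilon$. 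To close the induction I also need that every coordinate used at an earlier step stays nonzero, so that $j(\vx_{t+1}) = t+3$: once a coordinate sits at magnitude $\geq m$, feeding it back through $\psi$ shows the exponential partial derivative dominates its displacement, so subsequent updates only increase its magnitude (flipping its sign), keeping it away from $0$, while coordinate $1$ is irrelevant since $j(\cdot)$ only inspects indices $\geq 2$. Combined with $\|\nabla f(\vx_0)\| = \epsilon$, this gives $\|\nabla f(\vx_t)\| \geq \epsilon$ for all $0 \leq t \leq T-1$.

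\textbf{Main obstacle.} The routine part is the one-line step-size estimate; the real work is the global bookkeeping of the iterate needed to guarantee that the oracle keeps selecting a brand-new coordinate at every step — that no previously perturbed coordinate ever returns exactly to the origin, while simultaneously the gradient norm stays above $\epsilon$ as those coordinates keep moving. Controlling the growth of the per-coordinate denominators after activation, and using the slack in the $\sqrt{2}$ factors to prevent a coordinate from landing on $0$, is the delicate step. Once this lemma is established, Theorems \ref{thm:adagrad} and \ref{thm:vanilla_adagrad} follow by pairing it with the complementary slow-convergence statement (Lemma \ref{lem:adagrad_slow}) covering the regime where $\eta$ falls below the stated threshold.
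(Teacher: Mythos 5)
Your proposal is correct and matches the paper's proof essentially step for step: the same separable objective $\sum_i \psi(\langle\vx,\mathbf{e}_i\rangle)$, the same initialization at $m\,\mathbf{e}_1$, the same fresh-coordinate noise oracle, and the same two step-size computations ($\eta\sigma/\gamma \geq m$ for the decorrelated variant, and $\eta\sigma/\sqrt{\gamma^2+\sigma^2} \geq \eta/\sqrt{2} \geq m$ for AdaGrad, which is exactly where $\gamma \leq \sigma$ enters). The ``main obstacle'' you flag --- guaranteeing that no previously activated coordinate returns exactly to zero so that $j(\vx_t)=t+2$ at every step --- is in fact glossed over in the paper's own induction, whose hypothesis only tracks coordinate $t+1$ and the untouched coordinates, so your treatment is, if anything, the more careful of the two.
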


\begin{proof}
First, recall the definition of $\psi$: \[
    \tilde{\psi}(x) = \frac{L_0}{L_1^2} \left( \exp \left( L_1 |x| \right) - L_1 |x| - 1 \right).
\]
Then define \[
    f(\vx) = \sum_{i=1}^T \psi(\langle \vx, \mathbf{e}_i \rangle).
\]
To see that $f$ is $(L_0, L_1)$-smooth, let $\vx, \vy \in \mathbb{R}^d$. Denoting $\vx =
(x_1, \ldots, x_T)$ and $\vy = (y_1, \ldots, y_T)$,
\begin{align*}
    \|\nabla f(\vx) - \nabla f(\vy)\|^2 &= \sum_{i=1}^T \left( \nabla_i f(\vx) - \nabla_i f(\vx) \right)^2 \\
    &= \sum_{i=1}^T \left( \psi'(x_i) - \psi'(y_i) \right)^2 \\
    &\Eqmark{i}{\leq} \sum_{i=1}^T \left( L_0 + L_1 |\psi'(x_i)| \right)^2 (x_i - y_i)^2 \\
    &\Eqmark{ii}{\leq} \sum_{i=1}^T \left( L_0 + L_1 \|\nabla f(\vx)\| \right)^2 (x_i - y_i)^2 \\
    &= \left( L_0 + L_1 \|\nabla f(\vx)\| \right)^2 \sum_{i=1}^T (x_i - y_i)^2 \\
    &= \left( L_0 + L_1 \|\nabla f(\vx)\| \right)^2 \|\vx - \vy\|^2,
\end{align*}
where $(i)$ uses the fact that $\psi$ is $(L_0, L_1)$-smooth and $(ii)$ uses
$|\psi'(x_i)| \leq \|\nabla f(\vx)\|$. Therefore $f$ is $(L_0, L_1)$-smooth.
Also, define $m = \frac{1}{L_1} \log \left( 1 + \frac{L_1 \epsilon}{L_0}
\right)$, so that $\psi'(m) = \epsilon$. Consider the initial point $\vx_0 = m
\mathbf{e}_1$. Then
\begin{align*}
    f(\vx_0) - \inf_{\vx} f(\vx) &= \psi(m) \\
    &= \frac{L_0}{L_1^2} \left( \exp \left( L_1 m \right) - L_1 m - 1 \right) \\
    &= \frac{L_0}{L_1^2} \left( 1 + \frac{L_1 \epsilon}{L_0} - \log \left( 1 + \frac{L_1 \epsilon}{L_0} \right) - 1 \right) \\
    &= \frac{\epsilon}{L_1} - \frac{L_0}{L_1^2} \log \left( 1 + \frac{L_1 \epsilon}{L_0} \right) \\
    &= \frac{\epsilon}{L_1} \Eqmark{i}{\leq} \Delta,
\end{align*}
where $(i)$ uses the condition $\epsilon \leq \Delta L_1$.

We also define a stochastic gradient for $f$ as follows: \[
    F(\vx, \xi) = \nabla f(\vx) + (2 \xi - 1) \sigma \mathbf{e}_{j(x)},
\]
where \[
    j(x) = \begin{cases}
        0 & \langle \vx, \mathbf{e}_i \rangle \neq 0 \text{ for all } 1 \leq i \leq T \\
        \min \left\{ 1 \leq i \leq T \;|\; \langle \vx, \mathbf{e}_i \rangle = 0 \right\} & \text{otherwise}
    \end{cases},
\]
and the distribution $\gD$ of $\Xi$ is defined as $P(\xi = 0) = P(\xi = 1) = 0.5$. Then
$\mathbb{E}_{\xi}[F(\vx, \xi)] = \nabla f(\vx)$ and $\|F(\vx, \xi) - \nabla f(\vx)\|
\leq \sigma$ almost surely. Therefore, $(f, g, \gD) \in \gF_{\text{as}}(\Delta, L_0,
L_1, \sigma)$.

Now consider the trajectory of Decorrelated AdaGrad when optimizing $(f, g, \gD)$ from
the initial point $\vx_0 = m \mathbf{e}_1$. We claim that for all $0 \leq t \leq T-1$:
\begin{align}
    |\langle \vx_t, \mathbf{e}_{t+1} \rangle| &\geq m \label{eq:adagrad_inductive_1} \\
    \langle \vx_t, \mathbf{e}_j \rangle &= 0 \text{ for all } j > t+1, \label{eq:adagrad_inductive_2}
\end{align}
which we will prove by induction on $t$. The base case $t=0$ holds from the choice of
the initial point $\vx_0$. So suppose that \Eqref{eq:adagrad_inductive_1} and
\Eqref{eq:adagrad_inductive_2} hold for all $0 \leq i \leq t$ for some $0 \leq t \leq
T-2$. Then $j(\vx_t) = t+2$, so
\begin{align*}
    \langle F(\vx_t, \xi_t), \mathbf{e}_{t+2} \rangle &= \langle \nabla f(\vx_t), \mathbf{e}_{t+2} \rangle + \langle (2 \xi - 1) \sigma \mathbf{e}_{t+2}, \mathbf{e}_{t+2} \rangle \\
    &= \tilde{\psi}'(\langle \vx_t, \mathbf{e}_{t+2}) + (2 \xi - 1) \sigma \\
    &\Eqmark{i}{=} \tilde{\psi}'(0) + (2 \xi - 1) \sigma \\
    &= (2 \xi - 1) \sigma,
\end{align*}
where $(i)$ uses \Eqref{eq:adagrad_inductive_2} from the inductive hypothesis.
Therefore, for Decorrelated AdaGrad:
\begin{align*}
    \langle \vx_{t+1}, \mathbf{e}_{t+2} \rangle &= \langle \vx_t, \mathbf{e}_{t+2} \rangle - \frac{\eta}{\sqrt{\gamma^2 + \sum_{i=0}^{t-1} \left( \langle F(\vx_i, \xi_i), \mathbf{e}_{t+2} \rangle \right)^2}} (2 \xi_t - 1) \sigma \\
    &\Eqmark{i}{=} - \frac{\eta}{\sqrt{\gamma^2 + \sum_{i=0}^{t-1} \left( \langle F(\vx_i, \xi_i), \mathbf{e}_{t+2} \rangle \right)^2}} (2 \xi_t - 1) \sigma \\
    &\Eqmark{ii}{=} - \frac{\eta}{\sqrt{\gamma^2 + \sum_{i=0}^{t-1} \left( \psi'(0) \right)^2}} (2 \xi_t - 1) \sigma \\
    &= - \frac{\eta}{\gamma} (2 \xi_t - 1) \sigma,
\end{align*}
where both $(i)$ and $(ii)$ use \Eqref{eq:adagrad_inductive_2} from the inductive
hypothesis. Therefore \[
    |\langle \vx_{t+1}, \mathbf{e}_{t+2} \rangle| = \frac{\eta}{\gamma} \sigma \geq m,
\]
where the inequality uses the condition $\eta \geq \frac{\gamma m}{\sigma}$ for
Decorrelated AdaGrad. Similarly for AdaGrad:
\begin{align*}
    \langle \vx_{t+1}, \mathbf{e}_{t+2} \rangle &= \langle \vx_t, \mathbf{e}_{t+2} \rangle - \frac{\eta}{\sqrt{\gamma^2 + \sum_{i=0}^t \left( \langle F(\vx_i, \xi_i), \mathbf{e}_{t+2} \rangle \right)^2}} (2 \xi_t - 1) \sigma \\
    &\Eqmark{i}{=} - \frac{\eta}{\sqrt{\gamma^2 + \sum_{i=0}^t \left( \langle F(\vx_i, \xi_i), \mathbf{e}_{t+2} \rangle \right)^2}} (2 \xi_t - 1) \sigma \\
    &\Eqmark{ii}{=} - \frac{\eta}{\sqrt{\gamma^2 + \sum_{i=0}^{t-1} \left( \psi'(0) \right)^2 + (\psi'(0) + \sigma)^2}} (2 \xi_t - 1) \sigma \\
    &= - \frac{\eta}{\sqrt{\gamma^2 + \sigma^2}} (2 \xi_t - 1) \sigma.
\end{align*}
where both $(i)$ and $(ii)$ use \Eqref{eq:adagrad_inductive_2} from the inductive
hypothesis. Therefore
\begin{align*}
    |\langle \vx_{t+1}, \mathbf{e}_{t+2} \rangle| &= \frac{\eta}{\sqrt{\gamma^2 + \sigma^2}} \sigma \Eqmark{i}{\geq} \frac{\eta}{\sqrt{2}} \Eqmark{ii}{\geq} \frac{\eta}{\sqrt{2}} \geq m,
\end{align*}
where $(i)$ uses the assumed condition $\gamma \leq \sigma$ and $(ii)$ uses the
condition $\eta \geq \sqrt{2} m$. This proves the inductive step for
\Eqref{eq:adagrad_inductive_1}, for both Decorrelated AdaGrad and AdaGrad. The inductive
step for \Eqref{eq:adagrad_inductive_2} follows immediately from the inductive
hypothesis (\Eqref{eq:adagrad_inductive_2}) together with the stochastic gradient
definition and $j(\vx_t) = t+2$. This completes the induction.

For all $0 \leq t \leq T-1$, the conclusion of the lemma follows from
\Eqref{eq:adagrad_inductive_1} by: \[
    \|\nabla f(\vx_t)\| \geq \langle \nabla f(\vx_t), \mathbf{e}_{t+1} \rangle = \psi'(\langle \vx_t, \mathbf{e}_{t+1} \rangle) \Eqmark{i}{\geq} \psi'(m) = \epsilon,
\]
where $(i)$ uses \Eqref{eq:adagrad_inductive_1} together with the fact that
$\psi'(x)$ increases with $|x|$.
\end{proof}

\begin{lemma}[Restatement of Lemma \ref{lem:adagrad_slow}] \label{lem:app_adagrad_slow}
Let \[
    0 < \epsilon < \min \left\{ \frac{\Delta L_1}{2}, \sqrt{\frac{\Delta L_1 \sigma}{4 \sqrt{2} \log \left( 1 + \frac{\Delta L_1^2}{L_0} \right)}} \right\}.
\]
If the parameters of Decorrelated AdaGrad satisfy $\eta \leq \frac{\sqrt{2} \gamma}{L_1
\sigma} \log \left( 1 + \frac{L_1 \epsilon}{L_0} \right)$, then there exists some $(f,
g, \gD) \in \gF_{\text{as}}(\Delta, L_0, L_1, \sigma)$ such that $\|\nabla f(\vx_t)\|
\geq \epsilon$ for all \[
    t \leq \frac{\Delta^2 L_0^2 \sigma^2}{256 \gamma^2 \epsilon^4} + \frac{\Delta^2 L_1^2 \sigma^2}{256 \gamma^2 \epsilon^2 \log^2 \left( 1 + \frac{\Delta L_1^2}{L_0} \right)}.
\]

Similarly, if the parameters of AdaGrad satisfy $\eta \leq \frac{\sqrt{2}}{L_1} \log
\left( 1 + \frac{L_1 \epsilon}{L_0} \right)$, then there exists some $(f, g, \gD) \in
\gF_{\text{as}}(\Delta, L_0, L_1, \sigma)$ such that $\|\nabla f(\vx_t)\| \geq \epsilon$
for all \[
    t \leq \frac{\Delta^2 L_0^2}{128 \epsilon^4} + \frac{\Delta^2 L_1^2}{128 \epsilon^2 \log^2 \left( 1 + \frac{\Delta L_1^2}{L_0} \right)}.
\]
\end{lemma}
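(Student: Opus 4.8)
The plan is to exhibit a single deterministic hard instance in dimension $d=1$. Let $f:\mathbb{R}\to\mathbb{R}$ be piecewise: affine with $f'\equiv-\epsilon$ on $(-\infty,R]$, where $R=\Delta/(2\epsilon)$, and on $[R,\infty)$ glued in a $C^1$ fashion to a reflected copy of the extremal $(L_0,L_1)$-smooth profile $\psi$ from Lemma~\ref{lem:adagrad_norm_div}, which raises the slope from $-\epsilon$ back up to $0$ over a distance $m$ with $\psi'(m)=\epsilon$. Then $f$ is $(L_0,L_1)$-smooth (being built from affine pieces and $\psi$-pieces), is bounded below, and started from $x_0=0$ satisfies $f(0)-\inf f\le\epsilon R+\psi(m)\le\epsilon R+\epsilon/L_1\le\Delta$, where the last step uses $R=\Delta/(2\epsilon)$ and the hypothesis $\epsilon\le\Delta L_1/2$ (recall $\psi(m)=\epsilon/L_1-(L_0/L_1^2)\log(1+L_1\epsilon/L_0)\le\epsilon/L_1$). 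Let the oracle be exact, $g\equiv\nabla f$ with $\gD$ a point mass, so that (Bounded-Noise) holds trivially and $(f,g,\gD)\in\gF_{\textup{as}}(\Delta,L_0,L_1,\sigma)$. On $[0,R]$ we have $\|\nabla f\|=\epsilon$, so the claim reduces to showing that the deterministic iterates stay in $[0,R]$ up to the stated horizon. (Equivalently, one could take $f$ purely linear and patch it using Lemma~1 of \cite{drori2020complexity} as in the proof of Lemma~\ref{lem:adagrad_norm_slow}, since the trajectory is deterministic.)

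\textbf{Trajectory.} By induction, as long as $x_0,\dots,x_t\in[0,R]$ one has $|f'(x_i)|=\epsilon$, so for Decorrelated AdaGrad the effective step is $\eta_t=\eta/\sqrt{\gamma^2+t\epsilon^2}$ (the denominator accumulates $\sum_{i<t}(f'(x_i))^2=t\epsilon^2$ and excludes the current gradient) and $x_{t+1}=x_t+\eta_t\epsilon$. Hence $x_{t}=\epsilon\sum_{s<t}\eta_s\le\epsilon\eta/\gamma+2\eta\sqrt{t}$ using $\sum_{s=1}^{t-1}s^{-1/2}\le2\sqrt{t}$, and it suffices to force $\epsilon^2\eta/\gamma\le\Delta/4$ and $2\epsilon\eta\sqrt{T}\le\Delta/4$, which keeps $x_T\le R=\Delta/(2\epsilon)$ and preserves the induction for all $t\le T$. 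The first bound is exactly where $\epsilon^2<\frac{\Delta L_1\sigma}{4\sqrt{2}\log(1+\Delta L_1^2/L_0)}$ enters: combined with the hypothesis $\eta\le\frac{\sqrt{2}\gamma}{L_1\sigma}\log(1+L_1\epsilon/L_0)$ and $\log(1+L_1\epsilon/L_0)\le\log(1+\Delta L_1^2/L_0)$ (valid since $\epsilon\le\Delta L_1$), it gives $\epsilon^2\eta/\gamma\le\frac{\sqrt{2}\,\epsilon^2\log(1+\Delta L_1^2/L_0)}{L_1\sigma}<\Delta/4$. The second bound amounts to $T\le\Theta(\Delta^2/(\epsilon^2\eta^2))$.

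\textbf{The two terms, and AdaGrad.} The hypothesis on $\eta$ supplies two upper bounds on $\eta^2$: directly, $\eta^2\le\frac{2\gamma^2}{L_1^2\sigma^2}\log^2(1+L_1\epsilon/L_0)$, and through $\log(1+u)\le u$, $\eta^2\le\frac{2\gamma^2\epsilon^2}{L_0^2\sigma^2}$. Inserting each into $T\le\Theta(\Delta^2/(\epsilon^2\eta^2))$ and using $\log(1+L_1\epsilon/L_0)\le\log(1+\Delta L_1^2/L_0)$, the induction is preserved for all $t$ up to the maximum of $\Theta(\Delta^2L_0^2\sigma^2\gamma^{-2}\epsilon^{-4})$ and $\Theta(\Delta^2L_1^2\sigma^2\gamma^{-2}\epsilon^{-2}\log^{-2}(1+\Delta L_1^2/L_0))$; since $\max(a,b)\ge\tfrac12(a+b)$, this horizon dominates the claimed two-term sum after adjusting constants. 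For AdaGrad the denominator \emph{includes} the current gradient, so $\eta_t=\eta/\sqrt{\gamma^2+(t+1)\epsilon^2}$ and $x_t\le2\eta\sqrt{t}$ with no separate $\gamma^{-1}$ contribution; the relevant hypothesis is $\eta\le\frac{\sqrt{2}}{L_1}\log(1+L_1\epsilon/L_0)$, which removes $\gamma$ and $\sigma$ from the two $\eta^2$ bounds and hence from the final complexity, matching the stated $\gamma$- and $\sigma$-free bound. Theorems~\ref{thm:adagrad} and~\ref{thm:vanilla_adagrad} then follow by combining this lemma with the divergence Lemma~\ref{lem:adagrad_div}: whichever side of the $\eta$-threshold the algorithm is on, $\|\nabla f(x_t)\|\ge\epsilon$ for all $t$ up to the claimed number of gradient queries.

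\textbf{Main obstacle.} There is no probabilistic content here (in contrast to Lemma~\ref{lem:adagrad_norm_slow}); the work is in the bookkeeping. One must (i) build the flattening piece at $R$ so that $f$ remains $(L_0,L_1)$-smooth and bounded below with gap at most $\Delta$ while keeping the slope exactly $\epsilon$ on a ramp of length $\Theta(\Delta/\epsilon)$ — which is what forces $\epsilon\le\Delta L_1/2$ and fixes the constant in $R$ — and (ii) decide, for each of the two terms in the final bound, which of the two available upper bounds on $\eta$ to invoke and where to enlarge $\log(1+L_1\epsilon/L_0)$ to $\log(1+\Delta L_1^2/L_0)$, so that the step ``$\max(a,b)\ge\tfrac12(a+b)$'' lands on the stated coefficients. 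The $\epsilon$-conditions in the statement are calibrated precisely to make these steps go through.
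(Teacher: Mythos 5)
Your proposal is correct and follows essentially the same route as the paper's proof: the same one-dimensional hard instance (a slope-$\epsilon$ ramp of length $\Theta(\Delta/\epsilon)$ capped by the extremal profile $\psi$, with an exact oracle), the same integral bound on $\sum_s 1/\sqrt{\gamma^2+s\epsilon^2}$, the same use of the $\epsilon$-condition to absorb the $\gamma^{-1}$ term, and the same pair of upper bounds on $\eta$ combined via $\max(a,b)\ge\tfrac12(a+b)$ to produce the two-term rate. The only differences are cosmetic (you upper-bound the displacement $x_t$ rather than lower-bound the exit time $t_0$, and your instance is a translated/reflected copy of the paper's).
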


\begin{proof}
Define $m = \frac{1}{L_1} \log \left( 1 + \frac{L_1 \epsilon}{L_0} \right)$, and
consider the objective \[
    f(x) = \begin{cases}
        -\epsilon (x+m) + \psi(m) & x < -m \\
        \psi(x) & x \in [-m, m] \\
        \epsilon (x-m) + \psi(m) & x > m
    \end{cases}.
\]
This function is differentiable everywhere since $\psi'(m) = \epsilon$ and $\psi'(-m) =
-\epsilon$. Since $\psi$ is $(L_0, L_1)$-smooth, so is $f$. Also, $m$ satisfies
\begin{align*}
    \psi(m) &= \frac{L_0}{L_1^2} \left( \exp(L_1 m) - L_1 m - 1 \right) \\
    &= \frac{L_0}{L_1^2} \left( 1 + \frac{L_1 \epsilon}{L_0} - \log \left( 1 + \frac{L_1 \epsilon}{L_0} \right) - 1 \right) \\
    &= \frac{\epsilon}{L_1} - \frac{L_0}{L_1^2} \log \left( 1 + \frac{L_1 \epsilon}{L_0} \right) \\
    &\Eqmark{i}{\leq} \frac{\epsilon}{L_1} \leq \frac{\Delta}{2},
\end{align*}
where $(i)$ uses the condition $\epsilon \leq \frac{1}{2} \Delta L_1$. Therefore, with the initial point $x_0 = m + \frac{\Delta}{2 \epsilon}$, the objective satisfies
\begin{align*}
    f(x_0) - \inf_x f(x) &= \epsilon (x_0 - m) + \psi(m) \\
    &= \epsilon \frac{\Delta}{2 \epsilon} + \frac{\Delta}{2} \\
    &= \Delta.
\end{align*}

We will define the stochastic gradient $g$ with noise distribution $\gD$ as equal to the
true gradient, i.e. $g(x, \xi) = f'(x)$ for every $x, \xi$. Therefore $(f, g, \gD) \in
\gF_{\text{as}}(\Delta, L_0, L_1, \sigma)$.

Now consider the trajectory of Decorrelated AdaGrad when optimizing $(f, g, \gD)$ from
the initial point $x_0 = m + \frac{\Delta}{2 \epsilon}$. Let $t_0 = \max \left\{ t \geq
0 \;|\; x_t \geq m \right\}$. Then $f'(x_t) = \epsilon$ for all $t \leq t_0$, so that \[
    x_{t+1} = x_t - \frac{\eta \epsilon}{\sqrt{\gamma^2 + \sum_{i=0}^{t-1} \epsilon^2}} = x_t - \frac{\eta \epsilon}{\sqrt{\gamma^2 + t \epsilon^2}},
\]
and unrolling yields
\begin{align*}
    x_{t+1} &= x_0 - \eta \epsilon \sum_{i=0}^t \frac{1}{\sqrt{\gamma^2 + i \epsilon^2}} \\
    &= x_0 - \frac{\eta \epsilon}{\gamma} - \eta \epsilon \sum_{i=1}^t \frac{1}{\sqrt{\gamma^2 + i \epsilon^2}} \\
    &\geq x_0 - \frac{\eta \epsilon}{\gamma} - \eta \epsilon \int_0^t \frac{1}{\sqrt{\gamma^2 + x \epsilon^2}} dx \\
    &= x_0 - \frac{\eta \epsilon}{\gamma} - \frac{2 \eta}{\epsilon} \left[ \sqrt{\gamma^2 + x \epsilon^2} \right]_0^t \\
    &= x_0 - \frac{\eta \epsilon}{\gamma} - \frac{2 \eta}{\epsilon} \left( \sqrt{\gamma^2 + t \epsilon^2} - \gamma \right) \\
    &= x_0 - \frac{\eta \epsilon}{\gamma} - \frac{2 \eta}{\epsilon} \frac{t \epsilon^2}{ \sqrt{\gamma^2 + t \epsilon^2} + \gamma } \\
    &\geq x_0 - \frac{\eta \epsilon}{\gamma} - \frac{2 \eta t \epsilon}{\sqrt{\gamma^2 + t \epsilon^2}}.
\end{align*}
Plugging $t = t_0$ then yields
\begin{equation} \label{eq:adagrad_slow_inter_1}
    x_{t_0+1} \geq x_0 - \frac{\eta \epsilon}{\gamma} - \frac{2 \eta t_0 \epsilon}{\sqrt{\gamma^2 + t_0 \epsilon^2}}.
\end{equation}
On the other hand,
\begin{equation} \label{eq:adagrad_slow_inter_2}
    x_{t_0+1} < m.
\end{equation}
Combining \Eqref{eq:adagrad_slow_inter_1} and \Eqref{eq:adagrad_slow_inter_2}:
\begin{align}
    m &\geq x_0 - \frac{\eta \epsilon}{\gamma} - \frac{2 \eta t_0 \epsilon}{\sqrt{\gamma^2 + t_0 \epsilon^2}} \nonumber \\
    \frac{2 \eta t_0 \epsilon}{\sqrt{\gamma^2 + t_0 \epsilon^2}} &\geq x_0 - m - \frac{\eta \epsilon}{\gamma} \nonumber \\
    \frac{2 \eta t_0 \epsilon}{\sqrt{\gamma^2 + t_0 \epsilon^2}} &\Eqmark{i}{\geq} \frac{\Delta}{2 \epsilon} - \frac{\eta \epsilon}{\gamma} \nonumber \\
    \frac{t_0}{\sqrt{\gamma^2 + t_0 \epsilon^2}} &\geq \frac{1}{\epsilon} \left( \frac{\Delta}{4 \eta \epsilon} - \frac{\epsilon}{2 \gamma} \right), \label{eq:adagrad_slow_inter_3}
\end{align}
where $(i)$ uses the definition of $x_0$. The last term in
\Eqref{eq:adagrad_slow_inter_3} can be bounded as
\begin{align*}
    \frac{\epsilon}{2 \gamma} &\leq \frac{\Delta}{8 \eta \epsilon} \frac{4 \eta \epsilon^2}{\Delta \gamma} \\
    &\Eqmark{i}{\leq} \frac{\Delta}{8 \eta \epsilon} \frac{4 \epsilon^2}{\Delta \gamma} \frac{\sqrt{2} \gamma}{L_1 \sigma} \log \left( 1 + \frac{L_1 \epsilon}{L_0} \right) \\
    &= \frac{\Delta}{8 \eta \epsilon} \frac{4 \sqrt{2} \epsilon^2}{\Delta L_1 \sigma} \log \left( 1 + \frac{L_1 \epsilon}{L_0} \right) \\
    &\Eqmark{ii}{\leq} \frac{\Delta}{8 \eta \epsilon} \frac{4 \sqrt{2} \epsilon^2}{\Delta L_1 \sigma} \log \left( 1 + \frac{\Delta L_1^2}{L_0} \right) \\
    &\Eqmark{iii}{\leq} \frac{\Delta}{8 \eta \epsilon},
\end{align*}
where $(i)$ uses the condition $\eta \leq \frac{\sqrt{2} \gamma}{L_1 \sigma} \log \left(
1 + \frac{L_1 \epsilon}{L_0} \right)$, $(ii)$ uses the condition $\epsilon \leq \Delta
L_1$, and $(iii)$ uses the condition $\epsilon \leq \sqrt{\frac{\Delta L_1 \sigma}{4
\sqrt{2} \log \left( 1 + \frac{\Delta L_1^2}{L_0} \right)}}$. Plugging back to
\Eqref{eq:adagrad_slow_inter_3} yields
\begin{align}
    \frac{t_0}{\sqrt{\gamma^2 + t_0 \epsilon^2}} &\geq \frac{\Delta}{8 \eta \epsilon^2} \nonumber \\
    \frac{t_0}{\sqrt{t_0 \epsilon^2}} &\geq \frac{\Delta}{8 \eta \epsilon^2} \nonumber \\
    \sqrt{t_0} &\geq \frac{\Delta}{8 \eta \epsilon} \nonumber \\
    t_0 &\geq \frac{\Delta^2 }{64 \eta^2 \epsilon^2} \label{eq:adagrad_slow_inter_4}
\end{align}
From the assumed upper bound on $\eta$, \[
    \eta \leq \frac{\sqrt{2} \gamma}{L_1 \sigma} \log \left( 1 + \frac{L_1 \epsilon}{L_0} \right) \Eqmark{i}{\leq} \frac{\sqrt{2} \gamma}{L_1 \sigma} \log \left( 1 + \frac{\Delta L_1^2}{L_0} \right),
\]
and \[
    \eta \leq \frac{\sqrt{2} \gamma}{L_1 \sigma} \log \left( 1 + \frac{L_1 \epsilon}{L_0} \right) \Eqmark{ii}{\leq} \frac{\sqrt{2} \gamma \epsilon}{L_0 \sigma},
\]
where $(i)$ uses the condition $\epsilon \leq \Delta L_1$ and $(ii)$ uses $\log(1+x)
\leq x$. Therefore
\begin{align*}
    \frac{1}{\eta} &\geq \max \left\{ \frac{L_1 \sigma}{\sqrt{2} \gamma \log \left( 1 + \frac{\Delta L_1^2}{L_0} \right)}, \frac{L_0 \sigma}{\sqrt{2} \gamma \epsilon} \right\} \\
    \frac{1}{\eta^2} &\geq \max \left\{ \frac{L_1^2 \sigma^2}{2 \gamma^2 \log^2 \left( 1 + \frac{\Delta L_1^2}{L_0} \right)}, \frac{L_0^2 \sigma^2}{2 \gamma^2 \epsilon^2} \right\}
    \geq \frac{L_1^2 \sigma^2}{4 \gamma^2 \log^2 \left( 1 + \frac{\Delta L_1^2}{L_0} \right)} + \frac{L_0^2 \sigma^2}{4 \gamma^2 \epsilon^2}.
\end{align*}
Plugging back to \Eqref{eq:adagrad_slow_inter_4} yields
\begin{align*}
    t_0 \geq \frac{\Delta^2 L_0^2 \sigma^2}{256 \gamma^2 \epsilon^4} + \frac{\Delta^2 L_1^2 \sigma^2}{256 \gamma^2 \epsilon^2 \log^2 \left( 1 + \frac{\Delta L_1^2}{L_0} \right)}.
\end{align*}
Since $|f'(x_t)| = \epsilon$ for all $t \leq t_0$, this completes the proof for
Decorrelated AdaGrad.

The corresponding proof for AdaGrad is nearly identical, so we list only the key steps
here. For all $t \leq t_0$, \[
    x_{t+1} = x_t - \frac{\eta \epsilon}{\sqrt{\gamma^2 + (t+1) \epsilon^2}} \geq x_t - \frac{\eta}{\sqrt{t+1}}.
\]
After unrolling and applying the same bound for $\sum_i \frac{1}{\sqrt{i}}$ as in the
decorrelated case, then choosing $t = t_0$, we have \[
    x_{t_0+1} \geq x_0 - 2 \eta \sqrt{t_0+1}.
\]
From $x_{t_0+1} \leq m$, we have
\begin{align*}
    m &\geq x_0 - 2 \eta \sqrt{t_0+1} \\
    t_0 + 1 &\geq \frac{\Delta^2}{16 \eta^2 \epsilon^2}.
\end{align*}
The assumed upper bound on $\eta$ yields \[
    \frac{1}{\eta^2} \geq \frac{L_1^2}{4 \log^2 \left( 1 + \frac{\Delta L_1^2}{L_0} \right)} + \frac{L_0^2}{4 \epsilon^2},
\]
so that \[
    t_0 + 1 \geq \frac{\Delta^2 L_0^2}{64 \epsilon^4} + \frac{\Delta^2 L_1^2}{64 \epsilon^2 \log^2 \left( 1 + \frac{\Delta L_1^2}{L_0} \right)}.
\]
Since $2 t_0 \geq t_0 + 1$ for all $t_0 \geq 1$, this means \[
    t_0 \geq \frac{\Delta^2 L_0^2}{128 \epsilon^4} + \frac{\Delta^2 L_1^2}{128 \epsilon^2 \log^2 \left( 1 + \frac{\Delta L_1^2}{L_0} \right)}.
\]
\end{proof}

\begin{theorem} \label{thm:app_adagrad}[Restatement of Theorem \ref{thm:adagrad}]
Let $\Delta, L_0, L_1, \sigma > 0$ and let $\gF = \gF_{\text{as}}(\Delta, L_0, L_1,
\sigma)$. Let $A_{\text{DA}}$ and $A_{\text{ada}}$ denote Decorrelated AdaGrad and
AdaGrad (respectively) with parameters $\eta, \gamma > 0$. Suppose \[
    0 < \epsilon < \min \left\{ \frac{\Delta L_1}{2}, \sqrt{\frac{\Delta L_1 \sigma}{4 \sqrt{2} \log \left( 1 + \frac{\Delta L_1^2}{L_0} \right)}} \right\}.
\]
Then \[
    \gT(A_{\text{DA}}, \gF, \epsilon, \delta) \geq \frac{\Delta^2 L_0^2 \sigma^2}{256 \gamma^2 \epsilon^4} + \frac{\Delta^2 L_1^2 \sigma^2}{256 \gamma^2 \epsilon^2 \log^2 \left( 1 + \frac{\Delta L_1^2}{L_0} \right)}.
\]
Also, if $\gamma \leq \sigma$, then \[
    \gT(A_{\text{ada}}, \gF, \epsilon, \delta) \geq \frac{\Delta^2 L_0^2}{128 \epsilon^4} + \frac{\Delta^2 L_1^2}{128 \epsilon^2 \log^2 \left( 1 + \frac{\Delta L_1^2}{L_0} \right)}.
\]
\end{theorem}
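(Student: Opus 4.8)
The plan is to prove Theorem~\ref{thm:app_adagrad} by a simple dichotomy on the step-size coefficient $\eta$, invoking Lemma~\ref{lem:app_adagrad_div} when $\eta$ is large and Lemma~\ref{lem:app_adagrad_slow} when $\eta$ is small. The key observation that makes this clean is that, for each algorithm, the threshold on $\eta$ in the divergence lemma and in the slow-convergence lemma coincide: for Decorrelated AdaGrad the common threshold is $\eta^\star := \frac{\sqrt{2}\gamma}{L_1 \sigma}\log\!\left(1 + \frac{L_1 \epsilon}{L_0}\right)$, and for AdaGrad it is $\eta^\star := \frac{\sqrt{2}}{L_1}\log\!\left(1 + \frac{L_1 \epsilon}{L_0}\right)$ (together with the standing hypothesis $\gamma \le \sigma$ of the AdaGrad part). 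Hence the two cases $\eta \ge \eta^\star$ and $\eta \le \eta^\star$ are exhaustive and the argument splits cleanly with no gap and no overlap to worry about.

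First I would note that the hypothesis $0 < \epsilon < \min\{\Delta L_1/2,\ \sqrt{\Delta L_1 \sigma/(4\sqrt{2}\log(1+\Delta L_1^2/L_0))}\}$ is exactly the hypothesis required by Lemma~\ref{lem:app_adagrad_slow}, and it implies the weaker condition $\epsilon < \Delta L_1$ needed by Lemma~\ref{lem:app_adagrad_div}, so both lemmas apply under the theorem's assumptions. Then for Decorrelated AdaGrad: if $\eta \ge \eta^\star$, Lemma~\ref{lem:app_adagrad_div} gives, for every $T \ge 1$, a problem instance $(f,g,\gD) \in \gF$ on which $\|\nabla f(\vx_t)\| \ge \epsilon$ for all $0 \le t \le T-1$, and this holds for every realization of the noisy oracle (the per-coordinate jump has fixed magnitude $\eta\sigma/\gamma$ regardless of the random sign), hence with probability one; letting $T\to\infty$ shows $\gT(A_{\text{DA}},\gF,\epsilon,\delta)$ exceeds any finite quantity, in particular the claimed bound. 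If instead $\eta \le \eta^\star$, Lemma~\ref{lem:app_adagrad_slow} produces a single instance on which $\|\nabla f(\vx_t)\| \ge \epsilon$ for all $t \le \frac{\Delta^2 L_0^2 \sigma^2}{256\gamma^2\epsilon^4} + \frac{\Delta^2 L_1^2 \sigma^2}{256\gamma^2\epsilon^2\log^2(1+\Delta L_1^2/L_0)}$, again with probability one since the oracle there returns the exact gradient. In both cases the supremum over $\gF$ in the definition of $\gT$ is at least the claimed quantity (the conclusion likewise holds for the in-expectation complexity, since $\|\nabla f(\vx_t)\|\ge\epsilon$ a.s. forces $\mathbb{E}\|\nabla f(\vx_t)\|\ge\epsilon$). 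The AdaGrad bound is proven identically using the AdaGrad branches of Lemmas~\ref{lem:app_adagrad_div} and~\ref{lem:app_adagrad_slow}, invoking $\gamma \le \sigma$ in the divergence case.

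As for the main obstacle: this theorem is essentially bookkeeping, and the only things one must verify here are that the $\eta$-thresholds of the two lemmas line up and that their $\epsilon$- and $\gamma$-hypotheses are mutually consistent. The genuinely hard work sits in the lemmas themselves: Lemma~\ref{lem:app_adagrad_div} needs the high-dimensional objective $f(\vx)=\sum_i \psi(\langle \vx, \mathbf{e}_i\rangle)$ with the coordinate-cycling noisy oracle, arranged so that the per-coordinate ``jump'' never falls below $m=\frac{1}{L_1}\log(1+L_1\epsilon/L_0)$ (this is where the coordinate-wise denominator of AdaGrad is crucial, and where AdaGrad loses the factor $\sigma^2/\gamma^2$ relative to its decorrelated version); and Lemma~\ref{lem:app_adagrad_slow} requires carefully summing the decaying AdaGrad step sizes on the piecewise linear-and-exponential objective to extract the $\epsilon^{-4}$ term. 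Granting those, the proof of Theorem~\ref{thm:app_adagrad} is exactly the short case analysis above.
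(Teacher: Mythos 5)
Your proposal is correct and follows essentially the same route as the paper: the paper's proof of Theorem~\ref{thm:app_adagrad} is exactly the dichotomy on $\eta$ at the common threshold of Lemmas~\ref{lem:app_adagrad_div} and~\ref{lem:app_adagrad_slow}, invoking divergence in the large-$\eta$ case and slow convergence otherwise. Your additional checks (that the $\epsilon$-hypotheses of the two lemmas are implied by the theorem's assumption, and that the almost-sure statements transfer to both the high-probability and in-expectation complexities) are consistent with, and slightly more explicit than, what the paper writes.
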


\begin{proof}
We only have to combine Lemmas \ref{lem:adagrad_div} and \ref{lem:adagrad_slow}. We
first consider Decorrelated AdaGrad. If $\eta \geq \frac{\sqrt{2} \gamma}{L_1 \sigma}
\log \left( 1 + \frac{L_1 \epsilon}{L_0} \right)$, then by Lemma \ref{lem:adagrad_div}
there exists a problem instance for which Decorrelated AdaGrad will never find an
$\epsilon$-approximate stationary point. Otherwise, by Lemma \ref{lem:adagrad_slow}
there exists a problem instance for which Decorrelated AdaGrad requires a number of
steps at least as large as \[
    \frac{\Delta^2 L_0^2 \sigma^2}{256 \gamma^2 \epsilon^4} + \frac{\Delta^2 L_1^2 \sigma^2}{256 \gamma^2 \epsilon^2 \log^2 \left( 1 + \frac{\Delta L_1^2}{L_0} \right)}.
\]
Therefore in either case, we have \[
    \gT(A_{\text{DA}}, \gF, \epsilon, \delta) \geq \frac{\Delta^2 L_0^2 \sigma^2}{256 \gamma^2 \epsilon^4} + \frac{\Delta^2 L_1^2 \sigma^2}{256 \gamma^2 \epsilon^2 \log^2 \left( 1 + \frac{\Delta L_1^2}{L_0} \right)}.
\]
\end{proof}

The corresponding proof for AdaGrad (Theorem \ref{thm:vanilla_adagrad}) is
nearly identical, so we omit it.

\section{Proof of Theorem \ref{thm:singlestep}} \label{app:singlestep_proof}

\subsection{Preliminary Definitions} \label{app:singlestep_defs}
We first provide definitions of constants and objects that will be used
throughout the proof.

For $p \in (0, 1)$ and $\lambda > 0$, consider the random walk parameterized by
$(p, \lambda)$:
\begin{align}
    X_0 &= 1 \nonumber \\
    P(X_{t+1} = X_t + \lambda) &= p \label{eq:random_walk} \\
    P(X_{t+1} = X_t - 1) &= 1-p. \nonumber
\end{align}
Then we can define
\begin{align}
    z_{p,\lambda} &= P(\exists t > 0: X_t \leq 0) \nonumber \\
    \lambda_0(p, \delta) &= \inf \left\{ \lambda \geq 0 : z_{p,\lambda} \leq 1-\delta \right\} \nonumber \\
    \zeta(p, \delta) &= \lambda_0(p, \delta) - \lambda_0(p, 0). \label{eq:zeta_def}
\end{align}
Informally, $z_{p,\lambda}$ is the probability that the random walk reaches a
non-positive value, and $\lambda_0(p, \delta)$ is the smallest $\lambda$
required to ensure that the chance of never reaching a non-positive value is at
least $\delta$.

For $\sigma_2 \geq 3$, define the following constants:
\begin{align} \label{eq:gammas_123}
    \gamma_1 &= \frac{ \log \left( 1 + 2 \zeta(2/3, \delta) \right) }{\log 2}, \quad \gamma_2 = 1 - \frac{ \log 2 }{\log \left( 2 + \frac{6}{\sigma_2-2} \right)}, \quad \gamma_3 = \frac{ \log \left( 1 + 2 \zeta(2/3, \delta) \right)}{\log \left( 2 + \frac{6}{\sigma_2-2} \right)}
\end{align}
For $\sigma_2 \in (1, 3)$, define the following constants:
\begin{align*}
    \gamma_4 &= \frac{ \log \left( 1 + 2 \zeta(\frac{1}{12} (\sigma_2+5), \delta) \right) }{\log \left( \frac{12}{-\sigma_2+7} - 1 \right)}, \quad \gamma_5 = 1 - \frac{ \log \left( \frac{12}{-\sigma_2+7} - 1 \right) }{\log \left( \frac{18}{\sigma_2-1} - 1 \right)} \\
    \gamma_6 &= \frac{ \log \left( 1 + 2 \zeta(\frac{1}{12}(\sigma_2+5), \delta) \right) }{\log \left( \frac{18}{\sigma_2-1} - 1 \right)}.
\end{align*}

Also, we will denote:
\begin{equation} \label{eq:G_def}
    G = \frac{\Delta L_1}{1 + 4 \log \left( 1 + \frac{\Delta L_1^2}{L_0} \right)},
\end{equation}
which will be used in the following definition.

\begin{definition} \label{def:tricky_pair}
For $p \in \left( \frac{1}{2}, \frac{\sigma_2}{\sigma_2+1} \right)$ and $\delta \in (0,
1)$, we say that $\vg_1, \vg_2 \in \mathbb{R}^d$ forms a $(p, \delta)$-tricky pair with
respect to the stepsize function $\alpha$ if all of the following conditions hold:
\begin{align}
    \vg_1 &= c_1 \vg, \quad \textup{ and } \quad \vg_2 = c_2 \vg \textup{ for some } \vg \in \mathbb{R}^d \textup { with } \|\vg\| = 1 \\
    \text{sign}(c_1) &\neq \text{sign}(c_2) \label{eq:tricky_opposite} \\
    |c_1| &\geq \epsilon \quad \text{and} \quad |c_2| \geq \epsilon \label{eq:tricky_g_min} \\
    |c_1| &\leq \frac{1-p}{p} \sigma_1 + \left( \frac{1-p}{p} \sigma_2 - 1 \right) G \label{eq:tricky_g1_max} \\
    |c_2| &\geq \begin{cases}
        \frac{p |c_1| + \epsilon}{1-p} & |c_1| \leq \frac{1-p}{p} \sigma_1 + \left( \frac{1-p}{p} \sigma_2 - 1 \right) \epsilon \\
        \frac{(\sigma_2+1) p|c_1| - \sigma_1}{(\sigma_2+1)(1-p)-1} & |c_1| > \frac{1-p}{p} \sigma_1 + \left( \frac{1-p}{p} \sigma_2 - 1 \right) \epsilon
    \end{cases} \label{eq:tricky_g2_min} \\
    |c_2| &\leq \frac{p |c_1| + G}{1-p} \label{eq:tricky_g2_max} \\
    \frac{\alpha(\vg_1) \|\vg_1\|}{\alpha(\vg_2) \|\vg_2\|} &\geq \lambda_0(p, \delta). \label{eq:tricky_lr}
\end{align}
\end{definition}

Notice that the lower bound of $|c_2|$ for the second case of \Eqref{eq:tricky_g2_min}
is positive, since $p < \frac{\sigma_2}{\sigma_2+1}$. The significance of a tricky pair,
as shown in Lemma \ref{lem:app_singlestep_tricky_diverge}, is that it can be used to
construct an instance $(f, g, \gD) \in \gF_{\text{aff}}(\Delta, L_0, L_1, \sigma_1,
\sigma_2)$ for which $A$ diverges with probability at least $\delta$.

Finally, let $\hat{P}_{\vy}(\vx) = \frac{\langle \vx, \vy \rangle}{\|\vy\|}$, so that
$\hat{P}_{\vy}(\vx)$ denotes the component of $\vx$ in the direction of
$\frac{\vy}{\|\vy\|}$. Note the difference from the common notation $P_{\vy}(\vx) =
\frac{\langle \vx, \vy \rangle}{\|\vy\|^2} \vx$.

\subsection{Proofs}
We now provide proofs of the lemmas mentioned in Section \ref{sec:singlestep_sketch}.

\begin{lemma} \label{lem:app_singlestep_large_lr}
Suppose that there exists some $\vg \in \mathbb{R}^d$ with $\|\vg\| \in [\epsilon,
\sigma_1 + (\sigma_2 + 1) \Delta L_1]$ and \[
    \alpha(\vg) \leq 0, \quad \text{or} \quad \alpha(\vg) \geq \frac{4}{L_1 \|\vg\|} \log \left( 1 + \frac{L_1 \min(\|\vg\|, \Delta L_1)}{L_0} \right).
\]
Then there exists $(f_{\textup{exp}}, g_{\textup{exp}}, \gD_{\textup{exp}}) \in
\gF_{\textup{aff}}(\Delta, L_0, L_1, \sigma_1, \sigma_2)$ such that $\|\nabla
f_{\textup{exp}}(\vx_t)\| \geq \epsilon$ for all $t \geq 0$.
\end{lemma}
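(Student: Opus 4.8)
The plan is to adapt the divergence construction of Lemma~\ref{lem:app_adagrad_norm_div} (Figure~\ref{fig:adagrad_norm_div}) to single-step adaptive SGD, exploiting the fact that here the per-step displacement $-\alpha(\vg_t)\vg_t$ is a \emph{fixed} vector whenever the observed gradient $\vg_t$ equals the vector $\vg$ from the hypothesis; write $\widehat{\vg}=\vg/\|\vg\|$ and $\bar g=\|\vg\|$. I split on the hypothesis. If $\alpha(\vg)\le 0$, the construction is easy: take $f_{\exp}(\vx)=\Phi(\langle\vx,\widehat{\vg}\rangle)$ where the one-dimensional profile $\Phi$ is a shifted copy of $\psi$ near its minimum and linear of slope $\pm\bar g$ outside, initialized at the point where $\Phi'=\bar g$ (so $\nabla f_{\exp}(\vx_0)=\vg$); if $\alpha(\vg)=0$ the iterate never moves, and if $\alpha(\vg)<0$ it marches along $+\widehat{\vg}$ on the linear part, so in both cases $\|\nabla f_{\exp}(\vx_t)\|=\bar g\ge\epsilon$ for all $t$, using a noise-free oracle $g_{\exp}=\nabla f_{\exp}$.

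For the main alternative $\alpha(\vg)\ge\frac{4}{L_1 \bar g}\log\!\big(1+L_1\min(\bar g,\Delta L_1)/L_0\big)$, the effective displacement length $L:=\alpha(\vg)\bar g$ satisfies $L\ge\frac{4}{L_1}\log(1+L_1\min(\bar g,\Delta L_1)/L_0)$. I would take $f_{\exp}(\vx)=\Phi(\langle\vx,\widehat{\vg}\rangle)$ with $\Phi:\sR\to\sR$ an $L$-periodic profile built by tiling a single ``valley'' piece assembled from copies of $\psi$ and flat plateaus --- the same idea as the $\phi_t$ of Lemma~\ref{lem:app_adagrad_norm_div}, but with a common jump length $L$ and common boundary slope $s$ (so no growing $g_t,\ell_t$), and with the linear fill segments of that lemma replaced by \emph{flat} plateaus at the ridge top and valley bottom, which keeps the value swing of $\Phi$ down to $\Theta\!\big(\psi((\psi')^{-1}(s))\big)$. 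Arranging $\Phi'(kL)=s$ for all integers $k$, an induction from $\vx_0$ (the point with $\Phi'=s$) then gives $\vx_t=\vx_0-tL\widehat{\vg}$ with observed gradient $\vg$ at every step, so the iterate perpetually jumps over valleys and $\|\nabla f_{\exp}(\vx_t)\|\ge s\ge\epsilon$. The only inequality to check is the analogue of~\Eqref{eq:adagrad_norm_div_cond}: that $L$ exceeds four times the half-width $\frac{1}{L_1}\log(1+L_1 s/L_0)$ of the valley, which is exactly the hypothesis once $s=\min(\bar g,\Delta L_1)$.

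The $\min(\bar g,\Delta L_1)$ encodes the budget constraint: a maximally $(L_0,L_1)$-growing valley with boundary slope $s$ has value swing $\approx s/L_1$, so $f_{\exp}(\vzero)-\inf f_{\exp}\le\Delta$ forces the \emph{true} gradient scale $s$ to be capped at $\Theta(\Delta L_1)$. When $\bar g\le\Theta(\Delta L_1)$ one takes $s=\bar g$ and the noise-free oracle makes the observed gradient genuinely $\vg$; when $\bar g>\Theta(\Delta L_1)$ one caps $s=\Theta(\Delta L_1)$ and lets the affine-noise allowance $\sigma_1+\sigma_2\|\nabla f_{\exp}\|$ absorb the gap between $\nabla f_{\exp}(\vx_t)$ and the observed $\vg$ --- which is exactly why the admissible range of $\|\vg\|$ reaches $\sigma_1+(\sigma_2+1)\Delta L_1$. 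I expect this to be the main obstacle: the stochastic oracle must be unbiased and affine-noise-bounded, yet the algorithm must keep observing $\vg$ (hence keep jumping by $-L\widehat{\vg}$) often enough that $\mathbb{E}[\|\nabla f_{\exp}(\vx_s)\|]\ge\epsilon$ throughout the claimed horizon; I would handle this by routing the compensating mass of the oracle into fresh ``trap'' coordinates in the style of Lemma~\ref{lem:app_adagrad_div} and by choosing $\Phi$ so that $\|\nabla f_{\exp}\|\ge\epsilon$ on the entire reachable region, making off-trajectory realizations harmless. The remaining work --- continuous differentiability and $(L_0,L_1)$-smoothness at the glue points of $\Phi$, and the explicit-constant verification of $f_{\exp}(\vzero)-\inf f_{\exp}\le\Delta$ --- is routine, following Lemma~\ref{lem:app_adagrad_norm_div}.
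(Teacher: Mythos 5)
Your overall architecture matches the paper's: split on the sign/size of $\alpha(\vg)$, cap the true gradient at $\tilde{\vg} = \min(\|\vg\|, \Delta L_1)\,\vg/\|\vg\|$ so the $\Delta$ budget is respected, and for the large-step case tile a one-dimensional profile with period $\alpha(\vg)\|\vg\|$ so that the iterate repeatedly jumps over a maximally steep valley and lands at congruent points with derivative $\|\tilde{\vg}\|$; the $4\times$ half-width condition in the hypothesis is exactly what makes the tile fit. Your flat-plateau, exactly periodic profile is a harmless cosmetic variant of the paper's tile (which uses a linear ramp and a per-tile drift); both are $C^1$, $(L_0,L_1)$-smooth, and give $f(\vx_0)-\inf f \leq \psi(m) \leq \|\tilde{\vg}\|/L_1 \leq \Delta$.

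The genuine gap is the stochastic oracle in the regime $\|\vg\| > \Delta L_1$, which you correctly flag as the main obstacle but then resolve incorrectly. Your plan to ``route the compensating mass into fresh trap coordinates'' in the style of Lemma \ref{lem:app_adagrad_div} does not work for single-step adaptive SGD: the step size is an \emph{arbitrary} function $\alpha$ of the observed gradient, and the hypothesis only controls $\alpha$ at the single point $\vg$ (and, via the no-large-step assumption elsewhere in the proof, on a norm range). Any oracle outcome of the form $\vg + w$ with $w$ in a trap coordinate forces the algorithm to evaluate $\alpha(\vg+w)$, over which you have no control whatsoever, so the iterate can leave the designed trajectory and your ``off-trajectory realizations are harmless'' safeguard cannot be enforced (your periodic profile even has plateaus where $\nabla f = 0$). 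The paper's fix is a two-point oracle supported on $\{(\|\vg\|/\|\tilde{\vg}\|)\nabla f(\vx),\ \mathbf{0}\}$ with $P(\xi=0)=\|\tilde{\vg}\|/\|\vg\|$: the first branch makes the observed gradient exactly $\vg$ at the landing points, and the second branch is harmless for \emph{any} $\alpha$ because the update $\alpha(\mathbf{0})\cdot\mathbf{0}$ is identically zero; unbiasedness is immediate, and the noise bound $\|\mathbf{0}-\nabla f(\vx)\| = \|\nabla f(\vx)\| \leq \sigma_1 + \sigma_2\|\nabla f(\vx)\|$ uses $\sigma_2>1$, while the scaled branch uses $\|\vg\| \leq \sigma_1 + (\sigma_2+1)\Delta L_1$. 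This makes the conclusion hold almost surely, not merely in expectation. A smaller instance of the same oversight: in your $\alpha(\vg)\leq 0$ case you propose a noise-free oracle with slope $\bar{g}=\|\vg\|$, which violates the $\Delta$ budget whenever $\|\vg\|>\Delta L_1$; the capping and the same two-point oracle are needed there too.
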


\begin{proof}
We will construct $f: \mathbb{R}^d \rightarrow \mathbb{R}$ piecewise with linear and
exponential pieces so that $\|\nabla f(x_t)\| = \min(\|\vg\|, \Delta L_1) \geq \epsilon$
for all $t \geq 0$.

First, define $\tilde{\vg} := \min(\|\vg\|, \Delta L_1) \frac{\vg}{\|\vg\|}$, $m :=
\frac{1}{L_1} \log \left( 1 + \frac{L_1 \|\tilde{\vg}\|}{L_0} \right)$. Recall the
function $\psi: \mathbb{R} \rightarrow \mathbb{R}$ defined as \[
    \psi(x) = \frac{L_0}{L_1^2} \left( \exp(L_1 |x|) - L_1 |x| - 1 \right).
\]
It is straightforward to verify that $\psi$ bounded from below by $0$,
continuously differentiable, $(L_0, L_1)$-smooth, and satisfies
\begin{align*}
    \psi(-m) &= \psi(m) = \frac{\|\tilde{\vg}\|}{L_1} - \frac{L_0}{L_1^2} \log \left( 1 + \frac{L_1 \|\tilde{\vg}\|}{L_0} \right) \leq \frac{\|\tilde{\vg}\|}{L_1} \leq \frac{\Delta L_1}{L_1} \leq \Delta \\
    |x| \leq m &\implies |\psi'(x)| \leq \|\tilde{\vg}\| \leq \Delta L_1 \\
    \psi'(-m) &= -\|\tilde{\vg}\| \\
    \psi'(m) &= \|\tilde{\vg}\|.
\end{align*}

The condition in the lemma statement gives two cases: $\alpha(\vg) \leq 0$ or
$\alpha(\vg) \geq \frac{4 m}{\|\vg\|}$. We handle the two cases separately below.

\paragraph{Case 1:} $\alpha(\vg) \leq 0$. This case is easy: the algorithm $A$ is
essentially employing a negative learning rate! For a piecewise linear function that has
a piece with gradient equal to $\vg$, the trajectory $\{x_t\}$ moves away from the
minimum indefinitely. To handle the case that $\|\vg\| > \Delta L_1$, we instead use a
gradient of $\tilde{\vg} = \min(\|\vg\|, \Delta L_1) \frac{\vg}{\|\vg\|} = \Delta L_1
\frac{\vg}{\|\vg\|}$, and construct a stochastic gradient that always returns either
$\vg$ or $\mathbf{0}$, so that each updated iterate $x_{t+1}$ either moves further from
the minimum than $\vx_t$, or doesn't move at all.

Define the objective \[
    f(\vx) = \begin{cases}
        -\|\tilde{\vg}\| (\hat{P}_{\vg}(\vx) + m) + \psi(m) & \hat{P}_{\vg}(\vx) < m \\
        \psi(\hat{P}_{\vg}(\vx)) & \hat{P}_{\vg}(\vx) \in [-m, m] \\
        \|\tilde{\vg}\| (\hat{P}_{\vg}(\vx) - m) + \psi(m) & \hat{P}_{\vg}(\vx) > m
    \end{cases}.
\]
Notice that $f$ is bounded from below by $\psi(0) = 0$. Since $\psi'(m) = \|\tilde{\vg}\|$ and
$\psi'(-m) = -\|\tilde{\vg}\|$, then $f$ is continuously differentiable. Since $\psi$ is
$(L_0, L_1)$-smooth, so is $f$. Consider the initial point $\vx_0 = m \vg$. From the
properties of $\psi$ from above, $f$ satisfies \[
    f(\vx_0) - f_* = \psi(m) \leq \Delta.
\]
and for all $\vx$ with $f(\vx) \leq f(\vx_0)$, it must be that $\hat{P}_{\vg}(\vx) \in
[-m, m]$, so
\begin{equation} \label{eq:negative_lr_small_grad}
    \|\nabla f(\vx)| = |\psi'(\hat{P}_{\vg}(\vx))| \leq \Delta L_1.
\end{equation}
Define a stochastic gradient $F$ for $f$ as follows: \[
    F(x, \xi) = \begin{cases}
        \left( \|\vg\|/\|\tilde{\vg}\| \right) \nabla f(\vx) & \xi = 0 \\
        0 & \xi = 1,
    \end{cases}
\]
where $\xi \in \{0, 1\}$ has distribution $\gD$, defined as $P(\xi = 0) =
\|\tilde{\vg}\|/\|\vg\|$. Then $\mathbb{E}_{\xi}[F(\vx, \xi)] = \nabla f(\vx)$. To see
that this stochastic gradient satisfies the noise condition: If $\|\vg\| \leq \Delta
L_1$, then $\tilde{\vg} = \vg$ and $P(\xi = 0) = 1$, so $F(\vx; \xi) = \nabla f(\vx)$
almost surely. Otherwise,
\begin{align*}
    \|F(\vx; 0) - \nabla f(\vx)| &= \left( \frac{\|\vg\|}{\|\tilde{\vg}\|} - 1 \right) \|\nabla f(\vx)\| \\
    &= \left( \frac{\|\vg\|}{\Delta L_1} - 1 \right) \|\nabla f(\vx)\| \\
    &\Eqmark{i}{\leq} \left( \frac{\sigma_1 + (\sigma_2 + 1) \Delta L_1}{\Delta L_1} - 1 \right) \|\nabla f(\vx)\| \\
    &= \frac{\sigma_1 + \sigma_2 \Delta L_1}{\Delta L_1} \|\nabla f(\vx)\| \\
    &= \frac{\|\nabla f(\vx)\|}{\Delta L_1} \sigma_1 + \sigma_2 \|\nabla f(\vx)\| \\
    &\Eqmark{ii}{\leq} \sigma_1 + \sigma_2 \|\nabla f(\vx)\|,
\end{align*}
where $(i)$ uses the assumption $\|\vg\| \leq \sigma_1 + (\sigma_2 + 1) \Delta L_1$, and
$(ii)$ uses \Eqref{eq:negative_lr_small_grad}. Also \[
    \|F(\vx; 1) - \nabla f(\vx)\| = \|\nabla f(\vx)\| \leq \sigma_1 + \sigma_2 \|f(\vx)\|,
\]
which uses the assumption $\sigma_2 > 1$. So the noise condition is satisfied, and
therefore $(f, F, \gD) \in \mathcal{F}_{\text{aff}}(\Delta, L_0, L_1, \sigma_1,
\sigma_2)$.

Now consider the trajectory of $A$ from the initial point $\vx_0 = m \vg$. We claim that
$\vx_t = c_t \vg$ for some $c_t \geq m$ for all $t \geq 0$. Clearly this holds for
$t=0$. Suppose it holds for some $t \geq 0$. Then $\|\nabla f(\vx_t)\| = \tilde{\vg}$.
The stochastic gradient has two cases: if $\xi = 0$, then \[
    F(\vx_t, \xi) = (\|\vg\|/\|\tilde{\vg}\|) \nabla f(\vx) = (\|\vg\|/\|\tilde{\vg}\|) \tilde{\vg} = \vg,
\]
so \[
    \vx_{t+1} = \vx_t - \alpha(\vg) \vg = (c_t - \alpha(\vg)) \vg = c_{t+1} \vg,
\]
and $c_{t+1} \geq c_t \geq m$ since $\alpha(\vg) \leq 0$. If $\xi = 1$, then $F(\vx_t,
\xi) = 0$, so $\vx_{t+1} = \vx_t = c_t \vg$. Either way, $\vx_{t+1} = c_{t+1} \vg$ holds
for some $c_{t+1} \geq m$, which completes the induction. Therefore $\|\nabla f(\vx_t)\|
= \|\tilde{\vg}\| \geq \epsilon$ for all $t \geq 0$.

\paragraph{Case 2:} $\alpha(\vg) \geq \frac{4 m}{\|\vg\|}$. In this case, the learning
rate $\alpha(\vg)$ is large enough to ensure that $f(\vx_{t+1}) \geq f(\vx_t)$ for an
exponentially increasing $f$. By creating $f$ that only depends on $\langle \vx, \vg
\rangle$ and which is piecewise linear and exponential in $\langle \vx, \vg \rangle$,
this increase of the objective function continues indefinitely.

Define $m' = \alpha(\vg) \|\vg\|$ and $\phi: [0, m']$ as \[
    \phi(x) = \begin{cases}
        \psi(x - m) & x \in [0, 2 m) \\
        \|\tilde{\vg}\| (x - 2 m) + \psi(m) & x \in (2 m, m' - 2 m) \\
        -\psi(x - (m' - m)) + \|\tilde{\vg}\| (m' - 4 m) + 2 \psi(m) & x \in (m' - 2 m, m']
    \end{cases}
\]
Note that the above definition makes sense since we assumed that $m' = \alpha(\vg)
\|\vg\| \geq 4 m$, so $m' - 2m \geq 2m$. Again, $\phi$ is continuously differentiable,
bounded from below, $(L_0, L_1)$-smooth, and satisfies
\begin{align*}
    |\phi'(x)| &\leq \|\tilde{\vg}\| \leq \Delta L_1 \text{ for all } x \in [0, m'] \\
    \phi(x) &\geq 0 \\
    \phi'(0) &= -\|\tilde{\vg}\| \\
    \phi'(m') &= -\|\tilde{\vg}\|.
\end{align*}
Now, we can define the objective $f$ as follows:
\begin{equation*}
    f(\vx) = \begin{cases}
        -\|\tilde{\vg}\| \hat{P}_{\vg}(\vx) + \phi(0) & \hat{P}_{\vg}(\vx) \leq 0 \\
        \phi(\hat{P}_{\vg}(\vx) - m' \floor{\hat{P}_{\vg}(\vx)/m'}) + \tilde{g} (m' - 4 m) \floor{\hat{P}_{\vw}(\vx)/m'} & \hat{P}_{\vg}(\vx) > 0
    \end{cases}
\end{equation*}
$f$ is continuous inside each "piece" (i.e. each region with $\hat{P}_{\vg}(\vx) \in (km',
(k+1)m')$ for $k \in \mathbb{Z}_{\geq 0}$). Also, using $\phi(0) = 0$, $f$ is continuous
at the boundary of each piece.  Similarly, $f$ is continuously differentiable inside
each piece, and using the fact that $\phi'(0) = -\|\tilde{\vg}\| = \phi'(m')$, is
continuously differentiable at the boundary of each piece. Also, $f$ is bounded below by
$\min_{x \in [0, m']} \phi(x) = \min_x \psi(x) = 0$. So with the initial point $\vx_0 =
\mathbf{0}$, $f$ satisfies \[
    f(\vx_0) - f_* = \phi(0) - 0 = \psi(-m) \leq \Delta.
\]
Since $\phi$ is $(L_0, L_1)$-smooth, so is $f$. Also, $\|\nabla f(\vx)\| \leq |\psi'(m)|
= \|\tilde{\vg}\| \leq \Delta L_1$ for every $x$.

Now we can define a stochastic gradient $F$ for $f$ as follows: \[
    F(\vx; \xi) = \begin{cases}
        \left( \|\vg\| / \|\tilde{\vg}\| \right) \nabla f(\vx) & \xi = 0 \\
        0 & \xi = 1
    \end{cases},
\]
where $\xi \in \{0, 1\}$ has distribution $\gD$, defined as $P(\xi = 0) =
\|\tilde{\vg}\|/\|\vg\|$. This is the same stochastic gradient that we used in Case 1,
and an identical argument shows that the noise conditions are satisfied. Therefore $(f,
F, \gD) \in \mathcal{F}_{\text{aff}}(\Delta, L_0, L_1, \sigma_1, \sigma_2)$.

Consider the execution of $A$ on $(f, F, \gD)$ from the initial point $x_0 =
\mathbf{0}$. We claim that $\vx_t$ is an integer multiple of $m' \frac{\vg}{\|\vg\|}$
for all $t \geq 0$, which we will show by induction. The base case $t=0$ holds by
construction. If $\vx_t = -k m' \frac{\vg}{\|\vg\|}$ for some $t \geq 0$, then there are
two outcomes of the stochastic gradient. If $\xi_t = 1$, then $\vx_{t+1} = \vx_t = -k m'
\frac{\vg}{\|\vg\|}$. Otherwise $\xi_t = 0$, so
\begin{align*}
    \vx_{t+1} &= \vx_t - \alpha(F(\vx_t, 0)) F(\vx_t, 0) \\
    &= -k m' \frac{\vg}{\|\vg\|} - \alpha \left( \frac{\|\vg\|}{\|\tilde{\vg}\|} \nabla f(\vx_t) \right) \frac{\|\vg\|}{\|\tilde{\vg}\|} \nabla f(\vx_t) \\
    &\Eqmark{i}{=} -k m' \frac{\vg}{\|\vg\|} - \alpha(\vg) \vg \\
    &= -k m' \frac{\vg}{\|\vg\|} - \alpha(\vg) \|\vg\| \frac{\vg}{\|\vg\|} \\
    &= -k m' \frac{\vg}{\|\vg\|} - m' \frac{\vg}{\|\vg\|} \\
    &= -(k+1) m' \frac{\vg}{\|\vg\|},
\end{align*}
where $(i)$ uses the fact that $\vx_t = -k m' \frac{\vg}{\|\vg\|} \implies \nabla f(\vx_t) =
\|\tilde{\vg}\| \frac{\vg}{\|\vg\|}$. This completes the induction. Therefore $\|\nabla
f(\vx_t)\| = \left\| \nabla f \left( -k m' \frac{\vg}{\|\vg\|} \right) \right\| =
\|\tilde{\vg}\| \geq \epsilon$ for all $t$.
\end{proof}

\begin{lemma} \label{lem:app_singlestep_tricky_diverge}
Suppose that
\begin{equation} \label{eq:singlestep_tricky_diverge_cond}
    0 < \alpha(\vg) < \frac{4}{L_1 \vg} \log \left( 1 + \frac{L_1 \min(\|\vg\|, \Delta L_1)}{L_0} \right),
\end{equation}
for all $\vg \in \mathbb{R}^d$ with $\|\vg\| \in [\epsilon, \sigma_1 + (\sigma_2 + 1)
\Delta L_1]$, and suppose that there exist $\vg_1, \vg_2 \in \mathbb{R}$ which is a $(p,
\delta)$-tricky pair with respect to $\alpha$. Then there exists $(f, g, \mathcal{D})
\in \gF_{\textup{aff}}(\Delta, L_0, L_1, \sigma_1, \sigma_2)$ such that $\|\nabla
f(\vx_t)\| \geq \epsilon$ for all $t \geq 0$ with probability at least $\delta$.
\end{lemma}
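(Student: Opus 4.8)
The strategy is to turn the trajectory of $A_{\text{single}}$ into the biased random walk \eqref{eq:random_walk} by building a one-dimensional instance along the common direction $\vg$ of the tricky pair. After possibly replacing $\vg$ by $-\vg$ (which flips the signs of $c_1,c_2$ but leaves $\|\vg_1\|,\|\vg_2\|,\alpha(\vg_1),\alpha(\vg_2)$ untouched), I may assume $c_1>0>c_2$. Put $\mu := (1-p)|c_2|-p|c_1|$, so that $p\vg_1+(1-p)\vg_2=-\mu\vg$. The first step is to extract from Definition~\ref{def:tricky_pair} the numerology the construction needs: $\mu\geq\epsilon$ (from \eqref{eq:tricky_g2_min}, via a two-case split that uses $p<\sigma_2/(\sigma_2+1)$), $\mu\leq G$ (from \eqref{eq:tricky_g2_max}), and $\|\vg_2\|=|c_2|\in[\epsilon,\sigma_1+(\sigma_2+1)\Delta L_1]$ (from \eqref{eq:tricky_g_min}, \eqref{eq:tricky_g1_max}, \eqref{eq:tricky_g2_max}), so that hypothesis \eqref{eq:singlestep_tricky_diverge_cond} applies to $\vg_2$ and yields $u:=\alpha(\vg_2)\|\vg_2\|<\frac{4}{L_1}\log(1+\frac{\Delta L_1^2}{L_0})$.

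Next I would construct $f:\mathbb{R}^d\to\mathbb{R}$ depending on $\vx$ only through $s:=\langle\vx,\vg\rangle$. With $m:=\frac{1}{L_1}\log(1+\frac{L_1\mu}{L_0})$ (so $\psi'(m)=\mu$), define $f$ to be linear of slope $-\mu$ for $s\leq 0$, equal to a translated copy of $\psi$ carrying the slope back up to $0$ over $s\in[0,m]$, and constant for $s\geq m$. Gluing copies of the globally $(L_0,L_1)$-smooth function $\psi$ keeps $f$ continuously differentiable and $(L_0,L_1)$-smooth; $f$ is bounded below (constant for large $s$, increasing to $+\infty$ as $s\to-\infty$); and with $\vx_0:=-u\,\vg$ one gets $f(\vx_0)-\inf f=\mu u+\psi(m)\leq\frac{\mu}{L_1}\bigl(1+4\log(1+\frac{\Delta L_1^2}{L_0})\bigr)\leq\frac{G}{L_1}\bigl(1+4\log(1+\frac{\Delta L_1^2}{L_0})\bigr)=\Delta$, using $\psi(m)\leq\mu/L_1$, the bound on $u$, $\mu\leq G$, and the definition \eqref{eq:G_def} of $G$. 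For the oracle I set $g(\vx,\xi)=\nabla f(\vx)$ whenever $s\geq 0$, and for $s<0$ (where $\nabla f(\vx)=-\mu\vg$) I let $g$ return $\vg_1$ with probability $p$ and $\vg_2$ with probability $1-p$; this is unbiased by the choice of $\mu$, and the (Affine-Noise) bounds $\|\vg_1+\mu\vg\|\leq\sigma_1+\sigma_2\mu$ and $\|\vg_2+\mu\vg\|\leq\sigma_1+\sigma_2\mu$ follow from \eqref{eq:tricky_g1_max}-\eqref{eq:tricky_g2_min} by the same case split. Hence $(f,g,\gD)\in\gF_{\text{aff}}(\Delta,L_0,L_1,\sigma_1,\sigma_2)$.

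Finally I would track the run of $A_{\text{single}}$ from $\vx_0$. Every update is parallel to $\vg$, so $\vx_t=s_t\vg$; and as long as $s_t<0$ the step is $-\alpha(\vg_1)\vg_1$ (decreasing $s$ by $\alpha(\vg_1)\|\vg_1\|$) with probability $p$ and $-\alpha(\vg_2)\vg_2$ (increasing $s$ by $u$) with probability $1-p$. Setting $X_t:=-s_t/u$, we get $X_0=1$ and, on $\{X_t>0\}$, $X_{t+1}=X_t+\lambda$ with probability $p$ and $X_{t+1}=X_t-1$ with probability $1-p$, where $\lambda:=\alpha(\vg_1)\|\vg_1\|/(\alpha(\vg_2)\|\vg_2\|)\geq\lambda_0(p,\delta)$ by \eqref{eq:tricky_lr}. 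Coupling $\{X_t\}$ with \eqref{eq:random_walk} up to the first time it hits a non-positive value, $\Prob(\forall t:X_t>0)=1-z_{p,\lambda}\geq 1-z_{p,\lambda_0(p,\delta)}\geq\delta$ by monotonicity of $\lambda\mapsto z_{p,\lambda}$ and the definition of $\lambda_0(p,\delta)$ (Lemma~\ref{lem:random_walk_div}). On this event $s_t<0$ for all $t$, hence $\|\nabla f(\vx_t)\|=\mu\geq\epsilon$ for all $t$, which is the claim.

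The main obstacle is the calibration: verifying that the tricky-pair conditions in Definition~\ref{def:tricky_pair} translate precisely into the four facts the construction consumes - $\epsilon\leq\mu\leq G$, the two (Affine-Noise) bounds for the two-point oracle, and $\|\vg_2\|$ lying in the range where \eqref{eq:singlestep_tricky_diverge_cond} is assumed - together with confirming that the piecewise $f$ is simultaneously $(L_0,L_1)$-smooth, lower bounded with $f(\vx_0)-\inf f\leq\Delta$, and reproduces gradient $-\mu\vg$ along the entire random trajectory. Once $\lambda\geq\lambda_0(p,\delta)$ is established, the random-walk coupling is routine.
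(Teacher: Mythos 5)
Your proposal is correct and follows essentially the same route as the paper's proof: reduce to a one-dimensional instance along the direction $\vg$ whose linear piece has true gradient $p\vg_1+(1-p)\vg_2$ and a two-point oracle $\{\vg_1,\vg_2\}$, verify membership in $\gF_{\textup{aff}}$ via the case split in \eqref{eq:tricky_g2_min} and the bound $\alpha(\vg_2)\|\vg_2\|\leq\tfrac{4}{L_1}\log(1+\Delta L_1^2/L_0)$, and couple the normalized trajectory to the random walk \eqref{eq:random_walk} with $\lambda\geq\lambda_0(p,\delta)$. The only differences are cosmetic (opposite sign convention for $c_1,c_2$, and a one-sided ramp flattening to a plateau instead of the paper's symmetric smoothed-V), neither of which changes the argument.
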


\begin{proof}
We will construct $(f, g, \gD)$ such that $f$ is a piecewise linear function, where one
piece has stochastic gradient equal to $\vg_1$ with probability $p$ and $\vg_2$ with
probability $1-p$. Using the properties of a $(p, \delta)$-tricky pair, this instance is
a member of $\gF_{\text{aff}}(\Delta, L_0, L_1, \sigma_1, \sigma_2)$, and $A$ will
diverge with probability at least $\delta$ when optimizing this instance.

From the tricky pair definition, $\vg_1 = c_1 \vg$ and $\vg_2 = c_2 \vg$ for a unit
vector $\vg$. Without loss of generality, assume that $c_1 < 0$ and $c_2 > 0$. The
following argument applies in the excluded case $c_1 > 0, c_2 < 0$ by replacing the
objective $f(x)$ below with $f(-x)$. Denote $\ell = p c_1 + (1-p) c_2$ and $a =
\frac{1}{L_1} \log \left( 1 + \frac{L_1 \ell}{L_0} \right)$, and define $f: \mathbb{R}^d
\rightarrow \mathbb{R}$ as \[
    f(x) = \begin{cases}
        -\ell \left( \hat{P}_{\vg}(\vx) + a \right) + \psi(a) & \hat{P}_{\vg}(\vx) \leq -a \\
        \psi(\hat{P}_{\vg}(\vx)) & \hat{P}_{\vg}(\vx) \in (-a, a) \\
        \ell \left( \hat{P}_{\vg}(\vx) - a \right) + \psi(a) & \hat{P}_{\vg}(\vx) \geq a
    \end{cases},
\]
where $\psi$ is as defined in Lemma \ref{lem:adagrad_norm_div}. Notice that $f$ is
continuously differentiable, bounded from below by $f_* = 0$, and $(L_0, L_1)$-smooth.

Next, set $\Xi = \{0, 1\}$ and define $F: \mathbb{R}^d \times \Xi \rightarrow \mathbb{R}^d$
as \[
    F(\vx, \xi) = \begin{cases}
        -\vg_1 & \hat{P}_{\vg}(\vx) \leq -a \text{ and } \xi = 0 \\
        -\vg_2 & \hat{P}_{\vg}(\vx) \leq -a \text{ and } \xi = 1 \\
        \psi'(x) & \hat{P}_{\vg}(\vx) \in \left( -a, a \right) \\
        \vg_1 & \hat{P}_{\vg}(\vx) \geq a \text{ and } \xi = 0 \\
        \vg_2 & \hat{P}_{\vg}(\vx) \geq a \text{ and } \xi = 1
    \end{cases}
\]
and define the distribution $\gD$ over $\Xi$ as \[
    \xi = \begin{cases}
        0 & \text{ with probability } p \\
        1 & \text{ with probability } 1-p \\
    \end{cases}
\]
for $\xi \sim \gD$. Notice that $F(x, \xi) = \nabla f(\vx)$ for $\vx$ with
$\hat{P}_{\vg}(\vx) \in \left( -a, a \right)$. Also, $\mathbb{E}_{\xi \sim \gD}[F(\vx,
\xi)] = p \vg_1 + (1-p) \vg_2 = \ell \vg = \nabla f(\vx)$ for $\vx$ with
$\hat{P}_{\vg}(\vx) \geq a$, and similarly for $\hat{P}_{\vg}(\vx) \leq -a$. Using the fact
that $\vg_1, \vg_2$ is a $p$-tricky pair, we have for all $x$:
\begin{equation} \label{eq:tricky_diverge_small_ell}
    \|\nabla f(\vx)\| \leq \ell = p c_1 + (1-p) c_2 = -p |c_1| + (1-p) |c_2| \Eqmark{i}{\leq} -p |c_1| + (1-p) \left( \frac{p |c_1| + G}{1-p} \right) = G.
\end{equation}
where $(i)$ uses \Eqref{eq:tricky_g2_max}. We can also use the tricky pair properties to
show that $(f, g, \gD)$ satisfies $\ell \geq \epsilon$ and the noise condition,
depending on the two cases in \Eqref{eq:tricky_g2_min}. In the first case,
\begin{align}
    |c_1| &\leq \frac{1-p}{p} \sigma_1 + \left( \frac{1-p}{p} \sigma_2 - 1 \right) \epsilon \label{eq:tricky_small_g1_max} \\
    |c_2| &\geq \frac{p |c_1| + \epsilon}{1-p}, \label{eq:tricky_small_g2_min}
\end{align}
so \[
    \ell = (1-p) |c_2| + p (-|c_1|) \geq \epsilon,
\] and
\begin{align*}
    \|\vg_2 - \ell \vg\| &= |c_2 - \ell| \|\vg\| \\
    &= c_2 - \ell \\
    &= \frac{p}{1-p} (\ell - c_1) \\
    &\Eqmark{i}{\leq} \frac{p}{1-p} \ell + \frac{p}{1-p} \left( \frac{1-p}{p} \sigma_1 + \left( \frac{1-p}{p} \sigma_2 - 1 \right) \epsilon \right) \\
    &= \frac{p}{1-p} \ell + \sigma_1 + \left( \sigma_2 - \frac{p}{1-p} \right) \epsilon \\
    &\Eqmark{ii}{\leq} \frac{p}{1-p} \ell + \sigma_1 + \left( \sigma_2 - \frac{p}{1-p} \right) \ell \\
    &= \sigma_1 + \sigma_2 \ell,
\end{align*}
where $(i)$ uses \Eqref{eq:tricky_small_g1_max} and $(ii)$ uses $\ell \geq \epsilon$. Also, \[
    \|\vg_1 - \ell \vg\| = |c_1 - \ell| \|\vg\| = \ell - c_1 = \frac{1-p}{p} (c_2 - \ell) \leq \frac{1-p}{p} (\sigma_1 + \sigma_2 \ell) \leq \sigma_1 + \sigma_2 \ell,
\]
where the last inequality uses $p > \frac{1}{2}$. Therefore $(f, g, \gD)$ satisfies
$\ell \geq \epsilon$ and the noise condition in the first case. In the second case,
\begin{align}
    |c_1| &> \frac{1-p}{p} \sigma_1 + \left( \frac{1-p}{p} \sigma_2 - 1 \right) \epsilon \label{eq:tricky_large_g1_min} \\
    |c_2| &\geq \frac{(\sigma_2+1) p|c_1| - \sigma_1}{(\sigma_2+1)(1-p)-1}, \label{eq:tricky_large_g2_min}
\end{align}
so
\begin{align*}
    c_2 &\geq \frac{(\sigma_2+1) p(-c_1) - \sigma_1}{(\sigma_2+1)(1-p)-1} \\
    ((\sigma_2+1)(1-p)-1) c_2 &\geq (\sigma_2+1) p(-c_1) - \sigma_1 \\
    c_2 &\leq \sigma_1 + (\sigma_2+1) p c_1 + (\sigma_2+1) (1-p) c_2 \\
    c_2 &\leq \sigma_1 + (\sigma_2+1) \ell \\
    c_2 - \ell &\leq \sigma_1 + \sigma_2 \ell,
\end{align*}
and
\begin{align*}
    c_2 &\geq \frac{(\sigma_2+1) p|c_1| - \sigma_1}{(\sigma_2+1)(1-p)-1} \\
    &= \frac{p}{1-p} |c_1| + \left( \frac{\sigma_2+1}{(\sigma_2+1)(1-p)-1} - \frac{1}{1-p} \right) p|c_1| - \frac{\sigma_1}{(\sigma_2+1)(1-p)-1} \\
    &= \frac{p}{1-p} |c_1| + \frac{1}{((\sigma_2+1)(1-p)-1)(1-p)} p|c_1| - \frac{\sigma_1}{(\sigma_2+1)(1-p)-1} \\
    &\Eqmark{i}{\geq} \frac{p}{1-p} |c_1| + \frac{(1-p) \sigma_1 + \left( (1-p) \sigma_2 - p \right) \epsilon}{((\sigma_2+1)(1-p)-1)(1-p)} - \frac{\sigma_1}{(\sigma_2+1)(1-p)-1} \\
    &\geq \frac{p}{1-p} |c_1| + \frac{\epsilon}{1-p},
\end{align*}
where $(i)$ uses \Eqref{eq:tricky_large_g1_min}. Therefore $\ell = pc_1 + (1-p) c_2 \geq
\epsilon$ as in the first case. Also as in the first case, $|c_1 - \ell| \leq |c_2 -
\ell|$. Therefore $(f, g, \gD)$ satisfies $\ell \geq \epsilon$ and the noise condition
in the second case.

Consider the initial point $x_0 = (a + \alpha(\vg_2) \|\vg_2\|) \vg$. Recall that \[
    \|\vg_2\| = |c_2| \leq |\ell| + |c_2 - \ell| \leq \sigma_1 + (\sigma_2 + 1) \ell \Eqmark{i}{\leq} \sigma_1 + (\sigma_2 + 1) G \Eqmark{ii}{\leq} \sigma_1 + (\sigma_2 + 1) \Delta L_1,
\]
where $(i)$ uses \Eqref{eq:tricky_diverge_small_ell} and $(ii)$ uses the definition of
$G$ (\Eqref{eq:G_def}). Also, by the tricky pair definition, $\|\vg_2\| = |c_2| \geq
\epsilon$. Therefore $\|\vg_2\| \in [\epsilon, \sigma_1 + (\sigma_2 + 1) \Delta L_1]$,
so we can use \Eqref{eq:singlestep_tricky_diverge_cond} to conclude that \[
    0 < \alpha(g_2) < \frac{4}{L_1 \|\vg_2\|} \log \left( 1 + \frac{\Delta L_1^2}{L_0} \right).
\]
Therefore, $f$ satisfies
\begin{align*}
    f(x_0) - f^* &= \psi(a) + \ell \alpha(\vg_2) \|\vg_2\| \\
    &= \frac{\ell}{L_1} - \frac{L_0}{L_1^2} \log \left( 1 + \frac{L_1 \ell}{L_0} \right) + \ell \alpha(\vg_2) \|\vg_2\| \\
    &\leq \frac{\ell}{L_1} + \ell \alpha(\vg_2) \|\vg_2\| \\
    &\leq \frac{\ell}{L_1} + \frac{4 \ell}{L_1} \log \left( 1 + \frac{\Delta L_1^2}{L_0} \right) \\
    &= \frac{\ell}{L_1} \left( 1 + 4 \log \left( 1 + \frac{\Delta L_1^2}{L_0} \right) \right) \\
    &\Eqmark{i}{\leq} \frac{\Delta L_1}{1 + 4 \log \left( 1 + \frac{\Delta L_1^2}{L_0} \right)} \frac{1}{L_1} \left( 1 + 4 \log \left( 1 + \frac{\Delta L_1^2}{L_0} \right) \right) \\
    &= \Delta,
\end{align*}
where $(i)$ uses \Eqref{eq:tricky_diverge_small_ell}. This shows that $(f, g, \gD) \in
\gF_{\text{aff}}(\Delta, L_0, L_1, \sigma_1, \sigma_2)$.

We now claim that $\|\nabla f(\vx_t)\| \geq \epsilon$ for all $t \geq 0$ with
probability $\delta$ when $A$ is initialized with $x_0 = (a + \alpha(\vg_2) \|\vg_2\|)
\vg$. To see this, consider the sequence \[
    y_t = \begin{cases}
        \frac{1}{\alpha(\vg_2) \|\vg_2\|} \left( \langle \vx_t, \vg \rangle - a \right) & \langle \vx_i, \vg \rangle \geq a \text{ for all } i \leq t \\
        0 & \text{ otherwise}
    \end{cases}.
\]
As long as $\langle \vx_t, \vg \rangle > a$, the sequence $y_t$ follows the exact same
distribution as the random walk in \Eqref{eq:random_walk} with $\lambda =
\frac{\alpha(\vg_1) \|\vg_1\|}{\alpha(\vg_2) \|\vg_2\|} > 0$.  Since $\vg_1, \vg_2$ is a
$(p, \delta)$-tricky pair, $\lambda \geq \lambda_0(p, \delta)$, so that $z_{p,\lambda}
\leq 1-\delta$ by the tricky pair definition. Therefore
\begin{align*}
    P \left( \|\nabla f(\vx_t)\| \geq \epsilon \text{ for all } t \geq 0 \right) &\geq P \left( \langle \vx_t, \vg \rangle > a \text{ for all } t \geq 0 \right) \\
    &= P \left( y_t > 0 \text{ for all } t \geq 0 \right) \\
    &= 1 - z_{p,\lambda} \\
    &\geq \delta.
\end{align*}
\end{proof}

\begin{lemma} \label{lem:app_singlestep_notricky_lr_ub}
Suppose that
\begin{equation}
    0 < \alpha(\vg) < \frac{4}{L_1 \|\vg\|} \log \left( 1 + \frac{L_1 \min(\|\vg\|, \Delta L_1)}{L_0} \right),
\end{equation}
for all $g \in \mathbb{R}^d$ with $\|\vg\| \in [\epsilon, \sigma_1 + (\sigma_2 + 1)
\Delta L_1]$, and that there do not exist any $(p, \delta)$-tricky pairs with respect to
$\alpha$. Suppose $\vg \in \mathbb{R}^d$ with $\|\vg\| = \epsilon$. If $\sigma_2 \geq
3$, then \[
    \alpha(\vg) \leq \tilde{\gO} \left( \frac{1}{ L_1 (\Delta L_1)^{1- \gamma_2 - \gamma_3} \epsilon^{\gamma_1} \sigma_1^{\gamma_2 + \gamma_3 - \gamma_1}} \right).
\]
On the other hand, if $\sigma_2 \in (1, 3)$, then \[
    \alpha(\vg) \leq ~\tilde{\gO} \left( \frac{1}{(\sigma_2-1)^{2-\gamma_4-\gamma_5-\gamma_6} \epsilon^{\gamma_4} L_1 (\Delta L_1)^{1-\gamma_5-\gamma_6} \sigma_1^{\gamma_5+\gamma_6-\gamma_4}} \right).
\]
\end{lemma}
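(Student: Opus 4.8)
The plan is to convert the two hypotheses into a \emph{geometric growth} estimate for the effective step size $\beta(\vg):=\alpha(\vg)\|\vg\|$, and then collide that growth against the ceiling $\beta(\vg)\le \tfrac{4}{L_1}\log(1+\tfrac{\Delta L_1^2}{L_0})=\tilde{\gO}(1/L_1)$ (which holds on the admissible magnitude range by the first hypothesis) to squeeze $\beta$, hence $\alpha$, at the target scale $\|\vg\|=\epsilon$. Fix $\hat{\vg}=\vg/\|\vg\|$ and study $c\mapsto\beta(c\hat{\vg})$ for $c\in\R\setminus\{0\}$; since $\alpha>0$ this is strictly positive, so ratios are well defined. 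The negation of ``a $(p,\delta)$-tricky pair exists'' reads: for every $c_1$ with $|c_1|\in[\epsilon,\ \tfrac{1-p}{p}\sigma_1+(\tfrac{1-p}{p}\sigma_2-1)G]$ and every $c_2$ of the \emph{opposite} sign lying in the window carved out by the conditions of Definition~\ref{def:tricky_pair}, one has $\beta(c_1\hat{\vg})/\beta(c_2\hat{\vg})<\lambda_0(p,\delta)$, i.e. $\beta(c_2\hat{\vg})>\lambda_0(p,\delta)^{-1}\,\beta(c_1\hat{\vg})$.

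First I would build a chain $c_0,c_1,c_2,\dots$ of alternating sign and strictly increasing magnitude, starting from $|c_0|=\epsilon$ with the sign of $\vg$. At each step pick $c_{j+1}$ of sign opposite to $c_j$ with $|c_{j+1}|$ equal to the \emph{lower} endpoint of its window --- this yields the slowest per-step magnitude growth, hence the longest chain and the tightest bound. By \eqref{eq:tricky_g2_min}, $|c_{j+1}|$ is a factor $\Theta(\tfrac{p}{1-p})$ larger than $|c_j|$ while $|c_j|$ is below $\Theta(\sigma_1)$ (first case of \eqref{eq:tricky_g2_min}) and a factor $\rho:=\tfrac{(\sigma_2+1)p}{(\sigma_2+1)(1-p)-1}$ larger once $|c_j|$ exceeds $\Theta(\sigma_1)$ (second case); stop the chain once it reaches magnitude $\Theta(\Delta L_1)$, where the first-hypothesis cap on $\beta$ is already saturated. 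The main obstacle is the bookkeeping needed to certify that \emph{every} pair $(c_j,c_{j+1})$ genuinely satisfies \emph{all} of the conditions \eqref{eq:tricky_opposite}--\eqref{eq:tricky_g2_max}: that the window is nonempty at each step, that $|c_j|$ stays below the ceiling \eqref{eq:tricky_g1_max} until the chain reaches $\Theta(\Delta L_1)$, and that the transition between the two cases of \eqref{eq:tricky_g2_min} near $|c_j|\approx\sigma_1$ goes through --- this is where the standing hypotheses $\epsilon\le\sigma_1$, $G\ge\sigma_1$, $\epsilon\le(G-\sigma_1)/(\sigma_2-1)$ are used. Iterating the no-tricky-pair inequality along the chain then gives $\beta(\vg)=\beta(c_0\hat{\vg})<\lambda_0(p,\delta)^{k_1+k_2}\,\beta(c_{k_1+k_2}\hat{\vg})$, where $k_1=\Theta(\log_{p/(1-p)}(\sigma_1/\epsilon))$ and $k_2=\Theta(\log_\rho(\Delta L_1/\sigma_1))$ count the links in the two regimes.

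To finish, bound the last factor using the first hypothesis, $\beta(c_{k_1+k_2}\hat{\vg})\le\tfrac{4}{L_1}\log(1+\tfrac{\Delta L_1^2}{L_0})=\tilde{\gO}(1/L_1)$, so that $\alpha(\vg)=\beta(\vg)/\epsilon\le\tfrac{1}{\epsilon}\,\lambda_0(p,\delta)^{k_1+k_2}\,\tilde{\gO}(1/L_1)$. Writing $\lambda_0(p,\delta)^{k_1}=(\sigma_1/\epsilon)^{\log_{p/(1-p)}\lambda_0(p,\delta)}$ and $\lambda_0(p,\delta)^{k_2}=(\Delta L_1/\sigma_1)^{\log_\rho\lambda_0(p,\delta)}$, and using Lemma~\ref{lem:random_walk_div} to relate $\lambda_0(p,\delta)$ to $\zeta(p,\delta)$, the three exponents collapse to $\gamma_1,\gamma_2,\gamma_3$: for $\sigma_2\ge3$ take $p=2/3\in(\tfrac12,\tfrac{\sigma_2}{\sigma_2+1})$, so $\tfrac{p}{1-p}=2$ and $\rho=2+\tfrac{6}{\sigma_2-2}$, and the product above becomes $\tilde{\gO}\!\big(L_1^{-1}(\Delta L_1)^{-(1-\gamma_2-\gamma_3)}\,\epsilon^{-\gamma_1}\,\sigma_1^{-(\gamma_2+\gamma_3-\gamma_1)}\big)$. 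For $\sigma_2\in(1,3)$ the identical argument with $p=\tfrac{1}{12}(\sigma_2+5)$ --- whose window factors now carry the $(\sigma_2-1)$ dependence --- yields the second displayed bound with $\gamma_4,\gamma_5,\gamma_6$. All logarithmic factors accumulated in $k_1,k_2$ are absorbed into $\tilde{\gO}$, and $\lambda_0(p,\delta)$ stays finite throughout as long as $\delta$ lies below the escape-probability ceiling (e.g. $\delta<2/3$ when $p=2/3$).
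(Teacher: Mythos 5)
Your proposal follows essentially the same route as the paper's proof: it builds the same alternating-sign chain along $\hat{\vg}$ using the lower endpoints of the two branches of \eqref{eq:tricky_g2_min} (growth factor $p/(1-p)$ up to scale $\Theta(\sigma_1)$, then $\rho=\tfrac{(\sigma_2+1)p}{(\sigma_2+1)(1-p)-1}$ up to $\Theta(\Delta L_1)$), iterates the negated \eqref{eq:tricky_lr} to get $\beta(\vg)\le\lambda_0(p,\delta)^{k_0+k_1}\beta(\text{top})$, caps the top term by $\tilde{\gO}(1/L_1)$ via the first hypothesis, and makes the same parameter choices $p=2/3$ and $p=\tfrac{1}{12}(\sigma_2+5)$ to recover the exponents $\gamma_i$. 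This matches the paper's two-sequence $\{\vx_i\},\{\vy_i\}$ argument and its use of Lemma~\ref{lem:exp_seq_ub} and Lemma~\ref{lem:random_walk_div}, so the approach is correct and not materially different.
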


\begin{proof}
Different from the proof sketch in Section \ref{sec:singlestep_sketch}, in our actual
construction below, we use two sequences $\{\vx_i\}$ and $\{\vy_i\}$ instead of one
sequence $\{z_i\}$. Every $\vx$ in the sequence $\{\vx_i\}$ satisfies
$|\hat{P}_{\vg}(\vx)| \in [\epsilon, \sigma_1 + (\sigma_2-1) \epsilon]$, and every $\vy$
in $\{\vy_i\}$ satisfies $|\hat{P}_{\vg}(\vx)| \in [\sigma_1, + (\sigma_2-1) \epsilon,
\sigma_1 + (\sigma_2-1) G]$ (see \Eqref{eq:G_def} for the definition of $G$).

Denote $\beta(\vx) = \alpha \left( \vx \right) \|\vx\|$, fix any $p_0 \in
\left( \frac{1}{2}, \frac{\sigma_2}{\sigma_2+1} \right)$ and define a sequence
$\{\vx_i\}_{i=0}^{\infty}$ as follows:
\begin{align*}
    \vx_0 &= \vg \\
    \vx_i &= (-1)^i \frac{p_0 \|\vx_{i-1}\| + \epsilon}{1-p_0} \frac{\vg}{\|\vg\|}.
\end{align*}
Also, denote $k_0 = \max \left\{ i \geq 0 : \|\vx_i\| \leq \frac{1-p_0}{p_0} \sigma_1 +
\left( \frac{1-p_0}{p_0} \sigma_2 - 1 \right) \epsilon \right\}$. We claim that, for
each $i$ with $0 \leq i \leq k_0$, the pair $(\vx_i, \vx_{i+1})$ satisfies all of the
conditions of a $(p_0, \delta)$-tricky pair, other than possibly \Eqref{eq:tricky_lr}.
Notice that $\|\vx_i\|$ is increasing and $\langle \vx_i, \vg \rangle$ has alternating
sign, so \Eqref{eq:tricky_opposite} and \Eqref{eq:tricky_g_min} are satisfied. Recall
that $\|\vx_i\| \leq \frac{1-p_0}{p_0} \sigma_1 + \left( \frac{1-p_0}{p_0} \sigma_2 - 1
\right) \epsilon$ by the definition of $k_0$. Since $\epsilon \leq G$ was assumed in
Theorem \ref{thm:singlestep}, this implies $\|\vx_i\| \leq \frac{1-p_0}{p_0} \sigma_1 +
\left( \frac{1-p_0}{p_0} \sigma_2 - 1 \right) G$. So \Eqref{eq:tricky_g1_max} is
satisfied. Since $\|\vx_i\| \leq \frac{1-p_0}{p_0} \sigma_1 + \left( \frac{1-p_0}{p_0}
\sigma_2 - 1 \right) \epsilon$, we must fulfill the first branch of the RHS of
\Eqref{eq:tricky_g2_min}. This only requires $\|\vx_{i+1}\| \geq \frac{p_0 \|vx_i\| +
\epsilon}{1-p_0}$, which holds by construction of the sequence $\{\vx_i\}$. Finally,
\Eqref{eq:tricky_g2_max} is satisfied again from $\epsilon \leq G$, since \[
    \|\vx_{i+1}\| = \frac{p_0 \|\vx_i\| + \epsilon}{1-p_0} \leq \frac{p_0 \|\vx_i\| + G}{1-p_0}.
\]
This verifies the claim that the pair $(\vx_i, \vx_{i+1})$ satisfies
\Eqref{eq:tricky_opposite} through \Eqref{eq:tricky_g2_max}. If $(\vx_i, \vx_{i+1})$
also satisfied \Eqref{eq:tricky_lr}, then it would be a $(p_0, \delta)$-tricky pair.
Since it was assumed that there do not exist any $(p, \delta)$-tricky pairs, it must be
that \Eqref{eq:tricky_lr} is not satisfied by $(\vx_i, \vx_{i+1})$, so that \[
    \beta(\vx_i) \leq \lambda_0(p_0, \delta) \beta(\vx_{i+1})
\]
for all $0 \leq i \leq k_0$. Choosing $i=0$ and unrolling to $i=k_0-2$:
\begin{equation} \label{eq:x0_ub_inter}
    \beta(\vx_0) \leq \left( \lambda_0(p_0, \delta) \right)^{k_0-1} \beta(\vx_{k_0-1}).
\end{equation}
Now choose $\vy_0 = (-1)^{k_0} (\sigma_1 + (\sigma_2-1) \epsilon) \frac{\vg}{\|\vg\|}$.
Then $\|\vy_0\| \geq \|\vx_{k_0}\|$ from the definition of $k_0$. We again want to show
that $(\vx_{k_0-1}, \vy_0)$ satisfies \Eqref{eq:tricky_opposite} through
\Eqref{eq:tricky_g2_max}. We can use an identical argument as above to demonstrate
\Eqref{eq:tricky_opposite} through \Eqref{eq:tricky_g2_min}, so it only remains to show
\Eqref{eq:tricky_g2_max}. It was assumed in the statement of Theorem
\ref{thm:singlestep} that $\sigma_1 + (\sigma_2 - 1) \epsilon \leq G$. Therefore \[
    \|\vy_0\| = \sigma_1 + (\sigma_2 - 1) \epsilon \leq \Delta L_1 \leq \frac{p \|\vx_{k_0-1}\| + G}{1-p},
\]
which demonstrates \Eqref{eq:tricky_g2_max}. This verifies the claim for $(\vx_{k_0-1},
\vy_0)$. Again, \Eqref{eq:tricky_lr} would imply that $(\vx_{k_0-1}, \vy_0)$ is a $(p_0,
\delta)$-tricky pair. But we assumed there are none, so \Eqref{eq:tricky_lr} cannot be
satisfied. Therefore \[
    \beta(\vx_{k_0-1}) \leq \lambda_0(p_0, \delta) \beta(\vy_0),
\]
and combining with \Eqref{eq:x0_ub_inter} yields
\begin{equation} \label{eq:notricky_inter_1}
    \beta(\vx_0) \leq \left( \lambda_0(p_0, \delta) \right)^{k_0} \beta(\vy_0).
\end{equation}

Now fix some $p_1 \in \left( \frac{1}{2}, \frac{\sigma_2}{\sigma_2+1} \right)$, define
the sequence $\{\vy_i\}_{i=0}^{\infty}$ as:
\begin{equation*}
    \vy_i = (-1)^{k_0+i} \frac{(\sigma_2+1) p_1 \|\vy_{i-1}\| - \sigma_1}{(\sigma_2+1)(1-p_1) - 1} \frac{\vg}{\|\vg\|}.
\end{equation*}
Denote $k_1 = \max \left\{ i \geq 0: \|\vy_i\| \leq \frac{1 - p_1}{p_1} \sigma_1 +
\left( \frac{1-p_1}{p_1} \sigma_2 - 1 \right) G \right\}$. Similarly as for the sequence
$\{\vx_i\}$, we claim that for each $i$ with $0 \leq i \leq k_1$, the pair $(\vy_i,
\vy_{i+1})$ satisfies all of the conditions of a $(p_1, \delta)$-tricky pair, other than
possibly \Eqref{eq:tricky_lr}. \Eqref{eq:tricky_opposite} and \Eqref{eq:tricky_g_min}
are satisfied, since $\|\vy_i\|$ is increasing and $\langle \vy_i, \vg \rangle$
alternates in sign. The upper bound of $\|\vy_i\|$ in the definition of $k_1$ ensures
that \Eqref{eq:tricky_g1_max} is satisfied. Since \[
    \|\vy_i\| \geq \|\vy_0| = \sigma_1 + (\sigma_2 - 1) \epsilon \geq \frac{1-p_1}{p_1} \sigma_1 + \left( \frac{1-p_1}{p_1} \sigma_2 - 1 \right) \epsilon,
\]
we must fulfill the second branch of the RHS of \Eqref{eq:tricky_g2_min}. This only
requires \[
    \|\vy_{i+1}\| \geq \frac{(\sigma_2+1) p_1 \|\vy_i\| - \sigma_1}{(\sigma_2+1)(1-p_1)-1},
\]
which holds by construction of the sequence $\{\vy_i\}$. Finally, to show
\Eqref{eq:tricky_g2_max}, we need \[
    \|\vy_{i+1}\| \leq \frac{p_1 \|\vy_i\| + G}{1-p_1},
\]
which is equivalent to
\begin{align*}
    \frac{(\sigma_2+1) p_1 \|\vy_{i-1}\| - \sigma_1}{(\sigma_2+1)(1-p_1) - 1} &\leq \frac{p_1 \|\vy_i\| + G}{1-p_1} \\
    (1-p_1) (\sigma_2+1) p_1 \|\vy_{i-1}\| - (1-p_1) \sigma_1 &\leq ((\sigma_2+1)(1-p_1)-1) p_1 \|\vy_i\| + ((\sigma_2+1)(1-p_1)-1) G \\
    p_1 \|\vy_i\| &\leq (1-p_1) \sigma_1 + ((\sigma_2+1)(1-p_1)-1) G \\
    \|\vy_i\| &\leq \frac{1-p_1}{p_1} \sigma_1 + \frac{(\sigma_2+1)(1-p_1)-1}{p_1} G \\
    \|\vy_i\| &\leq \frac{1-p_1}{p_1} \sigma_1 + \left( \frac{1-p_1}{p_1} \sigma_2 - 1 \right) G.
\end{align*}
All steps in this sequence are reversible, and the last inequality holds by the upper
bound of $\|\vy_i\|$ in the definition of $k_1$. Therefore, \Eqref{eq:tricky_g2_max} is
satisfied. This verifies the claim that $(\vy_i, \vy_{i+1})$ satisfies all of the
conditions of a $(p_1, \delta)$-tricky pair, other than possibly \Eqref{eq:tricky_lr}.
Again, \Eqref{eq:tricky_lr} cannot hold, since this would imply the existence of a
$(p_1, \delta)$-tricky pair, and we have already assumed otherwise. Therefore \[
    \beta(\vy_i) \leq \lambda_0(p_1, \delta) \beta(\vy_{i+1})
\]
for all $0 \leq i \leq k_1.$ Unrolling from $i=0$ to $i=k_1-1$ yields
\begin{equation} \label{eq:notricky_inter_2}
    \beta(\vy_0) \leq \left( \lambda_0(p_1, \delta) \right)^{k_1} \beta(\vy_{k_1}).
\end{equation}

Combining \Eqref{eq:notricky_inter_1} and \Eqref{eq:notricky_inter_2} yields
\begin{equation} \label{eq:notricky_inter_3}
    \beta(\vx_0) \leq \left( \lambda_0(p_0, \delta) \right)^{k_0} \left( \lambda_0(p_1, \delta) \right)^{k_1} \beta(\vy_{k_1}).
\end{equation}

We can use Lemma \ref{lem:exp_seq_ub} for the sequences $\{\|\vx_i\|\}_i$ and
$\{\|\vy_i\|\}_i$ to lower bound $k_0$ and $k_1$. For $k_0$, we apply Lemma
\ref{lem:exp_seq_ub} with
\begin{align*}
    a_0 = \epsilon, \quad r = \frac{p_0}{1-p_0}, \quad b = \frac{\epsilon}{1-p_0}, \quad A = \frac{1-p_0}{p_0} \sigma_1 + \left( \frac{1-p_0}{p_0} \sigma_2 - 1 \right) \epsilon.
\end{align*}
Then
\begin{align*}
    \frac{A (r-1) + b}{a_0 (r-1) + b} &= \frac{ \left( \frac{1-p_0}{p_0} \sigma_1 + \left( \frac{1-p_0}{p_0} \sigma_2 - 1 \right) \epsilon \right) \frac{2p_0-1}{1-p_0} + \frac{\epsilon}{1-p_0}}{\epsilon \frac{2p_0-1}{1-p_0} + \frac{\epsilon}{1-p_0} } \\
    &= \frac{ \left( \frac{1-p_0}{p_0} \sigma_1 + \left( \frac{1-p_0}{p_0} \sigma_2 - 1 \right) \epsilon \right) (2p_0-1) + \epsilon}{\epsilon (2p_0-1) + \epsilon } \\
    &= \frac{ \frac{1-p_0}{p_0} \left( \sigma_1 + \left( \sigma_2 - \frac{p_0}{1-p_0} \right) \epsilon \right) (2p_0-1) + \epsilon}{2 p_0 \epsilon } \\
    &= \frac{ \frac{(2p_0-1)(1-p_0)}{p_0} \left( \sigma_1 + \sigma_2 \epsilon \right) + 2 (1-p_0) \epsilon}{2 p_0 \epsilon } \\
    &= \frac{(2p_0-1)(1-p_0)}{2 p_0^2 \epsilon} \left( \sigma_1 + \sigma_2 \epsilon \right) + \frac{1-p_0}{p_0},
\end{align*}
so Lemma \ref{lem:exp_seq_ub} implies
\begin{align}
    k_0 &= \left\lfloor \frac{ \log \left( \frac{(2p_0-1)(1-p_0)}{2p_0^2 \epsilon} \left( \sigma_1 + \sigma_2 \epsilon \right) + \frac{1-p_0}{p_0} \right) }{ \log \frac{p_0}{1-p_0} } \right\rfloor \nonumber \\
    &\geq \frac{ \log \left( \frac{(2p_0-1)(1-p_0)}{2p_0^2 \epsilon} \left( \sigma_1 + \sigma_2 \epsilon \right) + \frac{1-p_0}{p_0} \right) }{ \log \frac{p_0}{1-p_0} } - 1 \nonumber \\
    &= \frac{ \log b_0 }{ \log \frac{p_0}{1-p_0} } - 1, \label{eq:tricky_seq_lb_1}
\end{align}
where we denoted \[
    b_0 = \frac{(2p_0-1)(1-p_0)}{2p_0^2 \epsilon} \left( \sigma_1 + \sigma_2 \epsilon \right) + \frac{1-p_0}{p_0}.
\]
Similarly, for $k_1$, we apply Lemma \ref{lem:exp_seq_ub} with
\begin{align*}
    a_0 &= \sigma_1 + (\sigma_2 - 1) \epsilon, \quad r = \frac{(\sigma_2+1) p_1}{(\sigma_2+1)(1-p_1) - 1}, \quad b = -\frac{\sigma_1}{(\sigma_2+1)(1-p_1) - 1} \\
    A &= \frac{1 - p_1}{p_1} \sigma_1 + \left( \frac{1-p_1}{p_1} \sigma_2 - 1 \right) G = \frac{1-p_1}{p_1} \left( \sigma_1 + \left( \sigma_2 - \frac{p_1}{1-p_1} \right) G \right).
\end{align*}
Then
\begin{equation*}
    r - 1 = \frac{(\sigma_2+1) p_1}{(\sigma_2+1) (1-p_1) - 1} - 1 = \frac{(\sigma_2+1) (2p_1-1) + 1}{(\sigma_2+1) (1-p_1) - 1},
\end{equation*}
so
\begin{align*}
    A (r-1) + b &= \frac{ \frac{1-p_1}{p_1} \left( \sigma_1 + \left( \sigma_2 - \frac{p_1}{1-p_1} \right) G \right) \left( (\sigma_2+1) (2p_1-1) + 1 \right) - \sigma_1 }{ (\sigma_2+1)(1-p_1) - 1 },
\end{align*}
and
\begin{align*}
    a_0 (r-1) + b &= \frac{ (\sigma_1 + (\sigma_2-1) \epsilon) \left( (\sigma_2+1) (2p_1-1) + 1 \right) - \sigma_1 }{ (\sigma_2+1)(1-p_1) - 1 }.
\end{align*}
So
\begin{align*}
    \frac{A (r-1) + b}{a_0 (r-1) + b} &= \frac{ \frac{1-p_1}{p_1} \left( \sigma_1 + \left( \sigma_2 - \frac{p_1}{1-p_1} \right) G \right) \left( (\sigma_2+1) (2p_1-1) + 1 \right) - \sigma_1 }{ (\sigma_1 + (\sigma_2-1) \epsilon) \left( (\sigma_2+1) (2p_1-1) + 1 \right) - \sigma_1 }.
\end{align*}
Denoting the RHS as $b_1$, this yields
\begin{align}
    k_1 &= \left\lfloor \frac{ \log b_1 }{ \log \frac{(\sigma_2+1)p_1}{(\sigma_2+1)(1-p_1)-1} } \right\rfloor \geq \frac{ \log b_1 }{ \log \frac{(\sigma_2+1)p_1}{(\sigma_2+1)(1-p_1)-1} } - 1. \label{eq:tricky_seq_lb_2}
\end{align}
Plugging \Eqref{eq:tricky_seq_lb_1} and \Eqref{eq:tricky_seq_lb_2} into \Eqref{eq:notricky_inter_3}:
\begin{equation*}
    \beta(\vx_0) \leq \left( \lambda_0(p_0, \delta) \right)^{-1} \left( \lambda_0(p_1, \delta) \right)^{-1} \left( \lambda_0(p_0, \delta) \right)^{\frac{ \log b_0 }{ \log \frac{p_0}{1-p_0} }} \left( \lambda_0(p_1, \delta) \right)^{\frac{ \log b_1 }{ \log \frac{(\sigma_2+1)p_1}{(\sigma_2+1)(1-p_1)-1} }} \beta(\vy_{k_1}).
\end{equation*}
Using the fact that for any $\rho$,
\begin{equation*}
    (\lambda_0(p_0, \delta))^{\log \rho} = (\lambda_0(p_0, \delta))^{\frac{\log \rho}{\log \lambda_0(p_0, \delta)} \log \lambda_0(p_0, \delta)} = \rho^{\log \lambda_0(p_0, \delta)},
\end{equation*}
we can choose $\rho = \log b_0$ and $\rho = \log b_1$,
\begin{equation} \label{eq:notricky_lr_ub_inter}
    \beta(\vx_0) \leq \left( \lambda_0(p_0, \delta) \right)^{-1} \left( \lambda_0(p_1, \delta) \right)^{-1} \left( \frac{1}{b_0} \right)^{\phi_0} \left( \frac{1}{b_1} \right)^{\phi_1} \beta(\vy_{k_1}),
\end{equation}
where
\begin{align*}
    \phi_0 &= \log \frac{1}{\lambda_0(p_0, \delta)} / \log \frac{p_0}{1-p_0} \\
    \phi_1 &= \log \frac{1}{\lambda_0(p_1, \delta)} / \log \frac{(\sigma_2+1)p_1}{(\sigma_2+1)(1-p_1)-1}.
\end{align*}
Note that $\phi_0 > \phi_1$, and denote $m = \frac{4}{L_1} \log \left( 1 + \frac{\Delta
L_1^2}{L_0} \right)$. We can also bound $\beta(\vy_{k_1})$ using the assumed condition
$\alpha(g) < \frac{4m}{|g|}$, since we previously showed that $(\vy_{k_1}, \vy_{k+1})$
satisfies \Eqref{eq:tricky_opposite} through \Eqref{eq:tricky_g2_max}. In particular,
\Eqref{eq:tricky_g1_max} implies that
\begin{equation*}
    \|\vy_{k_1}\| \leq \frac{1-p_1}{p_1} \sigma_1 + \left( \frac{1-p_1}{p_1} \sigma_2 - 1 \right) G \leq \sigma_1 + (\sigma_2 + 1) G \leq \sigma_1 + (\sigma_2 + 1) \Delta L_1,
\end{equation*}
so that $\|\vy_{k_1}\|$ falls within the range for which the bound on $\alpha(g)$
applies.  Therefore $\alpha(\vy_{k_1}) \leq \frac{4 m}{\|y_{k_1}\|}$, or
$\beta(\vy_{k_1}) \leq 4 m$.  Plugging back to \Eqref{eq:notricky_lr_ub_inter} yields
\begin{equation} \label{eq:notricky_ub_const}
    \beta(\vx_0) \leq 4 m \left( \lambda_0(p_0, \delta) \right)^{-1} \left( \lambda_0(p_1, \delta) \right)^{-1} \left( \frac{1}{b_0} \right)^{\phi_0} \left( \frac{1}{b_1} \right)^{\phi_1}.
\end{equation}

It only remains to choose $p_0$ and $p_1$ such that $b_0, b_1, \phi_0$, and $\phi_1$ can
be bounded in terms of the problem parameters. First, using Lemma
\ref{lem:random_walk_div}, we can rewrite $\lambda_0(p, \delta)$ as
\begin{align*}
    \lambda_0(p, \delta) &= \lambda_0(p, 0) + (\lambda_0(p, \delta) - \lambda_0(p, 0)) \\
    &= \frac{1-p}{p} + (\lambda_0(p, \delta) - \lambda_0(p, 0)) \\
    &= \frac{1-p}{p} + \zeta(p, \delta),
\end{align*}
so that
\begin{equation*}
    \frac{1}{\lambda_0(p, \delta)} = \frac{p}{1 - p + p \zeta(p, \delta)} = \frac{p}{1-p} \frac{1-p}{1-p+p \zeta(p, \delta)}.
\end{equation*}
We can then rewrite $\phi_0$ as
\begin{align*}
    \phi_0 &= \left( \log \frac{p_0}{1-p_0} + \log \left( \frac{1-p_0}{1-p_0+p_0 \zeta(p_0, \delta)} \right) \right) / \log \frac{p_0}{1-p_0} \\
    &= 1 - \frac{ \log \left( \frac{1-p_0+p_0 \zeta(p_0, \delta)}{1-p_0} \right) }{\log \frac{p_0}{1-p_0}}
\end{align*}
and $\phi_1$ as
\begin{align*}
    \phi_1 &= \left( \log \frac{p_1}{1-p_1} + \log \left( \frac{1-p_1}{1-p_1+p_1 \zeta(p_1, \delta)} \right) \right) / \log \frac{(\sigma_2+1)p_1}{(\sigma_2+1)(1-p_1)-1} \\
    &= \frac{ \log \left( \frac{p_1}{1-p_1} \right) }{\log \left( \frac{(\sigma_2+1)p_1}{(\sigma_2+1)(1-p_1)-1} \right)} - \frac{ \log \left( \frac{1-p_1+p_1 \zeta(p_1, \delta)}{1-p_1} \right) }{\log \left( \frac{(\sigma_2+1)p_1}{(\sigma_2+1)(1-p_1)-1} \right)}
\end{align*}

We choose $p_0$ and $p_1$ differently depending on the magnitude of $\sigma_2$. We
consider two cases: $\sigma_2 \geq 3$ (bounded away from $1$), and $\sigma_2 \in (1, 3)$
(close to $1$).

\paragraph{Case 1:} $\sigma_1 \geq 3$. Here we choose $p_0 = p_1 = \frac{2}{3}$, and
this satisfies $p_0, p_1 \in \left( \frac{1}{2}, \frac{\sigma_2}{\sigma_2+1} \right)$.
We now bound the remaining constants. For $b_0$:
\begin{align*}
    b_0 &= \frac{(2p_0-1)(1-p_0)}{2p_0^2 \epsilon} \left( \sigma_1 + \sigma_2 \epsilon \right) + \frac{1-p_0}{p_0} \\
    &= \frac{1}{8 \epsilon} \left( \sigma_1 + \sigma_2 \epsilon \right) + \frac{1}{2} \\
    &\geq \frac{\sigma_1}{8 \epsilon}.
\end{align*}
For $b_1$:
\begin{align*}
    b_1 &= \frac{ \frac{1-p_1}{p_1} \left( \sigma_1 + \left( \sigma_2 - \frac{p_1}{1-p_1} \right) G \right) \left( (\sigma_2+1) (2p_1-1) + 1 \right) - \sigma_1 }{ (\sigma_1 + (\sigma_2-1) \epsilon) \left( (\sigma_2+1) (2p_1-1) + 1 \right) - \sigma_1 } \\
    &= \frac{ \left( \frac{1-p_1}{p_1} \left( (\sigma_2+1)(2p_1-1) + 1 \right) - 1 \right) \sigma_1 + \left( \frac{1-p_1}{p_1} \sigma_2 - 1 \right) \left( (\sigma_2+1)(2p_1-1) + 1 \right) G }{ (\sigma_1 + (\sigma_2-1) \epsilon) \left( (\sigma_2+1) (2p_1-1) + 1 \right) - \sigma_1 } \\
    &\Eqmark{i}{\geq} \frac{ \left( \frac{1-p_1}{p_1} \sigma_2 - 1 \right) \left( (\sigma_2+1)(2p_1-1) + 1 \right) G }{ (\sigma_1 + (\sigma_2-1) \epsilon) \left( (\sigma_2+1) (2p_1-1) + 1 \right) - \sigma_1 } \\
    &\Eqmark{ii}{\geq} \frac{ \left( \frac{1-p_1}{p_1} \sigma_2 - 1 \right) \left( (\sigma_2+1)(2p_1-1) + 1 \right) G }{ (\sigma_1 + (\sigma_2-1) \sigma_1) \left( (\sigma_2+1) (2p_1-1) + 1 \right) - \sigma_1 } \\
    &= \frac{ \left( \frac{1-p_1}{p_1} \sigma_2 - 1 \right) \left( (\sigma_2+1)(2p_1-1) + 1 \right) G }{ \left( \sigma_2 \left( (\sigma_2+1) (2p_1-1) + 1 \right) - 1 \right) \sigma_1 } \\
    &\Eqmark{iii}{=} \frac{ \left( \frac{1}{2} \sigma_2 - 1 \right) \left( \frac{1}{3} \sigma_2 + \frac{4}{3} \right) G }{ \left( \sigma_2 \left( \frac{1}{3} \sigma_2 + \frac{4}{3} \right) - 1 \right) \sigma_1 }
    = \frac{ \left( \sigma_2 - 2 \right) \left( \sigma_2 + 4 \right) G }{ 2 \left( \sigma_2 \left( \sigma_2 + 4 \right) - 3 \right) \sigma_1 }
    \geq \frac{ \left( \sigma_2 - 2 \right) \left( \sigma_2 + 4 \right) G }{ 2 \sigma_2 \left( \sigma_2 + 4 \right) \sigma_1 } \\
    &= \frac{ \left( \sigma_2 - 2 \right) G }{ 2 \sigma_2 \sigma_1 } \Eqmark{iv}{\geq} \frac{G}{6 \sigma_1},
\end{align*}
where $(i)$ uses the fact that \[
    \frac{1-p_1}{p_1} \left( (\sigma_2+1)(2p_1-1) + 1 \right) - 1 = \frac{1}{2} \left( \frac{1}{3} \sigma_2 + \frac{4}{3} \right) - 1 = \frac{1}{6} \sigma_2 - \frac{1}{3} > 0,
\]
$(ii)$ uses $\epsilon \leq \sigma_1$ as assumed in the statement of Theorem
\ref{thm:singlestep}, $(iii)$ plugs in $p_1 = 2/3$, and $(iv)$ uses $\sigma_2 \geq 3$.
For $\phi_0:$
\begin{align*}
    \phi_0 &= 1 - \frac{ \log \left( \frac{1-p_0+p_0 \zeta(p_0, \delta)}{1-p_0} \right) }{\log \frac{p_0}{1-p_0}} = 1 - \frac{ \log \left( 1 + 2 \zeta(2/3, \delta) \right) }{\log 2} = 1 - \gamma_1,
\end{align*}
where we denoted \[
    \gamma_1 = \frac{ \log \left( 1 + 2 \zeta(2/3, \delta) \right) }{\log 2}.
\]
For $\phi_1$:
\begin{align*}
    \phi_1 &= \frac{ \log \left( \frac{p_1}{1-p_1} \right) }{\log \left( \frac{(\sigma_2+1)p_1}{(\sigma_2+1)(1-p_1)-1} \right)} - \frac{ \log \left( \frac{1-p_1+p_1 \zeta(p_1, \delta)}{1-p_1} \right) }{\log \left( \frac{(\sigma_2+1)p_1}{(\sigma_2+1)(1-p_1)-1} \right)} \\
    &= \frac{ \log 2 }{\log \left( 2 + \frac{6}{\sigma_2-2} \right)} - \frac{ \log \left( 1 + 2 \zeta(2/3, \delta) \right)}{\log \left( 2 + \frac{6}{\sigma_2-2} \right)} \\
    &= 1 - \left( 1 - \frac{ \log 2 }{\log \left( 2 + \frac{6}{\sigma_2-2} \right)} \right) - \frac{ \log \left( 1 + 2 \zeta(2/3, \delta) \right)}{\log \left( 2 + \frac{6}{\sigma_2-2} \right)} \\
    &= 1 - \gamma_2 - \gamma_3,
\end{align*}
where we denoted
\begin{align*}
    \gamma_2 &= 1 - \frac{ \log 2 }{\log \left( 2 + \frac{6}{\sigma_2-2} \right)} \\
    \gamma_3 &= \frac{ \log \left( 1 + 2 \zeta(2/3, \delta) \right)}{\log \left( 2 + \frac{6}{\sigma_2-2} \right)}.
\end{align*}
Finally, we can plug our bounds of $b_0, b_1, \phi_0, \phi_1$into
\Eqref{eq:notricky_ub_const}:
\begin{align*}
    \beta(\vx_0) &\leq 4 m \left( \lambda_0(2/3, \delta) \right)^{-1} \left( \lambda_0(2/3, \delta) \right)^{-1} \left( \frac{8 \epsilon}{\sigma_1} \right)^{1-\gamma_1} \left( \frac{6 \sigma_1}{G} \right)^{1-\gamma_2-\gamma_3} \\
    &\Eqmark{i}{\leq} 192 m \left( \lambda_0(2/3, 0) \right)^{-1} \left( \lambda_0(2/3, 0) \right)^{-1} \left( \frac{\epsilon}{\sigma_1} \right)^{1-\gamma_1} \left( \frac{\sigma_1}{G} \right)^{1-\gamma_2-\gamma_3} \\
    &\Eqmark{ii}{\leq} 768 m \left( \frac{\epsilon}{\sigma_1} \right)^{1-\gamma_1} \left( \frac{\sigma_1}{G} \right)^{1-\gamma_2-\gamma_3} \\
    &= 768 m \frac{ \epsilon^{1-\gamma_1} }{ G^{1- \gamma_2 - \gamma_3} \sigma_1^{\gamma_2 + \gamma_3 - \gamma_1}} \\
    &= 3072 \frac{ \epsilon^{1-\gamma_1} }{ L_1 G^{1- \gamma_2 - \gamma_3} \sigma_1^{\gamma_2 + \gamma_3 - \gamma_1}} \log \left( 1 + \frac{\Delta L_1^2}{L_0} \right),
\end{align*}
where $(i)$ uses the fact that $\lambda_0(p, \delta)$ is decreasing in terms of
$\delta$, and $(ii)$ uses $\lambda_0(p, 0) = \frac{1-p}{p}$ (from Lemma
\ref{lem:random_walk_balance}). As noted after \Eqref{eq:notricky_lr_ub_inter}, $\phi_0
> \phi_1$, so that $\gamma_1 < \gamma_2 + \gamma_3$. Replacing $\beta(\vx_0) =
\beta(\vg) = \alpha(\vg) \epsilon$ yields
\begin{align*}
    \alpha(\vg) &\leq \frac{3072}{ L_1 G^{1- \gamma_2 - \gamma_3} \epsilon^{\gamma_1} \sigma_1^{\gamma_2 + \gamma_3 - \gamma_1}} \log \left( 1 + \frac{\Delta L_1^2}{L_0} \right) \\
    &\leq \frac{3072}{ L_1 (\Delta L_1)^{1- \gamma_2 - \gamma_3} \epsilon^{\gamma_1} \sigma_1^{\gamma_2 + \gamma_3 - \gamma_1}} \log \left( 1 + \frac{\Delta L_1^2}{L_0} \right) \left( 1 + 4 \log \left( 1 + \frac{\Delta L_1^2}{L_0} \right) \right),
\end{align*}
which is the desired result.

\paragraph{Case 2:} $\sigma_2 \in (1, 3)$. Here we choose $p_0 = p_1 =
\frac{\sigma_2+5}{12}$, which satisfies $p_0, p_1 \in \left( \frac{1}{2},
\frac{\sigma_2}{\sigma_2+1} \right)$. With this choice, \[
    1-p_0 = \frac{-\sigma_2+7}{12}, \quad 2p_0 - 1 = \frac{\sigma_2-1}{6},
\]
and similarly for $p_1$. We can now bound the remaining constants. For $b_0$:
\begin{align*}
    b_0 &= \frac{(2p_0-1)(1-p_0)}{2p_0^2 \epsilon} \left( \sigma_1 + \sigma_2 \epsilon \right) + \frac{1-p_0}{p_0} \\
    &= \frac{(\sigma_2-1)(-\sigma_2+7)}{(\sigma_2+5)^2} \frac{ \sigma_1 + \sigma_2 \epsilon }{\epsilon} + \frac{-\sigma_2+7}{\sigma_2+5} \\
    &\Eqmark{i}{\geq} \frac{6 (\sigma_2-1)}{64} \frac{ \sigma_1 + \sigma_2 \epsilon }{\epsilon} + \frac{1}{2} \\
    &\geq \frac{3 (\sigma_2-1) \sigma_1}{32 \epsilon}
\end{align*}
where $(i)$ uses $\sigma_2 \in (1, 3)$. For $b_1$:
\begin{align*}
    b_1 &= \frac{ \frac{1-p_1}{p_1} \left( \sigma_1 + \left( \sigma_2 - \frac{p_1}{1-p_1} \right) G \right) \left( (\sigma_2+1) (2p_1-1) + 1 \right) - \sigma_1 }{ (\sigma_1 + (\sigma_2-1) \epsilon) \left( (\sigma_2+1) (2p_1-1) + 1 \right) - \sigma_1 } \\
    &= \frac{ \left( \frac{1-p_1}{p_1} \left( (\sigma_2+1)(2p_1-1) + 1 \right) - 1 \right) \sigma_1 + \left( \frac{1-p_1}{p_1} \sigma_2 - 1 \right) \left( (\sigma_2+1)(2p_1-1) + 1 \right) G }{ (\sigma_1 + (\sigma_2-1) \epsilon) \left( (\sigma_2+1) (2p_1-1) + 1 \right) - \sigma_1 } \\
    &\Eqmark{i}{\geq} \frac{ \left( \frac{1-p_1}{p_1} \sigma_2 - 1 \right) \left( (\sigma_2+1)(2p_1-1) + 1 \right) G }{ (\sigma_1 + (\sigma_2-1) \epsilon) \left( (\sigma_2+1) (2p_1-1) + 1 \right) - \sigma_1 } \\
    &\Eqmark{ii}{\geq} \frac{ \left( \frac{1-p_1}{p_1} \sigma_2 - 1 \right) \left( (\sigma_2+1)(2p_1-1) + 1 \right) G }{ (\sigma_1 + (\sigma_2-1) \sigma_1) \left( (\sigma_2+1) (2p_1-1) + 1 \right) - \sigma_1 } \\
    &= \frac{ \left( \frac{1-p_1}{p_1} \sigma_2 - 1 \right) \left( (\sigma_2+1)(2p_1-1) + 1 \right) G }{ \left( \sigma_2 \left( (\sigma_2+1) (2p_1-1) + 1 \right) - 1 \right) \sigma_1 } \\
    &\geq \frac{ \left( \frac{1-p_1}{p_1} \sigma_2 - 1 \right) G }{ \sigma_2 \sigma_1 } = \left( \frac{1-p_1}{p_1} - \frac{1}{\sigma_2} \right) \frac{G}{\sigma_1} = \left( \frac{-\sigma_2+7}{\sigma_2+5} - \frac{1}{\sigma_2} \right) \frac{G}{\sigma_1} \\
    &= \frac{ (\sigma_2-1) (-\sigma_2+5) }{\sigma_2 (\sigma_2+5)} \frac{G}{\sigma_1} \Eqmark{iii}{\geq} \frac{(\sigma_2-1) G}{12 \sigma_1},
\end{align*}
where $(i)$ uses
\begin{align*}
    \frac{1-p_1}{p_1} \left( (\sigma_2+1)(2p_1-1) + 1 \right) - 1 &= \frac{-\sigma_2+7}{\sigma_2+5} \left( (\sigma_2+1) \frac{\sigma_2-1}{6} + 1 \right) - 1 \\
    &= \frac{(-\sigma_2+7)(\sigma_2^2+5)}{6 (\sigma_2+5)} - 1 \\
    &= \frac{-\sigma_2+7}{6} - 1 > 0,
\end{align*}
$(ii)$ uses $\epsilon \leq \sigma_1$, as was assumed in the statement of Theorem
\ref{thm:singlestep}, and $(iii)$ uses $\sigma_2 \in (1, 3)$. For $\phi_0$:
\begin{align*}
    \phi_0 &= 1 - \frac{ \log \left( \frac{1-p_0+p_0 \zeta(p_0, \delta)}{1-p_0} \right) }{\log \frac{p_0}{1-p_0}} = 1 - \frac{ \log \left( 1 + \frac{\sigma_2+5}{-\sigma_2+7} \zeta(p_0, \delta) \right) }{\log \frac{\sigma_2+5}{-\sigma_2+7}} \\
    &= 1 - \frac{ \log \left( 1 + \frac{\sigma_2+5}{-\sigma_2+7} \zeta(p_0, \delta) \right) }{\log \left( \frac{12}{-\sigma_2+7} - 1 \right)} = 1 - \frac{ \log \left( 1 + 2 \zeta(p_0, \delta) \right) }{\log \left( \frac{12}{-\sigma_2+7} - 1 \right)} \\
    &= 1 - \gamma_4,
\end{align*}
where we denoted \[
    \gamma_4 = \frac{ \log \left( 1 + 2 \zeta(\frac{1}{12} (\sigma_2+5), \delta) \right) }{\log \left( \frac{12}{-\sigma_2+7} - 1 \right)}.
\]
Lastly, for $\phi_1$, notice that
\begin{align*}
    \frac{(\sigma_2+1)p_1}{(\sigma_2+1)(1-p_1)-1} &= \frac{(\sigma_2+1)(\sigma_2+5)}{12} \left( \frac{(\sigma_2+1)(-\sigma_2+7)}{12} - 1 \right)^{-1} \\
    &= \frac{(\sigma_2+1)(\sigma_2+5)}{(\sigma_2+1)(-\sigma_2+7) - 12} = \frac{(\sigma_2+1)(\sigma_2+5)}{(\sigma_2-1)(-\sigma_2+5)} \\
    &= \frac{3}{\sigma_2-1} + \frac{15}{-\sigma_2+5} - 1 \leq \frac{18}{\sigma_2-1} - 1.
\end{align*}
Therefore
\begin{align*}
    \phi_1 &= \frac{ \log \left( \frac{p_1}{1-p_1} \right) }{\log \left( \frac{(\sigma_2+1)p_1}{(\sigma_2+1)(1-p_1)-1} \right)} - \frac{ \log \left( \frac{1-p_1+p_1 \zeta(p_1, \delta)}{1-p_1} \right) }{\log \left( \frac{(\sigma_2+1)p_1}{(\sigma_2+1)(1-p_1)-1} \right)} \\
    &= \frac{ \log \left( \frac{\sigma_2+5}{-\sigma_2+7} \right) }{\log \left( \frac{18}{\sigma_2-1} - 1 \right)} - \frac{ \log \left( 1 + \frac{\sigma_2+5}{-\sigma_2+7} \zeta(p_1, \delta) \right) }{\log \left( \frac{18}{\sigma_2-1} - 1 \right)} \\
    &\geq \frac{ \log \left( \frac{12}{-\sigma_2+7} - 1 \right) }{\log \left( \frac{18}{\sigma_2-1} - 1 \right)} - \frac{ \log \left( 1 + 2 \zeta(p_1, \delta) \right) }{\log \left( \frac{18}{\sigma_2-1} - 1 \right)} \\
    &= 1 - \gamma_5 - \gamma_6
\end{align*}
where we denoted
\begin{align*}
    \gamma_5 &= 1 - \frac{ \log \left( \frac{12}{-\sigma_2+7} - 1 \right) }{\log \left( \frac{18}{\sigma_2-1} - 1 \right)} \\
    \gamma_6 &= \frac{ \log \left( 1 + 2 \zeta(\frac{1}{12}(\sigma_2+5), \delta) \right) }{\log \left( \frac{18}{\sigma_2-1} - 1 \right)}.
\end{align*}
Finally, we can plug our bounds of $b_0, b_1, \phi_0, \phi_1$ into \Eqref{eq:notricky_ub_const}:
\begin{align*}
    \beta(\vx_0) &\leq 4 m \left( \lambda_0(p_0, \delta) \right)^{-1} \left( \lambda_0(p_1, \delta) \right)^{-1} \left( \frac{32 \epsilon}{3 (\sigma_2-1) \sigma_1} \right)^{1-\gamma_4} \left( \frac{12 \sigma_1}{(\sigma_2-1) G} \right)^{1-\gamma_5-\gamma_6} \\
    &\Eqmark{i}{\leq} 512 m \left( \lambda_0(2/3, 0) \right)^{-1} \left( \lambda_0(2/3, 0) \right)^{-1} \left( \frac{\epsilon}{(\sigma_2-1) \sigma_1} \right)^{1-\gamma_4} \left( \frac{\sigma_1}{(\sigma_2-1) G} \right)^{1-\gamma_5-\gamma_6} \\
    &\Eqmark{ii}{\leq} 2048 m \left( \frac{\epsilon}{(\sigma_2-1) \sigma_1} \right)^{1-\gamma_4} \left( \frac{\sigma_1}{(\sigma_2-1) G} \right)^{1-\gamma_5-\gamma_6} \\
    &\leq 2048 m \frac{\epsilon^{1-\gamma_4}}{(\sigma_2-1)^{2-\gamma_4-\gamma_5-\gamma_6} G^{1-\gamma_5-\gamma_6} \sigma_1^{\gamma_5+\gamma_6-\gamma_4}} \\
    &\leq \frac{8192 \epsilon^{1-\gamma_4}}{(\sigma_2-1)^{2-\gamma_4-\gamma_5-\gamma_6} L_1 G^{1-\gamma_5-\gamma_6} \sigma_1^{\gamma_5+\gamma_6-\gamma_4}} \log \left( 1 + \frac{\Delta L_1^2}{L_0} \right) \\
    &= \frac{8192 \epsilon^{1-\gamma_4}}{(\sigma_2-1)^{2-\gamma_4-\gamma_5-\gamma_6} L_1 (\Delta L_1)^{1-\gamma_5-\gamma_6} \sigma_1^{\gamma_5+\gamma_6-\gamma_4}} \log \left( 1 + \frac{\Delta L_1^2}{L_0} \right) \left( 1 + 4 \log \left( 1 + \frac{\Delta L_1^2}{L_0} \right) \right),
\end{align*}
where $(i)$ uses the fact that $\lambda_0(p, \delta)$ is increasing in terms of $p$ and
decreasing in terms of $\delta$, and $(ii)$ uses $\lambda_0(p, 0) = \frac{1-p}{p}$ (from
Lemma \ref{lem:random_walk_balance}). As noted after \Eqref{eq:notricky_lr_ub_inter},
$\phi_0 > \phi_1$, so that $\gamma_4 < \gamma_5 + \gamma_6$. Replacing $\beta(\vx_0) =
\beta(\vg) = \alpha(\vg) \epsilon$ yields \[
    \alpha(\vg) \leq \frac{8192}{(\sigma_2-1)^{2-\gamma_4-\gamma_5-\gamma_6} \epsilon^{\gamma_4} L_1 (\Delta L_1)^{1-\gamma_5-\gamma_6} \sigma_1^{\gamma_5+\gamma_6-\gamma_4}} \log \left( 1 + \frac{\Delta L_1^2}{L_0} \right) \left( 1 + 4 \log \left( 1 + \frac{\Delta L_1^2}{L_0} \right) \right),
\]
which is the desired result.
\end{proof}

\begin{lemma} \label{lem:app_singlestep_linear_small_lr}
Define \[
    \alpha_0 = \max_{\|\vg\| = \epsilon} \alpha(\vg).
\]
There exists $f \in \mathcal{F}_{\textup{aff}}(\Delta, L_0, L_1, 0, 0)$ such that
$\|\nabla f(\vx_t)\| \geq \epsilon$ for all $t$ with \[
    t \leq \frac{\Delta}{2 \alpha_0 \epsilon^2}.
\]
\end{lemma}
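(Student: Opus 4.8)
The plan is to construct a one-dimensional obstruction embedded in $\mathbb{R}^d$ through the first coordinate, on which any single-step adaptive SGD can decrease the distance to the minimizer by at most $\alpha_0 \epsilon$ per step, forcing slow convergence. First I would recall the function $\psi$ from Lemma~\ref{lem:adagrad_norm_div}, set $m = \frac{1}{L_1}\log\left(1 + \frac{L_1 \epsilon}{L_0}\right)$ so that $\psi'(m) = \epsilon$, and define $f(\vx) = \bar{f}(\hat{P}_{\mathbf{e}_1}(\vx))$, where $\bar{f}$ equals $\psi$ on $[-m, m]$ and is extended affinely with slopes $\pm\epsilon$ outside $[-m,m]$ — exactly the construction used in Lemma~\ref{lem:app_adagrad_slow}. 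Since $\psi$ is bounded below by $0$, continuously differentiable, and $(L_0, L_1)$-smooth, the same holds for $f$; and because $\sigma_1 = \sigma_2 = 0$, the deterministic oracle $g(\vx, \xi) = \nabla f(\vx)$ trivially satisfies (Affine-Noise) with these parameters.

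Next I would check the initial-gap condition. Taking $\vx_0 = \left(m + \frac{\Delta}{2\epsilon}\right)\mathbf{e}_1$, which lies on the linear piece with $\nabla f(\vx_0) = \epsilon \mathbf{e}_1$, we get $f(\vx_0) - \inf_{\vx} f(\vx) = \epsilon \cdot \frac{\Delta}{2\epsilon} + \psi(m) = \frac{\Delta}{2} + \psi(m)$. Using $\psi(m) = \frac{\epsilon}{L_1} - \frac{L_0}{L_1^2}\log\left(1 + \frac{L_1\epsilon}{L_0}\right) \le \frac{\epsilon}{L_1}$ together with $\epsilon \le \frac{G}{2} \le \frac{\Delta L_1}{2}$ from the hypotheses of Theorem~\ref{thm:singlestep} (recall $G \le \Delta L_1$ from \Eqref{eq:G_def}), this yields $f(\vx_0) - \inf_{\vx} f(\vx) \le \frac{\Delta}{2} + \frac{\Delta}{2} = \Delta$. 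Hence $(f, g, \gD) \in \gF_{\textup{aff}}(\Delta, L_0, L_1, 0, 0)$.

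Then I would analyze the trajectory. The key observation is that while $\hat{P}_{\mathbf{e}_1}(\vx_t) \ge m$, the gradient is the \emph{constant} vector $\epsilon \mathbf{e}_1$, so the stepsize employed by $A_{\text{single}}$ is a fixed scalar $a := \alpha(\epsilon \mathbf{e}_1) \le \alpha_0$, and the update moves only the first coordinate. I would argue by induction: as long as $\hat{P}_{\mathbf{e}_1}(\vx_s) \ge m$ for all $s \le t$, unrolling gives $\hat{P}_{\mathbf{e}_1}(\vx_{t+1}) = m + \frac{\Delta}{2\epsilon} - (t+1) a \epsilon$. If $a \le 0$ this exceeds $m$ for every $t$; if $a > 0$ it is $\ge m$ precisely when $t+1 \le \frac{\Delta}{2 a \epsilon^2}$, which is implied by $t+1 \le \frac{\Delta}{2 \alpha_0 \epsilon^2}$ since $a \le \alpha_0$. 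Therefore for every $t \le \frac{\Delta}{2\alpha_0\epsilon^2}$ we have $\hat{P}_{\mathbf{e}_1}(\vx_t) \ge m$, so $\nabla f(\vx_t) = \epsilon \mathbf{e}_1$ and $\|\nabla f(\vx_t)\| = \epsilon$, giving the claim.

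The argument is essentially routine; the one point needing care is the observation just highlighted — that the stepsize is the \emph{same} value $\alpha(\epsilon\mathbf{e}_1)$ at every iterate inside the linear region, so a single comparison against $\alpha_0$ suffices and there is no way for the iterate to overshoot past $m$ into the curved region (where the gradient norm could dip below $\epsilon$) before the horizon $\frac{\Delta}{2\alpha_0\epsilon^2}$ has elapsed. I would also note the boundary case $t = \lfloor \Delta/(2\alpha_0\epsilon^2)\rfloor$, at which the first coordinate may equal exactly $m$; this is harmless since $\psi'(m) = \epsilon$, so $\|\nabla f(\vx_t)\|$ is still $\epsilon$ there.
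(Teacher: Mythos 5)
Your proposal is correct and follows essentially the same route as the paper: the same one-dimensional piecewise linear-plus-$\psi$ objective embedded along a single direction, a deterministic oracle, an initialization at distance $\Theta(\Delta/\epsilon)$ into the linear piece, and an unrolling of the constant update $\alpha(\vg)\vg$ to lower bound the escape time. The only (inessential) difference is that the paper aligns the hard instance with the direction $\vg$ that \emph{achieves} the maximum $\alpha_0$, so the per-step progress is exactly $\alpha_0\epsilon$, whereas you use $\mathbf{e}_1$ and then compare $\alpha(\epsilon\mathbf{e}_1)\le\alpha_0$ — both yield the claimed bound.
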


\begin{proof}
Denote $a = \frac{1}{L_1} \log \left( 1 + \frac{L_1 \epsilon}{L_0} \right)$, and let
$\vg \in \mathbb{R}^d$ such that $\|\vg\| = \epsilon$ and $\alpha(\vg) = \alpha_0$.
Define the objective $f: \mathbb{R}^d \rightarrow \mathbb{R}$ as follows:
\begin{equation*}
    f(\vx) = \begin{cases}
        -\epsilon \left( \hat{P}_{\vg}(\vx) + a \right) + \psi(a) & \hat{P}_{\vg}(\vx) \leq -a \\
        \psi(\hat{P}_{\vg}(\vx)) & \hat{P}_{\vg}(\vx) \in (-a, a) \\
        \epsilon \left( \hat{P}_{\vg}(\vx) - a \right) + \psi(a) & \hat{P}_{\vg}(\vx) \geq a
    \end{cases},
\end{equation*}
where $\psi$ is as defined in Lemma \ref{lem:adagrad_norm_div}. It is straightforward
to show that $f$ is continuously differentiable, bounded from below ($f_* = 0$), and
$(L_0, L_1)$-smooth. Also, with the initial point $\vx_0 = \left( a + \frac{\Delta -
\psi(a)}{\epsilon} \right) \frac{\vg}{\|\vg\|}$, $f$ satisfies $f(x_0) - f^* = \Delta$.
So $f \in \mathcal{F}_{\text{aff}}(\Delta, L_0, L_1, 0, 0).$

Consider the execution of $A$ on $f$ from $\vx_0 = \left( a + \frac{\Delta -
\psi(a)}{\epsilon} \right) \frac{\vg}{\|\vg\|}$, and let $t_0 = \max \{ t \geq 0 ~|~
P_{\vg}(\vx_s) \geq a \text{ for all } 0 \leq s \leq t \}$. Then $\nabla f(x_t) = \vg$
for any $t \leq t_0$, so that \[
    \vx_{t+1} = \vx_t - \alpha(\nabla f(\vx_t)) \nabla f(\vx_t) = \vx_t - \alpha(\vg) \vg = \vx_t - \alpha_0 \vg,
\]
so \[
    \hat{P}_{\vg}(\vx_{t+1}) = \left\langle \vx_{t+1}, \frac{\vg}{\|\vg\|} \right\rangle = \left\langle \vx_t - \alpha_0 \vg, \frac{\vg}{\|\vg\|} \right\rangle = \hat{P}_{\vg}(\vx_t) - \alpha_0 \|\vg\| = \hat{P}_{\vg}(\vx_t) - \alpha_0 \epsilon.
\]
Unrolling over $t$ yields \[
    \hat{P}_{\vg}(\vx_{t+1}) = \hat{P}_{\vg}(\vx_0) - (t+1) \alpha_0 \epsilon.
\]
In particular, choosing $t = t_0$ yields ${\hat{P}_{\vg}(\vx_{t_0+1}) \geq
\hat{P}_{\vg}(\vx_0) - (t_0 + 1) \alpha(\epsilon) \epsilon}.$ By the definition of
$t_0$, we also know $\hat{P}_{\vg}(\vx_{t_0+1}) < a$. Therefore $\hat{P}_{\vg}(\vx_0) -
(t_0 + 1) \alpha(\epsilon) \epsilon < a$, and rearranging yields \[
    t_0 + 1 > \frac{ \hat{P}_{\vg}(\vx_0) - a }{\alpha(\epsilon) \epsilon} = \frac{\Delta - \psi(a)}{\alpha(\epsilon) \epsilon^2}.
\]
Also, \[
    \psi(a) = \frac{\epsilon}{L_1} - \log \left( 1 + \frac{L_1 \epsilon}{L_0} \right) \leq \frac{\epsilon}{L_1} \leq \frac{\Delta L_1}{2 L_1} \leq \frac{\Delta}{2},
\]
where the last inequality uses the condition $\epsilon \leq \frac{\Delta L_1}{2}$ from
Theorem \ref{thm:singlestep}. So \[
    t_0 + 1 > \frac{\Delta}{2 \alpha(\epsilon) \epsilon^2}.
\]
Therefore, $t \leq \frac{\Delta}{2 \alpha(\epsilon) \epsilon^2}$ implies that $t < t_0 +
1$, so that $\hat{P}_{\vg}(\vx_t) \geq a$ by the definition of $t_0$, and finally
$\|\nabla f(\vx_t)\| = \epsilon$.
\end{proof}

The following lemma is nearly identical to parts of the proof of Theorem 2 in
\cite{drori2020complexity}, with some small modifications to fit our requirements. We
include it here for the sake of completeness.

\begin{lemma} \label{lem:app_singlestep_quad_small_lr}
For any sufficiently large $d \in \mathbb{N}$ and any $\alpha: \mathbb{R}^d \rightarrow
\mathbb{R}^d$, there exists some $(f, g, \Xi) \in \gF_{\textup{aff}}(\Delta, L_0, L_1,
\sigma_1, \sigma_2)$ such that $\|\nabla f(\vx_t)\| = \epsilon$ for all $0 \leq t \leq
T$, where \[
    T = \frac{\Delta L_0 \sigma_1^2}{2 \epsilon^4}.
\]
\end{lemma}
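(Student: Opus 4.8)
The plan is to mimic the $\Delta L \sigma^2 \epsilon^{-4}$ lower-bound construction of \cite{drori2020complexity} (Theorem~2), which also underlies Lemma~\ref{lem:app_adagrad_norm_slow}, adapted to the fact that the step size of single-step adaptive SGD is an arbitrary scalar function $\alpha$ of the current stochastic gradient. I will take the objective
\[
    f(\vx) = \epsilon \langle \vx, \mathbf{e}_1 \rangle + \sum_{i=2}^{T+1} h_i(\langle \vx, \mathbf{e}_i \rangle),
\]
where each $h_i$ is a (possibly asymmetric) quadratic ``bump'': $h_i(x) = \tfrac{L_0}{2} x^2$ near $0$, transitioning smoothly (as in Lemma~\ref{lem:app_adagrad_norm_slow}) to a constant value $\tfrac{L_0}{4}(a_i^R)^2$ for $x \geq a_i^R$ and $\tfrac{L_0}{4}(a_i^L)^2$ for $x \leq -a_i^L$, so that $h_i$ is $L_0$-smooth (hence $f$ is $L_0$-smooth, hence $(L_0,L_1)$-smooth), $h_i \geq 0$, and $h_i'(0)=0$. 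The stochastic oracle will be $F(\vx,\xi) = \nabla f(\vx) + (2\xi-1)\sigma_1 \mathbf{e}_{j(\vx)}$ with $\xi \sim \mathrm{Bernoulli}(1/2)$ and $j(\vx)$ the least index $i\geq 2$ with $\langle\vx,\mathbf{e}_i\rangle = 0$; this is unbiased and satisfies $\|F-\nabla f\| = \sigma_1 \leq \sigma_1 + \sigma_2\|\nabla f\|$, so (Affine-Noise) holds.

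The adaptation relative to Lemma~\ref{lem:app_adagrad_norm_slow} is the calibration of the thresholds $a_i^L, a_i^R$. Since the adversary knows $\alpha$, it can evaluate $\alpha$ at the two vectors $\vg_i^{\pm} := \epsilon\mathbf{e}_1 \pm \sigma_1\mathbf{e}_i$ that can possibly be queried when coordinate $i$ is first touched. I will choose $a_i^L, a_i^R$ so that the two possible post-update positions of coordinate $i$, namely $-\alpha(\vg_i^{+})\sigma_1$ and $+\alpha(\vg_i^{-})\sigma_1$, each land exactly at (or beyond) the start of the appropriate flat piece of $h_i$; in the generic case $\alpha(\vg_i^{\pm}) > 0$ this means $a_i^L = \alpha(\vg_i^{+})\sigma_1$ and $a_i^R = \alpha(\vg_i^{-})\sigma_1$, and the degenerate cases (non-positive $\alpha$, same-sign landings, or $\alpha(\vg_i^{\pm})=0$ which makes the trajectory stall) are handled by the analogous or simpler choices. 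With this calibration, an induction on $t$ (as in Lemma~\ref{lem:app_adagrad_norm_slow}) shows that $\vx_0 = \vzero$ produces a trajectory with $\nabla f(\vx_t) = \epsilon\mathbf{e}_1$ for every $t$ — coordinate $1$ has constant gradient $\epsilon$, the coordinate used at step $s$ jumps exactly onto a flat piece of $h_{s+2}$, and all untouched coordinates stay $0$ — so $\|\nabla f(\vx_t)\| = \epsilon$ for all $0\leq t\leq T$ (taking $d\geq T+2$). The crucial consequence of the calibration is that, in either branch of the realized noise at step $s$, the value of $h_{s+2}$ at the landing point equals $\tfrac{L_0}{4}\alpha(\vg_s)^2\sigma_1^2$, irrespective of the sign that occurred.

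It then remains to certify the initial optimality gap. Along the trajectory,
\[
    f(\vx_t) = -\epsilon^2 \sum_{s=0}^{t-1} \alpha(\vg_s) + \frac{L_0\sigma_1^2}{4}\sum_{s=0}^{t-1}\alpha(\vg_s)^2 = \sum_{s=0}^{t-1} p(\alpha(\vg_s)), \qquad p(a) := \frac{L_0\sigma_1^2}{4}a^2 - \epsilon^2 a,
\]
where negative $\alpha(\vg_s)$ only makes $f(\vx_t)$ larger, so it suffices to treat $\alpha(\vg_s)\geq 0$. Since $p(a) \geq p\!\left(\tfrac{2\epsilon^2}{L_0\sigma_1^2}\right) = -\tfrac{\epsilon^4}{L_0\sigma_1^2}$ for all $a\geq 0$, we get $\min_{0\leq t\leq T} f(\vx_t) \geq -T\tfrac{\epsilon^4}{L_0\sigma_1^2} = -\tfrac{\Delta}{2}$ for $T = \tfrac{\Delta L_0\sigma_1^2}{2\epsilon^4}$. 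Because $f$ is unbounded below, I will invoke Lemma~1 of \cite{drori2020complexity} to replace $f$ by an $L_0$-smooth, bounded-below $\hat f$ that agrees with $f$ in value and gradient at $\vx_0,\dots,\vx_T$ (so the trajectory of $A$ on $(\hat f, \hat F, \gD)$, with $\hat F$ using $\nabla\hat f$ in place of $\nabla f$, is unchanged) and satisfies $\inf\hat f \geq \min_{0\leq t\leq T} f(\vx_t) - \tfrac{3\epsilon^2}{2L_0}$; together with $f(\vx_0)=0$ and the assumed bound $\epsilon \leq \mathcal{O}(\sqrt{\Delta L_0})$, this yields $\hat f(\vx_0) - \inf\hat f \leq \Delta$, so $(\hat f, \hat F, \gD) \in \gF_{\textup{aff}}(\Delta, L_0, L_1, \sigma_1, \sigma_2)$. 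The main obstacle is the second paragraph: calibrating the bump thresholds so that the landing value is exactly $\tfrac{L_0}{4}\alpha(\vg_s)^2\sigma_1^2$ independently of the realized noise sign — this is precisely what makes the budget bound $\min_t f(\vx_t)\geq -\Delta/2$ hold simultaneously for every step-size function $\alpha$ — together with a careful treatment of the degenerate sign cases.
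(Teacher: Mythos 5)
Your proposal is correct and follows essentially the same route as the paper's proof: the same linear-plus-bumps objective with per-coordinate thresholds calibrated to $\sigma_1\alpha(\epsilon\mathbf{e}_1\pm\sigma_1\mathbf{e}_i)$, the same Bernoulli noise oracle touching one fresh coordinate per step, the same induction giving $\nabla f(\vx_t)=\epsilon\mathbf{e}_1$, and the same quadratic bound $\epsilon^2 a - \tfrac{1}{4}L_0\sigma_1^2 a^2 \le \epsilon^4/(L_0\sigma_1^2)$ combined with Lemma~1 of \cite{drori2020complexity} to certify the optimality gap. The only cosmetic difference is that you explicitly flag the degenerate sign cases of $\alpha$, which the paper leaves implicit (they are excluded anyway when the lemma is invoked, since Lemma~\ref{lem:app_singlestep_large_lr} has already disposed of non-positive step sizes).
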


\begin{proof}
Suppose $d \geq T$. Let $\alpha: \mathbb{R}^d \rightarrow \mathbb{R}^d$ and define $f:
\mathbb{R}^d \rightarrow \mathbb{R}$ as \[
    f(\vx) = \epsilon \langle \vx, \mathbf{e}_1 \rangle + \sum_{i=2}^T h_i(\langle \vx, \mathbf{e}_i \rangle),
\]
where
\begin{align*}
    h_i(x) &= \begin{cases}
        \frac{L_0}{4} a_i^2 & |x| < -a_i \\
        -\frac{L_0}{2} (x + a_i)^2 + \frac{L_0}{4} a_i^2 & |x| \in \left[ -a_i, -\frac{a_i}{2} \right] \\
        \frac{L_0}{2} x^2 & |x| \in \left( -\frac{a_i}{2}, \frac{b_i}{2} \right) \\
        -\frac{L_0}{2} (x - b_i)^2 + \frac{L_0}{4} b_i^2 & |x| \in \left[ \frac{b_i}{2}, b_i \right] \\
        \frac{L_0}{4} b_i^2 & |x| > b_i
    \end{cases} \\
    a_i &= \sigma_1 \alpha(\epsilon \mathbf{e}_1 + \sigma_1 \mathbf{e}_i) \\
    b_i &= \sigma_1 \alpha(\epsilon \mathbf{e}_1 - \sigma_1 \mathbf{e}_i).
\end{align*}
For any $\vx, \vy \in \mathbb{R}^d$,
\begin{align*}
    \|\nabla f(\vx) - \nabla f(\vy)\|^2 &= \sum_{i=1}^d \left( \nabla_i f(\vx) - \nabla_i f(\vy) \right)^2 \\
    &= \sum_{i=2}^d (h'(x_i) - h'(y_i))^2 \\
    &\Eqmark{i}{\leq} L_0^2 \sum_{i=2}^d (x_i - y_i)^2 \\
    &\leq L_0^2 \|\vx - \vy\|^2,
\end{align*}
where $(i)$ uses the fact that $h$ is $L_0$-smooth. Therefore $f$ is $L_0$-smooth, and
consequently is $(L_0, L_1)$-smooth. Also, define the following stochastic gradient for
$f$: \[
    F(\vx, \xi) = \nabla f(\vx) + (2 \xi - 1) \sigma_1 \mathbf{e}_{j(\vx)},
\]
where \[
    j(\vx) = \min \left\{ 1 \leq i \leq d \;|\; \langle \vx, \mathbf{e}_i \rangle = 0 \right\}.
\]
This oracle is defined so that the stochastic gradient noise at step $t$ only affects
coordinate $t+1$ (this will be shown later). Let $\gD$ be the distribution of $\xi$,
defined as $P(\xi = 0) = P(\xi = 1) = \frac{1}{2}$. With this definition, the stochastic
gradient $F$ satisfies
\begin{align*}
    \mathbb{E} [F(\vx, \xi)] &= \nabla f(\vx) \\
    \|F(\vx, \xi) - \nabla f(\vx)\| &\leq \sigma_1 \quad \text{(almost surely)}.
\end{align*}
Therefore, all of the conditions for $(f, F, \gD) \in \gF_{\text{aff}}(\Delta, L_0, L_1,
\sigma_1, \sigma_2)$ are satisfied other than the condition that $f$ is bounded from
below and $f(\vx_0) - \inf_{\vx} f(\vx) \leq \Delta$. This condition will be addressed
at the end of this lemma's proof.

Now consider the trajectory of $A$ on $f$ from the initial point $\vx_0 = \mathbf{0}$.
We claim that for all $0 \leq t \leq T$:
\begin{align}
    \langle \vx_t, \mathbf{e}_1 \rangle &= -\epsilon \sum_{i=0}^{t-1} \alpha(F(\vx_i, \xi_i)) \label{eq:singlestep_inductive_1} \\
    \langle \vx_t, \mathbf{e}_j \rangle &= \begin{cases}
        -a_j & \xi_j = 1 \\
        b_j & \xi_j = 0
    \end{cases} \quad \text{ for all } 2 \leq j \leq t+1 \label{eq:singlestep_inductive_2} \\
    \langle \vx_t, \mathbf{e}_j \rangle &= 0 \quad \text{ for all } j > t+1, \label{eq:singlestep_inductive_3}
\end{align}
which we will prove by induction on $t$. By construction, all three of the above hold
for the base case $t=0$. Now, suppose that they hold for some $0 \leq t \leq T-1$. Then
for $j \geq 2$, \[
    \nabla_j f(\vx_t) = h_j'(\langle \vx, \mathbf{e}_j \rangle) \Eqmark{i}{=} \begin{cases}
        h_j'(-a_j) & j \leq t+1 \text{ and } \xi_j = 1 \\
        h_j'(b_j) & j \leq t+1 \text{ and } \xi_j = 0 \\
        h_j'(0) & j > t + 1
    \end{cases} \Eqmark{ii}{=} 0,
\]
where $(i)$ uses \Eqref{eq:singlestep_inductive_2} and \Eqref{eq:singlestep_inductive_3}
from the induction hypothesis, and $(ii)$ comes from the definition of $h$. Therefore
$\nabla f(\vx_t) = \epsilon \mathbf{e}_1$. Also, \Eqref{eq:singlestep_inductive_2} and
\Eqref{eq:singlestep_inductive_3} imply that $j(\vx_t) = t+2$, so
\begin{align*}
    F(\vx_t, \xi_t) = \nabla f(\vx_t) + (2\xi_t - 1) \sigma_1 \mathbf{e}_{t+2}.
\end{align*}
Therefore, the next iterate $\vx_{t+1}$ is:
\begin{align*}
    \vx_{t+1} &= \vx_t - \alpha(F(\vx_t, \xi_t)) F(\vx_t, \xi_t) \\
    &= \vx_t - \alpha(F(\vx_t, \xi_t)) \left( \epsilon \mathbf{e}_1 + (2 \xi_t - 1) \sigma_1 \mathbf{e}_{t+2} \right) \\
    &\Eqmark{i}= -\epsilon \left( \sum_{i=0}^{t-1} \alpha(F(\vx_i, \xi_i)) \right) \mathbf{e}_1 + \sum_{i=2}^{t+1} \langle \vx_t, \mathbf{e}_i \rangle \mathbf{e}_i - \alpha(F(\vx_t, \xi_t)) \left( \epsilon \mathbf{e}_1 + (2 \xi_t - 1) \sigma_1 \mathbf{e}_{t+2} \right) \\
    &= -\epsilon \left( \sum_{i=0}^t \alpha(F(\vx_i, \xi_i)) \right) \mathbf{e}_1 + \sum_{i=2}^{t+1} \langle \vx_t, \mathbf{e}_i \rangle \mathbf{e}_i - (2 \xi_t - 1) \alpha(F(\vx_t, \xi_t)) \sigma_1 \mathbf{e}_{t+2},
\end{align*}
where $(i)$ uses \Eqref{eq:singlestep_inductive_1} from the inductive hypothesis. Notice
that the last term (i.e. the coefficient of $\mathbf{e}_{t+2}$) equals $-a_i$ when
$\xi_t = 1$ and it equals $b_i$ when $\xi_t = 0$. This proves
\Eqref{eq:singlestep_inductive_1}, \Eqref{eq:singlestep_inductive_2}, and
\Eqref{eq:singlestep_inductive_3} for step $t+1$. This completes the induction.
Together, these three equations imply that $\|\nabla f(\vx_t)\| = \epsilon$ for all $t
\leq T$, which is the desired conclusion.

The only remaining detail is the satisfaction of the condition $f(\vx_0) - \inf_{\vx}
f(\vx) \leq \Delta$. As currently stated, the objective $f$ does not satisfy this
condition because it is not even bounded from below due to the linear term $\epsilon
\langle \vx, \mathbf{e}_1 \rangle$. Similarly to \cite{drori2020complexity}, we instead
argue that there exists a lower bounded function $\hat{f}$ that has the same first-order
information as $f$ at all of the points $\vx_t$ for $0 \leq t \leq T$. If this happens,
then the behavior of $A$ when optimizing $\hat{f}$ is the same as that of $A$ when
optimizing $f$, so the conclusion $\|\nabla \hat{f}(\vx_t)\| = \epsilon$ still holds.
Specifically, we need $\hat{f}$ which is lower bounded and that satisfies: \[
    \nabla \hat{f}(\vx_t) = \nabla f(\vx_t), \quad \hat{f}(\vx_t) = f(\vx_t)
\]
for all $0 \leq t \leq T$. The existence of such an $\hat{f}$ follows immediately from
Lemma 1 of \cite{drori2020complexity}, and this $\hat{f}$ satisfies \[
    \inf_{\vx} \hat{f}(\vx) \geq \min_{0 \leq t \leq T} f(\vx_t) - \frac{3 \epsilon^2}{2L_0}.
\]
Therefore
\begin{align}
    \hat{f}(\vx_0) - \inf_{\vx} \hat{f}(\vx) &\leq \frac{3 \epsilon^2}{2L_0} - \min_{0 \leq t \leq T} f(\vx_t) \nonumber \\
    &\leq \frac{3 \epsilon^2}{2L_0} + \max_{0 \leq t \leq T} \left( -\epsilon \langle \vx_t, \mathbf{e}_1 \rangle - \sum_{i=2}^T h_i(\langle \vx_t, \mathbf{e}_i \rangle) \right) \nonumber \\
    &= \frac{3 \epsilon^2}{2L_0} + \max_{0 \leq t \leq T} \left( \epsilon^2 \sum_{i=0}^{t-1} \alpha(F(\vx_i, \xi_i)) - \sum_{i=2}^{t+1} h_i(\langle \vx_t, \mathbf{e}_i \rangle) \right). \label{eq:singlestep_shamir_inter}
\end{align}
Denote $\alpha_t = \alpha(F(\vx_t, \xi_t))$. Then for $2 \leq i \leq t+1$, \[
    h_i(\langle \vx_t, \mathbf{e}_i \rangle) = \frac{1}{4} L_0 \sigma_1^2 \alpha_{i-2}^2.
\]
Plugging into \Eqref{eq:singlestep_shamir_inter}:
\begin{align*}
    \hat{f}(\vx_0) - \inf_{\vx} \hat{f}(\vx) &\leq \frac{3 \epsilon^2}{2L_0} + \max_{0 \leq t \leq T} \left( \epsilon^2 \sum_{i=0}^{t-1} \alpha_i - \frac{1}{4} L_0 \sigma_1^2 \sum_{i=2}^{t+1} \alpha_{i-2}^2 \right) \\
    &\leq \frac{3 \epsilon^2}{2L_0} + \max_{0 \leq t \leq T} \sum_{i=0}^{t-1} \left( \underbrace{\epsilon^2 \alpha_i - \frac{1}{4} L_0 \sigma_1^2 \alpha_i^2}_{Q_i} \right)
\end{align*}
$Q_i$ can be upper bounded by the maximum value of $\epsilon^2 x - \frac{1}{4} L_0
\sigma_1^2 x^2$ as a function of $x$, which is $\frac{\epsilon^4}{L_0 \sigma_1^2}$.
Therefore
\begin{align*}
    \hat{f}(\vx_0) - \inf_{\vx} &\leq \frac{3 \epsilon^2}{2L_0} + \max_{0 \leq t \leq T} \sum_{i=0}^{t-1} \frac{\epsilon^4}{L_0 \sigma_1^2} \\
    &= \frac{3 \epsilon^2}{2L_0} + \frac{T \epsilon^4}{L_0 \sigma_1^2} \\
    &\Eqmark{i}{\leq} \frac{3 \epsilon^2}{2L_0} + \frac{\Delta}{2} \\
    &\Eqmark{ii}{\leq} \Delta,
\end{align*}
where $(i)$ uses \(T \leq \frac{\Delta L_0 \sigma_1^2}{2 \epsilon^4}\) and $(ii)$ uses
$\epsilon \leq \sqrt{\Delta L_0/3}$.
\end{proof}

\begin{theorem} \label{thm:app_singlestep}[Restatement of Theorem \ref{thm:singlestep}]
Let $\Delta, L_0, L_1, \sigma_1 > 0$ and $\sigma_2 > 1$. Denote \[
    G = \frac{\Delta L_1}{1 + 4 \log \left( 1 + \frac{\Delta L_1^2}{L_0} \right)},
\]
and suppose $G \geq \sigma_1$. Let \(
    0 < \epsilon \leq \min \left\{ \sigma_1, \frac{G}{2}, \frac{G - \sigma_1}{\sigma_2 - 1} \right\}.
\)
Let algorithm $A_{\text{ada}}$ denote single-step adaptive SGD with any step size
function $\alpha: \mathbb{R}^d \rightarrow \mathbb{R}$ for sufficiently large $d$, and
let $\gF = \gF_{\textup{aff}}(\Delta, L_0, L_1, \sigma_1, \sigma_2)$. If $\sigma_2 \geq
3$, then \[
    \gT(A_{\text{ada}}, \gF, \epsilon, \delta) \geq \tilde{\Omega} \left( \frac{(\Delta L_1)^{2 - \gamma_2 - \gamma_3} \sigma_1^{\gamma_2 + \gamma_3 - \gamma_1}}{\epsilon^{2-\gamma_1}} \right).
\]
Otherwise, if $\sigma_2 \in (1, 3)$, then \[
    \gT(A_{\text{ada}}, \gF, \epsilon, \delta) \geq \tilde{\Omega} \left( \frac{(\Delta L_1)^{2-\gamma_5-\gamma_6} \sigma_1^{\gamma_5+\gamma_6-\gamma_4} }{\epsilon^{2-\gamma_4}} (\sigma_2-1)^{2+\gamma_4-\gamma_5-\gamma_6} \right).
\]
\end{theorem}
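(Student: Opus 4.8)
The plan is to assemble the theorem from the hard instances constructed in Lemmas~\ref{lem:app_singlestep_large_lr}--\ref{lem:app_singlestep_linear_small_lr} (together with Lemma~\ref{lem:app_singlestep_quad_small_lr}) by a three-way case split on the step-size function $\alpha$, exactly along the lines of the proof outline in Section~\ref{sec:singlestep_sketch}. Since $\gT$ is a supremum over problem instances, it suffices in every case to exhibit one instance in $\gF = \gF_{\textup{aff}}(\Delta,L_0,L_1,\sigma_1,\sigma_2)$ that defeats $A_{\text{single}}$ for the claimed number of steps (in the sense of the high-probability complexity). I note also that Lemma~\ref{lem:app_singlestep_quad_small_lr} holds for \emph{every} $\alpha$ and produces, unconditionally, an instance on which $\|\nabla f(\vx_t)\| = \epsilon$ for all $t \le \Delta L_0\sigma_1^2/(2\epsilon^4)$, so the term $\Delta L_0 \sigma_1^2 \epsilon^{-4}$ may always be added; the real work is to recover the second, $(\Delta L_1)$-dependent term in each $\sigma_2$-regime.

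First I would handle the ``large step size'' regime: if there is some $\vg$ with $\|\vg\| \in [\epsilon,\ \sigma_1 + (\sigma_2+1)\Delta L_1]$ such that $\alpha(\vg) \le 0$, or $\alpha(\vg) \ge \frac{4}{L_1\|\vg\|}\log\!\big(1 + \tfrac{L_1\min(\|\vg\|,\Delta L_1)}{L_0}\big)$, then Lemma~\ref{lem:app_singlestep_large_lr} supplies $(f_{\textup{exp}}, g_{\textup{exp}}, \gD_{\textup{exp}}) \in \gF$ on which $\|\nabla f_{\textup{exp}}(\vx_t)\| \ge \epsilon$ for all $t \ge 0$, so $\gT(A_{\text{single}},\gF,\epsilon,\delta) = \infty$ and the claim is vacuous. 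Otherwise $0 < \alpha(\vg) < \frac{4}{L_1\|\vg\|}\log\!\big(1 + \tfrac{L_1\min(\|\vg\|,\Delta L_1)}{L_0}\big)$ on that entire band, which is precisely the hypothesis shared by Lemmas~\ref{lem:app_singlestep_tricky_diverge} and~\ref{lem:app_singlestep_notricky_lr_ub}. Within this regime I split on whether a $(p,\delta)$-tricky pair (Definition~\ref{def:tricky_pair}) exists: if it does, Lemma~\ref{lem:app_singlestep_tricky_diverge} builds a piecewise-linear instance whose stochastic gradient equals $\vg_1$ with probability $p$ and $\vg_2$ with probability $1-p$, on which the iterates follow a biased random walk that never reaches an $\epsilon$-stationary point with probability at least $\delta$; hence $\Pr(\min_{s<t}\|\nabla f(\vx_s)\| < \epsilon) \le 1-\delta$ for every finite $t$, this is never strictly greater than $1-\delta$, and again $\gT = \infty$.

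It then remains to treat the case where $\alpha$ is bounded as above and no $(p,\delta)$-tricky pair exists. Here Lemma~\ref{lem:app_singlestep_notricky_lr_ub} gives, for every $\vg$ with $\|\vg\| = \epsilon$, the ceiling
\[
\alpha(\vg) \le \tilde{\gO}\!\left( \frac{1}{L_1 (\Delta L_1)^{1-\gamma_2-\gamma_3}\epsilon^{\gamma_1}\sigma_1^{\gamma_2+\gamma_3-\gamma_1}} \right) \quad (\sigma_2 \ge 3),
\]
and the analogous bound with $\gamma_4,\gamma_5,\gamma_6$ when $\sigma_2 \in (1,3)$; in particular $\alpha_0 := \max_{\|\vg\|=\epsilon}\alpha(\vg)$ obeys the same bound. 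Plugging $\alpha_0$ into Lemma~\ref{lem:app_singlestep_linear_small_lr} produces $f \in \gF_{\textup{aff}}(\Delta,L_0,L_1,0,0) \subseteq \gF$ (deterministic gradients) on which $\|\nabla f(\vx_t)\| \ge \epsilon$ for all $t \le \Delta/(2\alpha_0\epsilon^2)$; substituting the $\alpha_0$-bound and collecting exponents gives $t$ at least $\tilde{\Omega}\big((\Delta L_1)^{2-\gamma_2-\gamma_3}\sigma_1^{\gamma_2+\gamma_3-\gamma_1}\epsilon^{-(2-\gamma_1)}\big)$ when $\sigma_2 \ge 3$, and $\tilde{\Omega}\big((\Delta L_1)^{2-\gamma_5-\gamma_6}\sigma_1^{\gamma_5+\gamma_6-\gamma_4}\epsilon^{-(2-\gamma_4)}(\sigma_2-1)^{2+\gamma_4-\gamma_5-\gamma_6}\big)$ when $\sigma_2 \in (1,3)$. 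Taking, in each $\sigma_2$-regime, the maximum over the three cases (and over the unconditional $\Delta L_0\sigma_1^2\epsilon^{-4}$ instance) and writing the maximum of two nonnegative quantities as their sum up to a factor of $2$ yields the stated bound.

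The assembly above is essentially bookkeeping once the lemmas are granted; the step I expect to be the genuine obstacle is Lemma~\ref{lem:app_singlestep_notricky_lr_ub}, which must convert the qualitative statement ``no tricky pair exists'' into the quantitative ceiling on $\alpha$. The delicate ingredients there are: (i) constructing the two auxiliary geometric-type sequences $\{\vx_i\}$ (alternating-sign iterates with norm in $[\epsilon,\ \sigma_1 + (\sigma_2-1)\epsilon]$) and $\{\vy_i\}$ (norm in the wider band up to $G$), and checking that consecutive pairs satisfy all of~\eqref{eq:tricky_opposite}--\eqref{eq:tricky_g2_max} so that only failure of~\eqref{eq:tricky_lr} can stop them being tricky pairs; (ii) lower bounding the chain lengths $k_0,k_1$ by the exponential-sequence estimate (Lemma~\ref{lem:exp_seq_ub}) and telescoping the per-step inequality $\beta(\vx_i) \le \lambda_0(p,\delta)\,\beta(\vx_{i+1})$ down the chains, which introduces exponents $\phi_0 > \phi_1$; and (iii) choosing $p_0 = p_1 = 2/3$ (for $\sigma_2 \ge 3$) and $p_0 = p_1 = (\sigma_2+5)/12$ (for $\sigma_2 \in (1,3)$) and using the identity $\lambda_0(p,\delta) = \tfrac{1-p}{p} + \zeta(p,\delta)$ from Lemma~\ref{lem:random_walk_div}, together with $\lambda_0(p,0) = \tfrac{1-p}{p}$ (Lemma~\ref{lem:random_walk_balance}), so that $\phi_0,\phi_1$ collapse to $1-\gamma_1$, $1-\gamma_2-\gamma_3$, $1-\gamma_4$, $1-\gamma_5-\gamma_6$. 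A secondary subtlety is the random-walk divergence bound behind Lemma~\ref{lem:app_singlestep_tricky_diverge} ($z_{p,\lambda} \le 1-\delta$ once $\lambda \ge \lambda_0(p,\delta)$), which forces $p \in (\tfrac12, \tfrac{\sigma_2}{\sigma_2+1})$ and is exactly what makes the $\sigma_2 \in (1,3)$ bound degrade to $0$ as $\sigma_2 \to 1$.
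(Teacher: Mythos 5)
Your proposal is correct and follows essentially the same route as the paper's own proof: the identical three-way case split on $\alpha$ (large/negative step size via Lemma~\ref{lem:app_singlestep_large_lr}, existence of a tricky pair via Lemma~\ref{lem:app_singlestep_tricky_diverge}, and otherwise the ceiling on $\alpha_0$ from Lemma~\ref{lem:app_singlestep_notricky_lr_ub} fed into Lemma~\ref{lem:app_singlestep_linear_small_lr}), with Lemma~\ref{lem:app_singlestep_quad_small_lr} supplying the unconditional $\Delta L_0\sigma_1^2\epsilon^{-4}$ term. Your identification of Lemma~\ref{lem:app_singlestep_notricky_lr_ub} as the genuinely hard step, and your sketch of its internal chain-telescoping argument, also match the paper.
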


\begin{proof}[Proof of Theorem \ref{thm:singlestep}]
We only need to combine Lemmas \ref{lem:app_singlestep_large_lr},
\ref{lem:app_singlestep_tricky_diverge}, \ref{lem:app_singlestep_notricky_lr_ub},
\ref{lem:app_singlestep_linear_small_lr} and \ref{lem:app_singlestep_quad_small_lr}. If
there exists any $\vg \in \mathbb{R}^d$ such that $\|\vg\| \in [\epsilon, \sigma_1 +
(\sigma_2 + 1) M]$ and \[
    \alpha(\vg) \leq 0 \quad \text{or} \quad \alpha(\vg) \geq \frac{4}{L_1 \|\vg\|} \log \left( 1 + \frac{L_1 \min(\|\vg\|, M)}{L_0} \right),
\]
then there exists some problem instance $(f, F, \gD)$ such that $\|\nabla f(\vx_t)\|
\geq \epsilon$ for all $t \geq 0$ (Lemma \ref{lem:app_singlestep_large_lr}). If no such
$\vg$ exists, and there exist any tricky pairs with respect to the stepsize function
$\alpha$, then there exists some problem instance $(f, F, \gD)$ such that $\|\nabla
f(\vx_t)\| \geq \epsilon$ for all $t \geq 0$ with probability at least $\delta$ (Lemma
\ref{lem:app_singlestep_tricky_diverge}). Suppose neither of these cases hold. Lemma
\ref{lem:app_singlestep_linear_small_lr} implies that there exists a problem instance
$(f, F, \gD)$ such that $\|\nabla f(\vx_t)\| \geq \epsilon$ for all $t$ with \[
    T \leq \frac{\Delta}{4 \alpha_0 \epsilon^2}.
\]
Since neither of the above cases hold, the conditions of Lemma
\ref{lem:app_singlestep_notricky_lr_ub} hold, so we can bound $\alpha_0$ with two cases.
If $\sigma_2 \geq 3$, then \[
    \alpha_0 \leq \frac{3072}{ L_1 (\Delta L_1)^{1- \gamma_2 - \gamma_3} \epsilon^{\gamma_1} \sigma_1^{\gamma_2 + \gamma_3 - \gamma_1}} \log \left( 1 + \frac{\Delta L_1^2}{L_0} \right) \left( 1 + 4 \log \left( 1 + \frac{\Delta L_1^2}{L_0} \right) \right),
\]
so $\|\nabla f(\vx_t)\| \geq \epsilon$ for all $t$ with
\begin{equation} \label{eq:singlestep_conclusion_1}
    t \leq \tilde{O} \left( \frac{\Delta L_0 \sigma_1^2}{\epsilon^4} + \frac{(\Delta L_1)^{2- \gamma_2 - \gamma_3} \sigma_1^{\gamma_2 + \gamma_3 - \gamma_1}}{\epsilon^{2-\gamma_1} } \right).
\end{equation}
If $\sigma_2 \in (1, 3)$, then \[
    \alpha_0 \leq \frac{8192}{(\sigma_2-1)^{2-\gamma_4-\gamma_5-\gamma_6} \epsilon^{\gamma_4} L_1 (\Delta L_1)^{1-\gamma_5-\gamma_6} \sigma_1^{\gamma_5+\gamma_6-\gamma_4}} \log \left( 1 + \frac{\Delta L_1^2}{L_0} \right) \left( 1 + 4 \log \left( 1 + \frac{\Delta L_1^2}{L_0} \right) \right),
\]
so $\|\nabla f(\vx_t)\| \geq \epsilon$ for all $t$ with
\begin{equation} \label{eq:singlestep_conclusion_2}
    t \leq \tilde{O} \left( \frac{\Delta L_0 \sigma_1^2}{\epsilon^4} + \frac{(\Delta L_1)^{2-\gamma_5-\gamma_6}}{\epsilon^{2-\gamma_4}} (\sigma_2-1)^2 \left( \frac{\sigma_1}{\sigma_2-1} \right)^{\gamma_5+\gamma_6-\gamma_4} \right).
\end{equation}
Lemma \ref{lem:app_singlestep_quad_small_lr} implies that \[
    \gT(A_{\text{ada}}, \gF, \epsilon, \delta) \geq \frac{\Delta L_0 \sigma_1^2}{2 \epsilon^4},
\]
and this can be combined with \Eqref{eq:singlestep_conclusion_1} and
\Eqref{eq:singlestep_conclusion_2} to obtain the two conclusions of Theorem
\ref{thm:singlestep}.
\end{proof}

\section{Auxiliary Lemmas}
Lemmas \ref{lem:random_walk_balance} and \ref{lem:random_walk_div} deal with the
asymmetric random walk described in the proof of Theorem \ref{thm:singlestep}. We
restate the associated definitions below.

For $p \in (0, 1)$ and $\lambda > 0$, consider the random walk parameterized by $(p, \lambda)$:
\begin{align}
    X_0 &= 1 \nonumber \\
    P(X_{t+1} = X_t + \lambda) &= p \label{eq:random_walk_def} \\
    P(X_{t+1} = X_t - 1) &= 1-p. \nonumber
\end{align}
Define
\begin{align*}
    z_{p,\lambda} &= P(\exists t > 0: X_t \leq 0) \\
    \lambda_0(p, \delta) &= \inf \left\{ \lambda \geq 0 : z_{p,\lambda} \leq 1-\delta \right\} \\
    \zeta(p, \delta) &= \lambda_0(p, \delta) - \lambda_0(p, 0).
\end{align*}
Informally, $z_{p,\lambda}$ is the probability that the random walk reaches a
non-positive value, and $\lambda_0(p, \delta)$ is the smallest $\lambda$ required to
ensure that the chance of never reaching a non-positive value is at least $\delta$.

\begin{lemma} \label{lem:random_walk_balance}
Let $X_t$ be as defined in \Eqref{eq:random_walk_def}. Then $\lambda_0(p,
0)=\frac{1-p}{p}$.
\end{lemma}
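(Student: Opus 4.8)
The plan is to recognize $(X_t)$ as a shifted i.i.d.\ random walk: write $X_t = 1 + S_t$ with $S_t = \sum_{i=1}^t Y_i$, where the $Y_i$ are i.i.d., equal to $\lambda$ with probability $p$ and to $-1$ with probability $1-p$, so $\mathbb{E}[Y_1] = p\lambda - (1-p)$; in particular $\mathbb{E}[Y_1] < 0$ exactly when $\lambda < \frac{1-p}{p}$ and $\mathbb{E}[Y_1] > 0$ exactly when $\lambda > \frac{1-p}{p}$. Since $X_t \le 0 \iff S_t \le -1$, we have $z_{p,\lambda} = P\!\left(\inf_{t\ge 1} S_t \le -1\right)$, so the task is to control the running infimum of $S_t$. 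Because $\lambda_0(p,\delta)$ is the smallest $\lambda$ forcing the escape probability $1 - z_{p,\lambda}$ to be at least $\delta$, the value $\lambda_0(p,0)$ is the boundary case $\inf\{\lambda \ge 0 : z_{p,\lambda} < 1\}$ (this reading is the one consistent with the companion fact $\lim_{\delta \to 0}\zeta(p,\delta) = 0$). Hence it suffices to show: (i) $z_{p,\lambda} = 1$ whenever $\lambda < \frac{1-p}{p}$, and (ii) $z_{p,\lambda} < 1$ whenever $\lambda > \frac{1-p}{p}$. The exact boundary $\lambda = \frac{1-p}{p}$ need not be treated, since either way the infimum of $\{\lambda : z_{p,\lambda} < 1\}$ will come out to be $\frac{1-p}{p}$.

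For (i), when $\lambda < \frac{1-p}{p}$ we have $\mathbb{E}[Y_1] < 0$, so by the strong law of large numbers $S_t/t \to \mathbb{E}[Y_1] < 0$ almost surely, whence $S_t \to -\infty$ and $X_t = 1 + S_t \le 0$ for all large $t$ almost surely. Thus $z_{p,\lambda} = 1$.

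For (ii), when $\lambda > \frac{1-p}{p}$ we have $\mathbb{E}[Y_1] > 0$, so $S_t \to +\infty$ almost surely, which forces the running infimum $M := \inf_{t \ge 0} S_t$ to be finite almost surely; hence $P(M \le -y) \to 0$ as $y \to \infty$, and we fix an integer $N$ with $P\!\left(M \le -(N+1)\right) \le \tfrac12$. Running the walk from $X_0 = 1$, set $j = \lceil (N+1)/\lambda \rceil$. With probability $p^j > 0$ the first $j$ increments are all $+\lambda$; on that event $X_t$ is strictly increasing for $t \le j$ (so it never reaches $0$ within those steps) and $X_j = 1 + j\lambda \ge N+1$. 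Conditioning on this event and using the strong Markov property together with spatial homogeneity, the probability of subsequently reaching $\le 0$ from $X_j \ge N+1$ equals $P(M' \le -X_j) \le P\!\left(M \le -(N+1)\right) \le \tfrac12$, where $M'$ is an independent copy of $M$. Therefore
\[
    z_{p,\lambda} \;\le\; (1 - p^j)\cdot 1 \;+\; p^j \cdot \tfrac12 \;=\; 1 - \tfrac{p^j}{2} \;<\; 1 .
\]

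Combining (i) and (ii), the set $\{\lambda \ge 0 : z_{p,\lambda} < 1\}$ contains $\left(\frac{1-p}{p}, \infty\right)$ and is disjoint from $\left[0, \frac{1-p}{p}\right)$, so its infimum is $\frac{1-p}{p}$; consequently $\lambda_0(p,0) = \frac{1-p}{p}$. The main obstacle is part (ii): moving from ``positive drift'' to ``escape occurs with strictly positive probability.'' A naive appeal to $S_t \to +\infty$ is not enough, because ruin could still occur at an early step; the restart argument above supplies the missing quantitative step using only the almost-sure finiteness of the running infimum plus spatial homogeneity, thereby avoiding heavier fluctuation theory (Sparre--Andersen identities, defective descending ladder heights). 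The one place to be careful is the conditioning: the post-step-$j$ portion of the walk must be kept independent of the first $j$ increments when invoking the strong Markov property.
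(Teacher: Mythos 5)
Your proof is correct, and it takes a genuinely different route from the paper's. The paper builds the martingale $X_t - t((\lambda+1)p-1)$, applies the optional sampling theorem to $\tau \wedge T$ to obtain a Wald-type identity $\mathbb{E}[\tau \wedge T] = (\mathbb{E}[X_{\tau\wedge T}]-1)/((\lambda+1)p-1)$, and derives both directions by contradiction (finiteness of $\mathbb{E}[\tau]$ when the drift is negative; positivity of $\Prob(\tau=\infty)$ when it is nonnegative). You instead use only the strong law of large numbers: negative drift gives $S_t \to -\infty$ a.s.\ and hence $z_{p,\lambda}=1$ directly, while positive drift gives a.s.\ finiteness of the running infimum $M$, from which your ``force $j$ up-steps, then restart'' argument yields the explicit bound $z_{p,\lambda} \le 1 - p^j/2 < 1$. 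Your approach buys three things: it avoids optional stopping and the convergence subtleties in the paper's second case (where $X_{\tau\wedge T}$ is unbounded above, so the appeal to bounded convergence is delicate); it gives a quantitative gap between $z_{p,\lambda}$ and $1$ rather than mere strictness; and it cleanly sidesteps the boundary $\lambda = \frac{1-p}{p}$, where the paper's identity degenerates (division by zero) and where, in fact, the mean-zero walk is recurrent so $z_{p,\lambda}=1$ --- your observation that the boundary point is irrelevant to the infimum is the right fix. The paper's martingale identity does yield the extra fact $\mathbb{E}[\tau]<\infty$ in the subcritical case, which your argument does not, but that is not needed for the lemma. Finally, your reading of $\lambda_0(p,0)$ as $\inf\{\lambda : z_{p,\lambda}<1\}$ (the literal definition with $\delta=0$ is vacuous) matches the interpretation the paper's own proof implicitly adopts.
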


\begin{proof}
Denote $a\wedge b=\min(a,b)$. Define $\tau=\inf\limits_{t}\{t>0:X_t < 0\}$. Note that
$X_t=X_0+\sum_{i=1}^{t}\xi_i$, where $\{\xi_i\}_{i=1}^{t}$ are i.i.d. and follow the
same distribution: $\text{Pr}(\xi_i=\lambda)=p$ and $\text{Pr}(\xi_i=-1)=1-p$.

Now we first prove that $\{X_t-t((\lambda+1)p-1)\}_{t=1}^{\infty}$ is a martingale with
respect to itself. To see this, note that for any $t>0$, we have
\begin{equation*}
    \mathbb{E}\left[X_t-t((\lambda+1)p-1)\;|\; X_{t-1}\right]=\mathbb{E}\left[X_{t-1}+\xi_t-t((\lambda+1)p-1)\;|\; X_{t-1}\right]=X_{t-1}-(t-1)((\lambda+1)p-1),
\end{equation*}
where the last inequality holds because $\mathbb{E}\left[\xi_t\;|\;
X_{t-1}\right]=\lambda p -(1-p)=(\lambda+1)p-1$.

Let $T>0$ be a fixed constant. Note that $\tau \wedge T$ is a stopping time which is
almost surely bounded. Then by the optional sampling theorem,
\begin{equation*}
    \mathbb{E}\left[X_{\tau \wedge T}-(\tau \wedge T)((\lambda+1)p-1)\right]=\mathbb{E}[X_{0}]=1.
\end{equation*}
Therefore, we have
\begin{equation*}
    \mathbb{E}\left[\tau \wedge T\right]=\frac{\mathbb{E}\left[X_{\tau \wedge T}\right]-1}{(\lambda+1)p-1}.
\end{equation*}
Let $T\rightarrow \infty$. By the monotone convergence theorem,
\begin{equation*}
    \mathbb{E}[\tau]=\frac{\lim_{T\rightarrow \infty}\mathbb{E}\left[X_{\tau \wedge T}\right]-1}{(\lambda+1)p-1}
\end{equation*}
We consider the following cases.
\begin{itemize}
    \item If $\lambda<\frac{1-p}{p}$, then $(\lambda+1)p-1<0$. Combined with
        $\mathbb{E}[\tau]>0$, this implies \[
            \lim_{T\rightarrow \infty}\mathbb{E}\left[X_{\tau \wedge T}\right]-1<0.
        \]
        Now we show that $\mathbb{E}[\tau]<\infty$ by contradiction. If
        $\mathbb{E}[\tau]=\infty$, then $\lim_{T\rightarrow
        \infty}\mathbb{E}\left[X_{\tau \wedge T}\right]=-\infty$, which is impossible
        because $X_{\tau \wedge T}\geq -1$ for any $T$. Therefore
        $\text{Pr}(\tau=\infty)=0$ and $z_{p,\lambda}=1$.
    \item If $\lambda\geq \frac{1-p}{p}$, then $(\lambda+1)p-1 \geq 0$. Combined with
        $\mathbb{E}[\tau]>0$, this implies \[
            \lim_{T\rightarrow \infty}\mathbb{E}\left[X_{\tau \wedge T}\right]-1\geq 0.
        \]
        Now we show that $\text{Pr}(\tau=\infty)>0$ by contradiction. If
        $\text{Pr}(\tau=\infty)=0$, then by the bounded convergence theorem, we have
        $\lim_{T\rightarrow\infty}\mathbb{E}\left[X_{\tau \wedge
        T}\right]=\mathbb{E}[X_{\tau}] < 0$, which contradicts $\lim_{T\rightarrow
        \infty}\mathbb{E}\left[X_{\tau \wedge T}\right]-1\geq 0$. Therefore
        $\text{Pr}(\tau=\infty)>0$ and $z_{p,\lambda} < 1$.
\end{itemize}
Therefore $\lambda_0(p)=\frac{1-p}{p}$.
\end{proof}

\begin{lemma} \label{lem:random_walk_div}
Let $X_t$ be as defined in \Eqref{eq:random_walk_def}. Then $\lim_{\delta \rightarrow
0^+} \lambda_0(p, \delta) = \lambda_0(p, 0)$ for all $p \in (0, 1)$.
\end{lemma}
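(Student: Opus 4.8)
The plan is to squeeze $\lambda_0(p,\delta)$ between $\tfrac{1-p}{p}$ from below and $\tfrac{1-p}{p}+o(1)$ from above, using only Lemma \ref{lem:random_walk_balance} plus elementary monotonicity of the infimum defining $\lambda_0$. First I would extract from (the proof of) Lemma \ref{lem:random_walk_balance} the dichotomy: $z_{p,\lambda}=1$ for every $\lambda<\tfrac{1-p}{p}$, and $z_{p,\lambda}<1$ for every $\lambda\ge\tfrac{1-p}{p}$ (these are exactly the two bullet cases in that proof), and recall $\lambda_0(p,0)=\tfrac{1-p}{p}$. I would then record that $\delta\mapsto\lambda_0(p,\delta)$ is non-decreasing: for $0<\delta\le\delta'$ we have $\{\lambda\ge0:z_{p,\lambda}\le 1-\delta'\}\subseteq\{\lambda\ge0:z_{p,\lambda}\le 1-\delta\}$, so the infimum over the larger set is no larger. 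In particular $\lim_{\delta\to0^+}\lambda_0(p,\delta)=\inf_{\delta>0}\lambda_0(p,\delta)$ exists.

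For the lower bound, fix $\delta>0$: by the dichotomy, every $\lambda<\tfrac{1-p}{p}$ has $z_{p,\lambda}=1>1-\delta$ and hence lies outside $\{\lambda\ge0:z_{p,\lambda}\le1-\delta\}$, so $\lambda_0(p,\delta)\ge\tfrac{1-p}{p}$; letting $\delta\to0^+$ gives $\lim_{\delta\to0^+}\lambda_0(p,\delta)\ge\tfrac{1-p}{p}$. For the upper bound, fix an arbitrary $\lambda^*>\tfrac{1-p}{p}$: then $z_{p,\lambda^*}<1$, so $\delta^*:=1-z_{p,\lambda^*}>0$ and $z_{p,\lambda^*}=1-\delta^*\le1-\delta^*$, which places $\lambda^*$ in $\{\lambda\ge0:z_{p,\lambda}\le1-\delta^*\}$ and therefore $\lambda_0(p,\delta^*)\le\lambda^*$. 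By the monotonicity above, $\lambda_0(p,\delta)\le\lambda_0(p,\delta^*)\le\lambda^*$ for all $\delta\in(0,\delta^*]$, hence $\lim_{\delta\to0^+}\lambda_0(p,\delta)\le\lambda^*$; letting $\lambda^*\downarrow\tfrac{1-p}{p}$ yields $\lim_{\delta\to0^+}\lambda_0(p,\delta)\le\tfrac{1-p}{p}$. Combining the two bounds with $\lambda_0(p,0)=\tfrac{1-p}{p}$ completes the proof.

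The only step needing any care — and the ``main obstacle,'' such as it is — is making sure the strict/non-strict dichotomy for $z_{p,\lambda}$ is actually delivered by Lemma \ref{lem:random_walk_balance} rather than merely the value $\lambda_0(p,0)$; the two cases in that lemma's proof give precisely ``$z_{p,\lambda}=1$ when $\lambda<\tfrac{1-p}{p}$'' and ``$z_{p,\lambda}<1$ when $\lambda\ge\tfrac{1-p}{p}$,'' so no extra argument (e.g.\ a coupling) is required. Everything else is bookkeeping with infima, so I would keep the write-up short and purely qualitative.
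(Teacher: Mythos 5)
Your proof is correct, but it takes a genuinely different and considerably more elementary route than the paper's. The paper proves the upper bound quantitatively: for each $\lambda>\frac{1-p}{p}$ it exhibits the root $r(\lambda)\in(0,1)$ of $h_\lambda(x)=px^{\lambda+1}-x+(1-p)$, shows $r(\lambda)^{X_t}$ is a martingale, derives $z_{p,\lambda}\le r(\lambda)$ via optional sampling, and then studies the monotonicity and limits of $r(\lambda)$ to produce an explicit $\tilde\lambda(\delta)\ge\lambda_0(p,\delta)$ with $\tilde\lambda(\delta)\to\frac{1-p}{p}$. You bypass all of that with a soft squeeze: monotonicity of $\delta\mapsto\lambda_0(p,\delta)$ plus the qualitative dichotomy from Lemma \ref{lem:random_walk_balance} (in particular, for each fixed $\lambda^*>\frac{1-p}{p}$, the single number $\delta^*=1-z_{p,\lambda^*}>0$ already certifies $\lambda_0(p,\delta)\le\lambda^*$ for all $\delta\le\delta^*$). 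This is valid and shorter; what it gives up is the quantitative control on $\zeta(p,\delta)=\lambda_0(p,\delta)-\lambda_0(p,0)$ that the paper's explicit root construction provides, which could matter if one wanted the constants $\gamma_1,\dots,\gamma_6$ in Theorem \ref{thm:singlestep} as explicit functions of $\delta$ rather than just their $\delta\to0$ limits; for the lemma as stated, nothing is lost. One small caution: you only need, and should only invoke, the strict half of the dichotomy ($z_{p,\lambda}<1$ for $\lambda>\frac{1-p}{p}$); the boundary case $\lambda=\frac{1-p}{p}$ is a driftless bounded-increment walk, which is recurrent, so the non-strict version stated in the second bullet of Lemma \ref{lem:random_walk_balance} is not one you want to lean on.
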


\begin{proof}
The idea of the proof is, given some $\lambda$, to find some $\alpha \in (0, 1)$ such
that $Y_t = \alpha^{X_t}$ is a martingale. We can then apply the optional sampling
theorem to $\alpha^{X_t}$ in order to get a bound of $z_{p,\lambda}$ in terms of
$\lambda$, which we can use to upper bound $\lambda_0(p, \delta)$. This upper bound
goes to $\lambda_0(p, 0)$ as $\delta \rightarrow 0^+$. Combining with the fact that
$\lambda_0(p, \delta)$ is increasing in terms of $\delta$ yields $\lim_{\delta
\rightarrow 0^+} \lambda_0(p, \delta) = \lambda_0(p, 0)$.

Let $p \in (0, 1)$ and $\delta \in (0, p)$. We want to find $\tilde{\lambda}$ such that
$z_{p,\tilde{\lambda}} \leq 1-\delta$ (so that $\lambda_0(p, \delta) \leq
\tilde{\lambda}$) and $\tilde{\lambda} \rightarrow \lambda_0(p, 0)$ as $\delta
\rightarrow 0$. First, we need $\alpha \in (0, 1)$ such that $\alpha^{X_t}$ is a
martingale. This requires
\begin{align}
    \mathbb{E}[\alpha^{X_{t+1}} \;|\; X_t] &= \alpha^{X_t} \nonumber \\
    p \alpha^{X_t+\lambda} + (1-p) \alpha^{X_t-1} &= \alpha^{X_t} \nonumber \\
    p \alpha^{X_t+\lambda} - \alpha^{X_t} + (1-p) \alpha^{X_t-1} &= 0 \nonumber \\
    p \alpha^{\lambda+1} - \alpha + (1-p) &= 0, \label{eq:martingale_cond}
\end{align}
so we are looking for a root of $h_{\lambda}(x) = p x^{\lambda+1} - x + (1-p)$ in the
interval $x \in (0, 1)$. We claim that for all $\lambda > \frac{1-p}{p}$, there is
exactly one root of $h_{\lambda}$ in $(0, 1)$. To see that such a root exists, notice
that $h_{\lambda}(0) = 1-p > 0$ and $h_{\lambda}(1) = 0$. Also, $h_{\lambda}'(1) = p
(\lambda + 1) - 1 > 0$ (since $\lambda > \frac{1-p}{p}$). Therefore $h_{\lambda}(1-z) <
0$ for sufficiently small $z > 0$. Then we can apply the intermediate value theorem to
$h_{\lambda}$ at $h_{\lambda}(0) > 0$ and $h_{\lambda}(1-z) < 0$ to conclude that $h$
must have a root in $(0, 1)$.

To see that this root is unique, note that $h_{\lambda}$ is strictly convex in $(0,
\infty)$, since $h_{\lambda}''(x) = p \lambda (\lambda + 1) x^{\lambda-1} > 0$ for $x >
0$. Suppose $h_{\lambda}$ had two roots $x_1, x_2 \in (0, 1)$, with $x_1 < x_2$.
Letting $\alpha = (x_2-x_1)/(1-x_1)$, we have by strict convexity $h_{\lambda}(x_2) <
(1-\alpha) h_{\lambda}(x_1) + \alpha h_{\lambda}(1) = 0$, which contradicts
$h_{\lambda}(x_2) = 0$. Therefore, $h_{\lambda}$ has a unique root in $(0, 1)$ for every
$\lambda > \frac{p}{1-p}$. Denote this root as $r(\lambda)$.

Now define $\tilde{\lambda} = \inf \left\{ \lambda > \frac{1-p}{p} \;|\; r(\lambda) \leq
1-\delta \right\}$ (the threshold $1-\delta$ will be used later to show $z_{p,
\tilde{\lambda}} \leq 1-\delta$). In order to show that $\tilde{\lambda}$ exists and
that $\tilde{\lambda} \rightarrow \lambda_0(p, 0)$ as $\delta \rightarrow 0$, we need a
few facts about $r(\lambda)$. Specifically, we need
\begin{align}
    &r(\lambda) \text{ is decreasing} \label{eq:root_dec} \\
    \lim_{\lambda \rightarrow \frac{1-p}{p}^+} &r(\lambda) = 1 \label{eq:root_small_lim} \\
    \lim_{\lambda \rightarrow \infty} &r(\lambda) = 1-p. \label{eq:root_large_lim}
\end{align}
To see \Eqref{eq:root_dec}, let $\lambda_2 > \lambda_1 > \frac{1-p}{p}$. For any $x \in
\left[ r(\lambda_1) , 1 \right)$: \[
    h_{\lambda_2(x)} < h_{\lambda_1(x)} \leq \left( 1 - \frac{x-r(\lambda_1)}{1-r(\lambda_1)} \right) h_{\lambda_1}(r(\lambda_1)) + \frac{x-r(\lambda_1)}{1-r(\lambda_1)} h_{\lambda_1}(1) = 0,
\]
where the first inequality uses the fact that $h_{\lambda}(x)$ is decreasing in terms of
$\lambda$ for any fixed $x$, and the second inequality uses convexity. Then
$r(\lambda_2)$ cannot lie in the interval $\left[ r(\lambda_1) , 1 \right)$, so
$r(\lambda_2) < r(\lambda_1)$. This shows that $r(\lambda)$ is decreasing.

To prove \Eqref{eq:root_small_lim}, notice that $r(\lambda) \in (0, 1)$ already implies
$\lim_{\lambda \rightarrow \frac{1-p}{p}^+} \leq 1$. So it suffices to show for any
$\epsilon \in (0, 1)$ that $r(\lambda) > 1-\epsilon$ for sufficiently small $\lambda$.
Denoting $\ell = \frac{1-p}{p}$,
\begin{equation} \label{eq:h_pos_lim}
    \lim_{\lambda \rightarrow \frac{1-p}{p}^+} h_{\lambda}(1-\epsilon) = p (1-\epsilon)^{1/p} -x + (1-p) = h_{\ell}(1-\epsilon) > h_{\ell}(1) - \epsilon h_{\ell}'(1) = 0,
\end{equation}
where the inequality uses strict convexity of $h_{\ell}$ and the last equality uses
$h_{\ell}(1) = h_{\ell}'(1) = 0$. Also
\begin{equation} \label{eq:h_prime_neg_lim}
    \lim_{\lambda \rightarrow \frac{1-p}{p}^+} h_{\lambda}'(1-\epsilon) = \lim_{\lambda \rightarrow \frac{1-p}{p}^+} p(\lambda+1) x^{\lambda} - 1 = (1-\epsilon)^{\frac{1-p}{p}} - 1 < 0.
\end{equation}
Together, \Eqref{eq:h_pos_lim} and \Eqref{eq:h_prime_neg_lim} tell us that for
sufficiently small $\lambda$: $h_{\lambda}(1-\epsilon) > 0$ and
$h_{\lambda}'(1-\epsilon) < 0$. Then for any $x \leq 1-\epsilon$, \[
    h_{\lambda}(x) \geq h_{\lambda}(1-\epsilon) + (x-(1-\epsilon)) h_{\lambda}'(1-\epsilon) > 0.
\]
In other words, for sufficiently large $\lambda$, the root of $h_{\lambda}$ cannot be
smaller than $1-\epsilon$, or $r(\lambda) > 1-\epsilon$. This proves
\Eqref{eq:root_small_lim}.

For \Eqref{eq:root_large_lim}, let $x \in (0, 1)$ and $\lambda > \frac{1-p}{p}$. Then by
strict convexity of $h_{\lambda}$: \[
    h_{\lambda}(x) > h_{\lambda}(0) + x h_{\lambda}'(0) = (1-p) - x,
\]
so $h_{\lambda}(x) > 0$ for any $x \leq 1-p$. Therefore $r(\lambda) > 1-p$ for any
$\lambda$, so that $\lim_{\lambda \rightarrow \infty} r(\lambda) \geq 1-p$. We can also
show that $\lim_{\lambda \rightarrow \infty} r(\lambda) \leq 1-p$ by showing for any
$\epsilon > 0$ that $r(\lambda) \leq 1-p+\epsilon$ for sufficiently large $\lambda$. By
convexity of $h_{\lambda}$: \[
    \lim_{\lambda \rightarrow \infty} h_{\lambda}(1-p+\epsilon) = \lim_{\lambda \rightarrow \infty} p (1-p+\epsilon)^{\lambda+1} - (1-p+\epsilon) + (1-p) = -\epsilon.
\]
So $h_{\lambda}(1-p+\epsilon) < -\epsilon/2$ sufficiently large $\lambda$. Then for any
$x \geq 1-p+\epsilon$, \[
    h_{\lambda}(x) \leq (1-\alpha) h_{\lambda}(1-p+\epsilon) + \alpha h_{\lambda}(1) = -(1-\alpha) \epsilon < 0.
\]
So the root of $h_{\lambda}$ must be smaller than $1-p+\epsilon$, or $r(\lambda) \leq
1-p+\epsilon$. This proves that $\lim_{\lambda \rightarrow \infty} r(\lambda) \leq 1-p$,
and completes the proof of \Eqref{eq:root_large_lim}.

Recall the definition $\tilde{\lambda} = \inf \left\{ \lambda > \frac{1-p}{p} \;|\;
r(\lambda) \leq 1-\delta \right\}$. \Eqref{eq:root_large_lim} and
\Eqref{eq:root_small_lim} together imply that $\tilde{\lambda}$ exists, since $\delta
\in (0, p) \implies 1-\delta \in (1-p, 1)$. Also, \Eqref{eq:root_dec} and
\Eqref{eq:root_small_lim} imply that $\tilde{\lambda} \rightarrow \frac{1-p}{p} =
\lambda_0(p, \delta)$ as $\delta \rightarrow 0$.

We can now consider the random walk $X_t$ defined in \Eqref{eq:random_walk_def} with
$\lambda = \tilde{\lambda}$. Our goal is to show that $z_{p,\tilde{\lambda}} \leq
1-\delta$, which implies that $\lambda_0(p, \delta) \leq \tilde{\lambda}$. Let $\alpha =
r(\tilde{\lambda}) \leq 1-\delta$. We have constructed $\alpha$ to be a root of
$h_{\tilde{\lambda}}$, so that $\alpha^{X_t}$ is a martingale, as shown in
\Eqref{eq:martingale_cond}. Let $T_0 = \inf \{ t \geq 0 \;|\; X_t \leq 0 \}$, and
$T_b=\inf\{t\geq 0\;|\; X_t \geq b\}$, where $b>0$. Define $T=\min(T_0,T_b)$. We have
$\alpha^{X_{\min(T, n)}}$ is bounded for any $n$ and it is nonnegative, therefore by
optional sampling theorem and martingale convergence theorem (e.g., Theorem 4.8.2
in~\cite{durrett2019probability}), we have
\begin{align*}
    \alpha &= \alpha^{X_0} \\
    &= \mathbb{E}\left[\alpha^{X_T}\right] \\
    &= \text{Pr}(T_0<T_b) \alpha^{X_{T_0}} + (1-\text{Pr}(T_0<T_b)) \alpha^{X_{T_b}} \\
    &\geq \text{Pr}(T_0<T_b)+(1-\text{Pr}(T_0<T_b))\alpha^{b+\lambda},
\end{align*}
where the inequality holds due to $\alpha \in (0, 1)$, $X_{T_0} \leq 0$, and $X_{T_b}
\leq b + \lambda$. Let $b\rightarrow \infty$ on both sides, and note that $\alpha^b
\rightarrow 0$, we have
\begin{equation}
    \alpha\geq \text{Pr}(T_0<\infty)=z_{p,\tilde{\lambda}}.
\end{equation}


Therefore \[
    z_{p,\tilde{\lambda}} \leq \alpha \leq 1 - \delta,
\]
so that $\lambda_0(p, \delta) \leq \tilde{\lambda}$. Finally,
\begin{align*}
    \lambda_0(p, 0) \leq \lim_{\delta \rightarrow 0^+} \lambda_0(p, \delta) \leq \lim_{\delta \rightarrow 0^+} \tilde{\lambda} = \lambda_0(p, 0),
\end{align*}
so that $\lim_{\delta \rightarrow 0^+} \lambda_0(p, \delta) = \lambda_0(p, 0)$.

\end{proof}

\begin{lemma} \label{lem:exp_seq_ub}
Let $\{a_i\}_{i=0}^{\infty}$ be a positive sequence of reals satisfying $a_{i+1} = r a_i
+ b$ for $r > 1$, and let $A \geq a_0$. Define $k = \max \{ i \geq 0: a_i \leq A \}$.
Then
\begin{equation*}
    k = \left\lfloor \frac{\log \left( \frac{A(r-1) + b}{a_0(r-1) + b} \right)}{\log r} \right\rfloor
\end{equation*}
\end{lemma}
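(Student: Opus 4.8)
The plan is to solve the affine recurrence in closed form and then read off the threshold index by inverting a monotone function. First I would observe that, since $r>1$, the recurrence $a_{i+1}=r a_i+b$ has the unique fixed point $a^\star=-b/(r-1)$, so that the shifted sequence $a_i-a^\star$ is geometric with ratio $r$. A one-line induction then gives $a_i-a^\star=r^i(a_0-a^\star)$ for all $i\ge 0$, i.e.
\[
    a_i=\frac{r^i\bigl(a_0(r-1)+b\bigr)-b}{r-1}.
\]

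Second, I would record that $c:=a_0+\tfrac{b}{r-1}=\tfrac{a_0(r-1)+b}{r-1}$ is strictly positive. Indeed, if $c<0$ then $r^i c\to-\infty$ and hence $a_i\to-\infty$, contradicting positivity of the sequence; and if $c=0$ the sequence is constant, so the set in the definition of $k$ is all of $\mathbb{Z}_{\ge 0}$ and the claimed logarithmic expression is ill-defined. Thus the statement presupposes $a_0(r-1)+b>0$; I would note that in every application of this lemma in the paper this holds directly, since there $a_1-a_0=(r-1)a_0+b>0$ (the relevant sequences $\{\|\vx_i\|\}$, $\{\|\vy_i\|\}$ being strictly increasing by construction). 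Since $r>1$, positivity of $c$ also shows $\{a_i\}$ is strictly increasing.

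Third, using $a_0(r-1)+b>0$ and $\log r>0$, for any integer $i\ge 0$ I would rewrite the defining inequality $a_i\le A$ equivalently as
\[
    r^i\bigl(a_0(r-1)+b\bigr)-b\le A(r-1)\ \Longleftrightarrow\ r^i\le\frac{A(r-1)+b}{a_0(r-1)+b}\ \Longleftrightarrow\ i\le\frac{\log\!\left(\frac{A(r-1)+b}{a_0(r-1)+b}\right)}{\log r}.
\]
Because $A\ge a_0$, the index $i=0$ satisfies $a_0\le A$, so the right-hand side is nonnegative; and because $\{a_i\}$ is increasing, the solution set $\{i\ge 0:a_i\le A\}$ is exactly $\{0,1,\dots,\lfloor \log r^{-1}\log((A(r-1)+b)/(a_0(r-1)+b))\rfloor\}$. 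Taking the maximum of this set yields the stated value of $k$.

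I do not expect a genuine obstacle here: the argument is a routine closed-form solution of an affine recurrence followed by inverting the monotone map $i\mapsto r^i$. The only point that requires a moment of care is the sign/degeneracy discussion for $a_0(r-1)+b$ in the second step, which is needed both to divide through by it and to guarantee that $\{a_i\}$ is increasing; everything else is a direct calculation.
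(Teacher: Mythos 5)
Your proof is correct and follows essentially the same route as the paper's: solve the affine recurrence in closed form, then invert the monotone map $i \mapsto r^i$ to characterize the set $\{i : a_i \le A\}$. The only difference is that you explicitly justify the sign condition $a_0(r-1)+b>0$ needed to divide through without flipping the inequality (the paper's proof leaves this implicit, and it is a genuine point since $b$ can be negative in the lemma's applications), which is a welcome bit of extra care rather than a different argument.
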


\begin{proof}
It is straightforward to show by induction that for any $i \geq 0$:
\begin{align*}
    a_i = a_0 r^i + b \sum_{j=0}^{i-1} r^j = a_0 r^i + b \frac{r^i - 1}{r-1} = r^i \left( a_0 + \frac{b}{r-1} \right) - \frac{b}{r-1}.
\end{align*}
Then $a_i \leq A$ if and only if
\begin{align*}
    r^i \left( a_0 + \frac{b}{r-1} \right) - \frac{b}{r-1} &\leq A \\
    r^i \left( a_0 + \frac{b}{r-1} \right) &\leq A + \frac{b}{r-1} \\
    r^i &\leq \frac{A + \frac{b}{r-1}}{a_0 + \frac{b}{r-1}} = \frac{A(r-1) + b}{a_0(r-1) + b} \\
    i &\leq \frac{ \log \left( \frac{A(r-1) + b}{a_0(r-1) + b} \right) }{\log r}.
\end{align*}
So $k$ is the largest integer smaller than or equal to the RHS of the above.
\end{proof}

\section{Discussion on Stabilization Constant $\gamma$} \label{app:gamma}
In Theorem \ref{thm:adagrad_norm}, we showed that Decorrelated AdaGrad-Norm exhibits a
quadratic dependence on $\Delta, L_1$ in the dominating term of its convergence rate, so
that the number of iterations required to find an $\epsilon$-stationary point is
$\Omega(\Delta^2 L_1^2 \sigma^2 \epsilon^{-4})$. This result depends on the condition
$\gamma \leq \tilde{\mathcal{O}}(\Delta L_1)$, which covers the standard protocol in
practice of choosing $\gamma$ to be a small constant, e.g. $\gamma = 1e-8$. However, it
is natural to ask whether our result can extend to any choice of $\gamma$.

In this section, we answer this question in the deterministic setting, that is,
with $\sigma = 0$, we show that the lower bound of Theorem
\ref{thm:adagrad_norm} can be recovered even if the condition $\gamma \leq
\tilde{\mathcal{O}}(\Delta L_1)$ is removed. This shows that (deterministic)
Decorrelated AdaGrad-Norm cannot recover the optimal complexity from the smooth
(deterministic) setting, no matter the choice of $\gamma$. This result is stated
below.
\begin{theorem} \label{thm:adagrad_norm_deterministic}
Denote $\gF_{\text{det}} = \gF_{\textup{as}}(\Delta, L_0, L_1, 0)$, and let algorithm
$A_{\text{DAN}}$ denote Decorrelated AdaGrad-Norm (\Eqref{eq:adagrad_norm}) with
parameters $\eta, \gamma > 0$. Let \(
    0 < \epsilon \leq \min \left\{ \frac{\Delta L_1}{2}, \sqrt{\frac{\Delta \gamma}{4 \eta}} \right\}.
\)
If $\Delta L_1^2 \geq L_0$, then \[
    \gT(A_{\text{DAN}}, \gF_{\text{det}}, \epsilon) \geq \tilde{\Omega} \left( \frac{\Delta^2 L_1^2}{\epsilon^2} \right).
\]
\end{theorem}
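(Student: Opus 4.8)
The plan is to keep the two-case skeleton of Theorem~\ref{thm:adagrad_norm}, but to arrange the split so that the divergence branch (Lemma~\ref{lem:app_adagrad_norm_div}) covers exactly the region of $(\eta,\gamma)$ for which that lemma's hypothesis $\gamma\le \eta\Delta L_1^2/(8\Lambda)$ holds (write $\Lambda=\log(1+48\Delta L_1^2/L_0)$), while a new deterministic slow-convergence argument covers the complement. Since $\sigma=0$, the high-dimensional oracle of Lemma~\ref{lem:app_adagrad_norm_slow} collapses, so I would instead use the one-dimensional piecewise instance $f(\vx)=g(\langle\vx,\mathbf{e}_1\rangle)$ with $g(x)=-\epsilon(x+m)+\psi(m)$ for $x<-m$, $g(x)=\psi(x)$ for $x\in[-m,m]$, and $g(x)=\epsilon(x-m)+\psi(m)$ for $x>m$, where $m=\tfrac{1}{L_1}\log(1+L_1\epsilon/L_0)$ so that $\psi'(m)=\epsilon$ (here $\psi$ is as in Lemma~\ref{lem:adagrad_norm_div}). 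This $f$ is $(L_0,L_1)$-smooth and bounded below by $0$; starting from $\vx_0=(m+\tfrac{\Delta}{2\epsilon})\mathbf{e}_1$ one has $f(\vx_0)-\inf f=\tfrac{\Delta}{2}+\psi(m)\le\Delta$ using $\epsilon\le\Delta L_1/2$; and taking the stochastic gradient equal to $\nabla f$ makes $(f,g,\gD)\in\gF_{\text{det}}$.

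For this instance a one-line induction shows that, as long as $\langle\vx_t,\mathbf{e}_1\rangle\ge m$, the gradient equals $\epsilon\mathbf{e}_1$ and the first coordinate obeys $x_{t+1}=x_t-\eta\epsilon/\sqrt{\gamma^2+t\epsilon^2}$. Letting $t_0$ be the last step with $\langle\vx_t,\mathbf{e}_1\rangle\ge m$ (finite, since $\sum_t(\gamma^2+t\epsilon^2)^{-1/2}=\infty$), we have $\|\nabla f(\vx_t)\|=\epsilon$ for all $t\le t_0$, and the total displacement over the first $t_0{+}1$ steps exceeds $\Delta/(2\epsilon)$, i.e. $\tfrac{\Delta}{2\epsilon}<\sum_{t=0}^{t_0}\eta\epsilon/\sqrt{\gamma^2+t\epsilon^2}$. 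From this I would extract two complementary estimates on $t_0$: (A) bounding the sum by $\tfrac{\eta\epsilon}{\gamma}+2\eta\sqrt{t_0}$ via the integral comparison $\sum_{t=1}^{t_0}(\gamma^2+t\epsilon^2)^{-1/2}\le\int_0^{t_0}(\gamma^2+s\epsilon^2)^{-1/2}ds\le 2\sqrt{t_0}/\epsilon$ and absorbing $\tfrac{\eta\epsilon}{\gamma}\le\tfrac{\Delta}{4\epsilon}$ (which is exactly the hypothesis $\epsilon\le\sqrt{\Delta\gamma/(4\eta)}$), giving $t_0>\Delta^2/(64\eta^2\epsilon^2)$; and (B) bounding every summand crudely by $\eta\epsilon/\gamma$, giving $t_0>\Delta\gamma/(2\eta\epsilon^2)-1$ with no further condition on $\epsilon$.

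Finally I would assemble the theorem by cases on $(\eta,\gamma)$. If $\eta\ge 1/L_1$ and $\gamma\le\eta\Delta L_1^2/(8\Lambda)$, Lemma~\ref{lem:app_adagrad_norm_div} gives an instance with $\|\nabla f(\vx_t)\|\ge\Delta L_1>\epsilon$ for all $t$, so $\gT=\infty$. If $\gamma>\eta\Delta L_1^2/(8\Lambda)$, estimate (B) gives $t_0>\tfrac{\Delta}{2\eta\epsilon^2}\cdot\tfrac{\eta\Delta L_1^2}{8\Lambda}-1=\tfrac{\Delta^2L_1^2}{16\Lambda\epsilon^2}-1$. The only remaining case is $\eta<1/L_1$, where estimate (A) gives $t_0>\tfrac{\Delta^2L_1^2}{64\epsilon^2}$. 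In every case $\gT(A_{\text{DAN}},\gF_{\text{det}},\epsilon)\ge 1+t_0=\tilde{\Omega}(\Delta^2L_1^2/\epsilon^2)$. The one real subtlety — and the step I would check most carefully — is verifying that estimates (A) and (B) together cover precisely the complement of the divergence region: (B) supplies exactly the needed bound whenever $\gamma$ is too large for Lemma~\ref{lem:app_adagrad_norm_div} (its output $\gamma/\eta$ is then $\ge\Delta L_1^2/(8\Lambda)$), while (A) handles the small-$\eta$ corner where (B) degrades; the boundary regime $\eta\ge 1/L_1$ with $\gamma$ just above the threshold needs the cleanest bookkeeping, since there one must route to (B) rather than to the divergence lemma.
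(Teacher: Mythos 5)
Your proposal is correct and takes essentially the same route as the paper: the identical case split (divergence via Lemma \ref{lem:app_adagrad_norm_div} when $\eta \geq 1/L_1$ and $\gamma/\eta$ is below the threshold; slow convergence on the same one-dimensional piecewise linear/exponential instance from Lemma \ref{lem:app_adagrad_slow} otherwise), with the condition $\epsilon \leq \sqrt{\Delta\gamma/(4\eta)}$ used in exactly the same place to absorb the $\eta\epsilon/\gamma$ term. The only (immaterial) difference is in the final extraction of the bound on $t_0$: the paper squares $t_0/\sqrt{\gamma^2+t_0\epsilon^2} \geq \Delta/(8\eta\epsilon^2)$ and solves the resulting quadratic to obtain $t_0 \gtrsim \Delta^2/(\eta^2\epsilon^2) + \Delta\gamma/(\eta\epsilon^2)$ in one shot, whereas you derive the two complementary estimates (A) and (B) separately and route each sub-case to one of them — equivalent bookkeeping.
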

The proof structure is similar as Theorems \ref{thm:adagrad_norm},
\ref{thm:adagrad}, and \ref{thm:vanilla_adagrad}, by splitting into cases
depending on the choice of $\eta$ and $\gamma$. However, for this proof we split
into cases slightly differently than in these three theorems; here, the cases
are determined by the magnitude of $\eta$ and $\gamma/\eta$. The proof relies on
Lemma \ref{lem:adagrad_norm_div} for one case and reuses the hard instance of
Lemma \ref{lem:adagrad_slow} for the other.

\begin{proof}
We consider two cases: In the first case, both of the following hold:
\begin{equation}
    \eta \geq 1/L_1 \quad \text{and} \quad \gamma/\eta \leq \frac{\Delta L_1^2}{8 \log \left( 1 + \frac{48 \Delta L_1^2}{L_0} \right)}.
\end{equation}
In the second case, one or both of these two conditions fail:
\begin{equation}
    \eta \leq 1/L_1 \quad \text{or} \quad \gamma/\eta \geq \frac{\Delta L_1^2}{8 \log \left( 1 + \frac{48 \Delta L_1^2}{L_0} \right)}.
\end{equation}
We will show that in the first case, there exists an objective for which
Decorrelated AdaGrad-Norm will never converge, and in the second case, there
exists an objective for which convergence requires $\Omega(\Delta^2 L_1^2
\epsilon^{-2})$ iterations.

\paragraph{Case 1} This case is the simpler of the two, since we can directly
apply Lemma \ref{lem:app_adagrad_norm_div}. The conditions of this lemma are
immediately satisfied by the conditions of this case. Therefore, in Case 1,
there exists an objective $(f, g, \gD) \in \mathcal{F}_{\text{det}}$ such that
$\|\nabla f(\vx_t)\| \geq \Delta L_1$ for all $t \geq 0$.

\paragraph{Case 2} For this case, we will reuse the hard instance from Lemma
\ref{lem:app_adagrad_slow}. Denoting $m = \frac{1}{L_1} \log \left( 1 +
\frac{L_1 \epsilon}{L_0} \right)$, the objective is defined as \[
    f(x) = \begin{cases}
        -\epsilon (x+m) + \psi(m) & x < -m \\
        \psi(x) & x \in [-m, m] \\
        \epsilon (x-m) + \psi(m) & x > m
    \end{cases},
\] where \[
    \psi(x) = \frac{L_0}{L_1^2} \left( \exp(L_1 |x|) - L_1 |x| - 1 \right).
\] With $g, \gD$ defined so that $g(x, \xi) = f'(x)$ almost surely when $\xi
\sim \gD$, it was already shown in the proof of Lemma \ref{lem:app_adagrad_slow}
that $(f, g, \gD) \in \mathcal{F}_{\text{det}}$, when we use the initial point
$x_0 = m + \frac{\Delta}{2 \epsilon}$.

Letting $x_t$ be the sequence of iterates generated by Decorrelated AdaGrad-Norm,
we define $t_0 = \max \left\{ t \geq 0 \;|\; x_t \geq m \right\}$. Notice that
$f'(x) = \epsilon$ for all $x \geq m$, so the definition of $t_0$ implies that
$|f'(x_t)| = \epsilon$ for all $t \leq t_0$. Accordingly, we want to show that \[
    t_0 \geq \tilde{\Omega} \left( \frac{\Delta^2 L_1^2}{\epsilon^2} \right).
\]
Actually, the trajectory of Decorrelated AdaGrad-Norm for this objective is
identical to that of Decorrelated AdaGrad, since the objective's domain is
one-dimensional. Therefore, to analyze the trajectory $x_t$, we can reuse the
analysis from the proof of Lemma \ref{lem:app_adagrad_slow}. Starting from
\Eqref{eq:adagrad_slow_inter_3}, \[
    \frac{t_0}{\sqrt{\gamma^2 + t_0 \epsilon^2}} \geq \frac{1}{\epsilon} \left( \frac{\Delta}{4 \eta \epsilon} - \frac{\epsilon}{2 \gamma} \right).
\]
Using the assumed upper bound on $\epsilon$,
\begin{align*}
    \epsilon &\leq \sqrt{ \frac{\Delta \gamma}{4 \eta} } \\
    \epsilon^2 &\leq \frac{\Delta \gamma}{4 \eta} \\
    \frac{\epsilon}{2 \gamma} &\leq \frac{\Delta}{8 \eta \epsilon},
\end{align*}
so
\begin{align*}
    \frac{t_0}{\sqrt{\gamma^2 + t_0 \epsilon^2}} &\geq \frac{\Delta}{8 \eta \epsilon^2} \\
    8 \eta \epsilon^2 t_0 &\geq \Delta \sqrt{\gamma^2 + \epsilon^2 t_0} \\
    64 \eta^2 \epsilon^4 t_0^2 &\geq \Delta^2 \gamma^2 + \Delta^2 \epsilon^2 t_0 \\
    t_0^2 &\geq \frac{\Delta^2 \gamma^2}{64 \eta^2 \epsilon^4} + \frac{\Delta^2}{64 \eta^2 \epsilon^2} t_0.
\end{align*}
Denoting $b = \frac{\Delta^2}{64 \eta^2 \epsilon^2}$ and $c = \frac{\Delta^2
\gamma^2}{64 \eta^2 \epsilon^4}$, this gives the quadratic inequality \[
    t_0^2 - b t_0 - c \geq 0.
\]
Since $t_0 > 0$, this implies
\begin{equation} \label{eq:gamma_inter}
    t_0 \geq \frac{b + \sqrt{b^2 + 4c}}{2} \geq \frac{b}{2} + \sqrt{c} = \frac{\Delta^2}{128 \eta^2 \epsilon^2} + \frac{\Delta \gamma}{8 \eta \epsilon^2}.
\end{equation}
Finally, we can apply the conditions on $\eta$ and $\gamma/\eta$ from the case analysis. We know that either
\begin{equation}
    \eta \leq 1/L_1 \quad \text{or} \quad \gamma/\eta \geq \frac{\Delta L_1^2}{8 \log \left( 1 + \frac{48 \Delta L_1^2}{L_0} \right)}.
\end{equation}
If $\eta \leq 1/L_1$, then \Eqref{eq:gamma_inter} implies \[
    t_0 \geq \frac{\Delta^2}{128 \eta^2 \epsilon^2} \geq \frac{\Delta^2 L_1^2}{\epsilon^2}.
\]
On the other hand, if \[
    \gamma/\eta \geq \frac{\Delta L_1^2}{8 \log \left( 1 + \frac{48 \Delta L_1^2}{L_0} \right)},
\]
then \Eqref{eq:gamma_inter} implies \[
    t_0 \geq \frac{\Delta \gamma}{8 \eta \epsilon^2} \geq \frac{\Delta^2 L_1^2}{64 \epsilon^2 \log \left( 1 + \frac{48 \Delta L_1^2}{L_0} \right)}.
\]
Either way, we have \[
    t_0 \geq \tilde{\Omega} \left( \frac{\Delta^2 L_1^2}{\epsilon^2} \right),
\] which finishes the analysis for Case 2.

\paragraph{Putting The Cases Together} The case analysis above shows that, no
matter the choice of $\gamma, \eta$, there always exists some objective $(f, g,
\gD) \in \gF_{\text{det}}$ such that the number of iterations to find an
$\epsilon$-stationary point is at least \[
    \tilde{\Omega} \left( \frac{\Delta^2 L_1^2}{\epsilon^2} \right).
\]
\end{proof}

\end{document}